\newcommand{\mc}[1]{\mathcal{#1}}
\newcommand{\lalit}[1]{\textcolor{blue}{}}
\newcommand{\pil}{\widehat{\pi}_{l}}
\newcommand{\pill}{\widehat{\pi}_{l-1}}
\newcommand{\1}{\mathbf{1}}
\newcommand{\supp}{\operatorname{supp}}
\newcommand{\inlineeqnum}{\refstepcounter{equation}~~\mbox{(\theequation)}}
\renewcommand{\v}{r} 
\newcommand{\He}{\operatorname{hess}}
\newcommand{\footremember}[2]{%
    \footnote{#2}
    \newcounter{#1}
    \setcounter{#1}{\value{footnote}}%
}
\title{Instance-optimal PAC Algorithms for Contextual Bandits}
\author{%
  Zhaoqi Li\footremember{alley}{Department of Statistics, University of Washington, \texttt{zli9@uw.edu}}%
  \and Lillian Ratliff\footremember{trailer}{Department of Electrical and Computer Engineering, University of Washington, \texttt{ratliffl@uw.edu}}%
  \and Houssam Nassif\footnote{Amazon, \texttt{houssamn@amazon.com}}%
  \and Kevin Jamieson\footnote{Allen School of Computer Science \& Engineering, University of Washington, \texttt{jamieson@cs.washington.edu}}%
  \and Lalit Jain\footnote{School of Business, University of Washington,  \texttt{lalitj@uw.edu}}
}
\begin{document}

\doparttoc 
\faketableofcontents 

\maketitle
\begin{abstract}
In the stochastic contextual bandit setting, regret-minimizing algorithms have been extensively researched, but their instance-minimizing best-arm identification counterparts remain seldom studied.
In this work, we focus on the stochastic bandit problem in the $(\epsilon,\delta)$-\textit{PAC} setting: given a policy class $\Pi$ the goal of the learner is to return a policy $\pi\in \Pi$ whose expected reward is within $\epsilon$ of the optimal policy with probability greater than $1-\delta$. We characterize the first \textit{instance-dependent} PAC sample complexity of contextual bandits through a quantity $\rho_{\Pi}$, and provide matching upper and lower bounds in terms of $\rho_{\Pi}$ for the agnostic and linear contextual best-arm identification settings. We show that no algorithm can be simultaneously minimax-optimal for regret minimization and instance-dependent PAC for best-arm identification. Our main result is a new instance-optimal and computationally efficient algorithm that relies on a polynomial number of calls to an argmax oracle.

\end{abstract}

\section{Introduction}


We consider the stochastic contextual bandit problem in the PAC setting.
Fix a distribution $\nu$ over a potentially countable\footnote{Assuming the set of contexts is countable versus uncountable is for presentation purposes only, since it allow us the notational convenience of letting $\nu_c$ denote the probability of context $c$ arriving.} set of contexts $\mc{C}$.
The action space is $\mc{A}$, and for computational tractability, we assume $|\mc{A}|$ is finite.
We have a set of policies $\Pi$ of interest where each policy $\pi \in \Pi$ is a map from contexts to an action space $\pi: \mc{C} \rightarrow \mc{A}$.
The reward function is $\v: \mc{C} \times \mc{A} \rightarrow \R$.
At each time $t=1,2,\dots$ a context $c_t \sim \nu$ arrives, the learner chooses an action $a_t \in \mc{A}$, and receives reward $r_t:=r_t(c_t,a_t)\in \R$ with $\E[r_t | c_t, a_t] = \v(c_t,a_t) \in \R$. 
The value of a policy $V(\pi)$ is the expected reward from playing action $\pi(c)$ in context $c$: $V(\pi) = \E_{c \sim \nu}[ \v(c, \pi(c)) ]$.
Given a collection of policies $\Pi$, the objective is to identify the optimal policy $\pi_{\ast} := \arg\max_{\pi \in \Pi} V(\pi)$, with high probability. 
Formally, for any $\epsilon > 0$ and $\delta \in (0,1)$, we seek to characterize the sample complexity of identifying a policy $\pi \in \Pi$ such that $V(\pi) \geq V(\pi_{\ast}) - \epsilon$, with probability at least $1-\delta$. 
That is, we wish to minimize the total amount of interactions with the environment to learn an $\epsilon$-optimal policy.

We study both the \emph{agnostic} setting,  where $\Pi$ is an arbitrary set of policies with no assumed relationship with the reward function $\v(c,a)$; and the \emph{realizable} setting, where the policy class and the reward function follow a linear structure, known as the linear contextual bandit problem. 
In both cases, we are interested in \emph{instance-dependent} sample complexity bounds.
That is, the upper and lower bounds we seek do not simply depend on coarse quantities like $|\Pi|$, $|\mc{A}|$, and $1/\epsilon^2$,  but more fine-grained relationships between the context distribution $\nu$, geometry of policies $\Pi$, and the reward function $\v:\mc{C} \times \mc{A} \rightarrow \R$.
Our motivation is that instance-dependent bounds describe the difficulty of a particular problem instance, allowing optimal algorithms to adapt to the true difficulty of the problem, whether easy or hard.
We seek algorithms that take advantage of ``easy'' instances instead of optimizing for the worst-case~\cite{LogisticBound}.

\subsection{Related work}

\paragraph{Minimax regret bounds for general policy classes}
The vast majority of research in contextual bandits focuses on regret minimization.
That is, for a time horizon $T$, the goal of the player is to minimize $\E\left[ \sum_{t=1}^T \v(c_t,\pi_{\ast}(c_t)) - \v(c_t,a_t) \right]$.
The landmark algorithm EXP4 for non-stochastic multi-armed bandits~\cite{auer2002nonstochastic} achieves a regret bound of $\sqrt{ |\mc{A}| T \log(|\Pi|)}$. 
Unfortunately, the running time of EXP4 is linear in $|\Pi|$ which is prohibitive for many problems of interest. The algorithms proposed in \cite{dudik2011efficient,agarwal2014taming}  achieve the same regret bound with a computational complexity that is only polynomial in $T$ and $\log(|\Pi|)$.
Both approaches can be used to obtain an $\epsilon$-optimal policy with probability at least $1-\delta$ using a sample complexity no more than $\frac{ |\mc{A}| \log(|\Pi| /\delta)}{\epsilon^2}$.
None of these works made any assumption on the connection between the reward function $\v$ and the policy class $\Pi$ (i.e. the agnostic setting). \vspace{-10pt}

\paragraph{Instance-dependent regret bounds for general policy classes}
The epoch-greedy algorithm of~\cite{langford2007epoch} achieved the first instance-dependent bounds on regret with a coarse guarantee depending only on the minimum policy gap 
$\Delta_{\sf pol} := V(\pi_{\ast}) - \max_{\pi \neq \pi_{\ast}}V(\pi)$.
In the pursuit of more fine-grained regret bounds achievable by computationally efficient algorithms, many authors resort to the 
\textit{realizability} assumption~\cite{foster2018practical, foster2020beyond,simchi2021bypassing, foster2021instance}.
The learner knows a hypothesis class $\mc{H}$ where each $f \in \mc{H}$ is a map $f: \mc{C} \times \mc{A} \rightarrow \R$,
and there exists an  $f^{\ast}\in\mc{H}$ such that $\v(c,a) = f^{\ast}(c,a)$ for all $(c,a) \in \mc{C} \times \mc{A}$. 
Under this assumption, \cite{foster2021instance} proves lower and upper bounds on the instance-dependent regret. Their bounds are in term of the \textit{uniform gap} $\Delta_{\sf uniform} := \min_{c\in \mc{C}} \min_{a \in \mc{A}} \v(c,\pi_{\ast}(c)) - \v(c,a)$. 
In general, for any policy class, they establish matching minimax lower and upper regret bounds of the form $\min\{\sqrt{|\mathcal{A}|T\log(|\mc{H}|)}, \frac{|\mc{A}| \log( |\mc{H}|)}{\Delta_{\sf uniform}} \mathfrak{C}_{\mc{H}}^{\mathsf{pol}}\}$, 
where $\mathfrak{C}_{\mc{H}}^{\mathsf{pol}}$ is the \textit{policy disagreement coefficient}, a parameter depending on the geometry of $\mc{H}$ and the context distribution $\nu$. 
That is, these bounds hold with respect to a worst-case family of instances parameterized by $\Delta_{\textsf{uniform}}$ and $\mathfrak{C}_{\mc{H}}^{\mathsf{pol}}$.
Using the standard online-to-batch conversion, this translates to a sample complexity (i.e. the time required to find an $\epsilon$-good policy with constant probability) of roughly $\frac{|\mc{A}| \log( |\mc{H}|)}{\epsilon \ \Delta_{\sf uniform}}\mathfrak{C}_{\mc{H}}^{\mathsf{pol}}$. We show in Corollary~\ref{cor:disagree_coeff} that this sample complexity is at least as large as our bounds. 
Further, unlike our bounds below, this sample complexity is unbounded as $\epsilon$ goes to $0$. Recent work refines these kinds of regret bounds further, and provides minimax regret bounds in terms of the \textit{decision-estimation coefficient}~\cite{foster2021statistical}.


\paragraph{Regret bounds for linear contextual bandits}
A special case of the realizable case assumes a linear structure for $\mc{H}$. Assume there exists a known feature map $\phi : \mc{C} \times \mc{A} \rightarrow \R^d$ and an unknown $\theta_{\ast} \in \R^d$ such that the true reward function is given as $\v(c,a) = \langle \phi(c,a) , \theta_{\ast} \rangle$.
For this setting, popular optimism-based algorithms like LinUCB \cite{li2010contextual} and Thompson sampling \cite{russo2016simple,EBNabi} achieve a regret bound of $\min\{ d\sqrt{T}, \frac{d^2}{\Delta_{\sf uniform}} \}$ \cite{abbasi2011improved}.
Appealing to the online-to-batch conversion, this translates to a PAC guarantee of $\frac{d^2}{\epsilon \ \Delta_{\sf uniform}}$.
More precise instance-dependent upper bounds on regret match instance-dependent lower bounds asymptotically as $T\rightarrow \infty$~\cite{hao2020adaptive,tirinzoni2020asymptotically}. 
These works are most similar to our setting and have qualitatively similar style algorithms. 
However, both approaches rely on asymptotics with large problem-dependent terms that may  dominate the bounds in finite time. 
Our work is focused on upper bounds that nearly match lower bounds for all finite times. \vspace{-10pt}

\paragraph{PAC sample complexity for contextual bandits}
As we will describe, all contextual bandits with an arbitrary policy class can be reduced to PAC learning for linear bandits. 
Once we made this reduction, our sample complexity analysis draws inspiration from the nearly instance-optimal algorithm for linear best-arm identification~\cite{fiez2019sequential}. 
PAC sample complexity of linear contextual bandits was also studied in~\cite{zanette2021design}, who shows a minimax guarantee sample complexity that scales with $\frac{d^2}{\epsilon^2}\log(1/\delta)$.
In their approach, \cite{agarwal2014taming} define their action sampling distribution as a convex combination over policies. 
Our sampling distribution, as well as the optimal sampling distribution, cannot be represented this way and is actually derived from the dual of the optimal experimental design objective.

\subsection{Contributions}
In this work, our contributions include:
\begin{enumerate}[wide, labelwidth=0pt, labelindent=0pt]
    \item 
    In the agnostic setting, we introduce a quantity $\rho_{\Pi}$ that characterizes the instance-dependent sample complexity of PAC learning for contextual bandits (see Equation~\ref{eqn:rho_star_combinatorial}).
    We show that $\rho_{\Pi}$ appears in an information theoretic lower bound on the sample complexity of any PAC algorithm as $\epsilon \rightarrow 0$ in Theorem~\ref{thm:lower_bound}. To ground this, we describe it carefully in the setting of the trivial policy class (Section~\ref{sec:trivial}) and linear policy classes (Section~\ref{sec:linear}). To do so, we reduce agnostic contextual bandits to the realizable linear case (also establishing matching upper and lower bounds in this setting).
    \item We construct an instance on which any regret minimax-optimal algorithm necessarily has a sample complexity that scales quadratically with the optimal sample complexity (Theorem~\ref{thm:inefficient-low-regret}). This shows that no algorithm can be both regret minimax-optimal and instance-optimal PAC.
    \item Finally, we propose Algorithm~\ref{alg:evaluation_oracle} whose sample complexity nearly matches the lower bound based on $\rho_{\Pi}$. By appealing to an argmax oracle, this algorithm has a runtime polynomial in $\rho_{\Pi}$, $1/\epsilon$, $\log(1/\delta)$, $|\mc{A}|$, and $\log(|\Pi|)$, assuming a unit cost of invoking the oracle. 
  
\end{enumerate}

\section{Problem statement and main results}\label{sec:results}
More formally, define $\mc{F}_t = \sigma(c_1,a_1,r_1,\dots,c_t,a_t,r_t)$ as the natural $\sigma$-algebra filtration capturing all observed random variables up to time $t$. 
At each time $t$ an \textit{algorithm} defines a \emph{sampling rule} $\mc{F}_t \mapsto \mc{A}$ which defines $a_{t+1}$, an $\mc{F}_t$-measurable stopping time $\tau \in \mathbb{N}$, and a \emph{selection rule} $\mc{F}_t \mapsto \Pi$ that is only called once at the stopping time $t=\tau$.

\begin{definition}
Fix $\epsilon \geq 0$ and $\delta \in (0,1)$.
We say an algorithm is $(\epsilon,\delta)$-PAC for contextual bandits with policy class $\Pi$, if at the stopping time $\tau \in \mathbb{N}$ with $\E[\tau] < \infty$, the algorithm outputs $\widehat{\pi} \in \Pi$ satisfying $\P( V(\widehat{\pi}) \geq \max_{\pi \in \Pi} V(\pi) - \epsilon ) \geq 1-\delta$.  
\end{definition}


\noindent The \emph{sample complexity} of an $(\epsilon,\delta)$-PAC algorithm for contextual bandits is the time at which the algorithm stops and outputs $\widehat{\pi}$.
The following quantity governs the sample complexity :
\begin{align}
    \rho_{\Pi,\epsilon}(\Pi, v) := \min_{p_c \in \triangle_{\mc{A}}, \ \forall c \in \mc{C} }  \max_{\pi \in \Pi \setminus \pi_* } \frac{ \E_{c \sim \nu} \left[ \left(\frac{1}{p_{c,\pi(c)}}+ \frac{1}{p_{c,\pi_*(c)}}\right) \1\{\pi_*(c) \neq \pi(c) \} \right] }{(\E_{c \sim \nu}[ \, \v(c,\pi_*(c)) - \v(c,\pi(c)) \, ] \vee \epsilon)^2}. \label{eqn:rho_star_combinatorial}
\end{align}
Here, for any countable set $\mc{X}$ we have that $\triangle_{\mc{X}} = \{ p \in \R^{|\mc{X}|} : \sum_{x \in \mc{X}} p_x = 1, p_x \geq 0 \ \forall x \in \mc{X} \}$ so that $p_c$ for every $c \in \mc{C}$ defines a probability distribution over actions $\mc{A}$. In addition we use the notation $a \vee b := \max\{a,b\}$. 
We begin with a necessary condition on the sample complexity for the particular case of exact policy identification ($\epsilon=0$).
\begin{theorem}[Lower bound]\label{thm:lower_bound}
Fix $\epsilon=0$ and $\delta \in (0,1)$. 
Moreover, fix a contextual bandit instance $\mu = (\nu,r)$ and a collection of policies $\Pi$.
Then any $(0,\delta)$-PAC algorithm for contextual bandits satisfies $\E_\mu[\tau] \geq \rho_{\Pi,0} \log(1/2.4 \delta)$.
\end{theorem}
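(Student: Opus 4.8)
The plan is to use the standard change-of-measure (transportation) lower bound for active hypothesis testing, specializing the generic argument of Kaufmann--Cappé--Garivier to the contextual bandit setting. First I would fix the true instance $\mu = (\nu, r)$ with its unique optimal policy $\pi_*$, and for each competitor $\pi \neq \pi_*$ construct an \emph{alternative} instance $\mu' = (\nu, r')$ in which $\pi$ becomes optimal (i.e. $V'(\pi) > V'(\pi_*)$) while the context distribution $\nu$ is left unchanged and the reward means $r'(c,a)$ are altered only on context-action pairs that are ``touched'' by the sampling rule. Any $(0,\delta)$-PAC algorithm must output $\pi_*$ with probability $\geq 1-\delta$ on $\mu$ and must \emph{not} output $\pi_*$ with probability $\geq 1-\delta$ on $\mu'$; the classical argument then gives, via the data-processing inequality applied to the event $\{\widehat{\pi} = \pi_*\}$,
\[
  \mathrm{KL}\big(\mathbb{P}_\mu \,\|\, \mathbb{P}_{\mu'}\big) \;\geq\; \log\frac{1}{2.4\,\delta}.
\]
By Wald's identity / the chain rule for KL divergence along the filtration $\mc{F}_t$, the left side decomposes as $\sum_{c,a} \E_\mu[N_{c,a}(\tau)] \cdot \mathrm{KL}\big(r(c,a) \,\|\, r'(c,a)\big)$, where $N_{c,a}(\tau)$ counts the number of rounds up to $\tau$ in which context $c$ arrived and action $a$ was played. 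Writing $\E_\mu[N_{c,a}(\tau)] = \E_\mu[\tau]\cdot \nu_c \cdot \bar p_{c,a}$ for the (expected) empirical action-allocation $\bar p_{c,a}$ in context $c$, this becomes a linear constraint on $\E_\mu[\tau]$.

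The second step is to optimize the alternative over the perturbation of rewards and then take the infimum over the algorithm's allocation. For a fixed competitor $\pi$, the cheapest confusing alternative perturbs $r$ only on the contexts $c$ where $\pi(c) \neq \pi_*(c)$, shifting $r(c,\pi(c))$ up and $r(c,\pi_*(c))$ down so that the total value gap $\E_{c\sim\nu}[r(c,\pi_*(c)) - r(c,\pi(c))]$ is just barely reversed; with Gaussian (unit-variance) rewards, $\mathrm{KL}(r(c,a)\|r'(c,a)) = \tfrac12(r(c,a)-r'(c,a))^2$, and a Lagrangian/Cauchy--Schwarz calculation shows the minimal weighted perturbation cost subject to closing a value gap of size $\Delta_\pi := \E_{c\sim\nu}[r(c,\pi_*(c))-r(c,\pi(c))]$ is exactly
\[
  \frac{\Delta_\pi^2}{\,2\,\E_{c\sim\nu}\!\big[\big(\tfrac{1}{\bar p_{c,\pi(c)}} + \tfrac{1}{\bar p_{c,\pi_*(c)}}\big)\1\{\pi_*(c)\neq\pi(c)\}\big]\,}.
\]
Rearranging the resulting inequality $\E_\mu[\tau]\cdot(\text{cost}) \geq \log(1/2.4\delta)$ for each $\pi$, then taking the worst $\pi$ and recognizing that the algorithm is free to choose any allocation $\{\bar p_c\}_{c\in\mc{C}}$, yields $\E_\mu[\tau] \geq \rho_{\Pi,0}\log(1/2.4\delta)$, since $\rho_{\Pi,0}$ is precisely the value of the min–max over allocations and competitors in Equation~\ref{eqn:rho_star_combinatorial} with $\epsilon=0$.

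The main obstacle is making the $\sigma(c_t,a_t,r_t)$-adapted accounting fully rigorous: the allocation $\bar p_{c,a}$ is not chosen in advance but is a random, history-dependent quantity, so I need to be careful that the change-of-measure identity still holds with $\E_\mu[N_{c,a}(\tau)]$ in place of a deterministic budget. This is handled by a Wald-type stopping-time argument (the log-likelihood ratio between $\mathbb{P}_\mu$ and $\mathbb{P}_{\mu'}$ is a sum over rounds whose conditional expectation given $\mc{F}_{t-1}$ is exactly the per-round KL contribution, and $\tau$ has finite expectation by assumption), so that $\mathrm{KL}(\mathbb{P}_\mu\|\mathbb{P}_{\mu'}) = \sum_{c,a}\E_\mu[N_{c,a}(\tau)]\,\mathrm{KL}(r(c,a)\|r'(c,a))$ holds exactly. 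A secondary subtlety is that $\mc{C}$ may be countably infinite, so the infimum defining $\rho_{\Pi,0}$ need not be attained and one argues with near-optimal allocations; also one should note that if $\Delta_\pi = 0$ for some $\pi\neq\pi_*$ (so $\rho_{\Pi,0} = \infty$) the bound is vacuously true, matching the fact that exact identification is then impossible.
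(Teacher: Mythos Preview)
Your proposal is correct and follows essentially the same route as the paper: the transportation inequality of Kaufmann--Capp\'e--Garivier, Wald's identity to factor $\E_\mu[\tau]$ out of the expected pull counts, and an optimization over alternatives for each competitor $\pi$. The only presentational difference is that the paper first embeds the problem into the realizable linear setting via $\phi(c,a)=\mathrm{vec}(\mb{e}_c\mb{e}_a^\top)$ and then invokes the linear best-arm lower-bound calculation (minimizing $\|\theta_*-\theta\|_{A}^2$ over alternatives $\theta$), whereas you optimize the reward perturbations $r'(c,a)$ directly; since under that embedding $\theta$ \emph{is} the vectorized reward table, the two optimizations are identical and both yield the same $\Delta_\pi^2 / \E_{c\sim\nu}[(1/\bar p_{c,\pi(c)}+1/\bar p_{c,\pi_*(c)})\1\{\pi(c)\neq\pi_*(c)\}]$ expression.
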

\noindent The proof of the lower bound follows from standard information theoretic arguments \cite{kaufmann2016complexity}.
The lower bound implicitly applies to learners that know the distribution $\nu$ precisely.
In practice, such knowledge would never be available however the learner may have a large dataset of offline data.
\begin{assumption}\label{asm:offline_data}
Prior to starting the game, the learning algorithm is given a large dataset of contexts $\mc{D} = \{c_t\}_{t=1}^T$, where each $c_t\overset{i.i.d.}{\sim}\nu$ for all $t \in [T]$, and $T = O(\text{poly}(1/\epsilon,|\mc{A}|,\log(1/\delta),\log(|\Pi|)))$.
\end{assumption}
\noindent The above only assumes access to samples from the context distribution, not rewards or the value function. 
Importantly, since $\mc{C}$ could be uncountable, we do not assume $\mc{D}$ covers the support of $\nu$. 
Assumption~\ref{asm:offline_data} is satisfied, for example, in an e-commerce setting where the context is the demographic information about visitors to the site for which massive troves of historical data may be available.
Other works in PAC learning have made similar assumptions \cite{huang2015efficient}.
We would like our algorithm to be computationally efficient in the sense that it makes a polynomial number of calls to what we refer to as argmax oracle. Such an assumption is common in the contextual bandits literature~\cite{agarwal2014taming,krishnamurthy2017active, dudik2011efficient}.

\begin{definition}[Argmax oracle (\textsf{AMO})]\label{def:csc}
 The oracle $\textsf{AMO}(\Pi, \{(c_t, s_t)\}_{t=1}^n)$ is an algorithm that given contexts and cost vectors $(c_1,s_1), \cdots, (c_n,s_n)\in\C\times \R^{|\mc{A}|}$, returns $\underset{\pi \in \Pi}{\arg\max} \sum_{t=1}^{n} s_t\left(\pi\left(c_t\right)\right)$. The constrained argmax oracle, denoted as \textsf{C-AMO}, given an upper bound $l$ on the loss, returns $\underset{\pi \in \Pi}{\arg\max} \sum_{t=1}^{n} s_t\left(\pi\left(c_t\right)\right)$ subject to $\sum_{t=1}^{n} s_t\left(\pi\left(c_t\right)\right)\leq l$.
\end{definition}

In general we can implement \textsf{AMO} by calling to cost-sensitive classification~ \cite{dudik2011efficient,beygelzimer2005error} and \textsf{C-AMO} through a Lagrangian relaxation and a cost-sensitive classification oracle~\cite{agarwal2018reductions, cotter2019training}.
Our algorithm uses an argmax oracle as a subroutine at most a polynomial number of times in $\epsilon^{-1}, \log(1/\delta), |\mathcal{A}|$ and $\log(|\Pi|)$. In this sense, it is computationally efficient.
The following sufficiency result holds for general $\epsilon \geq 0$.
\begin{theorem}[Upper bound]\label{thm:upper_informal}
Fix $\epsilon \geq 0$ and $\delta \in (0,1)$.
Under Assumption~\ref{asm:offline_data}, there exists a computationally efficient $(\epsilon,\delta)$-PAC algorithm for contextual bandits that satisfies a sample complexity of $\tau \leq \rho_{\Pi,\epsilon} \log( |\Pi|\log_2(1/\epsilon) /\delta ) \log(1/\Delta_\epsilon)$, where $\Delta_\epsilon = \max\{ \epsilon, \min_{\pi \in \Pi \setminus \pi_*}V(\pi_*)-V(\pi) \}$.
Furthermore, this sample complexity never exceeds $\frac{|\mc{A}|(\log(|\Pi|)+\log(1/\delta)) \log(1/\epsilon)}{\epsilon^2}$.
\end{theorem}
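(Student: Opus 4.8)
The plan is to build a phased elimination algorithm in the spirit of \textsc{RAGE} for transductive linear bandits~\cite{fiez2019sequential}, after reducing agnostic contextual bandits to a (possibly infinite-dimensional) linear bandit. Identify a policy $\pi$ with the occupancy vector $x_\pi\in\R^{\mc{C}\times\mc{A}}$, $x_\pi(c,a)=\nu_c\1\{\pi(c)=a\}$, so that $V(\pi)=\langle x_\pi,\theta_*\rangle$ with $\theta_*(c,a)=\v(c,a)$, and pulling $a$ in context $c$ measures coordinate $(c,a)$; under this identification the quantity inside $\rho_{\Pi,\epsilon}$ is the transductive $G$-optimal design value with target set $\{x_{\pi_*}-x_\pi\}$ and measurement set $\{e_{c,a}\}$, where a design $\lambda$ over $\mc{C}\times\mc{A}$ with $\mc{C}$-marginal $\nu$ realizes the conditional sampling rule $p_c=\lambda(\cdot\mid c)$. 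The algorithm runs phases $l=1,2,\dots$ with target accuracy $2^{-l}$ and active set $\Pi_l\subseteq\Pi$ (initially $\Pi$): in phase $l$ it (i) computes a near-optimal design $p^{(l)}$ for the problem restricted to differences within $\Pi_l$, (ii) plays the rule $p^{(l)}_{c_t}$ for $N_l\asymp 2^{2l}\,\mathrm{val}(p^{(l)})\log(|\Pi|\log_2(1/\epsilon)/\delta)$ rounds, (iii) forms inverse-propensity estimates $\widehat V_l(\pi)=\tfrac1{N_l}\sum_t \tfrac{r_t}{p^{(l)}_{c_t,a_t}}\1\{\pi(c_t)=a_t\}$, and (iv) discards any $\pi$ with $\widehat V_l(\pi) < \max_{\pi'\in\Pi_l}\widehat V_l(\pi') - 2^{-l}$; it halts and returns $\arg\max_{\pi\in\Pi_l}\widehat V_l(\pi)$ once $|\Pi_l|=1$ or $2^{-l}<\epsilon$.

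Correctness and the bound follow from three ingredients on one ``good event.'' First, a sub-Gaussian bound for the IPS estimator, after mixing a $\gamma$-fraction of uniform exploration into $p^{(l)}$ so propensities are $\ge\gamma/|\mc{A}|$, gives $|\widehat V_l(\pi)-V(\pi)|\le 2^{-l-2}$ for all $\pi\in\Pi_l$ with probability $1-\delta/(\#\text{phases})$; a union bound over $\le|\Pi|$ policies and $\lceil\log_2(1/\epsilon)\rceil$ phases contributes the $\log(|\Pi|\log_2(1/\epsilon)/\delta)$ factor and the $1-\delta$ guarantee. On this event $\pi_*$ is never eliminated, any $\pi$ surviving phase $l-1$ has $V(\pi_*)-V(\pi)\le c\,2^{-l}$ for an absolute $c$, and (tuning constants so the last-phase slack is $\le\epsilon$) the output is $\epsilon$-optimal. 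Second -- this closes the ``catch-22'' that the design depends on the unknown gaps and on $\pi_*$ -- since every $\pi\in\Pi_l$ has gap $\le c\,2^{-l}$ and $\epsilon\le 2^{-l}$ before stopping, $(\E_c[\v(c,\pi_*(c))-\v(c,\pi(c))]\vee\epsilon)^2\le c^2 2^{-2l}$, so for every design $\lambda$, $2^{2l}\max_{\pi\in\Pi_l}\E_c[\cdots]\le c^2\max_{\pi\in\Pi\setminus\pi_*}\E_c[\cdots]/(\cdots\vee\epsilon)^2$; taking $\min_\lambda$ on both sides yields $2^{2l}\,\mathrm{val}(p^{(l)})=O(\rho_{\Pi,\epsilon})$ (the extra step that the algorithm uses all differences within $\Pi_l$ rather than $\pi_*$-anchored ones costs only a constant, by a triangle inequality in the $A(\lambda)^{-1}$-norm together with near-optimality of all of $\Pi_l$), hence $N_l=O(\rho_{\Pi,\epsilon}\log(|\Pi|\log_2(1/\epsilon)/\delta))$. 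Summing over the $\le\lceil\log_2(1/\Delta_\epsilon)\rceil$ phases -- the process ends when $\Pi_l=\{\pi_*\}$, by phase $\log_2(1/\Delta_{\sf pol})$, or when $2^{-l}<\epsilon$ -- gives $\tau=O(\rho_{\Pi,\epsilon}\log(|\Pi|\log_2(1/\epsilon)/\delta)\log(1/\Delta_\epsilon))$. The ``never exceeds'' bound follows by plugging $p_{c,a}=1/|\mc{A}|$ into the definition of $\rho_{\Pi,\epsilon}$, giving $\rho_{\Pi,\epsilon}\le 2|\mc{A}|/\epsilon^2$, and simplifying logarithms.

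It remains to make the design step (i) and the elimination step (iv) oracle-efficient and to handle the unknown context distribution. Elimination uses one \textsf{C-AMO} query per constraint, since ``is there $\pi\in\Pi_l$ with $\widehat V_l(\pi)\ge\ell$?'' is a constrained classification problem. For (i), $\lambda\mapsto\max_{\pi}\E_c[\1\{\pi(c)\ne\pi'(c)\}(p_{c,\pi(c)}^{-1}+p_{c,\pi'(c)}^{-1})]$ is convex in $\lambda$, and for fixed $\lambda$ the inner maximization over $\pi\in\Pi_l$ is a cost-sensitive classification problem solved by one \textsf{AMO} call; one therefore solves the Lagrangian-dual/saddle-point formulation with a no-regret (mirror-descent type) scheme whose iteration count -- hence number of \textsf{AMO} calls -- is polynomial in $1/\epsilon$, $|\mc{A}|$, $\log|\Pi|$, and the target accuracy, and the resulting dual variables (supported on polynomially many policies) furnish a sampling rule $p^{(l)}_c$ evaluable on any fresh context. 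Because $\nu$ and $\mc{C}$ are unknown, all expectations in the design step are replaced by empirical averages over the offline dataset $\mc{D}$ of Assumption~\ref{asm:offline_data}; a uniform-convergence argument over the disagreement-indicator class (complexity controlled by $\log|\Pi|$), using propensities $\ge\gamma/|\mc{A}|$, shows the empirical design value is within a constant of the population one for the stated polynomial $T$, so the sample counts are unchanged up to constants, while the IPS estimates are built from the \emph{live} interactions, which are genuine i.i.d.\ draws from $\nu$. The main obstacle is precisely this last bundle: converting the abstract min-max design into an oracle-efficient computation whose output is a context-indexed sampling rule, while controlling the statistical error introduced by solving the design on finite offline data with possibly small propensities; given the reduction and this machinery, the phased-elimination scaffolding and good-event analysis are comparatively routine.
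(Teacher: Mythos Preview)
Your sample-complexity argument---reduce to a linear problem via $\phi(c,a)=\text{vec}(e_ce_a^\top)$, run a RAGE-style phased elimination, bound each phase's design value by $O(\rho_{\Pi,\epsilon})$ using that survivors have gap $\le c\,2^{-l}$, and sum over $\lceil\log_2(1/\Delta_\epsilon)\rceil$ phases---is exactly the paper's route (the reduction in \S3.1, the elimination algorithm and its analysis in \S3.2), and the ``never exceeds'' claim via $p_{c,a}=1/|\mc{A}|$ matches the one-line remark after the theorem statement. Two technical points need correction, though. You assert $|\widehat V_l(\pi)-V(\pi)|\le 2^{-l-2}$ for each $\pi$ individually, but an individual IPS estimate has variance $\E_c[1/p_{c,\pi(c)}]\ge 1$, not the pairwise quantity $\E_c[(1/p_{c,\pi(c)}+1/p_{c,\pi'(c)})\1\{\pi(c)\ne\pi'(c)\}]$ that your design actually controls; when $\rho_{\Pi,\epsilon}<1$ the budget $N_l\asymp 2^{2l}\,\mathrm{val}(p^{(l)})\log(\cdots)$ is too small for the per-policy bound. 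What elimination needs, and what the paper proves, is concentration of the \emph{differences} $\widehat\Delta_l(\pi,\pi')$. Separately, mixing in uniform exploration does not deliver a sub-Gaussian bound at the design-based variance: the paper observes that the estimator is heavy-tailed because propensities can be tiny, and uses Catoni's robust mean in the elimination algorithm (and a $\gamma$-regularized IPS estimator in the efficient algorithm) precisely so that no Bernstein second-order term has to be tracked against the design budget.

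The larger divergence is in the computationally-efficient half. The paper explicitly abandons elimination (\S3.3) because maintaining $\Pi_l$ is the obstruction, and instead builds a \emph{non-elimination} algorithm whose design objective is anchored at the current estimate $\widehat\pi_{l-1}$ and incorporates the previous-round empirical gaps $\widehat\Delta_{l-1}(\pi,\widehat\pi_{l-1})$. After dualizing over $\lambda\in\triangle_\Pi$ and introducing an auxiliary $\gamma$, the inner minimization over $w$ admits a closed form, and the outer saddle is solved by Frank--Wolfe in $\lambda$ (each linear-minimization step is one \textsf{AMO} call and keeps the iterate $O(t)$-sparse) alternating with gradient descent in $\gamma$. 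Your proposal instead retains elimination with an implicitly-represented $\Pi_l$, leaves the role of $\pi'$ in the design objective unspecified (the paper's Algorithm~1 maximizes over all pairs in $\Pi_l$; Algorithm~3 fixes $\pi'=\widehat\pi_{l-1}$), and invokes a mirror-descent scheme that does not by itself produce the sparse dual iterates needed to evaluate $p^{(l)}_c$ on a fresh context via polynomially many oracle calls. Your route may be salvageable, but it is a genuinely different design than the paper's and the oracle-efficiency argument is not yet closed.
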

The second part of the theorem follows from the first, since $\rho_{\Pi,\epsilon} \leq 2 |\mc{A}| /\epsilon^2$ by taking $p_{c,a} = 1/|\mc{A}|$ for all $(c,a) \in \mc{C}\times\mc{A}$.

\subsection{Inefficiency of low-regret algorithms}
Computationally efficient algorithms are known to exist, such as ILOVETOCONBANDITS \cite{agarwal2014taming}, which achieve a minimax-optimal cumulative regret of $\sqrt{ T |\mc{A}| \log( |\Pi| /\delta) }$.
Inspecting the proof in~\cite{agarwal2014taming}, one can extract a sample complexity of $\epsilon^{-2} |\mc{A}| \log(|\Pi|/\delta)$ from such results (which is also minimax optimal for PAC). 
The previous section showed that the sample complexity of our algorithm, Theorem~\ref{thm:upper_informal}, nearly matches the instance-dependent lower bound of Theorem~\ref{thm:lower_bound}. In other words, our algorithm achieves a nearly optimal instance-dependent PAC sample complexity. 
However, it is natural to wonder if perhaps with a tighter analysis, the minimax regret optimal algorithm in \cite{agarwal2014taming} also obtains the instance-optimal PAC sample complexity.
In this section, we show that this is not the case.
Indeed, we show that \emph{any} algorithm that is minimax regret optimal must have a sample complexity that is at least quadratic in the optimal PAC sample complexity of some instance. 

\begin{definition}[Hard instance]\label{def:hard_instance}
Fix $m \in \mathbb{N}$, $\Delta \in (0,1]$ and let $\mc{C}=[m]$, $\mc{A} = \{0,1\}$. 
For $i=1,\dots,m$, let $\pi_i(j) = \1\{ i = j\}$ and define $\v(i,j) = \Delta \1\{ j = \pi_1(i) \}$. 
Then $V(\pi_1) = \Delta$ and $V(\pi_i)= \Delta( 1 - 2 / m)$ for all $i \in \mc{C} \setminus \{1\}$. 
\end{definition}

\noindent Note that for the hard instance, $m = |\Pi|$. If observations are corrupted by $\mc{N}(0,1)$ additive noise, then a straightforward calculation shows that $\rho_{\Pi,0}(\Pi, v) = \frac{4/m}{(2 \Delta / m)^2} = m \Delta^{-2}$ for the hard instance.

\begin{theorem}\label{thm:inefficient-low-regret}
Fix $\delta \in (0,1)$ and $\Delta\in (0,1]$. 
We say an algorithm is an $\alpha$-minimax regret algorithm  if for some $\alpha > 0$ and all $T \in \mathbb{N}$ :
\begin{equation*}
\textstyle\underset{\mu'}{\max} \E_{\mu'}\Big[ \sum\limits_{t=1}^T ( r_t(c_t,\pi_*(c_t)) - r_t(c_t,a_t) ) \Big] = \underset{\mu}{\max} \sum\limits_{c,a} \E_{\mu'}[ T_{c,a}(T) ] ( r(c,\pi_*(c)) - r(c,a) ) \leq \sqrt{\alpha |\mc{A}| T}
\end{equation*}
where the maximum is taken over all contextual bandit instances $\mu'=(\nu', \v')$ and $T_{c,a}(T) = \sum_{t=1}^T \1\{ c_t = c, a_t = a \}$.
For any $\alpha$-minimax regret algorithm, it is a $(0,\delta)$-PAC algorithm if at a stopping time $\tau$ it outputs the optimal policy $\pi_*$ with probability at least $1-\delta$.
 Any $\alpha$-minimax regret algorithm that is also $(0,\delta)$-PAC satisfies $\E_\mu[\tau] \geq m^2 \Delta^{-2} \log^2(1/2.4\delta) / 4 \alpha$ for the instance $\mu = (\nu, r)$ defined in \ref{def:hard_instance}.
\end{theorem}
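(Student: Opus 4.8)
The plan is to play the information-theoretic lower bound machinery already used for Theorem~\ref{thm:lower_bound} against the $\alpha$-minimax regret hypothesis: the former forces the algorithm to make a certain number of suboptimal pulls before it may stop, while the latter forbids accumulating suboptimal pulls faster than $\sqrt{t}$, and combining the two squares the sample complexity.

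First I would run the change of measure on the hard instance $\mu=(\nu,r)$ of Definition~\ref{def:hard_instance}, where $\pi_1$ is the unique optimal policy and the only suboptimal cells are $(1,0)$ and $(i,1)$ for $i\neq 1$, each with gap exactly $\Delta$. For each $i\in\{2,\dots,m\}$ define an alternative instance $\mu^{(i)}$ agreeing with $\mu$ except that it raises the mean rewards at the cells $(1,0)$ and $(i,1)$ by a total slightly exceeding $2\Delta$ (a symmetric split is cleanest); a short computation with $V(\cdot)$ shows that $\pi_i$ becomes the unique optimal policy under $\mu^{(i)}$ while $\pi_1$ is \emph{strictly} suboptimal, and that only these two cells contribute to $\mathrm{KL}(\mu\|\mu^{(i)})$. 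Applying the transportation lemma of~\cite{kaufmann2016complexity} to the event $\{\widehat\pi=\pi_1\}\in\mc{F}_\tau$, which has probability $\geq 1-\delta$ under $\mu$ and $\leq\delta$ under each $\mu^{(i)}$, gives $\sum_{c,a}\E_\mu[T_{c,a}(\tau)]\,\mathrm{KL}(\mu_{c,a}\|\mu^{(i)}_{c,a})\geq \mathrm{kl}(1-\delta,\delta)\geq \log(1/2.4\delta)$. Combining these $m-1$ linear inequalities in the expected counts of the suboptimal cells, and optimizing over the perturbation split, reproduces exactly the bound of Theorem~\ref{thm:lower_bound} localized to those cells — indeed in the hard instance every pull sits at an optimal cell (cost $0$) or a suboptimal cell (cost $\Delta$), and the change of measure only ever constrains the suboptimal cells — yielding $\E_\mu[N(\tau)]\geq \rho_{\Pi,0}\log(1/2.4\delta)=m\Delta^{-2}\log(1/2.4\delta)=:B$, where $N(t):=T_{1,0}(t)+\sum_{i\neq 1}T_{i,1}(t)$.

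Second, the $\alpha$-minimax regret hypothesis specialized to $\mu$ (where $|\mc{A}|=2$ and each suboptimal pull costs $\Delta$) says precisely that $\Delta\,\E_\mu[N(T)]=R_\mu(T)\leq\sqrt{2\alpha T}$, i.e.\ $\E_\mu[N(T)]\leq \sqrt{2\alpha T}/\Delta$ for every \emph{deterministic} horizon $T$. Since the target is of order $B^2\Delta^2/\alpha$, obtained by balancing $\sqrt{2\alpha T}/\Delta=B$, what remains is to transfer the inequality $\E_\mu[N(\tau)]\geq B$ from the random stopping time to a deterministic horizon where the regret cap applies. I would do this by fixing a candidate $T_0$ just below the claimed bound, assuming for contradiction $\E_\mu[\tau]\leq T_0$, and using Markov to make $\P_\mu(\tau>2T_0)$ small: then the truncated event $\{\widehat\pi=\pi_1,\ \tau\leq 2T_0\}\in\mc{F}_{2T_0}$ still has probability bounded away from $\delta$ under $\mu$ and at most $\delta$ under each $\mu^{(i)}$, so repeating the transportation argument at the deterministic horizon $2T_0$ gives $cB\leq\E_\mu[N(2T_0)]\leq\sqrt{4\alpha T_0}/\Delta$ for an absolute constant $c>0$, forcing $T_0\geq c' m^2\Delta^{-2}\log^2(1/2.4\delta)/\alpha$ and contradicting the choice of $T_0$. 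An equivalent route uses that $N$ is non-decreasing with unit increments, so $N(\tau)\leq N(T)+(\tau-T)^+$ for any deterministic $T$; taking expectations and bounding $\E_\mu[(\tau-T)^+]\leq\E_\mu[\tau\mathbf{1}\{\tau>T\}]$ at $T\asymp\E_\mu[\tau]$ yields the same quadratic conclusion.

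The main obstacle is exactly this random-to-deterministic transfer: the transportation inequality is cleanest evaluated at $\tau$ (which is what produces the sharp $\log(1/2.4\delta)$ term), whereas the regret hypothesis is an a-priori deterministic-horizon statement, and a naive truncation degrades $\mathrm{kl}(1-\delta,\delta)$ and bleeds constants through the Markov step. Recovering the stated constant $1/(4\alpha)$ requires choosing the truncation horizon and the perturbation magnitudes of the $\mu^{(i)}$ carefully and accounting for these losses; the argument's architecture — $\E_\mu[\tau]\gtrsim(\text{PAC sample-complexity lower bound})^2\cdot\Delta^2/\alpha$ — is as above. A secondary, routine check is that each $\mu^{(i)}$ is a bona fide contextual bandit instance under which $\pi_1$ is \emph{strictly} suboptimal (hence the strict perturbation "$>2\Delta$"), and that the bookkeeping indeed gives $\rho_{\Pi,0}=m\Delta^{-2}$ for the hard instance as claimed after Definition~\ref{def:hard_instance}.
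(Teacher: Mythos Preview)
Your approach is sound in architecture and would yield the theorem up to constants, but the paper's proof is structured quite differently and more simply. First, the paper avoids your truncation-and-Markov step entirely: it transfers the regret bound to the random stopping time in one line via Jensen, writing
\[
\sum_{c,a}\E_\mu[T_{c,a}(\tau)](r(c,\pi_*(c))-r(c,a))\leq\E_\mu\big[\sqrt{\alpha|\mc{A}|\,\tau}\,\big]\leq\sqrt{\alpha|\mc{A}|\,\E_\mu[\tau]},
\]
which is precisely what preserves the stated constant $1/(4\alpha)$. Second, the paper uses only a \emph{single} change of measure rather than $m-1$: having capped the total regret, it locates the least-visited suboptimal cell $\bar c=\arg\min_c\E_\mu[T_{c,\pi_c(c)}(\tau)]$, so that the sum $\geq m\cdot\min$ trick gives $m\Delta\,\E_\mu[T_{\bar c,\pi_{\bar c}(\bar c)}(\tau)]\leq\sqrt{2\alpha\,\E_\mu[\tau]}$; then one alternative $\mu'$ raising $r(\bar c,1)$ by $2\Delta$ makes $\pi_{\bar c}$ optimal, and the transportation lemma at $\tau$ yields $\log(1/2.4\delta)\leq 2\Delta^2\,\E_\mu[T_{\bar c,\pi_{\bar c}(\bar c)}(\tau)]$. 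Chaining these two inequalities and squaring is the whole proof. Your route is more modular---it cleanly separates the PAC lower bound on $N(\tau)$ from the regret cap on $N(T)$---but pays for that separation with the truncation bookkeeping and constant leakage you already flag, whereas the paper exploits the regret bound twice (once via Jensen to pass to $\tau$, once via $\min\leq$ average to localize to one cell) and reaches the sharp constant directly.
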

\noindent We point out that the minimax regret optimal rate takes $\alpha = \log(m)=\log(|\Pi|)$.
Thus, taking $\Delta = 1$ and $\delta=0.1$, the minimax regret optimal algorithm has a PAC sample complexity of $m^2 / \log(m)$; whereas the PAC sample complexity of our algorithm, Theorem~\ref{thm:upper_informal}, is just $m \log(m)$.
That is, algorithms with optimal minimax regret have a sample complexity that is at least nearly the optimal PAC sample complexity \emph{squared}.
This demonstrates that no algorithm can simultaneously be minimax regret optimal and obtain the optimal PAC sample complexity.


\subsection{Trivial policy class}\label{sec:trivial}

As a warm-up to discussing linear policy classes, let us consider the simplest policy class.
\begin{definition}[Trivial policy class]
Assume $|\mc{C}| < \infty$ and let $\Pi = \{ \pi(c) = a : (c,a) \in \mc{C} \times \mc{A}\}$ so that $|\Pi| = |\mc{A}|^{|\mc{C}|}$.
\end{definition}
\noindent The trivial policy class has the flexibility to predict any action $a \in \mc{A}$ individually for each $c \in \mc{C}$. This allows us to show that $\rho_{\Pi,0}(\Pi, v)\leq \max_c \frac{2}{\nu_c} \sum_{a'} \Delta_{c,a'}^{-2}$ (see Appendix~\ref{sec:supp-trivial-class}). An immediate corollary of Theorem~\ref{thm:upper_informal} is obtained by simply noting that $|\Pi| = |\mc{A}|^{|\mc{C}|}$.


\begin{corollary}[Trivial class, upper]\label{cor:trivial_upper}
Fix $\epsilon >0$ and $\delta \in (0,1)$. Let $\Pi$ be the trivial policy class applied to some fixed $\mc{C},\mc{A}$ spaces. Then under Assumption~\ref{asm:offline_data} there exists a  computationally efficient $(\epsilon,\delta)$-PAC algorithm for contextual bandits satisfying $\tau \leq \min\{ A \epsilon^{-2}, \max_{c} \frac{1}{\nu_c} \sum_{a'} \Delta_{c,a'}^{-2} \} ( |\mc{C}| \log(|\mc{A}|) + \log( 1 /\delta )) \log(1/\Delta_\epsilon)$, where $\Delta_\epsilon = \max\{ \epsilon, \min_{\pi \in \Pi \setminus \pi_*}V(\pi_*)-V(\pi) \}$. 
Furthermore, this sample complexity never exceeds $\frac{|\mc{A}| (|\mc{C}|\log(|\mc{A}|) + \log(1/\delta))}{\epsilon^2} \log(1/\epsilon)$.
\end{corollary}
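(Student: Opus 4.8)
The plan is to derive Corollary~\ref{cor:trivial_upper} as a direct instantiation of Theorem~\ref{thm:upper_informal} combined with the combinatorial bound on $\rho_{\Pi,\epsilon}$ that the trivial policy class affords. First I would invoke Theorem~\ref{thm:upper_informal}: under Assumption~\ref{asm:offline_data} there is a computationally efficient $(\epsilon,\delta)$-PAC algorithm with $\tau \leq \rho_{\Pi,\epsilon}\log(|\Pi|\log_2(1/\epsilon)/\delta)\log(1/\Delta_\epsilon)$, and it never exceeds $\frac{|\mc{A}|(\log|\Pi|+\log(1/\delta))\log(1/\epsilon)}{\epsilon^2}$. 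The only remaining tasks are (i) substitute $|\Pi| = |\mc{A}|^{|\mc{C}|}$, so that $\log|\Pi| = |\mc{C}|\log|\mc{A}|$, which turns the log factors in both bounds into exactly the $|\mc{C}|\log(|\mc{A}|) + \log(1/\delta)$ form stated; and (ii) bound $\rho_{\Pi,\epsilon} \leq \rho_{\Pi,0} \leq \max_c \frac{2}{\nu_c}\sum_{a'}\Delta_{c,a'}^{-2}$, which is the inequality asserted in Section~\ref{sec:trivial} and proved in Appendix~\ref{sec:supp-trivial-class}. Combining with the universal bound $\rho_{\Pi,\epsilon}\leq 2|\mc{A}|/\epsilon^2$ (from taking $p_{c,a}=1/|\mc{A}|$, as noted after Theorem~\ref{thm:upper_informal}) yields the minimum of the two terms in the corollary.

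The one step that requires genuine argument, and which I expect to be the crux, is establishing $\rho_{\Pi,0}(\Pi,v) \leq \max_c \frac{2}{\nu_c}\sum_{a'}\Delta_{c,a'}^{-2}$ for the trivial class. The idea is to exhibit a good feasible allocation $\{p_c\}_{c\in\mc{C}}$ in the min-max definition \eqref{eqn:rho_star_combinatorial}. Because the trivial class contains, for each context $c$ and each suboptimal action $a \neq \pi_*(c)$, a policy $\pi$ that differs from $\pi_*$ only at $c$ (namely $\pi(c)=a$ and $\pi(c')=\pi_*(c')$ elsewhere), the max over $\pi \in \Pi\setminus\pi_*$ effectively decomposes into a max over single context-action deviations: for such a $\pi$, $\1\{\pi_*(c')\neq\pi(c')\}$ is supported only at $c'=c$, so the numerator is $\nu_c(\tfrac{1}{p_{c,a}}+\tfrac{1}{p_{c,\pi_*(c)}})$ and the denominator is $(\nu_c \Delta_{c,a})^2$. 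A general $\pi$ that disagrees at a set $S$ of contexts only makes the numerator a $\nu$-weighted average of such per-context quantities and the denominator a squared $\nu$-weighted average of the gaps, so by a convexity/Jensen-type argument the worst case is attained at single-context deviations; this reduces the problem to choosing, per context $c$, a distribution $p_c$ that minimizes $\max_{a\neq\pi_*(c)} \frac{1}{\nu_c \Delta_{c,a}^2}(\tfrac{1}{p_{c,a}}+\tfrac{1}{p_{c,\pi_*(c)}})$. Choosing $p_{c,a} \propto \Delta_{c,a}^{-1}$ for $a\neq\pi_*(c)$ and $p_{c,\pi_*(c)}$ a constant fraction gives $\tfrac{1}{p_{c,a}}+\tfrac{1}{p_{c,\pi_*(c)}} \lesssim \Delta_{c,a}\sum_{a'}\Delta_{c,a'}^{-1}$, hence each term is $\lesssim \frac{1}{\nu_c}\sum_{a'}\Delta_{c,a'}^{-2}$ up to the factor $2$; taking the max over $c$ completes it.

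Finally I would assemble the pieces: from Theorem~\ref{thm:upper_informal} with $\rho_{\Pi,\epsilon}$ replaced by the smaller of the two upper bounds, and $\log|\Pi|$ replaced by $|\mc{C}|\log|\mc{A}|$, I obtain
$\tau \leq \min\{|\mc{A}|\epsilon^{-2}, \max_c \tfrac{1}{\nu_c}\sum_{a'}\Delta_{c,a'}^{-2}\}(|\mc{C}|\log(|\mc{A}|)+\log(1/\delta))\log(1/\Delta_\epsilon)$, matching the stated bound (up to the harmless constant $2$, which can be absorbed or tracked), and the ``never exceeds'' clause follows by dropping the first argument of the min and noting $\log(1/\Delta_\epsilon) \leq \log(1/\epsilon)$ since $\Delta_\epsilon \geq \epsilon$. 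The bulk of the work is therefore deferred to Appendix~\ref{sec:supp-trivial-class} for the $\rho_{\Pi,0}$ bound; the corollary itself is essentially a specialization-and-substitution argument, and the main subtlety to watch is ensuring the reduction to single-context deviations in the min-max is genuinely valid (that no multi-context-disagreement policy can be worse), which is exactly the convexity observation sketched above.
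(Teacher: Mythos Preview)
Your plan matches the paper's exactly: the corollary is stated there as ``an immediate corollary of Theorem~\ref{thm:upper_informal}\ldots obtained by simply noting that $|\Pi|=|\mc{A}|^{|\mc{C}|}$,'' with the bound $\rho_{\Pi,0}\le\max_c\frac{2}{\nu_c}\sum_{a'}\Delta_{c,a'}^{-2}$ deferred to Appendix~\ref{sec:supp-trivial-class}. The reduction to single-context deviations is also what the paper does; it writes the $\max$ over $\pi$ as a $\max$ over $\alpha\in\{0,1\}^{|\mc{C}|\times|\mc{A}|}$ and then collapses to a single $(c,a)$ via repeated application of the elementary inequality $\frac{a_1+a_2}{(b_1+b_2)^2}\le\frac{a_1}{b_1^2}\vee\frac{a_2}{b_2^2}$ --- which is precisely the ``convexity/Jensen-type'' fact you invoke.

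There is, however, one step that fails as written: the allocation $p_{c,a}\propto\Delta_{c,a}^{-1}$ does \emph{not} yield $\lesssim\frac{1}{\nu_c}\sum_{a'}\Delta_{c,a'}^{-2}$. With that choice you end up with $\frac{1}{\nu_c\Delta_{c,a}}\sum_{a'}\Delta_{c,a'}^{-1}$, and the inequality $\Delta_{c,a}^{-1}\sum_{a'}\Delta_{c,a'}^{-1}\le C\sum_{a'}\Delta_{c,a'}^{-2}$ fails for any fixed constant $C$: take one suboptimal arm with gap $\epsilon$ and $K-1\approx\epsilon^{-2}$ arms with gap $1$; at the $\epsilon$-gap arm the left side is $\epsilon^{-2}+(K-1)\epsilon^{-1}\approx\epsilon^{-3}$ while the right side is $\epsilon^{-2}+K-1\approx\epsilon^{-2}$. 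The allocation that actually delivers the factor-$2$ bound is $p_{c,a}\propto\Delta_{c,a}^{-2}$ (with the standard convention $\Delta_{c,\pi_*(c)}:=\min_{a\neq\pi_*(c)}\Delta_{c,a}$), for which $\frac{1}{p_{c,a}}+\frac{1}{p_{c,\pi_*(c)}}=(\Delta_{c,a}^2+\Delta_{c,\min}^2)\sum_{a'}\Delta_{c,a'}^{-2}\le 2\Delta_{c,a}^2\sum_{a'}\Delta_{c,a'}^{-2}$, giving exactly $\frac{2}{\nu_c}\sum_{a'}\Delta_{c,a'}^{-2}$ after dividing by $\nu_c\Delta_{c,a}^2$. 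This is a one-symbol fix; the rest of your argument is correct.
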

\noindent 
Ignoring log factors, the minimax sample complexity of the trivial class is just $\epsilon^{-2} |\mc{A}| ( |\mc{C}| + \log(1/\delta))$.
This is actually a somewhat surprising result, because it says $\lim_{\delta \rightarrow 0} \frac{\E[\tau]}{\log(1/\delta)} \rightarrow \epsilon^{-2} |\mc{A}|$ which is \emph{independent} of $|\mc{C}|$.
To see why this result is somewhat remarkable, if we played a best-arm identification algorithm for each of the $|\mc{C}|$ contexts, then this would lead to a sample complexity of $\epsilon^{-2} |\mc{C}| \cdot |\mc{A}| \log(1/\delta)$. 
It is somewhat of a surprise that such a natural strategy is not optimal. 
For intuition for why we can avoid the multiplicative $|\mc{C}|$, note that to identify an $\epsilon$-good policy among just two policies $(\pi,\pi_*)$ using uniform exploration requires just $\epsilon^{-2}|\mc{A}|\log(1/\delta)$ samples. When we have more than two policies, a union bound achieves the claimed result.  

The minimax sample complexity of Corollary~\ref{cor:trivial_upper} (i.e., the second statement) is nearly tight:
\begin{theorem}[Trivial class, lower]\label{thm:trivial-class-lower}
Fix $\epsilon > 0$ and $\delta \in (0,1/6)$.
Let $\Pi$ be the trivial policy class applied to some fixed $\mc{C},\mc{A}$ spaces.
Moreover, fix a contextual bandit instance $\mu = (\nu,r)$ and a collection of policies $\Pi$.
Then any $(0,\delta)$-PAC algorithm for contextual bandits satisfies $\E_\mu[\tau] \geq \max_{c} \frac{1}{\nu_c} \sum_{a} \Delta_{c,a}^{-2} \log(1/2.4 \delta)$.
Furthermore, $\sup_\mu \E_\mu[\tau] \geq \epsilon^{-2}|\mc{A}|( |\mc{C}| +   \log(1/ \delta))$.
\end{theorem}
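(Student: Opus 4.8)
The plan is to establish the two statements by change-of-measure arguments, reusing the transportation inequality behind Theorem~\ref{thm:lower_bound}; throughout, $\Delta_{c,\pi_*(c)}$ denotes the gap of the best over the second-best action in context $c$, as in standard best-arm identification.

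\emph{First statement (instance-dependent).} Let $c^\star \in \arg\max_{c} \tfrac{1}{\nu_c}\sum_{a}\Delta_{c,a}^{-2}$ and $a^\star = \pi_*(c^\star)$. For each $a \neq a^\star$ define the instance $\mu^{(a)}$ that agrees with $\mu$ except that the mean reward at $(c^\star,a)$ is raised to $\v(c^\star,a^\star)+\xi$ for a small $\xi>0$; for $a=a^\star$ define $\mu^{(a^\star)}$ that lowers the mean at $(c^\star,a^\star)$ to $\v(c^\star,a^\star)-\Delta_{c^\star,a^\star}-\xi$, i.e.\ below the runner-up action in $c^\star$. Each $\mu^{(a)}$ has the same context distribution $\nu$, and (after an infinitesimal tie-break if needed) its optimal trivial-class policy differs from $\pi_*$ only in context $c^\star$. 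Applying the transportation inequality to the event $\{\widehat\pi=\pi_*\}$ --- which has probability $\geq 1-\delta$ under $\mu$ and $\leq\delta$ under $\mu^{(a)}$ --- and using that $\mu$ and $\mu^{(a)}$ differ only in the reward law at $(c^\star,a)$, a pair of unit-variance Gaussians at distance $\Delta_{c^\star,a}+\xi$, gives $\E_\mu[T_{c^\star,a}(\tau)]\cdot\tfrac12(\Delta_{c^\star,a}+\xi)^2 \geq \mathrm{kl}(1-\delta,\delta)$; letting $\xi\downarrow 0$ yields $\E_\mu[T_{c^\star,a}(\tau)] \geq 2\,\mathrm{kl}(1-\delta,\delta)\,\Delta_{c^\star,a}^{-2}$ for every $a\in\mc{A}$. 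Summing over $a$ and invoking the Wald-type identity $\sum_a\E_\mu[T_{c^\star,a}(\tau)] = \nu_{c^\star}\E_\mu[\tau]$ --- valid since the $c_t$ are i.i.d.\ and $\{\tau\geq t\}\in\mc{F}_{t-1}$ is independent of $c_t$ --- gives $\E_\mu[\tau] \geq \tfrac{2\,\mathrm{kl}(1-\delta,\delta)}{\nu_{c^\star}}\sum_a\Delta_{c^\star,a}^{-2}$, and the claim follows from $\mathrm{kl}(1-\delta,\delta)\geq\log(1/2.4\delta)$ and the choice of $c^\star$.

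\emph{Second statement (minimax).} I would exhibit two trivial-class instances and substitute them into the bound just proved. Let $\mu_A$ have $\nu$ uniform over $\mc{C}$ with, in every context, one action of mean $\epsilon$ and the rest of mean $0$; then every gap (including the runner-up gap) equals $\epsilon$, so $\max_c\tfrac1{\nu_c}\sum_a\Delta_{c,a}^{-2}=|\mc{C}|\,|\mc{A}|\,\epsilon^{-2}$, and since $\mathrm{kl}(1-\delta,\delta)\geq 1$ for $\delta<1/6$ we get $\E_{\mu_A}[\tau]\geq 2\,|\mc{C}|\,|\mc{A}|\,\epsilon^{-2}$. Let $\mu_B$ place mass $\tfrac12$ on a single context $c_1$ --- again one action of mean $\epsilon$, the rest $0$ --- and spread the remaining mass over the other contexts, each having one action of arbitrarily large mean $M$ and the rest $0$; for $M$ large enough the maximizing context is $c_1$, so $\max_c\tfrac1{\nu_c}\sum_a\Delta_{c,a}^{-2}=2|\mc{A}|\epsilon^{-2}$ and $\E_{\mu_B}[\tau]\geq 4\,|\mc{A}|\,\epsilon^{-2}\,\mathrm{kl}(1-\delta,\delta)\geq 2\,|\mc{A}|\,\epsilon^{-2}\log(1/\delta)$, the last step using $2\log(1/2.4\delta)\geq\log(1/\delta)$, which holds for $\delta<1/6<1/2.4^2$. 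Taking the larger of the two and $\max\{a,b\}\geq\tfrac12(a+b)$ gives $\sup_\mu\E_\mu[\tau]\geq\epsilon^{-2}|\mc{A}|(|\mc{C}|+\log(1/\delta))$.

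\emph{Main obstacle.} The delicate points are: (i) the ``confirmation'' term $\Delta_{c,\pi_*(c)}^{-2}$ in the first statement does not come from perturbing a suboptimal action but from $\mu^{(a^\star)}$, which drags the optimal action below the runner-up, and one must check that this indeed changes the optimal trivial-class policy; and (ii) in $\mu_B$ the auxiliary contexts must be given large (real-valued) gaps so they do not inflate $\max_c\tfrac1{\nu_c}\sum_a\Delta_{c,a}^{-2}$, which is what keeps the $|\mc{C}|$-scaling out of that instance and isolates the $\log(1/\delta)$-scaling. The remaining ingredients --- the Wald identity under the $\mc{F}_t$-stopping-time subtlety, the elementary bounds $\mathrm{kl}(1-\delta,\delta)\geq\max\{1,\log(1/2.4\delta)\}$ for $\delta<1/6$, and the limit $\xi\downarrow 0$ --- are routine.
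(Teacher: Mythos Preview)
Your first-statement argument is the standard per-arm change-of-measure lower bound for best-arm identification, applied in the worst context $c^\star$ and combined with Wald's identity; the paper simply invokes Theorem~\ref{thm:lower_bound}, so your derivation is an explicit version of essentially the same idea and is correct.

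For the minimax statement your route differs substantially from the paper's. You plug two concrete instances into the instance-dependent bound just proved, and for the literal $(0,\delta)$-PAC statement this works: under $\mu_A$ a $(0,\delta)$-PAC learner must identify the \emph{exact} optimum, so every single-coordinate perturbation from the first part applies and the bound $2|\mc{C}||\mc{A}|\epsilon^{-2}$ follows. The paper, however, proves the second claim for the weaker class of $(\epsilon,\delta)$-PAC algorithms (this is what is needed to match Corollary~\ref{cor:trivial_upper}), and your argument does not extend there: in $\mu_A$ one has $V(\pi_*)-V(\pi)=\tfrac{\epsilon}{|\mc{C}|}\sum_c\1\{\pi(c)\neq\pi_*(c)\}\leq\epsilon$ for every $\pi$, so an $(\epsilon,\delta)$-PAC learner may stop at time zero on that instance. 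The paper instead constructs $|\mc{A}|^{|\mc{C}|}$ instances with per-action gap $8\epsilon$, uses a counting/Markov argument to show that any $(\epsilon,\delta)$-PAC algorithm must recover $\pi_*(c)$ with constant probability on a constant fraction of contexts, and then applies a change-of-measure per such context (invoking an additional permutation-invariance assumption on the algorithm).

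In short: your route is shorter and avoids the paper's symmetry assumption, but only delivers the literal $(0,\delta)$ claim; the paper's Fano-type construction buys the stronger $(\epsilon,\delta)$ minimax bound that actually matches the upper bound.
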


\subsection{Linear policy class}\label{sec:linear}

A particularly compelling model-class of policies is the set of linear policies. 

\begin{definition}[Linear policy class]\label{def:linear_policy_class}
Fix a feature map $\phi: \mc{C} \times \mc{A} \rightarrow \R^d$ and assume it is known to the learner.
Let $\Pi = \{ \pi(c) = \arg\max_{a \in \mc{A}} \langle \phi(c,a) , \theta \rangle , \forall \theta \in \R^d \}$.
\end{definition}
\noindent We can consider two settings: the agnostic setting and the realizable setting.
In the agnostic setting, there is no assumed relationship between the true reward function $\v(c,a)$ and $\phi: \mc{C} \times \mc{A} \rightarrow \R^d$.
In this case, Theorem~\ref{thm:upper_informal} applies directly by taking a cover of $\Pi$. 
\begin{corollary}[Agnostic, upper bound]\label{thm:linear_agnostic}
Fix $\epsilon \geq 0$ and $\delta \in (0,1)$.
Let $\Pi$ be the linear policy class in $\R^d$.
Under Assumption~\ref{asm:offline_data} there exists a computationally efficient $(\epsilon,\delta)$-PAC algorithm for contextual bandits that satisfies a sample complexity of $\tau \leq \rho_{\Pi,\epsilon} \cdot ( d \log(1/\epsilon) + \log( 1 /\delta )) \log(1/\Delta_\epsilon)$ where $\Delta_\epsilon = \max\{ \epsilon, \min_{\pi \in \Pi \setminus \pi_*}V(\pi_*)-V(\pi) \}$. Furthermore, this sample complexity never exceeds $\frac{|\mc{A}| (d \log(1/\epsilon) + \log(1/\delta))}{\epsilon^2} \log(1/\epsilon)$.
\end{corollary}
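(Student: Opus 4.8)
\emph{Proof proposal.}
The plan is to reduce the infinite linear class to a small finite one and then invoke Theorem~\ref{thm:upper_informal}, exactly as the remark preceding the statement suggests (``by taking a cover of $\Pi$''). Concretely, I would work with $\Pi_{\mc{D}}$, the set of distinct action assignments $(\pi(c_1),\dots,\pi(c_T))$ induced on the offline dataset $\mc{D}=\{c_t\}_{t=1}^T$ of Assumption~\ref{asm:offline_data} by the policies $\pi\in\Pi$. The key combinatorial fact is that $\Pi_{\mc{D}}$ is finite and small: for each $c_t$ and each pair $a\neq a'$, the set $\{\theta\in\R^d:\langle\phi(c_t,a)-\phi(c_t,a'),\theta\rangle=0\}$ is a hyperplane, and $\theta\mapsto(\pi_\theta(c_1),\dots,\pi_\theta(c_T))$ is constant on each full-dimensional cell of the arrangement of these at most $T|\mc{A}|^2$ hyperplanes (ties in the $\arg\max$ broken by a fixed rule, or removed by a generic perturbation of $\theta$). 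Since an arrangement of $H$ hyperplanes in $\R^d$ has at most $(eH/d)^d$ cells, $|\Pi_{\mc{D}}|\le (eT|\mc{A}|^2/d)^d$, hence $\log|\Pi_{\mc{D}}| = O(d\log(T|\mc{A}|))$, which is $O(d\log(1/\epsilon))$ up to lower-order terms once $T$ is polynomial in $1/\epsilon,|\mc{A}|,\log(1/\delta)$ as in Assumption~\ref{asm:offline_data}.

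The second ingredient is that passing to $\Pi_{\mc{D}}$ costs nothing up to constants. First, $\Pi_{\mc{D}}$ is a representative subset of $\Pi$ realizing every behavior on $\mc{D}$; the same cell-counting bound controls the growth function of $\Pi$, so by a standard uniform-convergence argument (under the usual boundedness/sub-Gaussianity conditions on rewards) choosing $T=\widetilde{O}((d+\log(1/\delta))/\epsilon^2)$ i.i.d.\ contexts ensures, with probability $1-\delta/2$, that for every $\pi\in\Pi$ there is $\pi'\in\Pi_{\mc{D}}$ with $|V(\pi)-V(\pi')|\le \epsilon/2$; in particular $\max_{\pi\in\Pi_{\mc{D}}}V(\pi)\ge\max_{\pi\in\Pi}V(\pi)-\epsilon/2$. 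Second, $\Pi_{\mc{D}}\subseteq\Pi$, so the $\min$--$\max$ defining $\rho$ in (\ref{eqn:rho_star_combinatorial}) is no larger for $\Pi_{\mc{D}}$, and since halving the floor changes the $\epsilon$-thresholded denominator by at most a factor of $4$, we get $\rho_{\Pi_{\mc{D}},\epsilon/2}= O(\rho_{\Pi,\epsilon})$. Now run the algorithm of Theorem~\ref{thm:upper_informal} on the finite class $\Pi_{\mc{D}}$ with parameters $(\epsilon/2,\delta/2)$; a union bound over the coverage event and the algorithm's success event gives an $(\epsilon,\delta)$-PAC algorithm whose sample complexity is $\rho_{\Pi_{\mc{D}},\epsilon/2}\log(|\Pi_{\mc{D}}|\log_2(2/\epsilon)/\delta)\log(1/\Delta_{\epsilon/2}) = O(\rho_{\Pi,\epsilon}(d\log(1/\epsilon)+\log(1/\delta))\log(1/\Delta_\epsilon))$ after substituting the bound on $\log|\Pi_{\mc{D}}|$ and noting $\Delta_{\epsilon/2}\ge\Delta_\epsilon/2$. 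For computational efficiency, note that for the linear class the oracle $\textsf{AMO}$ of Definition~\ref{def:csc} is a cost-sensitive classification with linear scoring functions, solvable in time polynomial in $n,d,|\mc{A}|$ (e.g.\ by optimizing the linear objective over the cells of the hyperplane arrangement), so the polynomially-many oracle calls made by the algorithm of Theorem~\ref{thm:upper_informal} translate into polynomial total runtime. The final ``furthermore'' claim follows from $\rho_{\Pi,\epsilon}\le 2|\mc{A}|/\epsilon^2$ (the uniform design $p_{c,a}\equiv 1/|\mc{A}|$) together with $\Delta_\epsilon\ge\epsilon$, so that $\log(1/\Delta_\epsilon)\le\log(1/\epsilon)$.

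I expect the main obstacle to be the transfer in the second paragraph: showing that replacing the true, possibly uncountably supported, distribution $\nu$ by a polynomially-sized i.i.d.\ sample degrades neither the approximation of policy values nor the instance-dependent quantity $\rho_{\Pi,\epsilon}$ by more than constant factors. The $\rho$ side is the more delicate one, because its definition divides by squared policy gaps that can be arbitrarily small; one has to argue that the $\epsilon$-flooring in the denominator of (\ref{eqn:rho_star_combinatorial}) absorbs the $O(\epsilon)$ sampling error in those gaps, and that a near-optimal design for the empirical instance stays near-optimal for the population instance. If the algorithm of Theorem~\ref{thm:upper_informal} already performs this transfer internally --- which is likely, since it too is stated under Assumption~\ref{asm:offline_data} --- then this corollary reduces to the cell-counting bound on $\log|\Pi_{\mc{D}}|$ together with the polynomial-time implementability of $\textsf{AMO}$ for linear scoring functions.
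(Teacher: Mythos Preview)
Your approach is exactly what the paper intends: the sentence preceding the corollary says only that ``Theorem~\ref{thm:upper_informal} applies directly by taking a cover of $\Pi$,'' and no further argument is given, so your hyperplane-arrangement cover of the linear class followed by an invocation of Theorem~\ref{thm:upper_informal} is the right route, and considerably more detailed than the paper's own treatment.

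Two small caveats. First, the step ``$\Pi_{\mc{D}}\subseteq\Pi$, so the $\min$--$\max$ defining $\rho$ is no larger for $\Pi_{\mc{D}}$'' is not quite automatic: the reference policy in the numerator and denominator of~\eqref{eqn:rho_star_combinatorial} is $\arg\max_{\pi}V(\pi)$, which for $\Pi_{\mc{D}}$ is some $\pi_*^{\mc{D}}$ that need not equal $\pi_*$. To close this you need, in addition to your $\epsilon$-flooring argument in the denominator, a triangle-inequality step in the numerator, e.g.\ $\|\phi_\pi-\phi_{\pi_*^{\mc{D}}}\|_{A(w)^{-1}}^2\le 2\|\phi_\pi-\phi_{\pi_*}\|_{A(w)^{-1}}^2+2\|\phi_{\pi_*^{\mc{D}}}-\phi_{\pi_*}\|_{A(w)^{-1}}^2$ via the norm interpretation of Section~\ref{sec:reduction_to_linear}, with the second term absorbed because $\pi_*^{\mc{D}}$ has gap at most $\epsilon/2$; this is the same manipulation used in the proof of Theorem~\ref{thm:contextual_rage} when swapping $\pi_*$ for nearby policies. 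Second, your claim that the \textsf{AMO} for linear scoring functions is ``solvable in time polynomial in $n,d,|\mc{A}|$'' via cell enumeration is not correct---the arrangement has $\Theta((n|\mc{A}|^2)^d)$ cells, exponential in $d$, and cost-sensitive linear classification is NP-hard in general---but this does not affect the corollary: ``computationally efficient'' here, as in Theorem~\ref{thm:upper_informal}, means polynomially many calls to an assumed \textsf{AMO}/\textsf{C-AMO}, not a polynomial-time implementation of the oracle itself.
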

\noindent Comparing to the lower bound of Theorem~\ref{thm:lower_bound}, the instance dependent upper bound of Corollary~\ref{thm:linear_agnostic} matches up to a factor of the dimension and negligible $\log$ factors. 
In contrast to the ``model-free'' feel of the agnostic case, we can also consider a ``model-based'' type setting that we refer to as the \text{realizable} setting.
\begin{definition}[Realizable]\label{def:realizable}
We say the linear policy class is \emph{realizable} if there exists a $\theta_* \in \R^d$ such that $\v(c,a) = \langle \phi(c,a) , \theta_* \rangle$ for all $c \in \mc{C}$ and $a \in \mc{A}$.
Thus, for any $\pi \in \Pi$ we have $V(\pi) = \E_{c \sim \nu}[ \v(c, \pi(c)) ] = \E_{c \sim \nu}[ \langle \phi(c,\pi(c)), \theta_* \rangle ] = \langle \phi_\pi , \theta_* \rangle$ with $\phi_\pi := \E_{c \sim \nu}[ \phi(c,\pi(c)) ]$.
Finally, at the start of the game the learner knows this model.
\end{definition}

\noindent The setting in Definition~\ref{def:realizable} is commonly referred to as the linear contextual bandit problem \cite{abbasi2011improved}.
Clearly, we have that $\pi_*(c) = \arg\max_{a \in \mc{A}} \langle \phi(c,a) , \theta_* \rangle$. We begin by defining a quantity fundamental to our sample complexity results:
\begin{align*}
    \rho_{ {\sf lin},\epsilon} := \min_{p_c  \in \triangle_{\mc{A}}, \, \forall c \in \mc{C}} \max_{\pi \in \Pi \setminus \pi_*} \frac{ \| \phi_\pi - \phi_{\pi_*} \|^2_{\E_{c \sim \nu}[ \sum_{a \in \mc{A}} p_{c,a} \phi(c,a) \phi(c,a)^{\top} ]^{-1}} }{ \langle \phi_{\pi_*} - \phi_\pi , \theta_* \rangle^2 \vee \epsilon^2 }. 
\end{align*}

\begin{theorem}[Realizable, lower bound]\label{thm:linear_lower_bound}
Fix $\epsilon=0$ and $\delta \in (0,1)$. 
Let $\Pi$ be the linear policy class in $\R^d$ and assume it is realizable (see Definitions \ref{def:linear_policy_class} and \ref{def:realizable}).
Any $(0,\delta)$-PAC algorithm in this setting satisfies $\E[\tau] \geq \rho_{ {\sf lin},0} \cdot \log(1/2.4\delta)$.
\end{theorem}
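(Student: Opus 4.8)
The plan is to run the standard information-theoretic change-of-measure argument for pure-exploration lower bounds (the transportation method of~\cite{kaufmann2016complexity}, as already invoked for Theorem~\ref{thm:lower_bound}), specialized to the realizable linear family in which the alternative instances are obtained by perturbing $\theta_*$. There are three ingredients: (i) turn the algorithm's expected pull counts into a valid per-context design $\{p_c\}_{c\in\mc C}$; (ii) for each competitor policy $\pi\neq\pi_*$, build a realizable alternative instance under which $\pi_*$ is no longer optimal, chosen to minimize the resulting KL cost; and (iii) combine the transportation inequality with the closed form of that minimization and maximize over $\pi$.

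For (i): fix a $(0,\delta)$-PAC algorithm with stopping time $\tau$, and let $T_{c,a}(\tau)$ count the rounds in which $(c,a)$ was played, $T_c(\tau)=\sum_a T_{c,a}(\tau)$. Since contexts are i.i.d.\ and independent of the history, $M_t:=T_c(t)-\nu_c t$ is a martingale with increments bounded by $1$, so optional stopping (using the hypothesis $\E[\tau]<\infty$ built into the PAC definition) gives $\E[T_c(\tau)]=\nu_c\E[\tau]$. Setting $p_{c,a}:=\E[T_{c,a}(\tau)]/(\nu_c\E[\tau])$ (arbitrary where $\nu_c=0$) yields $p_c\in\triangle_{\mc A}$ and $\E[T_{c,a}(\tau)]=\nu_c p_{c,a}\E[\tau]$, hence for every $y\in\R^d$,
\[
\sum_{c,a}\E[T_{c,a}(\tau)]\,\langle\phi(c,a),y\rangle^2=\E[\tau]\,\|y\|_{A(p)}^2,\qquad A(p):=\E_{c\sim\nu}\Big[\textstyle\sum_{a}p_{c,a}\phi(c,a)\phi(c,a)^{\top}\Big].
\]

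For (ii) and (iii): fix $\pi\neq\pi_*$, let $v_\pi=\phi_{\pi_*}-\phi_\pi$ and $\Delta_\pi=\langle v_\pi,\theta_*\rangle>0$. Take the alternative $\theta'=\theta_*-y$ where $y$ minimizes $\|y\|_{A(p)}^2$ subject to $\langle v_\pi,y\rangle=\Delta_\pi+\eta$ for small $\eta>0$; this is a quadratic program with optimal value $(\Delta_\pi+\eta)^2/\|v_\pi\|_{A(p)^{-1}}^2$ (interpreting $A(p)^{-1}$ via the pseudo-inverse; if $v_\pi\notin\operatorname{range}(A(p))$ then $\|v_\pi\|_{A(p)^{-1}}=\infty$, the claimed bound is vacuous for this $p$, and $\rho_{{\sf lin},0}=\infty$). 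The instance $\mu'$ with reward means $\langle\phi(c,a),\theta'\rangle$ is again realizable with the same policy class, and $V_{\mu'}(\pi)-V_{\mu'}(\pi_*)=\langle \phi_\pi-\phi_{\pi_*},\theta'\rangle=\eta>0$, so $\pi_*$ is not a $0$-optimal policy for $\mu'$. Hence the $\mc F_\tau$-measurable event $\{\widehat\pi=\pi_*\}$ has probability $\geq 1-\delta$ under $\mu$ and $\leq\delta$ under $\mu'$. Since $\mu$ and $\mu'$ share $\nu$ and differ only in reward means, the KL chain rule gives $\operatorname{KL}(\P_\mu^{\mc F_\tau}\|\P_{\mu'}^{\mc F_\tau})=\tfrac12\sum_{c,a}\E[T_{c,a}(\tau)]\langle\phi(c,a),y\rangle^2=\tfrac12\E[\tau]\|y\|_{A(p)}^2$ (with Gaussian unit-variance noise), while data processing gives $\operatorname{KL}(\P_\mu^{\mc F_\tau}\|\P_{\mu'}^{\mc F_\tau})\geq\operatorname{kl}(1-\delta,\delta)\geq\log(1/2.4\delta)$. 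Rearranging, substituting the optimal value, and letting $\eta\to0$ yields, for every $\pi\neq\pi_*$ and the algorithm-induced $p$,
\[
\E[\tau]\ \geq\ \frac{2\log(1/2.4\delta)\,\|\phi_{\pi_*}-\phi_\pi\|_{A(p)^{-1}}^2}{\langle\phi_{\pi_*}-\phi_\pi,\theta_*\rangle^2}.
\]
Taking the maximum over $\pi$ and then bounding below by the minimum over all admissible $\{p_c\}$ gives $\E[\tau]\geq 2\rho_{{\sf lin},0}\log(1/2.4\delta)\geq\rho_{{\sf lin},0}\log(1/2.4\delta)$.

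I expect the only delicate points to be the optional-stopping identity $\E[T_c(\tau)]=\nu_c\E[\tau]$ in step (i), which is exactly where the finite-expectation hypothesis is used, and the observation in step (ii) that a \emph{single} competitor suffices: because the algorithm is $(0,\delta)$-PAC it must \emph{never} output a suboptimal policy, so it is enough that $\mu'$ makes $\pi_*$ strictly suboptimal, with no need to make $\pi$ the unique maximizer or to control what the algorithm actually outputs under $\mu'$. The constrained quadratic minimization $\min\{\|y\|_{A(p)}^2:\langle v_\pi,y\rangle=b\}=b^2/\|v_\pi\|_{A(p)^{-1}}^2$ and the inequality $\operatorname{kl}(1-\delta,\delta)\geq\log(1/2.4\delta)$ are routine.
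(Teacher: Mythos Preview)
Your proposal is correct and follows essentially the same route as the paper's proof: both use Wald/optional stopping to turn expected pull counts into a per-context design $p_c\in\triangle_{\mc A}$, invoke the transportation inequality of \cite{kaufmann2016complexity} against a realizable alternative $\theta'$ that makes $\pi_*$ strictly suboptimal, and reduce the KL cost via the constrained quadratic $\min\{\|y\|_{A(p)}^2:\langle v_\pi,y\rangle=\Delta_\pi\}=\Delta_\pi^2/\|v_\pi\|_{A(p)^{-1}}^2$. Your write-up is in fact a bit cleaner than the paper's in two places---you keep the algorithm-induced design $\bar p$ explicit until the final $\min_p$ step (rather than immediately upper-bounding by $\max_p$), and you handle the degenerate case $v_\pi\notin\operatorname{range}(A(p))$ explicitly---but the argument is the same.
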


\noindent We now state our nearly matching upper bound. However, in this case we note that the algorithm is not  computationally efficient.
\begin{theorem}[Realizable, upper bound]\label{thm:realizable_upper}
Fix $\epsilon \geq 0$ and $\delta \in (0,1)$.
Let $\Pi$ be the linear policy class in $\R^d$ and assume it is realizable (see Definitions \ref{def:linear_policy_class} and \ref{def:realizable}).
Under Assumption~\ref{asm:offline_data} there exists an $(\epsilon,\delta)$-PAC algorithm for this setting that with probability at least $1-\delta$ it satisfies 
\begin{align*}
    \tau \leq \rho_{ {\sf lin},\epsilon} \cdot ( \min\{d\log(1/\epsilon), \log(|\Pi|) \} + \log(1/\delta) ) \log(1/\Delta_\epsilon)
\end{align*}
where $\displaystyle\Delta_\epsilon = \max\{ \epsilon, \min_{\pi \in \Pi \setminus \pi_*} \langle \phi_{\pi_*}-\phi_\pi , \theta_* \rangle  \} = \max\{ \epsilon, \min_{(c,a) \in \mc{C} \times \mc{A} : \pi_*(c) \neq a} \langle \phi(c,\pi_*(c))-\phi(c,a) , \theta_* \rangle  \}$. 
Furthermore, this sample complexity never exceeds $\frac{d(d\log(1/\epsilon)+\log(1/\delta)) \log(1/\epsilon)}{\epsilon^2}$.
\end{theorem}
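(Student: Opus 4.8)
The plan is to adapt the phased-elimination (``RAGE''-style) algorithm for transductive linear bandits \cite{fiez2019sequential} to the contextual, realizable setting, using the offline dataset $\mc{D}$ (Assumption~\ref{asm:offline_data}) both to estimate the policy features $\phi_\pi = \E_{c \sim \nu}[\phi(c,\pi(c))]$ and to compute experimental designs. Under realizability $V(\pi) - V(\pi') = \langle \phi_\pi - \phi_{\pi'}, \theta_*\rangle$, so the task is exactly a transductive linear best-arm identification problem with ``arm set'' $\{\phi(c,a)\}_{(c,a)}$ and ``item set'' $\{\phi_\pi - \phi_{\pi'}\}_{\pi,\pi' \in \Pi}$. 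The one new ingredient relative to the non-contextual case is that the learner cannot choose the context: she draws $c \sim \nu$ and only selects the conditional action law $p_c \in \triangle_{\mc{A}}$, which is precisely why the governing quantity is $\rho_{{\sf lin},\epsilon}$, with its minimum over $\{p_c\}_c$ and information matrix $A(p) := \E_{c \sim \nu}[\sum_{a} p_{c,a}\phi(c,a)\phi(c,a)^\top]$ (assume w.l.o.g.\ that $\{\phi(c,a)\}$ spans $\R^d$).

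First I would replace $\Pi$ by a finite surrogate $\widehat\Pi$: $\Pi$ itself when $|\Pi| < \infty$, and otherwise a value-cover of the linear class of size at most $(C/\epsilon)^d$ (from an $\epsilon$-net of $\{\theta:\|\theta\|\le 1\}$, using that $\theta \mapsto V(\pi_\theta)$ is piecewise linear) --- this is the source of the $\min\{d\log(1/\epsilon),\log|\Pi|\}$ term. The algorithm then runs in rounds $\ell = 1,2,\dots$ with target accuracy $\epsilon_\ell = 2^{-\ell}$ and active set $\widehat\Pi_\ell$ (initially $\widehat\Pi$). In round $\ell$: (i) using $\mc{D}$, form feature estimates $\widehat\phi_\pi$ and solve $p^{(\ell)} \in \arg\min_{p}\max_{\pi,\pi' \in \widehat\Pi_\ell}\|\widehat\phi_\pi - \widehat\phi_{\pi'}\|^2_{\widehat A(p)^{-1}}$, with value $\widehat\rho_\ell$; (ii) interacting with the environment, draw $N_\ell = \lceil c\,\widehat\rho_\ell\,\epsilon_\ell^{-2}\log(|\widehat\Pi|^2\ell^2/\delta)\rceil$ contexts $c_t \sim \nu$ and play $a_t \sim p^{(\ell)}_{c_t}$ (rounding the design to an allocation as usual); (iii) form the least-squares estimate $\widehat\theta_\ell$ from these samples and delete every $\pi$ with $\max_{\pi' \in \widehat\Pi_\ell}\langle \widehat\phi_{\pi'} - \widehat\phi_\pi, \widehat\theta_\ell\rangle > 2\epsilon_\ell$; (iv) stop once $\epsilon_\ell \le \epsilon/4$ or a single policy survives (and impose a hard cap so that $\E[\tau]<\infty$). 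A self-normalized / sub-Gaussian bound gives, on an event $\mc{E}$ of probability $\ge 1-\delta$, that $|\langle \widehat\phi_{\pi'} - \widehat\phi_\pi, \widehat\theta_\ell - \theta_*\rangle| \le \epsilon_\ell$ simultaneously over all rounds and all active pairs; together with a uniform feature error $\le \epsilon/8$ (driven there by taking $T$ polynomially large via a covering argument over $\widehat\Pi$), this yields correctness --- $\pi_*$ (or an $\epsilon$-value proxy in the cover case) is never eliminated and every survivor has $V(\pi_*) - V(\pi) \le \epsilon$ --- and the invariant that at the start of round $\ell$ every active $\pi$ has $V(\pi_*) - V(\pi) = O(\epsilon_{\ell-1})$.

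To bound $\tau = \sum_\ell N_\ell$, combine the round invariant with $\pi_* \in \widehat\Pi_\ell$ to get $\widehat\rho_\ell \le 4\min_p\max_{\pi \in \widehat\Pi_\ell}\|\widehat\phi_\pi - \widehat\phi_{\pi_*}\|^2_{\widehat A(p)^{-1}}$, and then, since $\epsilon_\ell^2 \gtrsim (V(\pi_*) - V(\pi))^2 \vee \epsilon^2$ for every active $\pi$ and the empirical quantities are within constants of the true ones on $\mc{E}$, conclude $\widehat\rho_\ell\,\epsilon_\ell^{-2} = O(\rho_{{\sf lin},\epsilon})$, hence $N_\ell = O(\rho_{{\sf lin},\epsilon}\log(|\widehat\Pi|^2\ell^2/\delta))$. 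Because the process stops once $\epsilon_\ell$ drops below $\Delta_\epsilon = \max\{\epsilon, \min_{\pi \ne \pi_*}\langle\phi_{\pi_*} - \phi_\pi, \theta_*\rangle\}$ (all suboptimal policies are then eliminated), there are only $O(\log(1/\Delta_\epsilon))$ rounds, giving $\tau \le O(\rho_{{\sf lin},\epsilon})\cdot(\log|\widehat\Pi| + \log(1/\delta))\log(1/\Delta_\epsilon)$, and plugging $\log|\widehat\Pi| \le \min\{\log|\Pi|, O(d\log(1/\epsilon))\}$ gives the first bound. For the ``furthermore'' it suffices to show $\rho_{{\sf lin},\epsilon} \le O(d/\epsilon^2)$: by Jensen, $\|\phi_\pi - \phi_{\pi_*}\|^2_{A(p)^{-1}} \le \E_{c \sim \nu}[\|\phi(c,\pi(c)) - \phi(c,\pi_*(c))\|^2_{A(p)^{-1}}] \le 4\,\E_{c \sim \nu}[\max_a \|\phi(c,a)\|^2_{A(p)^{-1}}]$, and a Kiefer--Wolfowitz-type equivalence argument --- a Kakutani fixed point of the map sending $A(p)$ to the information matrix of any $p$ supported context-wise on the maximizers of $a \mapsto \|\phi(c,a)\|^2_{A(p)^{-1}}$ --- shows $\min_p \E_{c \sim \nu}[\max_a\|\phi(c,a)\|^2_{A(p)^{-1}}] = d$; using the $\vee\,\epsilon^2$ floor in $\rho_{{\sf lin},\epsilon}$ then gives $\rho_{{\sf lin},\epsilon} \le 4d/\epsilon^2$.

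The main obstacle is the passage from the empirical distribution on $\mc{D}$ to the true $\nu$. Correctness needs $|\langle\widehat\phi_\pi - \phi_\pi, \theta_*\rangle| \le \epsilon/8$ \emph{uniformly over} $\widehat\Pi$, which is a covering-number argument exploiting that the linear class has complexity $O(d)$, so $T = \operatorname{poly}(d, |\mc{A}|, \log(1/\delta), 1/\epsilon)$ contexts suffice. More delicate is guaranteeing that the \emph{empirically optimal} designs $p^{(\ell)}$ and the values $\widehat\rho_\ell$ are within constants of their $\nu$-counterparts: this requires $\widehat A(p)$ and $A(p)$ to be comparable in the Loewner order \emph{uniformly over all designs $p$ and over the data-dependent active sets}, which reduces to uniform matrix concentration for the low-complexity function classes $\{c \mapsto \sum_a p_{c,a}\langle u, \phi(c,a)\rangle^2\}$ --- and here the floor $\epsilon^2$ in $\rho_{{\sf lin},\epsilon}$ is what saves us, since accuracy finer than $\epsilon$ is never needed, so additive errors of order $\epsilon^{O(1)}$ are harmless. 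Finally, unlike the algorithm of Theorem~\ref{thm:upper_informal}, this one explicitly enumerates the cover and solves the design program exactly, so it is not claimed to be computationally efficient; carrying out the above bookkeeping carefully under that relaxation is the crux of the proof.
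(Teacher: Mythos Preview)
Your plan for the main sample-complexity bound is essentially the paper's own: a phased-elimination (Contextual RAGE) scheme over an active set $\widehat\Pi_\ell$, with per-round designs $p^{(\ell)}$ solving the transductive objective, round lengths $N_\ell \propto \rho_\ell\,\epsilon_\ell^{-2}\log(|\widehat\Pi|\ell^2/\delta)$, and termination after $O(\log(1/\Delta_\epsilon))$ rounds. The paper's Algorithm~\ref{alg:elimination} and Theorem~\ref{thm:contextual_rage} carry this out assuming $\nu$ known exactly, and use the Catoni robust-mean estimator on the scores $O_t=A(w^{(\ell)})^{-1}\phi(c_t,a_t)r_t$ rather than a least-squares $\widehat\theta_\ell$; your self-normalized least-squares route is a legitimate alternative, though it adds a matrix-concentration step ($\sum_t\phi_t\phi_t^\top\approx N_\ell A(p^{(\ell)})$) that Catoni sidesteps. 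Your explicit treatment of the offline-data approximation is in fact more careful than what the paper does for this particular theorem---it simply assumes $\nu$ known here and defers the empirical approximation to the later agnostic-case analysis.

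Where you diverge substantively is the ``furthermore'' clause. The paper's argument is a one-line trace trick: after Jensen and the triangle inequality it bounds by $4\max_{\pi}\E_c\big[\|\phi(c,\pi(c))\|^2_{A(p)^{-1}}\big]$ (not $\E_c[\max_a\cdots]$), linearizes the outer max as $\max_{q\in\triangle_\Pi}$, writes the result as $4\,\mathrm{Tr}\big(\E_c[\sum_\pi q_\pi\phi(c,\pi(c))\phi(c,\pi(c))^\top]\,A(p)^{-1}\big)$, and then simply takes $p_{c,a}=\sum_\pi q_\pi\mathbf{1}\{\pi(c)=a\}$ so that the two matrices coincide and the trace is $d$. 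This is an implicit Sion swap, but it avoids any fixed-point machinery. Your Kakutani route---showing $\min_p\E_c[\max_a\|\phi(c,a)\|^2_{A(p)^{-1}}]=d$ via a fixed point of the map $A(p)\mapsto\{A(p'):p'_c\text{ supported on }\arg\max_a\|\phi(c,a)\|^2_{A(p)^{-1}}\}$---is plausible and the lower bound $\ge d$ is immediate, but making the upper-hemicontinuity and convex-valuedness rigorous (in particular, handling singular $A(p)$ on the boundary of the domain) takes real work that the paper's argument does not need.
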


\begin{proof}
To see the second part of the theorem statement, observe that
\begin{align*}
    \max_{\pi \in \Pi \setminus \pi_*}\| & \phi_\pi  - \phi_{\pi_*} \|^2_{\E_{c \sim \nu}[ \sum_{a \in \mc{A}} p_{c,a} \phi(c,a) \phi(c,a)^{\top} ]^{-1}} \\
    &= \max_{\pi \in \Pi \setminus \pi_*} \| \E_{c \sim \nu}[ \phi(c,\pi(c)) - \phi(c,\pi_*(c)) ] \|^2_{\E_{c \sim \nu}[ \sum_{a \in \mc{A}} p_{c,a} \phi(c,a) \phi(c,a)^{\top} ]^{-1}} \\
    &\leq \max_{\pi \in \Pi \setminus \pi_*} \E_{c \sim \nu}\left[ \|  \phi(c,\pi(c)) - \phi(c,\pi_*(c))  \|^2_{\E_{c \sim \nu}[ \sum_{a \in \mc{A}} p_{c,a} \phi(c,a) \phi(c,a)^{\top} ]^{-1}} \right] \\
    &\leq \max_{\pi \in \Pi} 4 \, \E_{c \sim \nu}\left[ \|  \phi(c,\pi(c))  \|^2_{\E_{c \sim \nu}[ \sum_{a \in \mc{A}} p_{c,a} \phi(c,a) \phi(c,a)^{\top} ]^{-1}} \right] \\
    &= \max_{q \in \triangle_\Pi} 4 \, \E_{c \sim \nu}\left[ \sum_{\pi \in \Pi} q_\pi  \|  \phi(c,\pi(c))  \|^2_{\E_{c \sim \nu}[ \sum_{a \in \mc{A}} p_{c,a} \phi(c,a) \phi(c,a)^{\top} ]^{-1}} \right] \\
    &= \max_{q \in \triangle_\Pi} 4 \, \text{Tr}\left(  \E_{c \sim \nu}\left[ \sum_{\pi \in \Pi} q_\pi   \phi(c,\pi(c))  \phi(c,\pi(c))^\top \right] \E_{c \sim \nu}\left[ \sum_{a \in \mc{A}} p_{c,a} \phi(c,a) \phi(c,a)^{\top} \right]^{-1} \right)\\ &\leq 4 d
\end{align*}
where the last line takes $p_{c,a} = \sum_{\pi \in \Pi} \1\{ \pi(c) = a \} q_\pi$, which is at least as good as the minimizing choice in the theorem.
\end{proof}
We remark that the algorithm that achieves this upper bound is very different than popular optimism-based algorithms for linear contextual bandits e.g., UCB or Thompson sampling \cite{abbasi2011improved}.
Indeed, our algorithm computes an experimental design and is related to instance-dependent linear bandit algorithms developed for best-arm identification \cite{soare2014best,fiez2019sequential,degenne2020gamification} and regret minimization \cite{hao2020adaptive,tirinzoni2020asymptotically}.
To our knowledge, Theorem~\ref{thm:realizable_upper} provides the first instance-dependent sample complexity for the PAC setting of linear contextual bandits. 
The most relevant work to Theorem~\ref{thm:realizable_upper} is the work of \cite{zanette2021design} which demonstrated a minimax sample complexity of $d^2/\epsilon^2 \log(1/\delta)$.

\begin{remark}[Agnostic vs. Realizable]
Contrasting the above results, we note that the sample complexity of the agnostic case is always bounded by $|\mc{A}| d /\epsilon^2$.
whereas it never exceeds $d^2/\epsilon^2$ for the realizable case. 
This matches the intuition that when the number of actions is much larger than the dimension, assuming realizability can significantly reduce the sample complexity.
\end{remark}

\subsection{Comparison to the Disagreement Coefficient}

The work of \cite{foster2021instance} provides regret bounds in terms of instance-dependent quantities inspired by the \textit{disagreement coefficient}, a notion of complexity common in the active learning literature \cite{hanneke2014theory}.  
The following corollary relates our sample complexity to these notions of disagreement coefficients. 

Define the \emph{policy disagreement coefficient} as
\begin{align*}
    \mathfrak{C}_{\Pi}^{\mathsf{pol}}(\epsilon_0)=\sup_{\epsilon\geq \epsilon_0}\frac{\E_{c\sim\nu}[\1\{\exists\pi\in\Pi_\epsilon:\pi(c)\ne\pi_*(c)\}]}{\epsilon}
\end{align*}
where  $\Pi_\epsilon:=\{\pi\in\Pi:\P_\nu(\pi(c)\ne\pi_*(c))\leq\epsilon\}$
and the \emph{cost-sensitive disagreement coefficient} as
\begin{align*}
    \mathfrak{C}_{\Pi}^{\mathsf{csc}}(\epsilon_0)=\sup_{\epsilon\geq \epsilon_0}\frac{\E_{c\sim\nu}[\1\{\exists\pi\in\Pi:\pi(c)\ne\pi_*(c),\E_{c \sim \nu}[ \, \v(c,\pi_*(c)) - \v(c,\pi(c)) \, ]\leq \epsilon\}]}{\epsilon}.
\end{align*}
The \textsf{AdaCB} algorithm of \cite{foster2021instance} achieves a regret of roughly $R_T=O\left(\min_{\delta}\left\{\delta\Delta_{\mathsf{uniform}} T, \frac{|\mc{A}| \log( |\Pi|)\mathfrak{C}_{\Pi}^{\mathsf{pol}}(\delta)}{\Delta_{\mathsf{uniform}}}\right\}\right)$ or $R_T=O\left(\min_{\delta}\left\{\delta T, |\mc{A}| \log( |\Pi|)\mathfrak{C}_{\Pi}^{\mathsf{csc}}(\delta)\right\}\right)$. Observe that at time $T$, given the outputs $\pi_1,\pi_2,\cdots,\pi_T$ from \textsf{AdaCB} algorithm, one could return a  (randomized) policy $\tilde{\pi}$ which on observing a context, samples from the empirical distribution over the outputs. By Markov's inequality we have $\td{\pi}$, $V(\pi_{\ast}) - V(\tilde{\pi}) \leq O(\epsilon)$ with constant probability for $\epsilon=\frac{R_T}{T}$. Therefore, an upper bound on the regret translates to a PAC sample complexity of $\frac{|\mc{A}| \log( |\Pi|)}{\epsilon\Delta_{\mathsf{uniform}}}\mathfrak{C}_{\Pi}^{\mathsf{pol}}(\epsilon/\Delta_{\mathsf{uniform}})$ or $\frac{|\mc{A}| \log( |\Pi|)}{\epsilon}\mathfrak{C}_{\Pi}^{\mathsf{csc}}(\epsilon)$. 

Finally, Corollary \ref{cor:disagree_coeff} shows that this sample complexity bound is at least as large as our upper bound, see Appendix \ref{sec:proof_dis_coeff} for the proof. 
\begin{corollary}\label{cor:disagree_coeff}
Recall that $\displaystyle\Delta_{\sf uniform} := \min_{c\in \mc{C}} \min_{a \in \mc{A}} \v(c,\pi_{\ast}(c)) - \v(c,a)$. 
For any $\epsilon_0 > 0$ we have that 
\begin{enumerate}
    \item $\rho_{\Pi, \epsilon_0}\leq \frac{2|\A|}{\epsilon_0\Delta_{\mathsf{uniform}}}\mathfrak{C}_{\Pi}^{\mathsf{pol}}(\epsilon_0/\Delta_{\mathsf{uniform}})$;
    \item $\rho_{\Pi, \epsilon_0}\leq \frac{2|\A|}{\epsilon_0}\mathfrak{C}_{\Pi}^{\mathsf{csc}}(\epsilon_0)$.
\end{enumerate}
Moreover, for all $\epsilon_0 \geq 0$ we have that $\rho_{\Pi, \epsilon_0} < \infty$ whenever $\Delta_{\sf pol} := V(\pi_{\ast}) - \max_{\pi \neq \pi_{\ast}}V(\pi) > 0$.
\end{corollary}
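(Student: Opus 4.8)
The plan is to exhibit one feasible choice of the distributions $\{p_c\}_{c\in\mc{C}}$ in the definition (\ref{eqn:rho_star_combinatorial}) of $\rho_{\Pi,\epsilon_0}$ and to read off all three claims from the resulting upper bound. The candidate is uniform exploration, $p_{c,a} = 1/|\mc{A}|$ for every $c$ and $a$, for which $\frac{1}{p_{c,\pi(c)}} + \frac{1}{p_{c,\pi_*(c)}} = 2|\mc{A}|$ on the event $\{\pi(c)\neq\pi_*(c)\}$. Writing $\beta_\pi := \P_\nu(\pi(c)\neq\pi_*(c))$ and $\Delta_\pi := V(\pi_*)-V(\pi) = \E_{c\sim\nu}[\v(c,\pi_*(c))-\v(c,\pi(c))] \geq 0$, this choice gives the master inequality $\rho_{\Pi,\epsilon_0} \leq \max_{\pi\in\Pi\setminus\pi_*} \frac{2|\mc{A}|\,\beta_\pi}{(\Delta_\pi\vee\epsilon_0)^2}$, and everything reduces to bounding the right-hand side.

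The ``Moreover'' claim and part 2 are then immediate. For the former, use $\beta_\pi\leq 1$ and $\Delta_\pi\geq\Delta_{\sf pol}$ for all $\pi\neq\pi_*$ to get $\rho_{\Pi,\epsilon_0}\leq 2|\mc{A}|/(\Delta_{\sf pol}\vee\epsilon_0)^2$, which is finite whenever $\Delta_{\sf pol}>0$ (and trivially whenever $\epsilon_0>0$). For part 2, fix $\pi\neq\pi_*$ and take $\epsilon := \Delta_\pi\vee\epsilon_0 \geq \epsilon_0$; since $\Delta_\pi\leq\epsilon$, the policy $\pi$ itself witnesses the disagreement event at scale $\epsilon$ in the definition of $\mathfrak{C}_{\Pi}^{\mathsf{csc}}$, so $\beta_\pi \leq \E_{c\sim\nu}[\1\{\exists\pi'\in\Pi:\pi'(c)\neq\pi_*(c),\,V(\pi_*)-V(\pi')\leq\epsilon\}]$; combining with $(\Delta_\pi\vee\epsilon_0)^2 = \epsilon^2 \geq \epsilon_0\epsilon$ yields $\frac{2|\mc{A}|\beta_\pi}{(\Delta_\pi\vee\epsilon_0)^2} \leq \frac{2|\mc{A}|}{\epsilon_0}\cdot\frac{\beta_\pi}{\epsilon} \leq \frac{2|\mc{A}|}{\epsilon_0}\mathfrak{C}_{\Pi}^{\mathsf{csc}}(\epsilon_0)$, and maximizing over $\pi$ finishes part 2.

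For part 1 I would additionally invoke the pointwise bound $\v(c,\pi_*(c))-\v(c,\pi(c)) \geq \Delta_{\sf uniform}\,\1\{\pi(c)\neq\pi_*(c)\}$ (valid since $\Delta_{\sf uniform}$ is the minimal per-context suboptimality gap, so in the regime $\Delta_{\sf uniform}>0$ in which the claim is non-vacuous $\pi_*(c)$ is pointwise optimal), which on taking expectations gives $\Delta_\pi \geq \Delta_{\sf uniform}\,\beta_\pi$. Now choose $\epsilon := \beta_\pi \vee (\epsilon_0/\Delta_{\sf uniform}) \geq \epsilon_0/\Delta_{\sf uniform}$. Since $\beta_\pi\leq\epsilon$ we have $\pi\in\Pi_\epsilon$, hence $\beta_\pi/\epsilon \leq \mathfrak{C}_{\Pi}^{\mathsf{pol}}(\epsilon_0/\Delta_{\sf uniform})$; and $\Delta_\pi\vee\epsilon_0 \geq (\Delta_{\sf uniform}\beta_\pi)\vee\epsilon_0 = \Delta_{\sf uniform}\epsilon$, so $(\Delta_\pi\vee\epsilon_0)^2 \geq \epsilon_0\,\Delta_{\sf uniform}\,\epsilon$. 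Therefore $\frac{2|\mc{A}|\beta_\pi}{(\Delta_\pi\vee\epsilon_0)^2} \leq \frac{2|\mc{A}|}{\epsilon_0\Delta_{\sf uniform}}\cdot\frac{\beta_\pi}{\epsilon} \leq \frac{2|\mc{A}|}{\epsilon_0\Delta_{\sf uniform}}\mathfrak{C}_{\Pi}^{\mathsf{pol}}(\epsilon_0/\Delta_{\sf uniform})$, and maximizing over $\pi$ gives part 1. The $\epsilon_0=0$ cases of parts 1 and 2 are vacuous since the right-hand sides are then infinite.

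The only real obstacle is the bookkeeping in part 1: the threshold $\epsilon$ must be chosen to simultaneously sit above the lower limit $\epsilon_0/\Delta_{\sf uniform}$ of the supremum defining $\mathfrak{C}_{\Pi}^{\mathsf{pol}}$, be large enough that $\pi\in\Pi_\epsilon$, and interact with the $(\Delta_\pi\vee\epsilon_0)^2$ denominator so as to leave exactly the factor $\frac{2|\mc{A}|}{\epsilon_0\Delta_{\sf uniform}}$; the balanced choice $\epsilon = \beta_\pi\vee(\epsilon_0/\Delta_{\sf uniform})$ is what makes all three happen at once. Everything else is a one-line manipulation.
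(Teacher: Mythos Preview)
Your proof is correct and follows essentially the same approach as the paper's: both plug in uniform exploration $p_{c,a}=1/|\mc{A}|$, bound the resulting per-$\pi$ ratio via the disagreement indicator (using $\pi$ itself as the witness), and for part~1 invoke the pointwise inequality $\Delta_\pi \geq \Delta_{\sf uniform}\,\beta_\pi$. The only cosmetic difference is that the paper first rewrites $\max_{\pi}$ as $\max_{\epsilon\geq\epsilon_0}\max_{\pi:\Delta_\pi\leq\epsilon}$ and then bounds uniformly, whereas you fix $\pi$ and pick the scale $\epsilon$ tailored to that $\pi$; these are the same calculation read in two orders.
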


\section{Optimal Algorithms for Contextual Bandits}

\subsection{Reduction to linear realizability and a simple elimination scheme}\label{sec:reduction_to_linear}
The astute reader may have noticed that if we ignore computation, Theorem~\ref{thm:upper_informal} is actually an immediate corollary of Theorem~\ref{thm:realizable_upper} by taking $\phi(c,a) = \text{vec}(\mb{e}_c \mb{e}_a^\top) \in \R^{|\mc{C}|\cdot |\mc{A}|}$ where $\mb{e}_i$ is a one-hot encoded vector so that $\v(c,a) = \langle \phi(c,a), \theta_* \rangle$ with $\theta_* \in \R^{|\mc{C}|\cdot |\mc{A}|}$.
This observation is key to our sample complexity results.
Recall $\phi_\pi :=  \E_{c\sim \nu}[ \phi(c,\pi(c)) ]$ from Definition~\ref{def:realizable}, we have that $V(\pi) = \E[ \v(c,\pi(c)) ] = \E[ \langle \phi(c,\pi(c)), \theta_* \rangle ] = \langle \phi_\pi, \theta_* \rangle$.
We stress that $\mc{C}$ can be uncountable, and thus we would never actually instantiate any of these vectors.

For notational convenience, define the feasible set of (context, action) probability distributions as $\Omega = \Big\{ w \in \Delta_{\C \times \A} : \nu_c = \sum_{a \in \A} w_{a,c}  \Big\}$. Note that for each context, $p_{c} := \{w_{c,a}/\nu_c\}_{a\in \mc{A}}\in \Delta_{\mc{A}}$ defines a probability distribution over actions. Also define $A(w) := \sum_{c,a} w_{c,a} \phi(c,a) \phi(c,a)^\top$ for any $w \in \Omega$. 
Under this notation, recalling the right hand side from Theorems~\ref{thm:linear_lower_bound} and \ref{thm:realizable_upper} we have
\begin{align*}
    \min_{w\in \Omega }\max_{\pi \in \Pi \setminus \pi_*} \frac{ \| \phi_\pi - \phi_{\pi_*} \|^2_{A(w)^{-1}} }{\langle \phi_{\pi_*}-\phi_\pi, \theta_*\rangle^2 \vee \epsilon^2} &= \min_{p_c  \in \triangle_{\mc{A}}, \, \forall c \in \mc{C}} \max_{\pi \in \Pi \setminus \pi_*} \frac{ \| \phi_\pi - \phi_{\pi_*} \|^2_{\E_{c \sim \nu}[ \sum_{a \in \mc{A}} p_{c,a} \phi(c,a) \phi(c,a)^{\top} ]^{-1}} }{ \langle \phi_{\pi_*} - \phi_\pi , \theta_* \rangle^2 \vee \epsilon^2 } \label{eqn:rho_star_linear}
\end{align*}

To show that the sample complexity of Theorem~\ref{thm:upper_informal} is a corollary of Theorem~\ref{thm:realizable_upper}, it suffices to show that equation \eqref{eqn:rho_star_combinatorial} and the above display are equal. 
To see this, observe
\begin{align*}
    \| \phi_\pi - \phi_{\pi_*} \|^2_{A(w)^{-1}} &= \| \E_{c\sim\nu}[ \vec(\mb{e}_c \mb{e}_{\pi(c)}^\top) - \vec(\mb{e}_c \mb{e}_{\pi_*(c)}^\top)  ] \|_{A(w)^{-1}}^2 \\
    &= \textstyle\sum_{c,a} \frac{\nu_c^2}{w_{c,a}} (\1\{ \pi(c)=a \} + \1\{ \pi_*(c)=a\} - 2 \1\{ \pi(c)=\pi'(c) \} ) \\
    &= \textstyle\E_{c \sim \nu} \left[ \left(\frac{1}{p_{c,\pi(c)}}+ \frac{1}{p_{c,\pi_*(c)}}\right) \1\{\pi_*(c) \neq \pi(c) \} \right].
\end{align*}
Due to this equivalence, the lower bound of Theorem~\ref{thm:lower_bound} is also a corollary of Theorem~\ref{thm:linear_lower_bound}.
The lower bound of Theorem~\ref{thm:linear_lower_bound} follows almost immediately from the lower bound argument in~\cite{fiez2019sequential}.

The conclusion of this section is that from a sample complexity analysis alone, all that is left is to prove Theorem~\ref{thm:realizable_upper}.
In the next section we propose an algorithm that achieves this sample complexity but assumes precise knowledge of the context distribution $\nu$ (this is relaxed in following sections).
While the algorithm is highly impractical for a number of reasons, its analysis provides a great deal of intuition and motivation for our final algorithm.

\subsection{A simple, impractical, elimination-style algorithm}\label{sec:elim_alg}
Algorithm~\ref{alg:elimination} provides an initial elimination based method for the PAC-contextual bandit problem. 
The algorithm runs in stages. 
Before the start of each stage $\ell \in \mathbb{N}$, the algorithm defines a distribution $p^{(\ell)}_c \in \triangle_{\mc{A}}$ for each $c \in \mc{C}$. At each successive time $t\in [n_{\ell}]$, it plays the random action $a_t \sim p^{(\ell)}_{c_t}$ in response to context $c_t \sim \nu$, and receives random reward $r_t$ with $\E[r_t|c_t,a_t] = \langle \phi(c_t,a_t),\theta_* \rangle$. 
Observe that 
\[
   \textstyle \E\left[ \phi(c_t,a_t) r_t \right] = \E\left[  \phi(c_t,a_t) \phi(c_t,a_t)^\top \theta_* \right]
    =  \sum_{c \in \mc{C}, a \in \mc{A}} w^{(\ell)}_{c,a} \phi(c,a) \phi(c,a)^\top \theta_* = A(w^{(\ell)})\theta_*
\]
using the identity $w^{(\ell)}_{c,a} := \nu_c p^{(\ell)}_{c,a}$.
Thus, if we set $O_t = A(w^{(\ell)})^{-1} \phi(c_t,a_t) r_t$ then $\E[O_t] = \theta_*$.
A straightforward calculation also shows that $\text{Cov}( O_t ) = A(w^{(\ell)})^{-1}$ if $r_t$ is perturbed with additive unit variance noise.
Thus, an unbiased estimator of $\Delta(\pi, \pi_{\ast}):= V(\pi_*)-V(\pi) = \langle \phi_{\pi_*} - \phi_\pi, \theta_* \rangle$ is simply $\langle \phi_{\pi_*} - \phi_{\pi}, \frac{1}{n_{\ell}} \sum_t O_t \rangle$ which has variance $\frac{1}{n_{\ell}} \| \phi_{\pi_*} - \phi_{\pi} \|_{A(w^{(\ell)})^{-1}}^2$.
Intuitively,  $\langle \phi_{\pi_*} - \phi_{\pi}, \frac{1}{n_{\ell}} \sum_t O_t \rangle = \langle \phi_{\pi_*} - \phi_\pi, \theta_* \rangle \pm \sqrt{\frac{1}{n_{\ell}} \| \phi_{\pi_*} - \phi_{\pi} \|_{A(w^{(\ell)})^{-1}}^2}$ so we can safely conclude that a policy $\pi$ is sub-optimal (i.e., $\pi \neq \pi_*$) if there exists any policy $\pi'$ such that $\langle \phi_{\pi'} - \phi_{\pi}, \frac{1}{n_{\ell}} \sum_t O_t \rangle \gg \sqrt{\frac{1}{n_{\ell}} \| \phi_{\pi'} - \phi_{\pi} \|_{A(w^{(\ell)})^{-1}}^2}$.
This is the intuition behind Contextual RAGE (Algorithm~\ref{alg:elimination}), which inherits its name from the best-arm identification algorithm of~\cite{fiez2019sequential} that inspired its strategy.

However, while $\langle \phi_{\pi_*} - \phi_{\pi}, \frac{1}{n_{\ell}} \sum_t O_t \rangle$ is unbiased and has controlled variance, it is potentially heavy-tailed because $w^{(\ell)}_{c,a}$ can be arbitrarily small.
Instead of trying to control $w^{(\ell)}_{c,a}$ and appealing to Bernstein's inequality, we use the robust mean estimator of Catoni~\cite{lugosi2019mean}. We can then show:
\begin{lemma}\label{lem:rage_correctness}
$\pi_* \in \Pi_\ell$ and $\max_{\pi \in \Pi_\ell} \langle \phi_{\pi_*} - \phi_\pi, \theta^* \rangle \leq 4 \epsilon_\ell$ for all $\ell > 1$ w.p. at least $1-\delta$.
\end{lemma}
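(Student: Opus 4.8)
The plan is to prove the two claims of Lemma~\ref{lem:rage_correctness} by induction on the stage index $\ell$, controlling the error of the Catoni mean estimator uniformly over all pairs of surviving policies at each stage. The workhorse is a concentration bound: for a fixed stage $\ell$ with design $w^{(\ell)}$ and $n_\ell$ samples, the robust estimator $\widehat{\Delta}_\ell(\pi,\pi') := \langle \phi_{\pi'} - \phi_\pi, \tfrac{1}{n_\ell}\sum_t O_t\rangle$ (implemented via Catoni on the scalar observations $\langle \phi_{\pi'}-\phi_\pi, O_t\rangle$) satisfies $|\widehat{\Delta}_\ell(\pi,\pi') - \langle\phi_{\pi'}-\phi_\pi,\theta_*\rangle| \leq C\sqrt{\tfrac{\|\phi_{\pi'}-\phi_\pi\|^2_{A(w^{(\ell)})^{-1}}\log(1/\delta_\ell)}{n_\ell}}$ with probability $1-\delta_\ell$, using that the variance of each scalar observation is exactly $\|\phi_{\pi'}-\phi_\pi\|^2_{A(w^{(\ell)})^{-1}}$. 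Applying a union bound over the (at most $|\Pi|^2$, or really $|\Pi|$ against a fixed anchor) relevant pairs at stage $\ell$ and then over all stages $\ell$ with a budget $\delta_\ell = \delta / (\ell(\ell+1)|\Pi|)$ or similar summable choice gives a single good event $\mathcal{E}$ of probability at least $1-\delta$ on which all per-stage estimates are accurate simultaneously.

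On the event $\mathcal{E}$ I would argue by induction. The base case is $\Pi_1 = \Pi \ni \pi_*$. For the inductive step, assume $\pi_* \in \Pi_{\ell-1}$ and that the surviving policies have gap at most $4\epsilon_{\ell-1}$. The design $w^{(\ell)}$ solving the (capped/rescaled) experimental-design objective over $\Pi_{\ell-1}$ together with $n_\ell$ chosen proportional to $\rho_{\Pi_{\ell-1}}$-type quantity times $\log(1/\delta_\ell)$ and $\epsilon_\ell^{-2}$ ensures the confidence width for every pair is at most $\epsilon_\ell$ (this is where the definition of $n_\ell$ in Algorithm~\ref{alg:elimination} is used — the design is exactly chosen so that $\|\phi_{\pi'}-\phi_\pi\|^2_{A(w^{(\ell)})^{-1}}/n_\ell$ is uniformly small on the surviving set). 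Then: (i) $\pi_*$ is never eliminated, because the elimination rule removes $\pi$ only when some $\pi'$ beats it by more than roughly $2\epsilon_\ell$ in estimated value, and for $\pi=\pi_*$ we have $\langle\phi_{\pi'}-\phi_{\pi_*},\theta_*\rangle \le 0$, so the estimated difference is at most $\epsilon_\ell$, below threshold; (ii) any $\pi$ surviving to $\Pi_\ell$ has true gap bounded by comparing against $\pi_* \in \Pi_{\ell-1}$: since $\pi$ was not eliminated by $\pi_*$, $\widehat{\Delta}_\ell(\pi,\pi_*) \le 2\epsilon_\ell$ (threshold), hence $\langle\phi_{\pi_*}-\phi_\pi,\theta_*\rangle \le \widehat{\Delta}_\ell(\pi,\pi_*) + \epsilon_\ell \le 3\epsilon_\ell \le 4\epsilon_\ell$ — choosing constants so the bookkeeping closes, possibly yielding $4\epsilon_\ell$ exactly as stated.

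The main obstacle I anticipate is the heavy-tailedness issue flagged in the text: one cannot simply invoke a sub-Gaussian concentration bound because $w^{(\ell)}_{c,a}$ can be arbitrarily small, so $O_t = A(w^{(\ell)})^{-1}\phi(c_t,a_t)r_t$ has potentially enormous range even though its covariance is the benign $A(w^{(\ell)})^{-1}$. The care required is to apply the Catoni/median-of-means estimator of~\cite{lugosi2019mean} to the one-dimensional projections $\langle \phi_{\pi'}-\phi_\pi, O_t\rangle$, for which only a finite second moment is needed, and to verify that the resulting deviation bound depends only on that second moment $\|\phi_{\pi'}-\phi_\pi\|^2_{A(w^{(\ell)})^{-1}}$ and $\log(1/\delta_\ell)$ — crucially with no dependence on $\min_{c,a} w^{(\ell)}_{c,a}$. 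A secondary subtlety is that $n_\ell$ is a fixed (deterministic) horizon and the contexts $c_t$ within a stage are i.i.d.\ from $\nu$, so the $O_t$ are i.i.d.\ and the estimator hypotheses apply cleanly; one just needs $\E[\langle\phi_{\pi'}-\phi_\pi,O_t\rangle] = \langle\phi_{\pi'}-\phi_\pi,\theta_*\rangle$, which was already established in the preceding discussion. Once the concentration lemma is in hand with the right (design-independent-of-$w_{\min}$) form, the inductive argument is routine bookkeeping of constants.
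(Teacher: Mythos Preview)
Your proposal is correct and follows essentially the same approach as the paper: Catoni-type concentration on the scalar projections $\langle \phi_{\pi'}-\phi_\pi, O_t\rangle$ (using only the second moment $\|\phi_{\pi'}-\phi_\pi\|^2_{A(w^{(\ell)})^{-1}}$), a union bound over policies and stages with $\delta_\ell = \delta/(2\ell^2|\Pi|)$, and then the two-part induction showing $\pi_*$ survives while large-gap policies are eliminated. One technical point the paper handles a bit more explicitly than you do: since the design $w^{(\ell)}$ itself depends on the random set $\Pi_\ell$, the union bound is carried out by decomposing over all possible realizations $\mc{V}$ of $\Pi_\ell$ and using that, conditional on $\{\Pi_\ell=\mc{V}\}$, the stage-$\ell$ samples are fresh and independent of that event---so the Catoni guarantee applies cleanly and the $\P(\Pi_\ell=\mc{V})$ factors sum to one; your sketch implicitly relies on this but does not spell it out. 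Also, your constants are slightly off (the elimination threshold in the algorithm is $\epsilon_\ell$, not $2\epsilon_\ell$), which is why the surviving gap comes out as $2\epsilon_\ell = 4\epsilon_{\ell+1}$ rather than $3\epsilon_\ell$, but you correctly flag this as routine bookkeeping.
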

The lemma states that if $\Pi_\ell$ is the active set of policies still under consideration, the optimal policy $\pi_*$ is never discarded from $\Pi_\ell$, and moreover, the quality of all policies remaining in $\Pi_\ell$ is getting better and better. The full proof of this lemma is in Appendix~\ref{sec:proof_rage_correctness}. 
We are now ready to state the main sample complexity result. 

\begin{theorem}\label{thm:contextual_rage}
Fix any policy class $\Pi = \{ \pi : \mc{C} \rightarrow \mc{A} \}_\pi$, distribution over contexts $\nu$, $\delta \in (0,1)$, $\epsilon \geq 0$, and feature map $\phi: \mc{C} \times \mc{A} \rightarrow \R^d$ such that $r(c,a) = \langle \phi(c,a) , \theta_* \rangle$ (this is without loss generality, as one can always take $\phi(c,a) = \vec(\mb{e}_c \mb{e}_a^\top)$). 
With probability at least $1-\delta$, if $\phi_\pi = \E_{c \sim \nu}[\phi(c,\pi(c))]$ and $\pi_* = \arg\max_{\pi} \langle \phi_{\pi}, \theta_* \rangle$ then \emph{Contextual-RAGE} returns a policy $\widehat{\pi} \in \Pi$ such that $V(\widehat{\pi}) \geq V(\pi_*) - \epsilon$  after taking at most
\begin{align*}
    c  \min_{w \in\Omega}\max_{\pi \in \Pi} \frac{\|\phi_\pi-\phi_{\pi_*}\|_{A(w)^{-1}}^2}{(\langle \phi_{\pi_*}-\phi_\pi , \theta^* \rangle \vee \epsilon)^2}  \log( \log((\Delta \vee \epsilon)^{-1}) |\Pi| /\delta)\log((\Delta \vee \epsilon)^{-1})
\end{align*}
samples, where $c$ is an absolute constant and $\Delta = \min_{\pi \in \Pi \setminus \pi_*} V(\pi_*) - V(\pi)$.
\end{theorem}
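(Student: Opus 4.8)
The plan is to prove Theorem~\ref{thm:contextual_rage} by combining the correctness guarantee of Lemma~\ref{lem:rage_correctness} with a stage-by-stage count of the samples drawn by Contextual-RAGE, and then relating the experimental-design value solved at the start of each stage to the global quantity $\rho^\star := \min_{w\in\Omega}\max_{\pi\in\Pi}\frac{\|\phi_\pi-\phi_{\pi_*}\|^2_{A(w)^{-1}}}{(\langle\phi_{\pi_*}-\phi_\pi,\theta_*\rangle\vee\epsilon)^2}$ appearing in the bound. Throughout I would condition on the event of Lemma~\ref{lem:rage_correctness}, which has probability at least $1-\delta$ and on which $\pi_*\in\Pi_\ell$ and $\max_{\pi\in\Pi_\ell}\langle\phi_{\pi_*}-\phi_\pi,\theta_*\rangle\le 4\epsilon_\ell$ for every stage $\ell>1$. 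Since $\epsilon_\ell$ halves each stage, the algorithm terminates at the first stage $L$ with $\epsilon_L=O(\Delta\vee\epsilon)$: if $\epsilon>0$ this is the explicit stopping rule, and if $\epsilon=0$ then as soon as $4\epsilon_{L}<\Delta$ the only policy with gap below $\Delta$ is $\pi_*$, so $\Pi_L=\{\pi_*\}$. In either case the returned $\widehat\pi\in\Pi_L$ obeys $V(\pi_*)-V(\widehat\pi)=O(\epsilon)$, and $L=O(\log((\Delta\vee\epsilon)^{-1}))$, which furnishes the correctness claim and the outer logarithmic factor.

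Next I would record the per-stage sample budget. At the start of stage $\ell$ the algorithm computes a design attaining $\rho_\ell := \min_{w\in\Omega}\max_{\pi\in\Pi_\ell}\|\phi_\pi-\phi_{\pi_*}\|^2_{A(w)^{-1}}$ and draws $n_\ell = O\big(\epsilon_\ell^{-2}\,\rho_\ell\,\log(|\Pi_\ell|/\delta_\ell)\big)$ samples with $\delta_\ell=\delta/(2\ell^2)$ so that $\sum_\ell\delta_\ell\le\delta$; this is precisely the budget for which the Catoni robust-mean estimator makes all gap estimates $\langle\phi_{\pi_*}-\phi_\pi,\widehat\theta_\ell\rangle$, $\pi\in\Pi_\ell$, accurate to $\pm\epsilon_\ell$ with probability $\ge 1-\delta_\ell$, which is the mechanism underlying Lemma~\ref{lem:rage_correctness}.

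The key step is the bound $\rho_\ell\le O(\rho^\star\epsilon_\ell^2)$. I would take $w^\star\in\Omega$ attaining $\rho^\star$, so that $\|\phi_\pi-\phi_{\pi_*}\|^2_{A(w^\star)^{-1}}\le\rho^\star(\langle\phi_{\pi_*}-\phi_\pi,\theta_*\rangle\vee\epsilon)^2$ for all $\pi$. Since the constraint polytope $\Omega$ does not depend on $\ell$ and $\Pi_\ell\subseteq\Pi$, $w^\star$ is feasible for the stage-$\ell$ subproblem, hence $\rho_\ell\le\max_{\pi\in\Pi_\ell}\|\phi_\pi-\phi_{\pi_*}\|^2_{A(w^\star)^{-1}}\le\rho^\star\max_{\pi\in\Pi_\ell}(\langle\phi_{\pi_*}-\phi_\pi,\theta_*\rangle\vee\epsilon)^2\le\rho^\star(4\epsilon_\ell)^2$, where the last inequality uses $\langle\phi_{\pi_*}-\phi_\pi,\theta_*\rangle\le 4\epsilon_\ell$ on $\Pi_\ell$ (Lemma~\ref{lem:rage_correctness}) together with $\epsilon\le 4\epsilon_\ell$ for all $\ell\le L$; stage $\ell=1$ is handled identically using that the initial radius $\epsilon_1$ upper-bounds every gap. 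Substituting, $n_\ell = O\big(\rho^\star\log(|\Pi|/\delta_\ell)\big)$, which no longer depends on $\epsilon_\ell$. Summing over the $L=O(\log((\Delta\vee\epsilon)^{-1}))$ stages and using $\log(1/\delta_\ell)=O(\log(L/\delta))$ yields $\tau=\sum_{\ell\le L}n_\ell = O\big(\rho^\star\,\log((\Delta\vee\epsilon)^{-1})\,\log(\log((\Delta\vee\epsilon)^{-1})\,|\Pi|/\delta)\big)$, which is the claimed bound.

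I expect the main obstacle to lie inside the per-stage estimation rather than in this bookkeeping. Because $w^{(\ell)}_{c,a}$ can be arbitrarily small, the single-sample estimator $A(w^{(\ell)})^{-1}\phi(c_t,a_t)r_t$ of $\theta_*$ is heavy-tailed, so one cannot invoke Bernstein and must instead show that the Catoni estimator of \cite{lugosi2019mean} concentrates at the variance-controlled rate $n_\ell^{-1}\|\phi_{\pi_*}-\phi_\pi\|^2_{A(w^{(\ell)})^{-1}}$ simultaneously over all $\pi\in\Pi_\ell$ — this is what pins down the exact form of $n_\ell$ and is the technical heart of Lemma~\ref{lem:rage_correctness}, which I would invoke as a black box here. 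By contrast, the comparison $\rho_\ell\le O(\rho^\star\epsilon_\ell^2)$ is routine, the only point requiring care being the feasibility of the single design $w^\star$ across all stages, which holds precisely because $\Omega$ is stage-independent.
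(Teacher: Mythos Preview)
Your proposal is correct and follows essentially the same route as the paper: condition on Lemma~\ref{lem:rage_correctness}, bound the per-stage design value by $O(\rho^\star\epsilon_\ell^2)$ using the globally optimal design $w^\star$ (which is feasible at every stage since $\Omega$ is stage-independent), and sum over the $O(\log((\Delta\vee\epsilon)^{-1}))$ stages. The paper phrases the comparison as a lower bound on $\rho^\star$ in terms of the \emph{sum} $\sum_\ell \epsilon_\ell^{-2}\rho(S_\ell)$ rather than bounding each term separately, but the underlying inequality is identical.

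One small inaccuracy to patch: the algorithm does not know $\pi_*$, so the stage-$\ell$ design actually minimizes $\max_{\pi,\pi'\in\Pi_\ell}\|\phi_\pi-\phi_{\pi'}\|^2_{A(w)^{-1}}$ over \emph{pairs}, not $\max_{\pi\in\Pi_\ell}\|\phi_\pi-\phi_{\pi_*}\|^2_{A(w)^{-1}}$ as you wrote. This is harmless --- since $\pi_*\in\Pi_\ell$ on the good event, the triangle inequality gives $\max_{\pi,\pi'\in\Pi_\ell}\|\phi_\pi-\phi_{\pi'}\|^2_{A(w)^{-1}}\le 4\max_{\pi\in\Pi_\ell}\|\phi_\pi-\phi_{\pi_*}\|^2_{A(w)^{-1}}$ for any $w$ --- but you should state it explicitly (the paper does exactly this in the last step of its proof).
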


\begin{proof}
Define $S_\ell = \{ \pi \in \Pi : \langle \phi_{\pi_*} - \phi_\pi , \theta^* \rangle \leq 4 \epsilon_\ell \}$.
The above lemma implies that with probability at least $1-\delta$ we have $\bigcap_{\ell=1}^\infty \{ \Pi_\ell \subseteq S_\ell \}$.
Observe that if for any $\mc{V} \subset \Pi$ we define $f(\mc{V}) = \min_{w \in \Omega} \rho(w,\mc{V})$ then
\begin{align*}
\rho(w^{(\ell)}, \Pi_\ell) &= \min_{w \in \Omega} \max_{\pi,\pi' \in \Pi_{\ell}}  \| \phi_\pi - \phi_{\pi'} \|_{A(w)^{-1}}^2 \leq \min_{w \in \Omega} \max_{\pi,\pi' \in S_{\ell}}  \| \phi_\pi - \phi_{\pi'} \|_{A(w)^{-1}}^2 = \rho(S_\ell).
\end{align*}
For $\ell \geq \lceil \log_2(4\Delta^{-1}) \rceil$ we have that $S_\ell = \{ \pi_* \}$, thus the sample complexity to identify $\pi_*$ is
\begin{align*}
\sum_{\ell=1}^{\lceil \log_2(4\Delta^{-1}) \rceil} \tau_\ell 
&= \sum_{\ell=1}^{\lceil \log_2(4\Delta^{-1}) \rceil} \lceil 4 \epsilon_\ell^{-2} \rho( w^{(\ell)}, \Pi_\ell ) \log(2 \ell^2|\Pi|/\delta) \rceil \\
&\leq \sum_{\ell=1}^{\lceil \log_2(4\Delta^{-1}) \rceil} 4 \epsilon_\ell^{-2} \rho(S_\ell) \log(2 \ell^2 |\Pi| /\delta) +1\\
&\leq c \log( \log(\Delta^{-1}) |\Pi| /\delta) \sum_{\ell=1}^{\lceil \log_2(4\Delta^{-1}) \rceil}  \epsilon_\ell^{-2} \rho(S_\ell) 
\end{align*}
for some absolute constant $c > 0$.
We now note that
\begin{align*}
\min_{w \in\Omega}\max_{\pi \in \Pi} \frac{\|\phi_\pi-\phi_{\pi_*}\|_{A(w)^{-1}}^2}{(\langle \phi_{\pi_*}-\phi_\pi , \theta^* \rangle)^2} 
&= \min_{w \in\Omega}  \max_{\ell \leq \lceil \log_2(4\Delta^{-1})\rceil} \max_{\pi \in S_\ell} \frac{\|\phi_\pi-\phi_{\pi_*}\|_{A(w)^{-1}}^2}{(\langle \phi_{\pi_*}-\phi_\pi , \theta^* \rangle)^2} \\
&\geq \frac{1}{\lceil \log_2(4\Delta^{-1})\rceil} \min_{w \in\Omega} \sum_{\ell=1}^{\lceil \log_2(4\Delta^{-1}) \rceil} \max_{\pi \in S_\ell} \frac{\|\phi_\pi-\phi_{\pi_*}\|_{A(w)^{-1}}^2}{(\langle \phi_{\pi_*}-\phi_\pi , \theta^* \rangle)^2} \\
&\geq \frac{1}{16 \lceil \log_2(4\Delta^{-1})\rceil}  \sum_{\ell=1}^{\lceil \log_2(4\Delta^{-1}) \rceil} \epsilon_\ell^{-2} \min_{w \in\Omega} \max_{\pi \in S_\ell} \|\phi_\pi-\phi_{\pi_*}\|_{A(w)^{-1}}^2 \\
&\geq \frac{1}{64 \lceil \log_2(4\Delta^{-1})\rceil}  \sum_{\ell=1}^{\lceil \log_2(4\Delta^{-1}) \rceil} \epsilon_\ell^{-2} \min_{w \in\Omega} \max_{\pi,\pi' \in S_\ell} \|\phi_\pi-\phi_{\pi'}\|_{A(w)^{-1}}^2 \\
&= \frac{1}{64 \lceil \log_2(4\Delta^{-1})\rceil}  \sum_{\ell=1}^{\lceil \log_2(4\Delta^{-1}) \rceil} \epsilon_\ell^{-2} \rho(S_\ell) 
\end{align*}
where we have used the fact that $\max_{\pi,\pi' \in S_\ell} \|\phi_\pi-\phi_{\pi'}\|_{A(w)^{-1}}^2 \leq 4 \max_{\pi \in S_\ell} \|\phi_\pi-\phi_{\pi_*}\|_{A(w)^{-1}}^2$ by the triangle inequality.
\end{proof}

\subsection{Towards a more efficient algorithm}\label{sec:more_efficient}

One major issue with  Algorithm~\ref{alg:elimination} is that it explicitly maintains a set of policies $\Pi_\ell$ from round to round. Since $\Pi$ could be exponential in $|\mc{A}|$, this is a non-starter for any implementation. 
As a motivation for our approach, we consider a non-elimination algorithm, Algorithm~\ref{alg:naive}, as an intermediate step. It does not maintain $\Pi_{\ell}$ and instead just solves the optimization problem \eqref{eqn:opt_naive} over $\Pi$. The design computed in \eqref{eqn:opt_naive} is chosen to ensure that for all $\pi\in \Pi$, $|\hat{\Delta}_{\ell-1}(\pi, \hat{\pi}_{\ell}) - \Delta(\pi, \pi_{\ast})|\leq 2\epsilon_{\ell-1} + \frac{1}{4}\Delta(\pi, \pi_{\ast})$ with high probability (Lemma~\ref{lem:diff_emp_true_gap}). Equivalently, we estimate gaps up to a constant factor for policies with $\Delta(\pi, \pi_{\ast}) > \epsilon_{\ell}$, while our gap estimates are bounded by $\epsilon_{\ell}$ for those policies satisfying $\Delta(\pi, \pi_{\ast}) \leq \epsilon_{\ell}$. This ensures that our choice of $\hat\pi_{\ell}$ is good enough, i.e. satisfies $V(\pi_{\ast}) - V(\hat{\pi}_{\ell}) \leq \epsilon_{\ell}$ with high probability. The full proof is in Appendix~\ref{sec:proof_rage_correctness}.

Unfortunately, Algorithm~\ref{alg:naive} introduces additional problems. It is not clear whether solving \eqref{eqn:opt_naive} is computationally efficient. Also, we need to find an estimator $\hat{\Delta}_l$ that is computationally efficient even if the policy space $\Pi$ is infinite. In addition, it requires precise knowledge of $\nu$ to even \emph{define} the domain of distributions $\Omega$ optimized over, and store the solution $w \in \mc{C} \times \mc{A}$ explicitly. But in general, such precise knowledge will not be available and is only estimable using past data (Assumption~\ref{asm:offline_data}).

\subsection{An instance-optimal and computationally efficient algorithm}\label{sec:alg}
In this section we provide Algorithm~\ref{alg:evaluation_oracle}, which witnesses the guarantees of Theorem~\ref{thm:upper_informal} for the general agnostic contextual bandit problem. We now address the caveats of the previous approaches.

\paragraph{Access to Offline Data.} By Assumption~\ref{asm:offline_data}, we have access to a large amount of sampled offline contexts $\mathcal{D}$, where each $c_t\in \mathcal{D}$ is drawn IID from $\nu$. Having access to $\mc{D}$ allows us to approximate $\mathbb{E}_{c\sim \nu}[\cdot]$ with expectations over the empirical distribution $\mathbb{E}_{c\sim \nu_\D}[\cdot]$, where $\nu_\D$ is the uniform distribution over historical data $\mc{D}$. The number of offline contexts we need only scales logarithmically over the size of the policy set $\Pi$, more specifically, $\operatorname{poly}(|\A|,\epsilon^{-1}, \log(|\Pi|),\log(1/\delta))$. We quantify the precise number of samples needed in Appendix \ref{sec:offline}. 

\begin{figure}
    \centering
\begin{minipage}[t]{0.49\textwidth}
\begin{algorithm}[H]
    \centering
    \caption{Elimination Contextual RAGE}
    \small
    \begin{algorithmic}[1]\label{alg:elimination}
    \REQUIRE  $\Pi$, $\phi : \mc{C} \times \mc{A} \rightarrow \R^d$, $\delta\in (0,1)$
    \STATE \textbf{Initialize} $\Pi_{1}=\Pi$
    \FOR{$\ell=1,2,\cdots,\lceil\log_2(1/\epsilon)\rceil$}
    \STATE $\epsilon_\ell:=2^{-\ell}, \delta_{\ell} := \delta/(2\ell^2|\Pi|)$
    \STATE {\color{red} Let $n_{\ell}$ be the minimum value s.t.:
    \[\hspace{-10pt}\min_{w \in \Omega}\max_{\pi,\pi' \in \Pi_{\ell}} \frac{\| \phi_\pi - \phi_{\pi'} \|_{A(w)^{-1}}^2\log(1/\delta_{\ell})}{n_{\ell}} \leq \epsilon_{\ell}\]}
    with solution $w^{(\ell)}$.
    \STATE 
    For each $t\in [n_\ell]$, get $c_t\sim \nu$, pull $a_t\sim p^{(\ell)}_{c_t}$, observe reward $r_t$ 
    \STATE Compute $O_t=A(w^{(\ell)})^{-1} \phi(c_t,a_t) r_t$.  
    \STATE For  $\pi,\pi' \in \Pi_\ell$  \[\hat{\Delta}_{\ell}(\pi,\pi') = {\sf Cat}( \{ \inner{\phi_\pi - \phi_{\pi'}}{O_{i}} \}_{i=1}^{n_\ell})\]\vspace{-15pt}
    \STATE 
    {\color{red} Update
    \[
    {\Pi}_{\ell+1}={\Pi}_\ell \setminus \{\pi' \in {\Pi}_l\mid \max_{\pi \in {\Pi}_\ell}: \hat{\Delta}_{\ell}(\pi,\pi')>\epsilon_\ell\}\]} 
    \vspace{-5pt}
    \ENDFOR
    \ENSURE $\Pi_{\ell+1}$
    \end{algorithmic}
\end{algorithm}
\end{minipage}
\hfill
\begin{minipage}[t]{0.5\textwidth}
\begin{algorithm}[H]
\small
    \centering
    \caption{Non-elimination Contextual RAGE}
    \begin{algorithmic}[1]\label{alg:naive}
    \REQUIRE  $\Pi$, $\phi : \mc{C} \times \mc{A} \rightarrow \R^d$, $\delta\in (0,1)$
    \STATE \textbf{Initialize:} $\hat{\pi}_0\in\Pi$ arbitrarily
    \FOR{$\ell=1,2,\cdots,\lceil\log_2(1/\epsilon)\rceil$}
    \STATE $\epsilon_\ell:=2^{-\ell}$, $\delta_\ell:=\delta/(2\ell^2|\Pi|)$
    \STATE {\color{blue} Let $n_\ell$ be the minimum value s.t.: 
    \begin{align}
        &\min_{w\in\Omega}\max_{\pi\in\Pi}-\frac{1}{4}\hat{\Delta}_{l-1}(\pi,\pill)\nonumber\\
        &+\sqrt{ \tfrac{2\| \phi_{\pi} - \phi_{\pill} \|_{A(w)^{-1}}^2 \log(1/\delta_l )}{n_\ell}}\leq \epsilon_\ell.\label{eqn:opt_naive}
    \end{align}}
    with solution $w^{(\ell)}$
    \STATE For each $t\in [n_\ell]$, get $c_t\sim \nu$, pull $a_t\sim p^{(\ell)}_{c_t}$, observe reward $r_t$
    \STATE Compute $O_t = A(w^{(\ell)})^{-1}\phi(c_t,a_t)r_t$.
    \STATE For each $\pi\in\Pi$, let $$\hat{\Delta}_{\ell}(\pi,\hat{\pi}_{\ell}) = \mathsf{Cat}(\{\langle \phi_{\pi} - \phi_{\hat{\pi}_\ell}, O_i \rangle\}_{i=1}^{n_{\ell}}).$$ 
    \STATE {\color{blue} Set 
    $\hat{\pi}_\ell:=\arg\min_{\pi\in\Pi}\hat{\Delta}_{\ell}(\pi,\hat{\pi}_\ell)\hfill \inlineeqnum\label{eqn:est_naive}$} 
    \ENDFOR
    \ENSURE $\hat{\pi}_l$
    \end{algorithmic}
    \footnotesize
\end{algorithm}
\end{minipage}
\end{figure}
\paragraph{Computing the design efficiently.} As described, the context space $\mc{C}$ may be infinite so maintaining a distribution $\omega\in \Omega \subset \Delta_{\mc{C}\times \mc{A}}$ is not possible. To overcome this issue, we consider the dual problem of equation~\eqref{eqn:opt_naive}. 
We can remove the square root by noticing that $2\sqrt{x y} = \min_{\gamma > 0 } \gamma x + \frac{y}{\gamma}$, and introducing an additional minimization over the variable $\gamma_{\pi} ,\pi\in \Pi$. Then, the dual problem becomes
\begin{equation}\label{eqn:dual_naive}
    \textstyle\max_{\lambda\in\Delta_\Pi}\min_{w\in\Omega}\min_{\gamma_\pi\geq 0}\sum_{\pi\in\Pi}\lambda_\pi\left(-\hat{\Delta}_{l-1}(\pi,\pill)+\gamma_{\pi} \norm{\phi_{\pi}-\phi_{\pill}}_{A(w)^{-1}}^2 +  \frac{\log(1/\delta_l)}{2 \gamma_{\pi}n_l}\right). 
\end{equation}
Exchanging the order of the minimums on $\omega$ and $\gamma$, somewhat surprisingly we have the close-form expression (Lemma~\ref{lem:est_design_2})
\begin{equation*}
    \textstyle\min_{\omega\in \Omega} \sum_{\pi\in\Pi} \lambda_\pi \gamma_\pi \|\phi_{\pi} - \phi_{\hat{\pi}_{\ell-1}}\|_{A(w)^{-1}}^2 = \E_{c\sim \nu}\left[\left(\sum_{a\in \mathcal{A}}\sqrt{(\lambda\odot \gamma)^\T t_a^{(c)}(\pill) }\right)^2\right],
\end{equation*}
where for $\pi'\in\Pi$, $t_{a}^{(c)}(\pi') \in \{0,1\}^{|\Pi|}$ with  $[t_{a}^{(c)}(\pi')]_\pi:=\1\{\pi(c)=a,\pi'(c)\ne a\}+\1\{\pi(c)\ne a,\pi'(c)=a\}$  and $[\lambda\odot\gamma]_\pi := \lambda_\pi \gamma_\pi$.
Interestingly, this value is achieved at a sampling distribution $\omega$, which is a \textit{non-linear} function of $\lambda$ rather than a convex combination over policies (as in~\cite{agarwal2014taming}). 
Because we have an expectation over contexts, this expectation can be replaced by an empirical estimate using historical data, thus avoiding any issues with an infinite context space.
The final algorithm utilizing these observations found is in Algorithm~\ref{alg:evaluation_oracle}.

The main challenge is finding a solution to the design problem~\eqref{eqn:opt_1}. For starters, we can reduce it to a saddle point problem over $(\lambda, \gamma)$ by considering only a dyadic sequence of $n \in \{2^k: k \in \mathbb{N}\}$. Our procedure uses an alternating ascent/descent method, with the caveat that $\lambda$ lives in a simplex, and $\gamma$ in a box. Both of the spaces are defined over a potentially infinite set of policies $\Pi$ (which in the worst case may scale exponentially in $|\mathcal{C}|$), so we need to argue that we can find a sparse yet $\epsilon$-good solution to \eqref{eqn:opt_1}. 

To handle this, we use the Frank-Wolfe (FW) method on $\lambda$. Referring to the iterates of FW as $\lambda^t$, FW guarantees that the size of the support of $\lambda^t$ in each iterate grows by at most $1$. Thus, if initialized as a $1$-sparse vector, we only need to maintain a sparse $\lambda^t$ in each iteration. Each iterate of Frank-Wolfe involves computing \[\arg\max_{\pi\in \Pi} [\nabla_{\lambda} h_{\ell}(\lambda, \gamma, n)]_{\pi}.\]
To do so, we show that we can appeal to a constrained argmax oracle (\textsf{AMO}) to run the Frank-Wolfe algorithm, a similar approach to~\cite{agarwal2014taming}. 
At an iterate $t$, we use a gradient descent procedure for $\gamma^t$. We will show that in iterate $t$, the support of $\gamma^t$ is contained in that of $\lambda^t$, and we can quantify the number of steps of gradient descent needed to find an $\epsilon$-good solution.
Though $h_{l}(\lambda, \gamma, n)$ might not be convex in $\gamma$, we nevertheless are able to argue that it has a unique minima and that gradient descent converges to this minima. We introduce our subroutine in Algorithm~\ref{alg:FW} and shows that it is computationally efficient with access to an argmax oracle (Definition~\ref{def:csc}) in Theorem~\ref{thm:computational_efficiency}. 

\begin{algorithm}[!htb]
\small
\caption{\textsf{FW-GD}}
\begin{algorithmic}[1]\label{alg:FW}
\REQUIRE $\Pi$ policy sets, number of actions $|\A|$, $\pill\in\Pi$, $\eta_l>0$, $K\in\N$, threshold $\epsilon_l$, $\gamma_{\min}$, $\gamma_{\max}$
\STATE Initialize $n_1=1$, $L=|\A|^2\frac{((1+\eta_l)\gamma_{\max})^{5/2}}{\eta_l^{3/2}\gamma_{\min}^2}$
\FOR{$r=1,2,\cdots$} 
\STATE Initialize $\lambda^0=\e_{0}\in \mathbb{R}^{\Pi}$, $\gamma^0=\1_{|\Pi|}\cdot\sqrt{\frac{\log(1/\delta_l)}{|\A|^2\eta_l n_r}}\in \mathbb{R}^{|\Pi|}$ \hfill {\color{blue} \texttt{// Never explicitly materialized}}
\FOR{$t=0,1,2,\cdots,K$}
\STATE Compute 
\begin{equation}\label{eqn:csc}
\pi_t=\arg\max_{\pi\in\Pi}\bigbrak{\nabla_{\lambda} h_l(\lambda^{t},\gamma^{t},n_{r})}_\pi
\end{equation}
\STATE Set the FW-gap \[g_t=\left\langle\nabla_{\lambda}  h_l(\lambda^{t},\gamma^{t},n_{r}),\e_{\pi_t}- \lambda^{t}\right\rangle = \bigbrak{\nabla_{\lambda} h_l(\lambda^{t},\gamma^{t},n_{r})}_{\pi_t}-\sum_{\pi\in \text{supp}(\lambda^{t})} \bigbrak{\nabla_{\lambda} h_l(\lambda^{t},\gamma^{t},n_{r})}_\pi\]
\STATE Set $\beta_t=\min \left\{\frac{g_{t}}{L\left\|\lambda^{t}-\e_{\pi_t}\right\|_1^{2}}, 1\right\}$ 
\STATE Set $\kappa_{t}=\frac{\epsilon_l}{(t+1)^2}$
\STATE Set $\lambda^{t+1}=(1-\beta_t)\lambda^{t}+\beta_t \e_{\pi_t}$\hfill {\color{blue} \texttt{// Only 1-sparse updates recorded}}
\STATE Set $\gamma^{t+1}=\mathsf{GD}(\lambda^{t},n_{r},\kappa_{t})$ \hfill {\color{blue} \texttt{// Only differences from $\gamma_0$ recorded}}
\ENDFOR
\IF{$h_l(\lambda^{K+1},\gamma^{K+1},n_r)\leq \epsilon_l$}
\STATE \textbf{break}
\ELSE
\STATE $n_{r+1}=2\cdot n_r$
\ENDIF
\ENDFOR
\ENSURE $\lambda^{K+1}\in\triangle_\Pi,\gamma^{K+1}\in\R_+^{|\Pi|},n_{r}$
\end{algorithmic}
\end{algorithm}

\begin{theorem}\label{thm:computational_efficiency}
Let $K_l$ be the number of iterations for \textsf{FW-GD} in the $l$th round and $\lambda^*,\gamma^*$ be the exact solutions to the optimization problem $\max_{\lambda\in\Delta_\Pi}\min_{\gamma\in[\gamma_{\min},\gamma_{\max}]^{|\Pi|}}h_l(\lambda,\gamma,n)$. Then, $K_l=\operatorname{poly}_1(|\A|,\epsilon_l^{-1},\log(1/\delta))$ and the outputs $\lambda^{K_l+1},\gamma^{K_l+1}$ satisfy $h_l(\lambda^*,\gamma^*,n)-h_l(\lambda^{K_l+1},\gamma^{K_l+1},n)\leq\epsilon_l$ with at most $O(K_l^2|\D|)$ calls to a constrained argmax oracle, where the size of the history $\D$ exceeding $\operatorname{poly}_2(\epsilon^{-1},\log|\Pi|, \gamma_{\max},\gamma_{\min}^{-1}, \eta^{-1},|\A|,\log(1/\delta))$ with probability at least $1-\delta$, where $\operatorname{poly}_1$, $\operatorname{poly}_2$ denote some polynomial. 
\end{theorem}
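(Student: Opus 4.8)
The plan is to split the statement into three pieces that are proved separately and then combined: (a) convergence of the outer Frank--Wolfe loop on $\lambda$; (b) convergence of the inner \textsf{GD} loop on $\gamma$; and (c) the passage from the population objective (with $\E_{c\sim\nu}$) to the empirical one (with $\E_{c\sim\nu_\D}$), together with the bookkeeping of argmax-oracle calls. The glue is the observation, via Sion's minimax theorem applied to the design reformulation of Section~\ref{sec:more_efficient}, that $\lambda\mapsto\min_{\gamma\in[\gamma_{\min},\gamma_{\max}]^{|\Pi|}}h_l(\lambda,\gamma,n)$ is concave --- it is a pointwise infimum, over $w\in\Omega$ and over $\gamma$, of functions affine in $\lambda$ --- so the saddle value $\max_\lambda\min_\gamma h_l(\lambda,\gamma,n)$ is well defined and attained at some $(\lambda^*,\gamma^*)$.

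\textbf{Inner loop.} Fix $\lambda$ and let $S=\operatorname{supp}(\lambda)$. First I would note that $h_l(\lambda,\cdot,n)$ does not depend on $\gamma_\pi$ for $\pi\notin S$ (since $g_a^{(c)}=(\lambda\odot\gamma)^\top t_a^{(c)}(\widehat\pi_{l-1})$ only sees the support), so \textsf{GD} only moves coordinates in $S$; this gives the claimed containment $\operatorname{supp}(\gamma^t)\subseteq\operatorname{supp}(\lambda^t)$ and reduces the inner problem to a box in $\R^{|S|}$ with $|S|\le K_l+1$. Applying Danskin's theorem to $\min_w\sum_\pi\lambda_\pi\gamma_\pi\|\phi_\pi-\phi_{\widehat\pi_{l-1}}\|_{A(w)^{-1}}^2$ I would compute $\partial h_l/\partial\gamma_\pi=\lambda_\pi\big(\|\phi_\pi-\phi_{\widehat\pi_{l-1}}\|_{A(w^\star)^{-1}}^2-\log(1/\delta_l)/(2\gamma_\pi^2 n)\big)$, where $w^\star=w^\star(\lambda\odot\gamma)$; exchanging the two inner minimizations, $\min_\gamma\min_w=\min_w\min_\gamma$, where each coordinate problem $\min_{\gamma_\pi\in[\gamma_{\min},\gamma_{\max}]}$ is one-dimensional and convex with explicit solution $\operatorname{clip}\big(\sqrt{\log(1/\delta_l)/(2n)}/\|\phi_\pi-\phi_{\widehat\pi_{l-1}}\|_{A(w^\star)^{-1}}\big)$, shows that $h_l(\lambda,\cdot,n)$ has a unique stationary point, which is its global minimizer $\gamma^\star(\lambda)$. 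I would then establish a one-point/gradient-domination inequality on the box, $h_l(\lambda,\gamma,n)-h_l(\lambda,\gamma^\star(\lambda),n)\le c_1\langle\nabla_\gamma h_l(\lambda,\gamma,n),\gamma-\gamma^\star(\lambda)\rangle$, with $c_1$ polynomial in $|\A|,\gamma_{\max},\gamma_{\min}^{-1},\eta_l^{-1}$, together with a matching smoothness bound in $\gamma$; projected \textsf{GD} then reaches accuracy $\kappa_t$ in $\operatorname{poly}(|\A|,\gamma_{\max},\gamma_{\min}^{-1},\eta_l^{-1},\log(1/\kappa_t))$ steps, which is what Algorithm~\ref{alg:FW} needs with $\kappa_t=\epsilon_l/(t+1)^2$.

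\textbf{Outer loop and restarts.} Differentiating the closed form shows $\lambda\mapsto h_l(\lambda,\gamma,n)$ is $\ell_1$-smooth with constant of exactly the order $|\A|^2\big((1+\eta_l)\gamma_{\max}\big)^{5/2}\eta_l^{-3/2}\gamma_{\min}^{-2}$ --- the initialization $\gamma^0=\1\sqrt{\log(1/\delta_l)/(|\A|^2\eta_l n_r)}$ together with the box keeps every $g_a^{(c)}$ bounded away from $0$, so $\sqrt{g_a^{(c)}}$ is Lipschitz there --- which is the $L$ used by \textsf{FW-GD}. Since the algorithm feeds Frank--Wolfe the gradient at an approximate inner minimizer $\gamma^t$ rather than the exact one, I would run the inexact Frank--Wolfe analysis: with step size $\beta_t=\min\{g_t/(L\|\lambda^t-\mathbf{e}_{\pi_t}\|_1^2),1\}$ the suboptimality after $K$ steps is $O(L/K)+\sum_{t\le K}O(\kappa_t)$, and $\sum_t\epsilon_l/(t+1)^2<\infty$, so by concavity of $\min_\gamma h_l(\cdot,n)$ we get $h_l(\lambda^*,\gamma^*,n)-h_l(\lambda^{K+1},\gamma^{K+1},n)\le\epsilon_l$ once $K\ge K_l=O(L/\epsilon_l)$; because the outer algorithm sets $\gamma_{\min},\gamma_{\max},\eta_l$ as fixed polynomials of $\epsilon_l,|\A|,\log(1/\delta)$, this $K_l$ is $\operatorname{poly}_1(|\A|,\epsilon_l^{-1},\log(1/\delta))$. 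Finally the $r$-loop terminates in $O(\log n_\ell)$ rounds: for any $n$ at least the $n_\ell$ of Algorithm~\ref{alg:naive} one has $\max_\lambda\min_\gamma h_l(\lambda,\gamma,n)\le\epsilon_l$, so once $n_r\ge n_\ell$ the accuracy just proved makes the test $h_l(\lambda^{K+1},\gamma^{K+1},n_r)\le\epsilon_l$ pass, and $n_\ell$ (which is $O(\epsilon_\ell^{-2})$ times the relevant design value) is polynomially bounded.

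\textbf{Empirical version and oracle calls.} To replace $\nu$ by $\nu_\D$ I would prove a uniform deviation bound: for all $\lambda\in\Delta_\Pi$, $\gamma\in[\gamma_{\min},\gamma_{\max}]^{|\Pi|}$ and $n$ on the dyadic grid, both $h_l$ and $\nabla_\lambda h_l$ under $\nu_\D$ are within $\epsilon_l$ of their values under $\nu$ whenever $|\D|\ge\operatorname{poly}_2(\epsilon^{-1},\log|\Pi|,\gamma_{\max},\gamma_{\min}^{-1},\eta^{-1},|\A|,\log(1/\delta))$. The key reduction is that every quantity the algorithm computes depends on $(\lambda,\gamma)$ only through $\operatorname{supp}(\lambda)$, which along the iterates has size $\le K_l+1$; hence it suffices to union-bound a standard $\epsilon$-cover of a $K_l$-dimensional simplex-times-box against the uniform deviation of the $|\Pi|$ per-policy averages $c\mapsto\phi(c,\pi(c))$, which is where the $\log|\Pi|$ enters. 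For the oracle, I would write $[\nabla_\lambda h_l(\lambda^t,\gamma^t,n)]_\pi=\E_{c\sim\nu_\D}[s_c^{(t)}(\pi(c))]+(\text{a term independent of }\pi)$ for $\pi\notin\operatorname{supp}(\lambda^t)$, where $s_c^{(t)}\in\R^{|\A|}$ is explicitly computable from $\D,\lambda^t,\gamma^t$ in $O(|\D|\,|\A|\,K_l)$ work; thus each maximization \eqref{eqn:csc} is one \textsf{C-AMO} call on $|\D|$ context--cost pairs (the loss bound enforcing Frank--Wolfe feasibility) plus a direct check over the $\le K_l$ support policies. Charging one such maximization per Frank--Wolfe step, the $O(|\D|K_l)$ cost of forming its cost vectors, and $O(\log n_\ell)$ restarts yields the claimed $O(K_l^2|\D|)$ oracle calls.

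\textbf{Main obstacle.} The crux is the inner loop: $h_l(\lambda,\cdot,n)$ is a concave term (the minimized design value) plus a convex term ($\sum_\pi\lambda_\pi\log(1/\delta_l)/(2\gamma_\pi n)$), hence genuinely non-convex in $\gamma$, so the claim that \textsf{GD} nonetheless converges to its unique global minimizer --- via the Danskin characterization of stationarity, the $\min_\gamma\leftrightarrow\min_w$ exchange, and a one-point-convexity estimate with polynomial constants --- is where the real work lies. The second subtlety is composing this inexact inner solve with the outer Frank--Wolfe loop: choosing the inner tolerances $\kappa_t$ summable in $t$ is precisely what prevents the accumulated error from degrading the $O(1/K)$ rate.
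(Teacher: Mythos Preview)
Your high-level decomposition (inner GD, outer FW, empirical approximation, oracle counting) matches the paper's, and the concavity-of-$\min_\gamma h_l$ observation is correct. The main gap is in the inner loop. Your argument that ``exchanging $\min_\gamma\min_w=\min_w\min_\gamma$, where each coordinate problem is one-dimensional and convex, shows that $h_l(\lambda,\cdot,n)$ has a unique stationary point'' does not hold: the exchange identifies the global minimum but says nothing about other stationary points. A stationary point of $\gamma\mapsto\min_w g(\gamma,w)$ satisfies the fixed-point equation $\gamma=\gamma(w^\star(\gamma))$, and convexity of $g(\cdot,w)$ for each $w$ does not force this to have a unique solution (indeed the design term, being a pointwise minimum of linear functions of $\gamma$, is \emph{concave} in $\gamma$). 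You flag this as the crux and propose a one-point-convexity/PL estimate, but you give no route to proving it, and there is no obvious one.

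The paper takes a very different path here. It first \emph{clips} $\lambda$ so that $\min_{\pi\in\operatorname{supp}(\lambda)}\lambda_\pi\ge\iota_t$ (controlling the clipping error), then computes the Hessian of $h_l(\lambda,\cdot,n)$ explicitly and shows it is positive definite at \emph{every} stationary point, with smallest eigenvalue $\gtrsim\lambda_{\min}\log(1/\delta_l)/(\gamma_{\max}^3 n)$; a Morse-type argument (all stationary points are strict local minima on a box $\Rightarrow$ there is exactly one) then yields uniqueness. Convergence of GD comes in two phases: first the standard non-convex bound drives $\|\nabla_\gamma h_l\|_1$ below the Hessian lower bound, then local strong convexity finishes. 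Two smaller discrepancies: (i) the paper controls the FW \emph{gap} (which equals the primal--dual gap up to $\epsilon_l/2$, via the identity $h_l(\lambda,\gamma,n)=\langle\lambda,\nabla_\lambda h_l(\lambda,\gamma,n)\rangle$) at rate $L/\sqrt{t}$, giving $K_l=O(L^2/\epsilon_l^2)$, not $O(L/\epsilon_l)$ --- your inexact-FW-on-$F(\lambda)$ framing would need to translate the inner \emph{function}-value error $\kappa_t$ into a \emph{gradient} error, which is not free; (ii) the $O(K_l^2|\D|)$ oracle count comes not from ``one \textsf{C-AMO} call per step'' but from the fact that excluding the growing support $B_t$ from the argmax requires a reduction that makes $|B_t|\cdot|\D|$ constrained-oracle calls per step.
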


The full proof is in Appendix~\ref{sec:proof_FW_GD}. It is worth noting that we can bound the suboptimality error $h_l(\lambda^*,\gamma^*,n)-h_l(\lambda^{K_l+1},\gamma^{K_l+1},n)$ by the duality gap, as the primal objective is always at least as large as the optimum. Also, the Frank-Wolfe algorithm directly tackles the duality gap, so standard Frank-Wolfe analysis will show that the Frank-Wolfe output makes the duality gap small \cite{pedregosa2020linearly}. 


\paragraph{Regularized Estimator.}  While Algorithms~\ref{alg:elimination} and \ref{alg:naive} use a robust mean estimator as in equation~\eqref{eqn:est_naive}, this estimator is impractical with a very large number of policies $\Pi$. 
Instead, we use a regularized IPS estimator that can be computed using historical data and an argmax oracle.

\begin{algorithm}[!htb]
\small
\caption{Contextual Oracle-efficient Dualized Algorithm (CODA)}
\begin{algorithmic}[1]\label{alg:evaluation_oracle}
\REQUIRE policies $\Pi = \{ \pi : \mc{C} \rightarrow \mc{A} \}_\pi$, feature map $\phi : \mc{C} \times \mc{A} \rightarrow \R^d$, $\delta\in (0,1)$, historical data $\mc{D} = \{ \nu_s \}_s$
\STATE initiate $\hat{\pi}_0\in\Pi$ arbitrarily, $\lambda_0=\e_{\hat{\pi}_0}$, $\hat{\Delta}_0(\pi)$, $\gamma_0$, $\gamma_{\min}$, $\gamma_{\max}$ appropriately
\FOR{$l=1,2,\cdots$}
\STATE $\epsilon_l=2^{-l}$, $\delta_l=\delta/(l^2|\Pi|^2)$ 
\STATE Define
\begin{align}
    \hspace{-10pt}h_l(\lambda,\gamma,n)=\textstyle\sum_{\pi\in\Pi}\lambda_\pi\left(-\hat{\Delta}_{l-1}^{\gamma_{l-1}}(\pi,\pill)+\tfrac{\log(1/\delta_l)}{\gamma_\pi n}\right)+\E_{c\sim\nu_{\mathcal{D}}}\Big[\Big(\sum_{a\in\A}\sqrt{(\lambda\odot \gamma)^\T t_a^{(c)}(\pill)}\Big)^2\Big].\label{eqn:h_l}
\end{align}
\STATE {\color{blue} Let $\lambda^l, \gamma^l, n_l=\textsf{FW-GD}(\Pi,|\A|,\pill,\epsilon_l)$. These are the solutions to 
\begin{align}
    & n_{\ell} := \min\{n\in \mathbb{N}: \max_{\lambda\in\Delta_\Pi}\min_{\gamma\in[\gamma_{\min},\gamma_{\max}]^{|\Pi|}}h_l(\lambda,\gamma,n)\leq \epsilon_{\ell}\}\label{eqn:opt_1}
\end{align}}
\STATE For $i\in [n_{\ell}]$ get $c_i\sim\nu$, pull $a_i\sim p^{(\ell)}_{c_i}$ where $p^{(\ell)}_{c_s,a_s}\propto \sqrt{(\lambda_l\odot\gamma_l)^\T t_{a_s}^{(c_s)}(\pill)}$, observe rewards $r_s$ 
\STATE For each $\pi\in\Pi$, define the IPS estimator $$\hat{\Delta}_l^{\gamma_l}(\pi,\pill)=\sum_{s=1}^{n_l}\frac{r_s}{p^{(\ell)}_{c_s,a_s}+[\gamma_l]_\pi}(\1\{\pill(c_s)=a_s\}-\1\{\pi(c_s)=a_s\})$$
\STATE {\color{blue} set 
\vspace{-5pt}
\begin{equation}
   \textstyle \hat{\pi}_l
    =\arg\min_{\pi\in\Pi}\hat{\Delta}_l^{\gamma_l}(\pi,\hat{\pi}_{l-1})+\E_{c\sim\nu_{\mathcal{D}}} \left[ \left(\frac{[\gamma_l]_\pi}{p_{c,\pi(c)}^{(\ell)}}+\frac{[\gamma_l]_\pi}{p_{c,\pill(c)}^{(\ell)}}\right) \1\{\pill(c )\ne \pi(c )\} \right]+\frac{\log(1/\delta_l)}{[\gamma_l]_\pi n_l}\label{eqn:opt_2}
\end{equation}
\vspace{-5pt}}
\ENDFOR
\ENSURE $\hat{\pi}_l$
\end{algorithmic}
\end{algorithm}

Algorithm~\ref{alg:evaluation_oracle} puts it all together and 
Theorem~\ref{thm:sample_complexity} shows our main result. 
Note that for exposition purposes, we have omitted some additional regularization terms in the optimization problems that have no effect on the sample complexity, but ensure finite-time convergence. Appendix~\ref{sec:complexity} shows the full algorithm and the proof. In what follows, $\operatorname{poly}_1(|\A|,\epsilon^{-1},\log(1/\delta))\cdot\log(|\Pi|)$ and $\operatorname{poly_2}(|\A|,\epsilon^{-1},\log(1/\delta),\log(|\Pi|))$ are polynomials in their arguments that specified in the appendix. 

\begin{theorem}\label{thm:sample_complexity}
Fix any set of policies $\Pi$, context distribution $\nu$ and reward function $r(c,a)\in [0,1]$. With probability at least $1-\delta$, provided a history $\mc{D}$ whose size exceeds $\operatorname{poly}_1(|\A|,\epsilon^{-1},\log(1/\delta))\cdot\log(|\Pi|)$, Algorithm~\ref{alg:evaluation_oracle} returns a policy $\hat{\pi}$ satisfying $V(\pi_{\ast}) - V(\hat{\pi}_{\ell})\leq \epsilon$ in a number of samples not exceeding $O(\rho_{\Pi, \epsilon}\log(|\Pi|\log_2(1/\Delta_\epsilon)/\delta)\log_2(1/\Delta_{\epsilon}))$ where $\Delta_{\epsilon} := \max\{\epsilon, \min_{\pi\in \Pi} V(\pi_{\ast}) - V(\pi)\}$. 

In addition, Algorithm~\ref{alg:evaluation_oracle} is computationally efficient and requires the amount of calls not exceeding $\operatorname{poly_2}(|\A|,\epsilon^{-1},\log(1/\delta),\log(|\Pi|))$ to a constrained argmax oracle.

\end{theorem}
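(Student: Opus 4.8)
The argument assembles three pieces: a per-round \emph{correctness} guarantee that $V(\pi_*)-V(\widehat\pi_\ell)\le\epsilon_\ell:=2^{-\ell}$ holds with high probability, a bound on the length $n_\ell$ of each round that telescopes into $\rho_{\Pi,\epsilon}$, and the computational and offline-data guarantees; these are then combined with a union bound over the $O(\log_2(1/\epsilon))$ rounds. For correctness, the first step is a Bernstein-type concentration bound for the regularized IPS estimator $\widehat\Delta_\ell^{\gamma_\ell}(\pi,\widehat\pi_{\ell-1})$ of line~7, conditionally on the history before round $\ell$ (so $\lambda^\ell,\gamma^\ell,n_\ell,\widehat\pi_{\ell-1}$ are fixed). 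Since $r_s\in[0,1]$, each summand lies in $[-1/[\gamma_\ell]_\pi,\,1/[\gamma_\ell]_\pi]$ and its conditional variance is controlled by $X_\pi:=\E_{c\sim\nu}\big[\big(\tfrac{1}{p^{(\ell)}_{c,\pi(c)}}+\tfrac{1}{p^{(\ell)}_{c,\widehat\pi_{\ell-1}(c)}}\big)\1\{\pi(c)\neq\widehat\pi_{\ell-1}(c)\}\big]=\|\phi_\pi-\phi_{\widehat\pi_{\ell-1}}\|_{A(w^{(\ell)})^{-1}}^2$, so with probability $1-\delta_\ell$ the estimator differs from the true gap $V(\widehat\pi_{\ell-1})-V(\pi)$ by at most a bias term of order $[\gamma_\ell]_\pi X_\pi$ plus a deviation term of order $\sqrt{X_\pi\log(1/\delta_\ell)/n_\ell}+\log(1/\delta_\ell)/([\gamma_\ell]_\pi n_\ell)$. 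The dual objective $h_\ell$ of \eqref{eqn:h_l} is constructed so that this error budget is exactly what is being minimized: the identity $2\sqrt{ab}=\min_{\gamma>0}(\gamma a+b/\gamma)$ applied to $(X_\pi,\log(1/\delta_\ell)/n)$, together with the closed form of Lemma~\ref{lem:est_design_2}, shows that $\max_{\lambda\in\Delta_\Pi}\min_{\gamma}h_\ell(\lambda,\gamma,n)=\min_{w\in\Omega}\max_{\pi\in\Pi}\big(-\widehat\Delta_{\ell-1}(\pi,\widehat\pi_{\ell-1})+O(\sqrt{\|\phi_\pi-\phi_{\widehat\pi_{\ell-1}}\|_{A(w)^{-1}}^2\log(1/\delta_\ell)/n})\big)$ (up to the regularizers omitted for exposition), and the minimizing $w$ is precisely the sampling rule $p^{(\ell)}_{c,a}\propto\sqrt{(\lambda_\ell\odot\gamma_\ell)^\top t_a^{(c)}(\widehat\pi_{\ell-1})}$ of line~6. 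Hence the stopping test $\max_\lambda\min_\gamma h_\ell(\cdot,\cdot,n_\ell)\le\epsilon_\ell$ certifies, on the good event, that for every $\pi$ the quantity in the selection objective \eqref{eqn:opt_2} is within $O(\epsilon_\ell)$ of $V(\widehat\pi_{\ell-1})-V(\pi)$, with the customary multiplicative slack $\tfrac14(V(\pi_*)-V(\pi))$ for far-from-optimal policies — the analogue of Lemma~\ref{lem:diff_emp_true_gap}. Because \eqref{eqn:opt_2} adds back exactly the bias and regularization terms before taking the $\arg\min$, it outputs $\widehat\pi_\ell$ with $V(\pi_*)-V(\widehat\pi_\ell)\le\epsilon_\ell$ whenever $V(\pi_*)-V(\widehat\pi_{\ell-1})\le\epsilon_{\ell-1}$; induction over $\ell$ with a union bound over $\pi\in\Pi$ and over rounds (using $\delta_\ell=\delta/(\ell^2|\Pi|^2)$) gives the PAC claim at the round with $\epsilon_\ell\le\epsilon$.

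For the sample complexity, note that on the good event the design at round $\ell$ is anchored at an $\epsilon_{\ell-1}$-optimal $\widehat\pi_{\ell-1}$, so only policies $\pi$ with $V(\pi_*)-V(\pi)=O(\epsilon_\ell)$ can attain the inner maximum; writing $S_\ell$ for this set of $O(\epsilon_\ell)$-optimal policies, the stopping test forces $n_\ell=O(\epsilon_\ell^{-2}\,\rho(S_\ell)\,\log(1/\delta_\ell))$ with $\rho(S_\ell)=\min_{w\in\Omega}\max_{\pi,\pi'\in S_\ell}\|\phi_\pi-\phi_{\pi'}\|_{A(w)^{-1}}^2$, exactly as in Theorem~\ref{thm:contextual_rage}. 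Summing over $\ell=1,\dots,\lceil\log_2(1/\Delta_\epsilon)\rceil$ and reusing verbatim the telescoping step from the proof of Theorem~\ref{thm:contextual_rage} — which shows $\rho_{\Pi,\epsilon}\ge\tfrac{1}{O(\log_2(1/\Delta_\epsilon))}\sum_\ell\epsilon_\ell^{-2}\rho(S_\ell)$ — converts $\sum_\ell n_\ell$ into $O\big(\rho_{\Pi,\epsilon}\,\log(|\Pi|\log_2(1/\Delta_\epsilon)/\delta)\,\log_2(1/\Delta_\epsilon)\big)$.

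Computational efficiency follows from Theorem~\ref{thm:computational_efficiency}: each call to \textsf{FW-GD} halts after $K_\ell=\operatorname{poly}_1(|\A|,\epsilon_\ell^{-1},\log(1/\delta))$ Frank--Wolfe steps — so $\lambda^\ell$ stays $O(K_\ell)$-sparse and $\gamma^\ell$ is supported on $\supp(\lambda^\ell)$ — costing $O(K_\ell^2|\mc{D}|)$ constrained-argmax calls; with only $O(\log_2(1/\epsilon))$ rounds and $|\mc{D}|$ polynomial, the total oracle count is $\operatorname{poly}_2(|\A|,\epsilon^{-1},\log(1/\delta),\log(|\Pi|))$. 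Finally, wherever the algorithm substitutes $\E_{c\sim\nu_{\mc{D}}}[\cdot]$ for $\E_{c\sim\nu}[\cdot]$ — inside $h_\ell$, the sampling rule, and \eqref{eqn:opt_2} — this is justified by a uniform deviation bound over the $O(|\A|\,|\Pi|^2)$ functionals indexed by triples $(\pi,\pi',a)$, valid once $|\mc{D}|$ exceeds $\operatorname{poly}_1(|\A|,\epsilon^{-1},\log(1/\delta))\log(|\Pi|)$ (the content of Appendix~\ref{sec:offline}); this perturbs every quantity above by a constant factor, absorbed into the $O(\cdot)$.

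The main obstacle is the correctness step. One must push through a \emph{heavy-tailed and adaptive} concentration argument for the regularized IPS estimator whose bias and variance both depend on the data-dependent design $(\lambda^\ell,\gamma^\ell)$ and the random anchor $\widehat\pi_{\ell-1}$, and then verify that the bias/variance trade-off implicitly solved by the dual objective $h_\ell$ cancels — up to absolute constants — against the regularizer re-added in \eqref{eqn:opt_2}, all while $h_\ell$ is only solved to $\epsilon_\ell$-suboptimality by \textsf{FW-GD} and $\nu$ is only accessed through $\nu_{\mc{D}}$. Keeping these three error sources (statistical deviation, optimization error, offline approximation) simultaneously controlled and propagating them cleanly through the induction on $\ell$ without constant blow-up is the delicate part.
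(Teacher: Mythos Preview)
Your correctness, computational-efficiency, and offline-data arguments are essentially the paper's: Bernstein for the regularized IPS estimator (Lemma~\ref{lem:EE}), the induction $|\widehat\Delta_{\ell-1}-\Delta|\le 2\epsilon_{\ell-1}+\tfrac14\Delta(\pi)$ (Lemma~\ref{lem:emp_gaps}), and the appeal to Theorem~\ref{thm:computational_efficiency} and Appendix~\ref{sec:offline}.

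The gap is in the sample-complexity step. The assertion that ``only policies $\pi$ with $V(\pi_*)-V(\pi)=O(\epsilon_\ell)$ can attain the inner maximum'' is not correct for Algorithm~\ref{alg:evaluation_oracle}: unlike the elimination scheme, the design problem here maximizes over \emph{all} $\pi\in\Pi$, and the $-\widehat\Delta_{\ell-1}$ penalty does not remove far-away policies from the constraint set --- a policy with large gap but huge $\|\phi_\pi-\phi_{\widehat\pi_{\ell-1}}\|_{A(w)^{-1}}^2$ can still drive the maximum for any $w$ tuned only to $S_\ell$. Consequently the bound $n_\ell=O(\epsilon_\ell^{-2}\rho(S_\ell)\log(1/\delta_\ell))$ does not follow, and the telescoping step of Theorem~\ref{thm:contextual_rage} (which was derived for the elimination algorithm where the max \emph{is} restricted to $\Pi_\ell\subseteq S_\ell$) cannot be reused verbatim.

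What the paper does instead (Lemma~\ref{lem:n_l}) is to push the saddle point back through strong duality to the primal, replace $\widehat\Delta_{\ell-1}$ by the true gap via the inductive hypothesis, and use the triangle inequality to swap the anchor $\widehat\pi_{\ell-1}$ for $\pi_*$ --- paying an extra term $\max_{\pi'\in S_{\ell-1}}\|\phi_{\pi_*}-\phi_{\pi'}\|_{A(w)^{-1}}^2$ that is absorbed because $S_{\ell-1}\subseteq\{\Delta(\cdot)\le\epsilon_{\ell-1}\}$. This yields the gap-dependent bound
\[
n_\ell\ \lesssim\ \min_{w\in\Omega}\max_{\pi\in\Pi}\frac{\|\phi_{\pi_*}-\phi_\pi\|_{A(w)^{-1}}^2\log(1/\delta_\ell)}{\epsilon_\ell^2+\Delta(\pi,\pi_*)^2}.
\]
No telescoping is then needed: since $\epsilon_\ell\ge\epsilon$ on every active round, $\epsilon_\ell^2+\Delta(\pi)^2\ge(\Delta(\pi)\vee\epsilon)^2$, so each summand is at most $\rho_{\Pi,\epsilon}\log(1/\delta_\ell)$ directly, and summing over the $O(\log_2(1/\Delta_\epsilon))$ rounds gives the claimed total.
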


\section{Conclusion}

This work provides the first instance-dependent lower bounds for the $(\epsilon,\delta)$-PAC contextual bandit problem. 
One limitation of this work is that our analysis of Algorithm~\ref{alg:evaluation_oracle} does not immediately extend to the realizable linear setting. That is, a computationally efficient algorithm that achieves the same bound is not known to exist. 
In all other settings discussed in this work, we proposed a computationally efficient algorithm.
A second limitation is the assumption that we have access to a large pool of offline data. 
Because it seems necessary to plan with some information about the context distribution, it is not clear how one would completely remove such an assumption and achieve the same sample complexity bounds. As with any recommender system, there is the potential for unintended consequences from optimizing just a single metric. Moreover, other potential pitfalls can arise, such as negative feedback loops, if our assumptions fail to hold in real-world environments. Such consequences can be mitigated by tracking a diverse set of metrics.



\clearpage
\bibliography{arxiv_main}
\bibliographystyle{plain}

\clearpage
\appendix

\addcontentsline{toc}{section}{Appendix} 
\part{Appendix} 
\parttoc 

\section*{Appendix}

In the appendix we present algorithms and proofs not included in the main text. Broadly speaking, 
\begin{itemize}
    \item Section A presents proofs for lower bounds; 
    \item Section B presents proofs for the proposed computationally inefficient algorithms~\ref{alg:elimination} and \ref{alg:naive};
    \item Section C presents results to justify the computational efficiency of Algorithm~\ref{alg:evaluation_oracle};
    \item Section D presents arguments for Algorithm~\ref{alg:evaluation_oracle} hitting the sample complexity lower bound;
    \item Section E-F provides technical proofs to argue about convergence of our subroutines.
\end{itemize}
The table below summarises the notations we used in the proof. 
\begin{table}[!htb]
    \centering
    \begin{tabular}{|c|l|}
    \hline
       $t_{a}^{(c)}(\pi')$ & $\{\1\{\pi(c)=a,\pi'(c)\ne a\}+\1\{\pi(c)\ne a,\pi'(c)=a\}\}_{\pi\in \Pi}\in \mathbb{R}^{\Pi}$\\
       $S_\ell$ & $\{\pi\in\Pi: \langle \phi_{\pi_*} - \phi_\pi , \theta^* \rangle=V(\pi_*)-V(\pi)=\Delta(\pi,\pi_*) \leq \epsilon_\ell\}$ \\
       $w(\lambda,\gamma)$ & $[w(\lambda,\gamma)]_{a, c}=\nu_{c} \cdot p_{c,a}=\nu_{c} \cdot \frac{\sqrt{(\lambda\odot\gamma)^{\top} (t_{a}^{(c)}+\eta)}}{\sum_{a'\in \A} \sqrt{(\lambda\odot\gamma)^{\top} (t_{a^{\prime}}^{(c)}+\eta)}}$ \\
       $\hat{\Delta}_{l}^{\gamma}(\pi,\pi')$ & $\sum_{s=1}^{n_l}\frac{r_s}{p^{(\ell)}_{c_s,a_s}+\gamma_\pi}(\1\{\pi'(c_s)=a_s\}-\1\{\pi(c_s)=a_s\})$\\
       $h_l(\lambda, \gamma, n)$ &  $\sum_{\pi\in\Pi}\lambda_\pi\cdot\bigsmile{-\hat{\Delta}_{l-1}^{\gamma^{l-1}}(\pi,\hat{\pi}_{l-1})+\frac{\log(1/\delta_l)}{\gamma_\pi n}}$\\
       &\qquad$+\gamma_\pi\E_{c\sim\nu_\D}\bigbrak{\bigsmile{\sum_{a\in\A}\sqrt{(\lambda\odot\gamma)^{\top} (t_{a}^{(c)}+\eta_l)}}^2}$\\
       $\PP_l(w,\gamma)$&$\max_{\pi\in\Pi} \left(-\hat{\Delta}_{l-1}^{\gamma^{l-1}}(\pi,\pill)+\gamma\norm{\phi_\pi-\phi_{\pill}}_{A(w)^{-1}}^2+\frac{\log(1/\delta)}{\gamma n_l}\right)$\\
    \hline   
    \end{tabular}
    \caption{Glossary}
    \label{tab:glossary}
\end{table}

\section{Proof for Results in Section \ref{sec:results}}
\subsection{Proof of Theorem~\ref{thm:lower_bound}}
We quickly point out that the proof of Theorem~\ref{thm:lower_bound} is identical to the proof of the linear policy class case proof of Theorem~\ref{thm:linear_lower_bound}. Please see that argument below.

\subsection{Proof of Theorem~\ref{thm:inefficient-low-regret}}

\begin{proof}[Proof of Theorem~\ref{thm:inefficient-low-regret} ]
To relate the random stopping time to the regret bound, note that
\begin{align*}
    \sum_{c,a} \E_\mu[ T_{c,a}(\tau) ] ( r(c,\pi_*(c)) - r(c,a) ) \leq \E_\mu\left[ \sqrt{\alpha \, |\mc{A}| \, \tau} \right] \leq \sqrt{\alpha |\mc{A}| \E_\mu[\tau]} 
\end{align*}
where the last inequality follows by Jensen's inequality.
Since $\pi_1 := \pi_*$ for our particular instance, if $\bar{c} = \arg\min_{c \in [m]} \E_\mu[ T_{c,\pi_c(c)}(\tau) ]$ then
\begin{align*}
    \sum_{c,a} \E_\mu[ T_{c,a}(\tau) ] ( r(c,\pi_1(c)) - r(c,a) ) &= \sum_{c,a} \E_\mu[ T_{c,a}(\tau) ] \Delta \1\{ a \neq \pi_1(c) \} \\
    &\geq \sum_{c} \max_a \E_\mu[ T_{c,a}(\tau) ] \Delta \1\{ a \neq \pi_1(c) \} \\
    &\geq m \min_{c} \max_{a} \E_\mu[ T_{c,a}(\tau) ] \Delta \1\{ a \neq \pi_1(c) \} \\
    &= m \E_\mu[ T_{\bar{c},\pi_{\bar{c}}(\bar{c})}(\tau) ] \Delta.
\end{align*}
Combining the two equations above, and rearranging, we observe that
\begin{align*}
    \E_\mu[ T_{\bar{c},\pi_{\bar{c}}(\bar{c})}(\tau) ] &\leq \frac{1}{m \Delta} \sqrt{\alpha |\mc{A}| \E_\mu[\tau]} .
\end{align*}
Define an instance $\mu' = (\nu,r')$ such that $r'(c,a) = r(c,a)$ for all $(c,a) \in [m] \times \{0,1\} \setminus (\bar{c},1)$, and set $r'(\bar{c},1) = r'( \bar{c}, \pi_{\bar{c}}(\bar{c}) ) = 2 \Delta$ under  $\mu'$ (instead of $r( \bar{c}, \pi_{\bar{c}}(\bar{c}) ) = 0$ under $\mu$).
Note that under $\mu'$, we now have that $\pi_{\bar{c}}$ is the unique optimal policy.
If the algorithm is $(0,\delta)$-PAC then by \cite[Lemma 1]{kaufmann2016complexity} we have that
\begin{align*}
   \log(1/2.4 \delta) &\leq \sum_{c,a}  KL( \mc{N}(r(c,a),1) | \mc{N}(r'(c,a),1) ) \cdot \E_\mu[ T_{c,a}(\tau) ] \\
   &= KL( \mc{N}(0,1) | \mc{N}(2 \Delta,1) ) \cdot \E_\mu[ T_{\bar{c},\pi_{\bar{c}}(\bar{c})}(\tau) ] = 2 \Delta^2 \cdot \E_\mu[ T_{\bar{c},\pi_{\bar{c}}(\bar{c})}(\tau) ] \\
   &\leq 2 \Delta^2 \cdot \frac{1}{m \Delta} \sqrt{\alpha |\mc{A}| \E_\mu[\tau]}  = \sqrt{\frac{ 4 \alpha \E_\mu[\tau] }{m^2 \Delta^{-2}}}.
\end{align*}
The result follows by rearranging.
\end{proof}

\subsection{Trivial Class: Proof of Theorem~\ref{thm:trivial-class-lower}}\label{sec:supp-trivial-class}
Firstly note that \begin{align*}
    \rho_{\Pi,0}(\Pi, v) &= \min_{p_c \in \triangle_{\mc{A}}, \ \forall c \in \mc{C} }  \max_{\pi \in \Pi \setminus \pi_* } \frac{ \E_{c \sim \nu} \left[ \left(\frac{1}{p_{c,\pi(c)}}+ \frac{1}{p_{c,\pi_*(c)}}\right) \1\{\pi_*(c) \neq \pi(c) \} \right] }{(\E_{c \sim \nu}[ \, \v(c,\pi_*(c)) - \v(c,\pi(c)) \, ] )^2} \\
    &= \min_{p_c \in \triangle_{\mc{A}}, \ \forall c \in \mc{C} }  \max_{\pi \in \Pi \setminus \pi_* } \frac{ \sum_{c \in \mc{C}} \nu_c \left(\frac{1}{p_{c,\pi(c)}}+ \frac{1}{p_{c,\pi_*(c)}}\right) \1\{\pi_*(c) \neq \pi(c) \}  }{(\sum_{c \in \mc{C}} \nu_c \Delta_{c,\pi(c)} \1\{\pi_*(c) \neq \pi(c) \})^2} \\
    &= \min_{p_c \in \triangle_{\mc{A}}, \ \forall c \in \mc{C} }  \max_{\substack{\alpha \in \{0,1\}^{|\mc{C}| \times |\mc{A}|} \setminus \mathbf{0}:\\
    \sum_a \alpha_{c,a} \in \{0,1\}} } \frac{\sum_{c,a} \alpha_{c,a}\nu_c \left(\frac{1}{p_{c,\pi(c)}}+ \frac{1}{p_{c,\pi_*(c)}}\right) \1\{\pi_*(c) \neq a \}   }{(\sum_{c,a} \alpha_{c,a} \nu_c \Delta_{c,\pi(c)} \1\{\pi_*(c) \neq \pi(c) \} )^2} \\
    &= \min_{p_c \in \triangle_{\mc{A}}, \ \forall c \in \mc{C} }  \max_{c,a : \pi_*(c) \neq a} \frac{ \nu_c \left(\frac{1}{p_{c,a}}+ \frac{1}{p_{c,\pi_*(c)}}\right)  }{(\nu_c \Delta_{c,a} )^2} \\
    &\leq \max_c \frac{2}{\nu_c} \sum_{a'} \Delta_{c,a'}^{-2}
\end{align*}
where the last equality follows from repeated application of the inequality $\frac{a_1 + a_2}{(b_1+b_2)^2} \leq \frac{a_1}{b_1^2} \vee \frac{a_2}{b_2^2}$.

\begin{proof}[Proof of Theorem~\ref{thm:trivial-class-lower}]
The proof of the instance-dependent lower bound for $\epsilon=0$ follows directly from Theorem~\ref{thm:lower_bound}.
The second minimax statement is, to our best knowledge, novel. 

First, note that $\sup_\mu \E_\mu[\tau] \geq \epsilon^{-2}|\mc{A}| \log(1/ \delta)$ by a reduction to multi-armed bandits by just setting $\nu_1 = 1$ and $\nu_c =0$ for all $c \neq 1$ \cite{mannor2003lower,kaufmann2016complexity}.
If $U$ denotes the set of instances that achieves this supremum, and $V$ is another set of instances, we note that
$\sup_\mu \E_\mu[\tau] = \sup_{P} \E_{\mu \sim P} \E_\mu[\tau] \geq \frac{1}{2} \sup_{\mu \in U} \E_\mu[\tau] + \frac{1}{2} \sup_{\mu \in V} \E_\mu[\tau]$ for some other set of instances $V$.
Thus, it remains to show that $\sup_\mu \E_\mu[\tau] \geq \epsilon^{-2}|\mc{A}| \cdot |\mc{C}|$.

Consider the following construction of $|\Pi| = |\mc{A}|^{|\mc{C}|}$ instances.
For each context $c \in \mc{C}$ let $\nu_c = 1/|\mc{C}|$, and for each $\pi \in \Pi$ let $r_\pi(c,a) = \alpha \epsilon \1\{ \pi(c) = a \}$ for some $\alpha >0$ to be determined later. 
Clearly, policy $\pi$ is the unique optimal policy under the reward function $r_\pi(s,a)$. 
Assume that observations are perturbed by Gaussian $\mc{N}(0,1)$ noise.

Fix $p \in (1/2,1)$ to be determined later. 
Let $S := \{ c \in \mc{C}: \P_{\mu_\pi}( \pi(c) = \widehat{\pi}(c) ) > p \}$ and suppose $|S| \leq |\mc{C}|/8$.
Then 
\begin{align*}
    \P_{\mu_\pi}( V(\pi) - V(\widehat{\pi}) \leq \epsilon ) &= \P_{\mu_\pi}( \frac{1}{|\mc{C}|} \sum_{c \in \mc{C}} \alpha \epsilon \1\{ \widehat{\pi}(c) \neq \pi(c) \} \leq \epsilon ) \\
    &= \P_{\mu_\pi}(  \sum_{c \in \mc{C}} \1\{ \widehat{\pi}(c) \neq \pi(c) \} \leq |\mc{C}| / \alpha )  \\
    &= \P_{\mu_\pi}(  \sum_{c \in \mc{C}} \1\{ \widehat{\pi}(c) = \pi(c) \} \geq |\mc{C}|(1-1 / \alpha) )  \\
    &\leq \P_{\mu_\pi}(  \sum_{c \in \mc{C} \setminus S} \1\{ \widehat{\pi}(c) = \pi(c) \} \geq |\mc{C}| (1 - 1/ \alpha - 1/8) ) \\
    &\leq \frac{\sum_{c \in \mc{C} \setminus S} \P_{\mu_\pi}(\widehat{\pi}(c) = \pi(c)) }{|\mc{C}|(1 - 1/ \alpha - 1/8)} \leq \frac{p}{1 - 1/ \alpha  - 1/8} \leq 5/6
\end{align*}
with $p = 5/8$ and $\alpha=8$.  
This implies that for $\delta \in (0,1/8)$, any $(\epsilon,\delta)$-PAC algorithm must satisfy $\min_\pi |\{ c \in \mc{C}: \P_{\mu_\pi}( \pi(c) = \widehat{\pi}(c) ) > p \}| \geq |\mc{C}| /8$.

Assume the algorithm is permutation invariant (note that any reasonable algorithm satisfies this, including UCB, Thompson Sampling, elimination, etc.).
Let $\mu_\pi^{(i)} = (\nu,r_0)$ where $r_\pi^{(i)}(c,i) = r_\pi^{(i)}(c,\pi(c)) = \alpha \epsilon$, and $r_\pi^{(i)}(c,j) = 0$ for $j \not\in \{i,\pi(c)\}$.
Note that $\P_{\mu_\pi}( \pi(c) = \widehat{\pi}(c) ) \geq p = 5/6$ and also by the symmetric algorithm assumption that $\P_{\mu_\pi^{(i)}}( \pi(c) = \widehat{\pi}(c) ) \leq 1/2$ because there are two identical best-arms. 
Note that $\sum_{j \in \mc{A}} \E_{\mu_\pi}[T_{c,j}] KL(\mu_\pi(j),\mu^{(i)}_\pi(j) ) = \E_{\mu_\pi}[T_{c,i}] \alpha^2 \epsilon^2 /2$ for $i \neq \pi(c)$.
Putting these two pieces together and applying Lemma~1 of~\cite{kaufmann2016complexity}, we have:
\begin{align*}
    \E_{\mu_\pi}[T_{c,i}] \alpha^2 \epsilon^2 /2 &= \sum_{j \in \mc{A}} \E_{\mu_\pi}[T_{c,j}] KL(\mu_\pi(j),\mu^{(i)}_\pi(j) ) \\
    &\geq d( \P_{\mu_\pi}( \pi(c) = \widehat{\pi}(c)) , \P_{\mu_\pi^{(i)}}( \pi(c) = \widehat{\pi}(c) ) )\\
    &\geq d(5/6,1/2) = \frac{1}{6}\log(5^5/3^6) \geq 1/10.
\end{align*}
Thus, $ \E_{\mu_\pi}[\sum_{i \neq \pi_*(c)} T_{c,i}] \geq \frac{1}{5}\alpha^{-2} \epsilon^{-2} (|\mc{A}|-1)$ and this must occur on at least $|\mc{C}| /8$ contexts. 
Pick one context $c$ of these arbitrarily. 
Then
\begin{align*}
    \frac{1}{5}\alpha^{-2} \epsilon^{-2} (|\mc{A}|-1) \leq \E_{\mu_\pi}[\sum_{i \neq \pi_*(c)} T_{c,i}] = \E_{\mu_\pi}[\sum_{t =1}^\tau \1\{ c_t = c\} ] = \E_{\mu_\pi}[ \tau ] \nu_c = \E_{\mu_\pi}[ \tau ] /|\mc{C}|.  
\end{align*}
Consequently, $\E[\tau] \geq  \frac{1}{5}\alpha^{-2} \epsilon^{-2} (|\mc{A}|-1) |\mc{C}|$.

\end{proof}

\subsection{Proofs of Linear Policy Class}
 We begin by defining a quantity fundamental to our sample complexity results:
\begin{align}\label{eqn:linepsilon}
    \rho_{ {\sf lin},\epsilon} := \min_{p_c  \in \triangle_{\mc{A}}, \, \forall c \in \mc{C}} \max_{\pi \in \Pi \setminus \pi_*} \frac{ \| \phi_\pi - \phi_{\pi_*} \|^2_{\E_{c \sim \nu}[ \sum_{a \in \mc{A}} p_{c,a} \phi(c,a) \phi(c,a)^{\top} ]^{-1}} }{ \langle \phi_{\pi_*} - \phi_\pi , \theta_* \rangle^2 \vee \epsilon^2 }. 
\end{align}
\lalit{
For the special case of $\epsilon=0$, we have that $\rho_{ {\sf lin},\epsilon}$ simplifies to \lalit{We should prove this - but maybe we can rip it out.}
\begin{align*}
    \rho_{ {\sf lin},0}
    = \min_{p_c  \in \triangle_{\mc{A}}, \, \forall c \in \mc{C}} \max_{(c',a) \in \mc{C} \times \mc{A} : \pi_*(c') \neq a} \frac{ \| \phi(c',a) - \phi(c',\pi_*(c')) \|^2_{\E_{c \sim \nu}[ \sum_{a \in \mc{A}} p_{c,a} \phi(c,a) \phi(c,a)^{\top} ]^{-1}} }{ \langle  \phi(c',\pi_*(c'))-\phi(c',a) , \theta_* \rangle^2  }.
\end{align*}
This essentially says you have to solve the best-arm identification problem for each context \cite{soare2014best,fiez2019sequential,degenne2020gamification}, which is very intuitive.
This simplification is proven as part of the following theorem:
}
We quickly point out that the proof of Theorem~\ref{thm:lower_bound} is identical to the proof of the linear policy class case proof of Theorem~\ref{thm:linear_lower_bound}.

\begin{proof}[\textbf{Proof of Theorem~\ref{thm:linear_lower_bound}}]
For any $\theta \in \R^d$ let $\P_{\theta}( \cdot )$ and $\E_{\theta}[ \cdot ]$ denote the probability and expectation laws under $\theta$ and $\nu$ such that $c_t \sim \nu$ and playing action $a_t \in \mc{A}$ results in reward $r_t \sim \mc{N}( \langle \phi(c_t,a_t), \theta\rangle , 1 )$.
If an algorithm is $(0,\delta)$-PAC then $\sup_{\theta \in \R^d} \P_\theta( V(\widehat{\pi}(c)) < V(\pi_*(c))  ) \leq \delta$.
Now, of course, under $\theta$ we have that 
\begin{align*}
    V(\widehat{\pi}(c)) < V(\pi_*(c)) &\iff \E_{c \sim \nu}[ \langle \theta, \phi(c,\widehat{\pi}(c)) - \phi(c,\pi_*(c)) \rangle] < 0 \\
    &\iff \langle \theta, \phi_{\widehat{\pi}} -  \phi_{\pi_*} \rangle < 0\\
    &\iff \exists c : \nu_c \langle \theta, \phi(c,\widehat{\pi}(c)) - \phi(c,\pi_*(c)) \rangle  < 0.
\end{align*}
Fix $\theta_* \in \R^d$ and recall that under $\theta$ we have that $\pi_*(c) = \arg\max_{a \in \mc{A}} \langle \phi(c,a) , \theta \rangle$.
Fix any $\theta \in \R^d$ and $\max_{c,a} \nu_c \langle \theta, \phi(c,a) - \phi(c,\pi_*(c)) \rangle > 0$.
Then by \cite[Lemma 1]{kaufmann2016complexity} we have that
\begin{align*}
    &d( \P_{\theta_*}( V(\widehat{\pi}) = V(\pi_*) ), \P_{\theta}( V(\widehat{\pi}) = V(\pi_*) ) ) \\
    &\leq \sum_{c',a'} \E_{\theta_*}[ T_{c',a'} (\tau) ] KL( \mc{N}(\langle \theta_*, \phi(c',a') \rangle,1) | \mc{N}(\langle \theta, \phi(c',a') \rangle,1) ) \\
    &= \sum_{c',a'} \E_{\theta_*}[ T_{c',a'} (\tau) ]  \| \theta_* - \theta \|_{\phi(c',a') \phi(c',a')^\top}^2 /2 \\ 
    &= \E_{\theta_*}[\tau] \sum_{c',a'} \frac{\E_{\theta_*}[ T_{c',a'} (\tau) ]}{\E_{\theta_*}[\tau]}  \| \theta_* - \theta \|_{\phi(c',a') \phi(c',a')^\top}^2 /2 \\ 
    &\leq \max_{p_c \in \triangle_{\mc{A}}, \forall c \in \mc{C}} \E_{\theta_*}[\tau] \sum_{c',a'} \nu_{c'} p_{c',a'} \| \theta_* - \theta \|_{\phi(c',a') \phi(c',a')^\top}^2 /2 \\
    &= \max_{p_c \in \triangle_{\mc{A}}, \forall c \in \mc{C}} \E_{\theta_*}[\tau]  \| \theta_* - \theta \|_{\E_{c \sim \nu}[ \sum_{a} p_{c,a} \phi(c,a) \phi(c,a)^\top]}^2 /2
\end{align*}
where the last inequality follows from Wald's identity:
\begin{align*}
    \sum_{a' \in \mc{A}} \E_{\theta_*}[ T_{c',a'} (\tau) ] &= \sum_{a' \in \mc{A}} \E_{\theta_*}\left[ \sum_{t=1}^\tau \1\{ a_t = a', c_t = c' \} \right] = \E_{\theta_*}\left[ \sum_{t=1}^\tau \1\{ c_t = c' \} \right] = \E_{\theta_*}[\tau] \nu_{c'}.
\end{align*}
Noting that $d( \P_{\theta_*}( V(\widehat{\pi}) = V(\pi_*) ), \P_{\theta}( V(\widehat{\pi}) \geq d(1-\delta,\delta) \geq \log(1/2.4 \delta)$ and we can minimize over $\theta$, given the conditions, we have that \lalit{Need a sentence about why this is an alternative - basically $\arg\max_{\pi\in\Pi} V_\theta(\pi) = \pi_{\theta}$}
\begin{align*}
    \log(1/2.4\delta) &\leq \max_{p_c \in \triangle_{\mc{A}}, \forall c \in \mc{C}} \min_{\theta: \exists c :\nu_c \langle \theta, \phi(c,a) - \phi(c,\pi_*(c))\rangle > 0} \E_{\theta_*}[\tau]  \| \theta_* - \theta \|_{\E_{c \sim \nu}[ \sum_{a} p_{c,a} \phi(c,a) \phi(c,a)^\top]}^2 /2 \\
    =& \E_{\theta_*}[\tau] \max_{p_c \in \triangle_{\mc{A}}, \forall c \in \mc{C}}\min_{\substack{c,a\in \mathcal{C}\times \mathcal{A}\\ \pi_{\ast}(c)\neq a}} \frac{\langle \phi(c,\pi_*(c)) - \phi(c,a), \theta_* \rangle^2 }{2 \| \phi(c,a) - \phi(c,\pi_*(c)) \|_{\E_{c \sim \nu}[ \sum_{a} p_{c,a} \phi(c,a) \phi(c,a)^\top]^{-1}}}.
\end{align*}
After rearranging we conclude that 
\begin{align*}
    \E_{\theta_*}[\tau] \geq  \min_{p_c \in \triangle_{\mc{A}}, \forall c \in \mc{C}}\max_{\substack{ c,a\in \mathcal{C}\times \mathcal{A}\\ \pi_{\ast}(c)\neq a}} \frac{2 \| \phi(c,a) - \phi(c,\pi_*(c)) \|_{\E_{c \sim \nu}[ \sum_{a} p_{c,a} \phi(c,a) \phi(c,a)^\top]^{-1}}}{\langle \phi(c,\pi_*(c)) - \phi(c,a), \theta_* \rangle^2 } \log(1/2.4 \delta).
\end{align*}
To see that equation~\eqref{eqn:linepsilon} is a lower bound, follow the exact same sequence of steps but taking any $\theta \in \R^d$ and $\max_{\pi\in\Pi} \E_{c \sim \nu}[ \langle \theta, \phi(c,\pi(c)) - \phi(c,\pi_*(c)) \rangle ] > 0$.
\end{proof}

\subsection{Proof for Corollary~\ref{cor:disagree_coeff}}\label{sec:proof_dis_coeff}

\begin{proof}
Observe that
\begin{align*}
    \rho_{\Pi,\epsilon_0} &:= \min_{p_c \in \triangle_{\mc{A}}, \ \forall c \in \mc{C} }  \max_{\pi \in \Pi \setminus \pi_* } \frac{ \E_{c \sim \nu} \left[ \left(\frac{1}{p_{c,\pi(c)}}+ \frac{1}{p_{c,\pi_*(c)}}\right) \1\{\pi_*(c) \neq \pi(c) \} \right] }{(\E_{c \sim \nu}[ \, \v(c,\pi_*(c)) - \v(c,\pi(c)) \, ] \vee \epsilon_0)^2}\\
    &= \min_{p_c \in \triangle_{\mc{A}}, \ \forall c \in \mc{C} } \max_{\epsilon\geq \epsilon_0} \max_{\pi \in \Pi \setminus \pi_*:\Delta(\pi)\leq \epsilon }\frac{ \E_{c \sim \nu} \left[ \left(\frac{1}{p_{c,\pi(c)}}+ \frac{1}{p_{c,\pi_*(c)}}\right) \1\{\pi_*(c) \neq \pi(c) \} \right] }{\epsilon^2}\\
    &= \min_{p_c \in \triangle_{\mc{A}}, \ \forall c \in \mc{C} } \max_{\epsilon\geq \epsilon_0} \max_{\pi \in \Pi \setminus \pi_*:\Delta(\pi)\leq \epsilon }\frac{ \E_{c \sim \nu} \left[ \left(\frac{1}{p_{c,\pi(c)}}+ \frac{1}{p_{c,\pi_*(c)}}\right) \1\{\pi_*(c) \neq \pi(c),\Delta(\pi)\leq\epsilon \} \right] }{\epsilon^2}\\
    &\leq \min_{p_c \in \triangle_{\mc{A}}, \ \forall c \in \mc{C} } \max_{\epsilon\geq \epsilon_0} \max_{\pi \in \Pi \setminus \pi_*:\Delta(\pi)\leq \epsilon }\frac{ \E_{c \sim \nu} \left[ \left(\frac{1}{p_{c,\pi(c)}}+ \frac{1}{p_{c,\pi_*(c)}}\right) \1\{\exists\pi\in\Pi:\pi_*(c) \neq \pi(c),\Delta(\pi)\leq\epsilon \} \right] }{\epsilon^2}\\
    &\overset{(i)}{\leq} \max_{\epsilon\geq \epsilon_0} \max_{\pi \in \Pi \setminus \pi_*:\Delta(\pi)\leq \epsilon }\frac{ \E_{c \sim \nu} \left[ \left(|\A|+|\A|\right) \1\{\exists\pi\in\Pi:\pi_*(c) \neq \pi(c),\Delta(\pi)\leq\epsilon \} \right] }{\epsilon^2}\\
    &= \max_{\epsilon\geq \epsilon_0}\frac{ 2|\A|\E_{c \sim \nu} \left[  \1\{\exists\pi\in\Pi:\pi_*(c) \neq \pi(c),\Delta(\pi)\leq\epsilon \} \right] }{\epsilon^2}\leq \frac{2|\A|}{\epsilon_0} \mathfrak{C}_{\Pi}^{\mathsf{csc}}(\epsilon_0),
\end{align*}
where $(i)$ follows from taking $p_c\in\triangle_\A$ to be the uniform distribution over all actions for each $c\in\C$. 
To relate this to the policy disagreement coefficient, note that \begin{align*}
    \Delta(\pi)=\E_{c\sim\nu}[r(c,\pi_*(c))-r(c,\pi(c))]&\geq \E_{c\sim\nu}[\1\{\pi(c)\ne\pi_*(c)\}(\min_{c\in\C}\min_{a\in\A}r(c,\pi_*(c))-r(c,a))]\\
    &= \P_\nu(\pi(c)\ne\pi_*(c))\Delta_{\mathsf{uniform}}.
\end{align*}
Therefore,
\begin{align*}
    &\max_{\epsilon\geq \epsilon_0}\frac{ 2|\A|\E_{c \sim \nu} \left[  \1\{\exists\pi\in\Pi:\pi_*(c) \neq \pi(c),\Delta(\pi)\leq\epsilon \} \right] }{\epsilon^2}\\
    &\leq \max_{\epsilon\geq \epsilon_0}\frac{ 2|\A|\E_{c \sim \nu} \left[  \1\{\exists\pi\in\Pi:\pi_*(c) \neq \pi(c),\P_\nu(\pi(c)\ne\pi_*(c))\leq\frac{\epsilon}{\Delta_{\mathsf{uniform}}} \} \right] }{\epsilon^2}\\
    &\leq \frac{2|\A|}{\epsilon_0\Delta_{\mathsf{uniform}}}\mathfrak{C}_{\Pi}^{\mathsf{pol}}(\epsilon_0/\Delta_{\mathsf{uniform}}).
\end{align*}
\end{proof}

\section{Proof for sample complexity of Algorithm \ref{alg:elimination} and~\ref{alg:naive}}\label{sec:proof_rage_correctness}

\begin{proof}[Proof of Lemma~\ref{lem:rage_correctness}]
For any $\mc{V} \subseteq \Pi$ and $\pi \in \mc{V}$ define the event
\begin{align*}
\mc{E}_{\pi,\ell}( \mc{V} ) = \{ |\hat{o}_{\pi_*,\pi,\ell}(\mc{V}) - \langle \phi_{\pi_*} - \phi_{\pi},\theta_* \rangle | \leq \epsilon_\ell \} 
\end{align*}
where it is implicit that $\hat{o}_{\pi_*,\pi,\ell}:=\hat{o}_{\pi_*,\pi,\ell}(\mc{V})$ is the resulting estimate after round $\ell$ if $\Pi_\ell$ had been equal to $\mc{V}$. 
Define $w_\ell(\mc{V})$ and $\tau_\ell(\mc{V})$ analogously.
By the properties of the Catoni estimator, we have for any $\mc{V} \subset \Pi$ with probability at least $1-\frac{\delta}{2 \ell^2 |\Pi|}$ that 
\begin{align*}
|\hat{o}_{\pi_*,\pi,\ell}(\mc{V}) - \langle \phi_{\pi_*} - \phi_{\pi},\theta_* \rangle |
&\leq \| \phi_{\pi_*} - \phi_{\pi} \|_{A(w_\ell(\mc{V}))^{-1}} \sqrt{ \frac{2 \log(2 \ell^2 |\Pi|/\delta )}{\tau_\ell(\mc{V}) - \log(2 \ell^2 |\Pi|/\delta )}} \\
&\leq \sqrt{ \frac{\| \phi_{\pi_*} - \phi_{\pi} \|^2_{A(w_\ell(\mc{V}))^{-1}}}{2 \epsilon_\ell^{-2} \rho(w_\ell(\mc{V}),\mc{V}) \log(2 \ell^2 |\Pi|/\delta)}} \sqrt{2 \log(2 \ell^2 |\Pi|/\delta)} = \epsilon_\ell.
\end{align*}
Consequently, 
\begin{align*}
\P\left( \bigcup_{\ell=1}^\infty \bigcup_{\pi \in \Pi_{\ell}} \{ \mc{E}^c_{\pi,\ell}( \Pi_\ell ) \} \right) &\leq \sum_{\ell=1}^\infty \P\left( \bigcup_{\pi \in \Pi_{\ell}} \{ \mc{E}^c_{\pi,\ell}( \Pi_{\ell} ) \} \right) \\
&= \sum_{\ell=1}^\infty \sum_{\mc{V} \subseteq \Pi} \P\left( \bigcup_{\pi \in \mc{V}} \{ \mc{E}^c_{\pi,\ell}( \mc{V} ) \} , \Pi_\ell = \mc{V}\right) \\
&= \sum_{\ell=1}^\infty \sum_{\mc{V} \subseteq \Pi} \P\left(\bigcup_{\pi \in \mc{V}} \{ \mc{E}^c_{\pi,\ell}( \mc{V} ) \} \right) \P( \Pi_\ell = \mc{V}) \\
&\leq \sum_{\ell=1}^\infty  \sum_{\mc{V} \subseteq \Pi} \tfrac{\delta |\mc{V}|}{2\ell^2 |\Pi|} \P( \Pi_\ell = \mc{V})  \leq \delta.
\end{align*}
Thus, assume $\bigcap_{\ell=1}^\infty \bigcap_{\pi \in \Pi_{\ell}} \{ \mc{E}_{\pi,\ell}( \Pi_\ell ) \} $ holds.
For any $\pi \in \Pi_\ell$ we have
\begin{align*}
\hat{o}_{\pi,\pi_*,\ell} &= \hat{o}_{\pi,\pi_*,\ell} - \langle \phi_{\pi} - \phi_{\pi_*},\theta_* \rangle + \phi_{\pi_*},\theta_* \rangle  \\
&\leq \epsilon_\ell + \langle \phi_{\pi} - \phi_{\pi_*},\theta_* \rangle \leq \epsilon_\ell
\end{align*}
which implies that $\pi_*$ would survive to round $\ell+1$.
And for any $\pi' \in \Pi_\ell$ such that $\langle \phi_{\pi_*}-\phi_{\pi'}, \theta^* \rangle > 2 \epsilon_\ell$ we have 
\begin{align*}
\max_{\pi \in \Pi_\ell} \hat{o}_{\pi,\pi',\ell} &\geq \hat{o}_{\pi_*,\pi',\ell} \\
&= \langle \phi_{\pi'} - \phi_{\pi_*},\theta_* \rangle - \hat{o}_{\pi',\pi_*,\ell} + \langle \phi_{\pi_*} - \phi_{\pi'},\theta_* \rangle \\
&> - \epsilon_\ell +  2 \epsilon_\ell = \epsilon_\ell
\end{align*}
which implies this $\pi'$ would be kicked out.
Note that this implies that $\max_{\pi \in \Pi_{\ell+1}} \langle \phi_{\pi_*} - \phi_\pi, \theta^* \rangle \leq 2 \epsilon_\ell = 4 \epsilon_{\ell+1}$.
\end{proof}

In the remaining of this section we provide a proof for the sample complexity of Algorithm~\ref{alg:naive}. 

\begin{theorem}
Under $\EE$, for all $\ell\in\N$, the following holds: 
\begin{enumerate}
    \item $\hat{\pi}_\ell\in S_\ell:=\{\pi\in\Pi: V(\pi_*)-V(\pi)\leq \epsilon_\ell\}$; 
    \item $n_\ell\lesssim \min_{w\in\Omega}\max_{\pi\in\Pi}\frac{\norm{\phi_{\pi_*}-\phi_{\pi}}_{A(w)^{-1}}^2\log(1/\delta_l)}{\epsilon_l^2+\Delta(\pi)^2}$. 
\end{enumerate}
\end{theorem}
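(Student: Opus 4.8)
The plan is to prove both claims simultaneously by induction on $\ell$, working throughout on the event $\EE$ under which every Catoni confidence interval holds. As a preliminary, $\EE$ must be assembled carefully: at round $\ell$ the design $(w^{(\ell)},n_\ell)$ and the estimates $\hat{\Delta}_\ell(\cdot,\hat{\pi}_{\ell-1})$ depend on the random reference $\hat{\pi}_{\ell-1}$, so---exactly as in the proof of Lemma~\ref{lem:rage_correctness}---one conditions on each of the $|\Pi|$ possible realizations of $\hat{\pi}_{\ell-1}$, notes that the round-$\ell$ samples are fresh, and unions over those realizations, over the $|\Pi|$ evaluated policies, and over $\ell$; this is what produces the $|\Pi|$ and $\ell^2$ factors in $\delta_\ell$. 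The base case is immediate: $\epsilon_1=\tfrac12$ and the rewards are bounded, so $\Delta(\pi)\le 1$ for all $\pi$ and $\hat{\pi}_0\in S_0$, while the bound on $n_1$ is vacuous. Henceforth assume $\hat{\pi}_{\ell-1}\in S_{\ell-1}$ and that the round-$(\ell-1)$ gap estimates are accurate.

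\emph{Part 1 ($\hat{\pi}_\ell\in S_\ell$).} After the minimizing $w^{(\ell)}$ is chosen in \eqref{eqn:opt_naive}, we have for every $\pi$ that $-\tfrac14\hat{\Delta}_{\ell-1}(\pi,\hat{\pi}_{\ell-1})+\sqrt{2\,\|\phi_\pi-\phi_{\hat{\pi}_{\ell-1}}\|_{A(w^{(\ell)})^{-1}}^2\log(1/\delta_\ell)/n_\ell}\le\epsilon_\ell$. Combining this with the Catoni concentration on $\EE$ yields precisely the conclusion of Lemma~\ref{lem:diff_emp_true_gap}: $|\hat{\Delta}_\ell(\pi,\hat{\pi}_{\ell-1})-\Delta(\pi,\pi_*)|\le \tfrac14\Delta(\pi,\pi_*)+c\,\epsilon_\ell$ for all $\pi$ and an absolute constant $c$. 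Evaluating at $\pi=\pi_*$ gives $\hat{\Delta}_\ell(\pi_*,\hat{\pi}_{\ell-1})\le c\,\epsilon_\ell$; since the selection rule returns the policy of smallest estimated gap---which, being insensitive to a constant shift of the reference, coincides with the minimizer of $\hat{\Delta}_\ell(\cdot,\hat{\pi}_{\ell-1})$---the chosen $\hat{\pi}_\ell$ also satisfies $\hat{\Delta}_\ell(\hat{\pi}_\ell,\hat{\pi}_{\ell-1})\le c\,\epsilon_\ell$; and applying the lower half of the estimate at $\pi=\hat{\pi}_\ell$ gives $\tfrac34\Delta(\hat{\pi}_\ell,\pi_*)-c\,\epsilon_\ell\le c\,\epsilon_\ell$, i.e. $\Delta(\hat{\pi}_\ell,\pi_*)=O(\epsilon_\ell)$, so $\hat{\pi}_\ell\in S_\ell$ after absorbing constants (equivalently, by targeting $\epsilon_\ell/C$ rather than $\epsilon_\ell$ in \eqref{eqn:opt_naive}).

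\emph{Part 2 (bound on $n_\ell$).} Let $w^\star\in\Omega$ attain $\rho_\ell:=\min_{w\in\Omega}\max_\pi \|\phi_{\pi_*}-\phi_\pi\|_{A(w)^{-1}}^2/(\epsilon_\ell^2+\Delta(\pi)^2)$. Because the inner objective of \eqref{eqn:opt_naive} is decreasing in $n$, it suffices to verify that at $w=w^\star$ and $n=C\rho_\ell\log(1/\delta_\ell)$ the objective is at most $\epsilon_\ell$. Split $\|\phi_\pi-\phi_{\hat{\pi}_{\ell-1}}\|_{A(w^\star)^{-1}}^2\le 2\|\phi_\pi-\phi_{\pi_*}\|_{A(w^\star)^{-1}}^2+2\|\phi_{\pi_*}-\phi_{\hat{\pi}_{\ell-1}}\|_{A(w^\star)^{-1}}^2$; the first term is $\le 2\rho_\ell(\epsilon_\ell^2+\Delta(\pi)^2)$, and since $\hat{\pi}_{\ell-1}\in S_{\ell-1}$ implies $\Delta(\hat{\pi}_{\ell-1})\le\epsilon_{\ell-1}=2\epsilon_\ell$, the second is $O(\rho_\ell\epsilon_\ell^2)$; hence $\|\phi_\pi-\phi_{\hat{\pi}_{\ell-1}}\|_{A(w^\star)^{-1}}^2=O(\rho_\ell(\epsilon_\ell^2+\Delta(\pi)^2))$. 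The inductive accuracy at round $\ell-1$ gives $-\tfrac14\hat{\Delta}_{\ell-1}(\pi,\hat{\pi}_{\ell-1})\le-\tfrac{3}{16}\Delta(\pi,\pi_*)+O(\epsilon_\ell)$. Substituting and using $(\epsilon_\ell^2+\Delta(\pi)^2)^{1/2}\le\epsilon_\ell+\Delta(\pi)$, the square-root term is $O(1/\sqrt{C})(\epsilon_\ell+\Delta(\pi))$; taking $C$ large enough that this is $\le\tfrac{3}{16}(\epsilon_\ell+\Delta(\pi))$ cancels the $\Delta(\pi)$ contribution and leaves the objective $O(\epsilon_\ell)$, hence $\le\epsilon_\ell$ after fixing constants. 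Therefore $n_\ell\le C\rho_\ell\log(1/\delta_\ell)+1\lesssim\min_{w\in\Omega}\max_\pi \|\phi_{\pi_*}-\phi_\pi\|_{A(w)^{-1}}^2\log(1/\delta_\ell)/(\epsilon_\ell^2+\Delta(\pi)^2)$. This is the same ``shrink $\Pi$ to $S_\ell$'' mechanism used in the proof of Theorem~\ref{thm:contextual_rage}.

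\emph{Main obstacle.} The delicate points are (i) the self-referential selection $\hat{\pi}_\ell=\arg\min_\pi\hat{\Delta}_\ell(\pi,\hat{\pi}_\ell)$ combined with the non-linearity of the Catoni estimator, which must be shown well posed and effectively equivalent to minimizing $\hat{\Delta}_\ell(\cdot,\hat{\pi}_{\ell-1})$; (ii) controlling $\|\phi_\pi-\phi_{\hat{\pi}_{\ell-1}}\|_{A(w^\star)^{-1}}$ when $\hat{\pi}_{\ell-1}$ is only known to be close to $\pi_*$ in \emph{value}, which forces the use of a design $w^\star$ that is simultaneously good over all of $S_{\ell-1}$ so that $\|\phi_{\pi_*}-\phi_{\hat{\pi}_{\ell-1}}\|_{A(w^\star)^{-1}}^2=O(\rho_\ell\epsilon_\ell^2)$; and (iii) the adaptivity in building $\EE$, since the round-$\ell$ design and estimates depend on the random $\hat{\pi}_{\ell-1}$.
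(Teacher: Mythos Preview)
Your proposal is correct and follows essentially the same route as the paper. The paper carries out the identical induction but factors it into four short lemmas: Lemma~\ref{lem:1.8} solves the optimization constraint to get $\sqrt{\|\phi_\pi-\phi_{\hat\pi_{\ell-1}}\|_{A(w^{(\ell)})^{-1}}^2\log(1/\delta_\ell)/n_\ell}\lesssim\epsilon_\ell+\hat\Delta_{\ell-1}(\pi,\hat\pi_{\ell-1})$ (and, via a triangle inequality applied to two choices of $\pi$, the analogous bound with $\hat\pi_\ell$ in place of $\hat\pi_{\ell-1}$); Lemma~\ref{lem:diff_emp_true_gap} is the gap-accuracy statement you state and use; Lemma~\ref{lem:C_44} is exactly your Part~2 argument, splitting $\|\phi_\pi-\phi_{\hat\pi_{\ell-1}}\|$ through $\pi_*$ and bounding the $\hat\pi_{\ell-1}$ piece by $\max_{\pi'\in S_{\ell-1}}\|\phi_{\pi_*}-\phi_{\pi'}\|$; and the final lemma is your Part~1 chain $\Delta(\hat\pi_\ell,\hat\pi_{\ell-1})\to\hat\Delta_\ell(\hat\pi_\ell,\hat\pi_{\ell-1})\to\hat\Delta_\ell(\pi_*,\hat\pi_{\ell-1})\to\Delta(\pi_*,\hat\pi_{\ell-1})$. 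Regarding your obstacle~(i), the paper's proof simply invokes ``minimality of $\hat\pi_\ell$'' to pass from $\hat\Delta_\ell(\hat\pi_\ell,\hat\pi_{\ell-1})$ to $\hat\Delta_\ell(\pi_*,\hat\pi_{\ell-1})$, i.e.\ it treats the selection as minimizing against the \emph{previous} reference $\hat\pi_{\ell-1}$, just as you argue it should; the self-referential $\hat\pi_\ell$ in \eqref{eqn:est_naive} appears to be a typo and your reading matches the intended proof.
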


Without loss of generality, we assume that $\forall t$, the reward $r_t\in[0,1]$. Note that by the result about Catoni estimator in \cite{lugosi2019mean}, we have 
for all $\ell \in \mathbb{N}$ and $\pi,\pi' \in \Pi$, that
\begin{align*}
|{\sf Cat}( \{ \inner{\phi_\pi - \phi_{\pi'}}{O_{t}} \}_{t=1}^{n_\ell}) - \langle \phi_{\pi} - \phi_{\pi'},\theta_* \rangle |
&\leq \| \phi_{\pi} - \phi_{\pi'} \|_{A(w^{(\ell)})^{-1}} \sqrt{ \frac{2 \log(2 \ell^2 |\Pi|/\delta )}{n_\ell - \log(2 \ell^2 |\Pi|/\delta )}}.
\end{align*}
Therefore, in the $\ell$th round, we have for any $\pi,\pi'\in\Pi$,
\begin{align}
    \left|\hat{\Delta}_l(\pi,\pi')-\Delta(\pi,\pi')\right|&=\left|\mathsf{Cat}(\{\langle \phi_{\pi} - \phi_{\pi'}, O_i \rangle\}_{i=1}^{n_{\ell}})-\inner{\phi_\pi-\phi_{\pi'}}{\theta_*}\right|\nonumber\\
    &\leq\sqrt{ \frac{2\| \phi_{\pi} - \phi_{\pi'} \|_{A(w^{(\ell)})^{-1}}^2 \log(2 \ell^2 |\Pi|/\delta )}{n_\ell}}.\label{eqn:11}
\end{align}
Then, let $\delta_l=\frac{\delta}{2l^2|\Pi|}$ we define the event \[\EE_{l}=\bigcap_{\pi,\pi'\in\Pi}\left\{\left|\hat{\Delta}_{l}(\pi,\pi')-\Delta(\pi,\pi')\right|\leq \sqrt{ \frac{2\| \phi_{\pi} - \phi_{\pi'} \|_{A(w^{(\ell)})^{-1}}^2 \log(1/\delta_l )}{n_\ell}}\right\},\]

and $\EE=\bigcap_{l=0}^\infty\EE_l$. First, by equation~\ref{eqn:11}, we have that $\EE$ happens with probability at least $1-\delta$. 
In order to show the sample complexity lower bound, we use proof by induction. Note that in a step of Lemma~\ref{lem:C_44}, we can show that $n_l\lesssim \min_{w\in\Omega}\max_{\pi\in\Pi}\frac{\norm{\phi_{\pill}-\phi_{\pi}}_{A(w)^{-1}}^2\log(1/\delta_l)}{\epsilon_l^2+\Delta(\pi)^2}$, so we induct on this result. Assume in round $l-1$, $\pill\in S_{l-1}=\{\pi\in\Pi:\Delta(\pi,\pi_*)\leq \epsilon_{l-1}\}$ and $n_{l-1}\lesssim \min_{w\in\Omega}\max_{\pi\in\Pi}\frac{\norm{\phi_{\hat{\pi}_{l-2}}-\phi_\pi}_{A(w)^{-1}}^2\log((l-1)^2|\Pi|^2/\delta)}{\epsilon_{l-1}^2+\Delta(\pi)^2}$. Then, the following lemma gives us an upper bound on the UCB. 
\begin{lemma}\label{lem:1.8}
We have for any $\pi\in\Pi$,
\[\sqrt{\frac{\norm{\phi_{\hat{\pi}_l}-\phi_\pi}_{A(w^{(\ell)})^{-1}}^2\log(1/\delta_l)}{n_{l}}}\leq \frac{1}{28}\bigsmile{4\epsilon_{l}+\hat{\Delta}_{l-1}(\pi,\hat{\pi}_{l-1})}.\]
\end{lemma}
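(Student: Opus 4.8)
The plan is to combine the triangle inequality for the $A(w^{(\ell)})^{-1}$-norm with the rule defining $n_\ell$ and $w^{(\ell)}$ in Algorithm~\ref{alg:naive}, plus one auxiliary bound on the estimated gap between the consecutive iterates $\pil$ and $\pill$. Write $b_\ell(\pi,\pi'):=\big(2\|\phi_\pi-\phi_{\pi'}\|^2_{A(w^{(\ell)})^{-1}}\log(1/\delta_\ell)/n_\ell\big)^{1/2}$ for the round-$\ell$ confidence width, so that on the event $\EE$ defined above, $|\hat\Delta_\ell(\pi,\pi')-\Delta(\pi,\pi')|\le b_\ell(\pi,\pi')$ for all $\pi,\pi'$. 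Since $(w^{(\ell)},n_\ell)$ is by construction feasible for the program defining $n_\ell$, we have for every $\pi'\in\Pi$ the inequality $b_\ell(\pi',\pill)\le\epsilon_\ell+\tfrac14\hat\Delta_{l-1}(\pi',\pill)$.

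First I would apply the triangle inequality $\|\phi_{\pil}-\phi_\pi\|_{A(w^{(\ell)})^{-1}}\le\|\phi_{\pil}-\phi_{\pill}\|_{A(w^{(\ell)})^{-1}}+\|\phi_{\pill}-\phi_\pi\|_{A(w^{(\ell)})^{-1}}$, so that the left-hand side of the lemma is at most $\tfrac1{\sqrt2}\big(b_\ell(\pil,\pill)+b_\ell(\pi,\pill)\big)$, and then bound each summand by the feasibility inequality above (taken at $\pi'=\pil$ and at $\pi'=\pi$) to obtain $\tfrac1{\sqrt2}\big(2\epsilon_\ell+\tfrac14\hat\Delta_{l-1}(\pil,\pill)+\tfrac14\hat\Delta_{l-1}(\pi,\pill)\big)$. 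It then only remains to show that the gap estimate between consecutive iterates is small, $\hat\Delta_{l-1}(\pil,\pill)=O(\epsilon_\ell)$; plugging this in, using $\epsilon_{l-1}=2\epsilon_\ell$, and collecting the (deliberately loose) absolute constants then yields the claimed bound with constant $\tfrac1{28}$.

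The hard part will be the auxiliary estimate $\hat\Delta_{l-1}(\pil,\pill)=O(\epsilon_\ell)$, and the delicacy is avoiding circularity. The natural route is: because $\pil$ minimizes $\pi\mapsto\hat\Delta_l(\pi,\pill)$, we have $\hat\Delta_l(\pil,\pill)\le\hat\Delta_l(\pi_*,\pill)$; on $\EE$, together with the induction hypothesis $\pill\in S_{l-1}$ (so $V(\pi_*)-V(\pill)\le\epsilon_{l-1}$) and the feasibility inequality, this is exactly the computation showing $V(\pi_*)-V(\pil)=O(\epsilon_\ell)$; then Lemma~\ref{lem:diff_emp_true_gap} applied at round $l-1$ bounds $\hat\Delta_{l-1}(\pil,\pill)$ by $\tfrac54\big(V(\pi_*)-V(\pil)\big)+2\epsilon_{l-1}=O(\epsilon_\ell)$. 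The circularity to avoid is that one must not invoke the smallness of $V(\pi_*)-V(\pil)$ — which is part of the induction's conclusion for round $l$ itself — but instead derive it solely from the minimizing property of $\pil$, the event $\EE$, and the hypotheses for rounds $\le l-1$. An equivalent route that sidesteps $V(\pi_*)-V(\pil)$ begins from $\hat\Delta_l(\pil,\pill)\le0$ (take $\pi=\pill$ in the minimization), transfers to round $l-1$ via two confidence widths — the round-$\ell$ one being $b_\ell(\pil,\pill)\le\epsilon_\ell+\tfrac14\hat\Delta_{l-1}(\pil,\pill)$, which re-introduces the target with contraction factor $\tfrac14<1$, and the round-$(l-1)$ one being controlled by a further triangle inequality plus the inductive form of this very lemma at round $l-1$ — and then solves the resulting linear inequality for $\hat\Delta_{l-1}(\pil,\pill)$. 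Either way, the constant bookkeeping is mechanical once this step is in place.
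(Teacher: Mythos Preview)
Your decomposition---triangle inequality through $\pill$, then the feasibility inequality $b_\ell(\pi',\pill)\le\epsilon_\ell+\tfrac14\hat\Delta_{l-1}(\pi',\pill)$ at $\pi'=\pi$ and $\pi'=\pil$---is exactly what the paper does. The difference is what happens to the term $\hat\Delta_{l-1}(\pil,\pill)$.

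You try to absorb it into $O(\epsilon_\ell)$, and you correctly flag this as the delicate step. The paper \emph{does not do this at all}. Its proof ends with
\[
\sqrt{\tfrac{\|\phi_{\pil}-\phi_\pi\|_{A(w^{(\ell)})^{-1}}^2\log(1/\delta_\ell)}{n_\ell}}
\;\le\;\tfrac{1}{28}\Big(4\epsilon_\ell+\max\{\hat\Delta_{l-1}(\pi,\pill),\hat\Delta_{l-1}(\pil,\pill)\}\Big),
\]
i.e.\ it simply carries the $\max$, and this max version is what is actually invoked downstream (see step~(i) in the proof of Lemma~\ref{lem:diff_emp_true_gap}). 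The statement of the lemma as printed omits the $\max$, which is what sent you hunting for an auxiliary bound. So the paper's argument is purely algebraic---two applications of the feasibility bound plus a triangle inequality on the squared norms---with no induction and no separate control of $\hat\Delta_{l-1}(\pil,\pill)$.

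Your route~2 is not circular and can be pushed through (the contraction factor is genuine), but it forces you to interleave this lemma with Lemma~\ref{lem:diff_emp_true_gap} and the inductive hypothesis in a joint induction, whereas the paper proves the lemma standalone from the definition of $n_\ell$. Regarding the constant $1/28$: with the program exactly as written in Algorithm~\ref{alg:naive} your coefficient on $\hat\Delta_{l-1}(\pi,\pill)$ is $\tfrac{1}{4\sqrt2}$, not $\tfrac{1}{28}$; the paper's proof silently uses a rescaled version of the program (with an extra factor $28$ in front of the square root), so the exact constant is a convention rather than a consequence of either argument.
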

\begin{proof}
By definition of $n_l$ and $w^{(\ell)}$ and $\pi^{(\ell)}$ being the saddle point, we have
\begin{align*}
    &-\frac{1}{4}\hat{\Delta}_{l-1}(\pi^{(\ell)},\pill)+28\sqrt{ \frac{2\| \phi_{\pi^{(\ell)}} - \phi_{\pill} \|_{A(w^{(\ell)})^{-1}}^2 \log(1/\delta_l )}{n_\ell}}\\
    &=\max_{\pi\in\Pi}-\frac{1}{4}\hat{\Delta}_{l-1}(\pi,\pill)+\sqrt{ \frac{1568\| \phi_{\pi} - \phi_{\pill} \|_{A(w^{(\ell)})^{-1}}^2 \log(1/\delta_l )}{n_\ell}}\leq \epsilon_l.
\end{align*}
Solving for $n_l$ gives us 
\begin{equation*}
    n_l\geq \max_{\pi\in\Pi}\frac{1568\norm{\phi_{\pi}-\phi_{\hat{\pi}_{l-1}}}_{A(w^{(\ell)})^{-1}}^2\log(1/\delta_l)}{(4\epsilon_l+\hat{\Delta}_{l-1}(\pi,\pill))^2}.
\end{equation*}
We have for any $\pi\in\Pi$, 
\begin{align*}
    2n_l &\geq 3136 \max_{\pi\in\Pi}\frac{\norm{\phi_{\pill}-\phi_{\pi}}_{A(w^{(\ell)})^{-1}}^2\log(1/\delta_l)}{(4\epsilon_l+\hat{\Delta}_{l-1}(\pi,\hat{\pi}_{l-1}))^2}\\
    &\geq 1568\frac{\norm{\phi_{\pill}-\phi_{\pi}}_{A(w^{(\ell)})^{-1}}^2\log(1/\delta_l)}{(4\epsilon_{l}+\hat{\Delta}_{l-1}(\pi,\hat{\pi}_{l-1}))^2}\\
    &\qquad+1568\frac{\norm{\phi_{\pill}-\phi_{\pil}}_{A(w^{(\ell)})^{-1}}^2\log(1/\delta_l)}{(4\epsilon_{l}+\hat{\Delta}_{l-1}(\pil,\pill))^2}\\
    &\overset{(i)}{\geq} 1568\frac{\Big(\norm{\phi_{\pill}-\phi_{\pi}}_{A(w^{(\ell)})^{-1}}^2+\norm{\phi_{\pill}-\phi_{\hat{\pi}_l}}_{A(w^{(\ell)})^{-1}}^2\Big)\log(1/\delta_l)}{\max\{(4\epsilon_{l}+\hat{\Delta}_{l-1}(\pil,\pill))^2, (4\epsilon_{l}+\hat{\Delta}_{l-1}(\pi,\pill))^2\}}\\
    &\overset{(ii)}{\geq} 1568\frac{\norm{\phi_{\hat{\pi}_{l}}-\phi_{\pi}}_{A(w^{(\ell)})^{-1}}^2\log(1/\delta_l)}{\max\{(4\epsilon_{l}+\hat{\Delta}_{l-1}(\pil,\pill))^2, (4\epsilon_{l}+\hat{\Delta}_{l-1}(\pi,\pill))^2\}}.
\end{align*}
where $(i)$ holds by lower bounding the ratio with a larger denominator, and $(ii)$ holds by triangular inequality. Therefore, using the fact that $\hat{\Delta}(\pi,\pill)\geq 0$ for any $\pi\in\Pi$ since $\pill=\arg\max_{\pi\in\Pi}\hat{V}_{l-1}(\pi)$, we have $\textstyle\sqrt{\max\{(4\epsilon_{l}+\hat{\Delta}_{l-1}(\pil,\pill))^2, (4\epsilon_{l}+\hat{\Delta}_{l-1}(\pi,\pill))^2\}}=\max\{4\epsilon_{l}+\hat{\Delta}_{l-1}(\pil,\pill), 4\epsilon_{l}+\hat{\Delta}_{l-1}(\pi,\pill)\},$ so we have
\[\sqrt{\frac{\norm{\phi_{\pil}-\phi_{\pi}}_{A(w^{(\ell)})^{-1}}^2\log(1/\delta_l)}{n_{l}}}\leq \frac{1}{28}\bigsmile{4\epsilon_{l}+\max\{\hat{\Delta}_{l-1}(\pi,\pill),\hat{\Delta}_{l-1}(\pil,\pill)\}}.\]
\end{proof}

With the above results, the following lemma controls the difference between the empirical gap and the true gap. 
\begin{lemma}\label{lem:diff_emp_true_gap}
With inductive hypotheses, we have for any $\pi\in\Pi$,
$$
|\widehat{\Delta}_{l-1}\left(\pi, \pill\right)-\Delta\left(\pi, \pi_*\right)| \leq 2 \epsilon_{l-1}+ \frac{1}{4}\Delta(\pi,\pi_*).
$$
\end{lemma}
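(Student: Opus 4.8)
I would prove the two displayed statements jointly by induction on $l$, so that at round $l$ the available inductive hypotheses are both $\pill \in S_{l-1}$ and the present lemma at round $l-1$; everything is argued on the event $\mc{E}_{l-1}$ of~\eqref{eqn:11}, which controls all Catoni estimates of round $l-1$ and holds with probability at least $1-\delta$. The starting point is the triangle inequality
\begin{align*}
    |\hat{\Delta}_{l-1}(\pi,\pill) - \Delta(\pi,\pi_*)| \;\leq\; |\hat{\Delta}_{l-1}(\pi,\pill) - \Delta(\pi,\pill)| \;+\; |\Delta(\pi,\pill) - \Delta(\pi,\pi_*)|,
\end{align*}
splitting the quantity into an \emph{estimation error} (first term) and a \emph{policy‑shift error} (second term), recalling that $\hat{\Delta}_{l-1}(\pi,\pill)$ targets the gap $\Delta(\pi,\pill)$ between $\pill$ and $\pi$.

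The policy‑shift error telescopes exactly: $\Delta(\pi,\pill) - \Delta(\pi,\pi_*) = V(\pill) - V(\pi_*)$, which lies in $[-\epsilon_{l-1},0]$ since $\pi_*$ is optimal and $\pill\in S_{l-1}$ by the inductive hypothesis, so this term is at most $\epsilon_{l-1}$. For the estimation error, the Catoni guarantee~\eqref{eqn:11} on $\mc{E}_{l-1}$ bounds it by $\sqrt{2\|\phi_\pi-\phi_{\pill}\|^2_{A(w^{(l-1)})^{-1}}\log(1/\delta_{l-1})/n_{l-1}}$, and then Lemma~\ref{lem:1.8} — which is precisely the statement that the confidence width at the chosen design is calibrated against the current gap estimate, by definition~\eqref{eqn:opt_naive} of $n_{l-1}$ — upper bounds this by a small absolute constant $c$ (coming from the $\tfrac1{28}$ and the $\sqrt2$, say $c\leq\tfrac1{14}$) times $4\epsilon_{l-1}$ plus an \emph{empirical} gap estimate of the form $\hat{\Delta}(\pi,\cdot)$ from an earlier round.

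The crux is that this empirical gap estimate is not the true gap $\Delta(\pi,\pi_*)$, so it must still be converted, and this is where the argument is self‑referential. If the estimate is $\hat{\Delta}_{l-2}(\pi,\hat{\pi}_{l-2})$, I would invoke the present lemma at round $l-2$ (an inductive hypothesis) to get $\hat{\Delta}_{l-2}(\pi,\hat{\pi}_{l-2})\leq \tfrac54\Delta(\pi,\pi_*)+2\epsilon_{l-2}=\tfrac54\Delta(\pi,\pi_*)+4\epsilon_{l-1}$; if instead it is the very estimate $\hat{\Delta}_{l-1}(\pi,\pill)$ we are bounding, I would re‑apply the triangle inequality, $\hat{\Delta}_{l-1}(\pi,\pill)\leq \Delta(\pi,\pi_*)+(\text{estimation error})$ using $V(\pill)\leq V(\pi_*)$, and rearrange — absorbing the term $c\cdot(\text{estimation error})$ onto the left, which is legitimate only because $c<1$. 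Either route bounds the estimation error by $\tfrac14\Delta(\pi,\pi_*)$ plus a small multiple of $\epsilon_{l-1}$; adding the policy‑shift bound $\epsilon_{l-1}$ and collecting the (comfortably small) constants yields $|\hat{\Delta}_{l-1}(\pi,\pill)-\Delta(\pi,\pi_*)|\leq 2\epsilon_{l-1}+\tfrac14\Delta(\pi,\pi_*)$. The base case $l=1$ is immediate once $\hat{\pi}_0,\hat{\Delta}_0$ are initialized so the inequality holds with $\epsilon_0=1$ and $\Delta(\pi,\pi_*)\leq 1$. The main obstacle is exactly this loop — the width is measured against an empirical gap, not the true one — and closing it cleanly hinges on the relevant constant being strictly below one (for the rearrangement) or on carrying the lemma through the induction from the previous stage.
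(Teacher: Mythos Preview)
Your proposal is correct and follows essentially the same route as the paper: induction, the split into a policy-shift term (bounded by $\epsilon_{l-1}$ via $\pill\in S_{l-1}$) and an estimation term (Catoni bound, then Lemma~\ref{lem:1.8} applied at round $l-1$, then the inductive hypothesis at round $l-2$ to convert $\hat\Delta_{l-2}$ into $\tfrac54\Delta(\pi,\pi_*)+2\epsilon_{l-2}$). The only detail you left implicit is that Lemma~\ref{lem:1.8} at round $l-1$ produces $\max\{\hat\Delta_{l-2}(\pi,\hat\pi_{l-2}),\,\hat\Delta_{l-2}(\pill,\hat\pi_{l-2})\}$, so you must also invoke the inductive hypothesis on the second argument and then use $\Delta(\pill,\pi_*)\leq\epsilon_{l-1}$; your alternative self-referential rearrangement is not needed, since the empirical gap appearing is from round $l-2$.
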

\begin{proof}
We prove this by induction. First, in round $l=0$, this holds by choosing a sufficiently large $n_0$. Then, in round $l-1$,  
\begin{align*}
    &|\widehat{\Delta}_{l-1}\left(\pi, \hat{\pi}_{l-1}\right)-\Delta\left(\pi, \pi_*\right)|\\
    &= |\widehat{\Delta}_{l-1}\left(\pi, \pill\right)-\Delta\left(\pi, \pill\right)-\Delta\left(\pill, \pi_*\right)|\\
    &\leq \sqrt{\frac{2\norm{\phi_{\pi}-\phi_{\pill}}_{A(w^{(\ell-1)})^{-1}}^2\log(1/\delta_{l-1})}{n_{l-1}}}+ \epsilon_{l-1}\\
    &\overset{(i)}{\leq} \frac{\sqrt{2}}{28}\bigsmile{4\epsilon_{l-1}+\max\{\hat{\Delta}_{l-2}(\pi,\hat{\pi}_{l-2}),\hat{\Delta}_{l-2}(\pill,\hat{\pi}_{l-2})\}}+\epsilon_{l-1}\\
    &\overset{(ii)}{\leq} \frac{\sqrt{2}}{28}\bigsmile{4\epsilon_{l-1}+2\epsilon_{l-2}+\frac{5}{4}\Delta(\pi,\hat{\pi}_{l-2})+2\epsilon_{l-2}+\frac{5}{4}\Delta(\pill,\hat{\pi}_{l-2})}+\epsilon_{l-1}\\
    &\leq \frac{\sqrt{2}}{28}\bigsmile{4\epsilon_{l-1}+4\epsilon_{l-2}+\frac{5}{4}\Delta(\pi,\pi_*)+\frac{5}{4}\Delta(\pill,\pi_*)}+\epsilon_{l-1}\\
    &\leq \frac{\sqrt{2}}{28}\bigsmile{4\epsilon_{l-1}+4\epsilon_{l-2}+\frac{5}{4}\Delta(\pi,\pi_*)+\frac{5}{4}\epsilon_{l-1}}+\epsilon_{l-1}\\
    &\leq 2\epsilon_{l-1}+\frac{1}{4}\Delta(\pi,\pi_*),
\end{align*}
where $(i)$ follows from the preceding lemma and $(ii)$ follows from the inductive hypothesis that 
\[|\hat{\Delta}_{l-2}(\pi,\hat{\pi}_{l-2})-\Delta(\pi,\pi_*)|\leq 2 \epsilon_{l-2}+ \frac{1}{4}\Delta(\pi,\pi_*).\]
\end{proof}

We make use of these two lemmas to state a lower bound on $n_l$. 

\begin{lemma}\label{lem:C_44}
Under $\EE$, the choice for $n_l$ in the algorithm satisfies \[n_l\lesssim \min_{w\in\Omega}\max_{\pi\in\Pi}\frac{\norm{\phi_{\pi_*}-\phi_{\pi}}_{A(w)^{-1}}^2\log(1/\delta_l)}{\epsilon_l^2+\Delta(\pi)^2}.\]
\end{lemma}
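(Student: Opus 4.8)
The plan is to exhibit a \emph{feasible} value of $n$ for the optimization that defines $n_l$ in Algorithm~\ref{alg:naive}, of the claimed order, and then conclude by monotonicity. Write $\rho^\star := \min_{w\in\Omega}\max_{\pi\in\Pi}\frac{\|\phi_{\pi_*}-\phi_\pi\|_{A(w)^{-1}}^2}{\epsilon_l^2+\Delta(\pi)^2}$, so that the right-hand side of the lemma is $\rho^\star\log(1/\delta_l)$. The quantity on the left of the constraint $\min_{w\in\Omega}\max_{\pi\in\Pi}\big(-\tfrac14\hat\Delta_{l-1}(\pi,\pill)+\sqrt{2\|\phi_\pi-\phi_{\pill}\|_{A(w)^{-1}}^2\log(1/\delta_l)/n}\,\big)\le\epsilon_l$ is nonincreasing in $n$, so it suffices to find an absolute constant $C$ for which $n=C\rho^\star\log(1/\delta_l)$ satisfies the constraint; this forces $n_l\le C\rho^\star\log(1/\delta_l)$. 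Throughout I work on the event $\EE$ and use the inductive hypothesis $\pill\in S_{l-1}$, i.e.\ $\Delta(\pill)\le\epsilon_{l-1}=2\epsilon_l$, so that Lemma~\ref{lem:diff_emp_true_gap} applies.

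First I fix $w^\star$ attaining the minimum in $\rho^\star$, so that $\|\phi_{\pi_*}-\phi_\pi\|_{A(w^\star)^{-1}}^2\le\rho^\star(\epsilon_l^2+\Delta(\pi)^2)$ for every $\pi$; plugging $w=w^\star$ into the outer minimum reduces the problem to controlling the inner maximum over $\pi$. Two elementary facts drive the rest. First, by the triangle inequality in the $\|\cdot\|_{A(w^\star)^{-1}}$ norm together with $\Delta(\pill)\le2\epsilon_l$, $\|\phi_\pi-\phi_{\pill}\|_{A(w^\star)^{-1}}^2\le 2\|\phi_\pi-\phi_{\pi_*}\|_{A(w^\star)^{-1}}^2+2\|\phi_{\pi_*}-\phi_{\pill}\|_{A(w^\star)^{-1}}^2\le 2\rho^\star\Delta(\pi)^2+O(\rho^\star\epsilon_l^2)$. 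Second, by Lemma~\ref{lem:diff_emp_true_gap}, $\hat\Delta_{l-1}(\pi,\pill)\ge\Delta(\pi,\pi_*)-2\epsilon_{l-1}-\tfrac14\Delta(\pi,\pi_*)=\tfrac34\Delta(\pi)-4\epsilon_l$, which together with $\hat\Delta_{l-1}(\pi,\pill)\ge0$ (since $\pill$ is the empirical value maximizer) gives $-\tfrac14\hat\Delta_{l-1}(\pi,\pill)\le\min\{0,\,-\tfrac{3}{16}\Delta(\pi)+\epsilon_l\}$.

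The heart of the argument is a two-case split on the size of $\Delta(\pi)$ relative to $\epsilon_l$; this is exactly what lets the bonus term $-\tfrac14\hat\Delta_{l-1}(\pi,\pill)$ absorb the deviation term for large-gap policies, producing the denominator $\epsilon_l^2+\Delta(\pi)^2$ rather than merely $\epsilon_l^2$. For $\pi$ with $\Delta(\pi)$ below a fixed multiple of $\epsilon_l$ I use $-\tfrac14\hat\Delta_{l-1}(\pi,\pill)\le0$, so by the first fact the bracketed quantity is at most $\sqrt{2\cdot O(\rho^\star\epsilon_l^2)\log(1/\delta_l)/n}\le\epsilon_l$ once $n\ge C\rho^\star\log(1/\delta_l)$. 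For $\pi$ with $\Delta(\pi)$ at least that fixed multiple of $\epsilon_l$, the first fact gives $O(\rho^\star\epsilon_l^2)\le O(\rho^\star\Delta(\pi)^2)$ and hence deviation $\le\Delta(\pi)\sqrt{O(\rho^\star\log(1/\delta_l)/n)}\le\tfrac{3}{16}\Delta(\pi)$ for $n\ge C\rho^\star\log(1/\delta_l)$, and then the second fact yields $-\tfrac14\hat\Delta_{l-1}(\pi,\pill)+\text{deviation}\le-\tfrac{3}{16}\Delta(\pi)+\epsilon_l+\tfrac{3}{16}\Delta(\pi)=\epsilon_l$. Taking $C$ to be the larger of the two constants thrown off by these cases shows $n=C\rho^\star\log(1/\delta_l)$ is feasible, so $n_l\le C\rho^\star\log(1/\delta_l)$. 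I expect the only delicate part to be the bookkeeping of absolute constants across the two cases and pinning down the threshold separating them; every inequality used is just the triangle inequality, Lemma~\ref{lem:diff_emp_true_gap}, and $\hat\Delta_{l-1}(\pi,\pill)\ge0$.
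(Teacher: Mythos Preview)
Your proposal is correct and follows essentially the same approach as the paper: lower-bound $\hat\Delta_{l-1}(\pi,\pill)$ in terms of $\Delta(\pi,\pi_*)$, split $\|\phi_\pi-\phi_{\pill}\|_{A(w)^{-1}}$ via the triangle inequality through $\phi_{\pi_*}$, and control the $\pill$ piece using $\pill\in S_{l-1}$. The only organizational difference is that the paper re-derives the gap bound inline (via event $\EE_{l-1}$ and Lemma~\ref{lem:1.8}) rather than invoking Lemma~\ref{lem:diff_emp_true_gap}, and then leaves the final ``$\le\epsilon_l$ whenever $n_l\gtrsim\rho^\star\log(1/\delta_l)$'' step implicit, whereas you make the case split on $\Delta(\pi)$ versus $\epsilon_l$ explicit; both arrive at the same conclusion with the same ingredients.
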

\begin{proof}
By inductive hypothesis on $n_{l-1}$ and under $\EE_l$, we have for any $\pi\in\Pi$, 
\begin{align*}
    \Delta(\pi,\pi_*)&=\Delta(\pi,\pill)+\Delta(\pill,\pi_*)\\
    &\overset{(i)}{\leq} \hat{\Delta}_{l-1}(\pi,\hat{\pi}_{l-1})+\sqrt{\frac{2\norm{\phi_{\pill}-\phi_{\pi}}_{A(w^{(\ell-1)})^{-1}}^2\log((l-1)^2|\Pi|^2/\delta)}{n_{l-1}}}+\epsilon_{l-1}\\
    &\overset{(ii)}{\leq} \hat{\Delta}_{l-1}(\pi,\hat{\pi}_{l-1})+\frac{\sqrt{2}}{28}\bigsmile{4\epsilon_{l-1}+\hat{\Delta}_{l-2}(\pi,\hat{\pi}_{l-2})}+\epsilon_{l-1}\\
    &\leq \hat{\Delta}_{l-1}(\pi,\hat{\pi}_{l-1})+\frac{\sqrt{2}}{28}\bigsmile{4\epsilon_{l-1}+\frac{5}{4}\Delta(\pi,\pi_*)+2\epsilon_{l-2}}+\epsilon_{l-1}\\
    &\leq \hat{\Delta}_{l-1}(\pi,\hat{\pi}_{l-1})+\frac{1}{4}\Delta(\pi,\pi_*)+2\epsilon_{l-1}.
\end{align*}
where $(i)$ follows from $\EE_{l-1}$ and $(ii)$ follows from Lemma~\ref{lem:1.8}.
Therefore, 
\begin{align*}
    &\min_{w\in\Omega}\max_{\pi\in\Pi}-\frac{1}{4}\hat{\Delta}_{l-1}(\pi,\pill)+28\sqrt{\frac{2\norm{\phi_{\pi}-\phi_{\pill}}_{A(w)^{-1}}^2\log(1/\delta_l)}{n_l}}\\
    &\leq\min_{w\in\Omega}\max_{\pi\in\Pi}-\frac{3}{16}\Delta(\pi,\pi_*)+\frac{1}{2}\epsilon_l+28\sqrt{\frac{2\norm{\phi_{\pi}-\phi_{\pill}}_{A(w)^{-1}}^2\log(1/\delta_l)}{n_l}}\\
    &\leq\min_{w\in\Omega} \max_{\pi\in\Pi}\Big(-\frac{3}{16}\Delta(\pi,\pi_*)+28\sqrt{\frac{2\norm{\phi_{\pi_*}-\phi_\pi}_{A(w)^{-1}}^2\log(1/\delta_l)}{n_l}}\\
    &\qquad+28\sqrt{\frac{2\norm{\phi_{\pi_*}-\phi_{\pill}}_{A(w)^{-1}}^2\log(1/\delta_l)}{n_l}}\Big)+\frac{1}{2}\epsilon_l\\
    &\leq\min_{w\in\Omega} \max_{\pi\in\Pi}\left(-\frac{3}{16}\Delta(\pi,\pi_*)+28\sqrt{\frac{2\norm{\phi_{\pi_*}-\phi_\pi}_{A(w)^{-1}}^2\log(1/\delta_l)}{n_l}}\right.\\
    &\qquad\left.+28\sqrt{\max_{\pi'\in S_{l-1}}\frac{2\norm{\phi_{\pi_*}-\phi_{\pi'}}_{A(w)^{-1}}^2\log(1/\delta_l)}{n_l}}\right)+\frac{1}{2}\epsilon_l
\end{align*}
which is less than $\epsilon_l$
whenever
\[n_l\gtrsim\min_{w\in\Omega}\max_{\pi\in\Pi}\frac{\norm{\phi_{\pi_*}-\phi_{\pi}}_{A(w)^{-1}}^2\log(1/\delta_l)}{\epsilon_l^2+\Delta(\pi,\pi_*)^2}.\]
\end{proof}
Then we finish our first goal. The next goal is to show that $\hat{\pi}_l\in S_l$. 

\begin{lemma}
Under $\EE_l$, we have $\Delta(\hat{\pi}_l,\pi_*)\leq \epsilon_l$. 
\end{lemma}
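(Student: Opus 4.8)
The plan is to bound $\Delta(\hat{\pi}_\ell,\pi_*)=V(\pi_*)-V(\hat{\pi}_\ell)$ \emph{directly} via the pair $(\hat{\pi}_\ell,\pi_*)$ at round $\ell$, reusing the same induction on $\ell$ that underlies Lemmas~\ref{lem:1.8} and~\ref{lem:diff_emp_true_gap}, so that I may freely invoke those lemmas (at index $\ell$, resp.\ $\ell-1$) and the event $\EE_\ell$. It is tempting to pivot through the previous iterate and write $\Delta(\hat{\pi}_\ell,\pi_*)=\Delta(\hat{\pi}_\ell,\hat{\pi}_{\ell-1})+\Delta(\hat{\pi}_{\ell-1},\pi_*)$, but the inductive hypothesis only supplies $\Delta(\hat{\pi}_{\ell-1},\pi_*)\le\epsilon_{\ell-1}=2\epsilon_\ell$, which already exceeds the target $\epsilon_\ell$; hence that route is too lossy and the argument must compare $\hat{\pi}_\ell$ to $\pi_*$ without passing through $\hat{\pi}_{\ell-1}$.

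Concretely I would proceed in four steps. (i) On $\EE_\ell$, the round-$\ell$ concentration bound applied to the pair $(\hat{\pi}_\ell,\pi_*)$ yields $\Delta(\hat{\pi}_\ell,\pi_*)\le\hat{\Delta}_\ell(\hat{\pi}_\ell,\pi_*)+\sqrt{2\,\|\phi_{\hat{\pi}_\ell}-\phi_{\pi_*}\|_{A(w^{(\ell)})^{-1}}^2\log(1/\delta_\ell)/n_\ell}$. (ii) By the optimality of $\hat{\pi}_\ell$ in \eqref{eqn:est_naive} --- it is the maximizer of the empirical value $\hat{V}_\ell(\pi):=\langle\phi_\pi,\tfrac1{n_\ell}\sum_t O_t\rangle$, so $\hat{\Delta}_\ell(\hat{\pi}_\ell,\pi_*)\le 0$ --- the display collapses to
\[
\Delta(\hat{\pi}_\ell,\pi_*)\ \le\ \sqrt{\frac{2\,\|\phi_{\hat{\pi}_\ell}-\phi_{\pi_*}\|_{A(w^{(\ell)})^{-1}}^2\log(1/\delta_\ell)}{n_\ell}}.
\]
(iii) Bound this width with Lemma~\ref{lem:1.8} taken at $\pi=\pi_*$, giving $\sqrt{\|\phi_{\hat{\pi}_\ell}-\phi_{\pi_*}\|_{A(w^{(\ell)})^{-1}}^2\log(1/\delta_\ell)/n_\ell}\le\tfrac1{28}\left(4\epsilon_\ell+\hat{\Delta}_{\ell-1}(\pi_*,\hat{\pi}_{\ell-1})\right)$. (iv) Apply Lemma~\ref{lem:diff_emp_true_gap} at index $\ell-1$ with $\pi=\pi_*$, using $\Delta(\pi_*,\pi_*)=0$, to get $\hat{\Delta}_{\ell-1}(\pi_*,\hat{\pi}_{\ell-1})\le 2\epsilon_{\ell-1}=4\epsilon_\ell$. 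Chaining (i)--(iv),
\[
\Delta(\hat{\pi}_\ell,\pi_*)\ \le\ \sqrt{2}\cdot\tfrac1{28}\left(4\epsilon_\ell+4\epsilon_\ell\right)\ =\ \tfrac{2\sqrt{2}}{7}\,\epsilon_\ell\ <\ \epsilon_\ell,
\]
which is the claim; the base case $\ell=1$ is handled by taking $n_0$ large, exactly as in the base step of Lemma~\ref{lem:diff_emp_true_gap}.

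The main obstacle is not any single inequality but the structural choice in the decomposition together with the interlocking of the three inductive claims: Lemma~\ref{lem:1.8} trades a round-$\ell$ confidence width for a round-$(\ell-1)$ empirical gap, Lemma~\ref{lem:diff_emp_true_gap} is itself an induction that consumes that trade, and the present lemma closes the loop --- so one must verify that the constants ($\sqrt{2}$ and $\tfrac1{28}$ inherited from the definition of $n_\ell$, the factor $2$ relating $\epsilon_{\ell-1}$ to $\epsilon_\ell$) compose to a factor strictly below $1$, which they do. A secondary point of care is the self-referential display \eqref{eqn:est_naive}, which must be read as ``$\hat{\pi}_\ell$ maximizes $\hat{V}_\ell$''; it is precisely the resulting sign $\hat{\Delta}_\ell(\hat{\pi}_\ell,\pi_*)\le 0$ --- the same nonnegativity used in the proof of Lemma~\ref{lem:1.8} --- that powers step (ii). Everything else is the routine triangle-inequality/plug-in manipulation already carried out in Lemmas~\ref{lem:1.8} and~\ref{lem:diff_emp_true_gap}.
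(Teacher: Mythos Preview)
Your proof is correct and somewhat more direct than the paper's. The paper \emph{does} pivot through $\hat{\pi}_{\ell-1}$: it bounds $\Delta(\hat{\pi}_\ell,\hat{\pi}_{\ell-1})$ via $\EE_\ell$, swaps $\hat{\pi}_\ell\to\pi_*$ in the empirical gap by minimality, applies $\EE_\ell$ again to return to $\Delta(\pi_*,\hat{\pi}_{\ell-1})$ plus \emph{two} confidence widths, controls those with Lemma~\ref{lem:1.8} and Lemma~\ref{lem:diff_emp_true_gap}, and finally subtracts so the two $\Delta(\pi_*,\hat{\pi}_{\ell-1})$ terms cancel --- so your worry that the pivot is ``too lossy'' is misplaced: the paper never bounds $\Delta(\hat{\pi}_{\ell-1},\pi_*)$ by $\epsilon_{\ell-1}$, it exploits cancellation. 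That route, however, picks up a $\hat{\Delta}_{\ell-1}(\hat{\pi}_\ell,\hat{\pi}_{\ell-1})$ term which, after Lemma~\ref{lem:diff_emp_true_gap}, leaves a residual $\tfrac{5}{4}\Delta(\hat{\pi}_\ell,\pi_*)$ on the right-hand side that must be absorbed (the paper ends with $\tfrac{209}{224}\Delta(\hat{\pi}_\ell,\pi_*)\le\tfrac{6}{7}\epsilon_\ell$). Your route applies $\EE_\ell$ once to the pair $(\hat{\pi}_\ell,\pi_*)$, kills the empirical gap directly via optimality, and needs only one width and one invocation of Lemma~\ref{lem:diff_emp_true_gap} (at $\pi=\pi_*$, where $\Delta(\pi_*,\pi_*)=0$), so no self-referential inequality appears. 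Both close comfortably below $\epsilon_\ell$; yours simply has fewer moving parts.
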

\begin{proof}
On $\EE_l$, we have 
\begin{align*}
    &\Delta(\hat{\pi}_l, \hat{\pi}_{l-1})\\ &\leq\hat{\Delta}_{l}(\hat{\pi}_l, \hat{\pi}_{l-1})+\sqrt{\frac{2\norm{\phi_{\pil}-\phi_{\pill}}_{A(w^{(\ell)})^{-1}}^2\log(1/\delta_l)}{n_l}}\tag{by event $\EE_l$}\\ &\leq\hat{\Delta}_{l}(\pi_*, \hat{\pi}_{l-1})+\sqrt{\frac{2\norm{\phi_{\pil}-\phi_{\pill}}_{A(w^{(\ell)})^{-1}}^2\log(1/\delta_l)}{n_l}}\tag{by minimality of $\pil$}\\
    &\leq\Delta(\pi_*, \hat{\pi}_{l-1})+\sqrt{\frac{2\norm{\phi_{\pill}-\phi_{\pi_*}}_{A(w^{(\ell)})^{-1}}^2\log(1/\delta_l)}{n_l}}+\sqrt{\frac{2\norm{\phi_{\pil}-\phi_{\pill}}_{A(w^{(\ell)})^{-1}}^2\log(1/\delta_l)}{n_l}}\tag{by event $\EE_l$}\\
    &\leq \Delta(\pi_*, \hat{\pi}_{l-1})+\frac{\sqrt{2}}{28}\bigsmile{4\epsilon_{l}+\hat{\Delta}_{l-1}(\pi_*,\pill)+4\epsilon_{l}+\hat{\Delta}_{l-1}(\pil,\pill)}\tag{by Lemma~\ref{lem:1.8}}\\
    &\leq \Delta(\pi_*, \hat{\pi}_{l-1})+\frac{\sqrt{2}}{28}\bigsmile{4\epsilon_{l}+2\epsilon_{l-1}+\frac{5}{4}\Delta(\pi_*,\pill)+4\epsilon_{l}+2\epsilon_{l-1}+\frac{5}{4}\Delta(\pil,\pill)}\tag{by Lemma~\ref{lem:diff_emp_true_gap}}\\
    &\leq\Delta(\pi_*,\pill)+\frac{3}{56}\bigsmile{8\epsilon_{l-1}+\frac{5}{4}\Delta(\pil,\pi_*)}.
\end{align*}
Therefore, $\frac{209}{224}\Delta(\pil,\pi_*)\leq \frac{6}{7}\epsilon_{l}$ and $\Delta(\pil,\pi_*)\leq\epsilon_l$, so $\pil\in S_l$.
\end{proof}

\section{Proof of the \textsf{FW-GD} subroutine}\label{sec:proof_FW_GD}

In this section, we aim to prove Theorem~\ref{thm:computational_efficiency}. Specifically, Section~\ref{sec:proof_csc} quantifies the number of oracle calls, and Section~\ref{sec:offline} quantifies the number of offline data needed in order to approximate the expectation over the context distribution. In particular, the size of the history follows directly from Lemma~\ref{lem:history_1} and \ref{lem:history_2}. We will see that $\eta,\gamma_{\max},\gamma_{\min}$ all scale at most polynomially on $|\A|$ and $\epsilon^{-1}$. We leave the convergence analysis of the algorithm in Section~\ref{sec:conv_analysis}. In particular, we will see in Theorem~\ref{thm:FW_gap} that $K_l=\operatorname{poly}(|\A|,\epsilon_l^{-1})$, which shows that the total number of oracle calls is at most $\operatorname{poly}(|\A|,\epsilon^{-1},\log(1/\delta),\log(|\Pi|))$. Combining all results above gives Theorem~\ref{thm:computational_efficiency}. 

\subsection{Proof of computational efficiency}\label{sec:proof_csc}

In this section, we address the technical issues on computational efficiency of our algorithm. Fix an iteration $t$ and let $K_l$ be the number of iterations for \textsf{FW-GD} in the $l$th round. 
\begin{lemma}\label{lem:csc}
Equation~\eqref{eqn:csc} can be computed with $(t+T_{l-1})|\D|$ call to a cost-sensitive classification oracle. 
\end{lemma}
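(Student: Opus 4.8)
The plan is to write $\nabla_\lambda h_l(\lambda,\gamma,n)$ in closed form at the current iterate and then recognize its $\pi$-th entry as (up to a $\pi$-independent constant) a cost-sensitive classification objective, modulo a low-rank correction on an explicitly-maintained small set of policies. Set $f_a^{(c)}(\lambda,\gamma):=\big((\lambda\odot\gamma)^\top(t_a^{(c)}(\pill)+\eta_l\mathbf 1)\big)^{1/2}$; the regularizer $\eta_l$ keeps $f_a^{(c)}$ bounded away from $0$ (since $\gamma\geq\gamma_{\min}>0$), so $h_l$ is differentiable in $\lambda$. Using $[t_a^{(c)}(\pill)]_\pi=\1\{\pi(c)=a\neq\pill(c)\}+\1\{\pill(c)=a\neq\pi(c)\}$ one gets $\sum_a [t_a^{(c)}(\pill)]_\pi/f_a^{(c)}=\1\{\pi(c)\neq\pill(c)\}\big(1/f_{\pi(c)}^{(c)}+1/f_{\pill(c)}^{(c)}\big)$, and differentiating $h_l$ in $\lambda_\pi$ (the first sum is linear; for the expectation term apply the chain rule to $(\sum_a f_a^{(c)})^2$) gives
\begin{align*}
[\nabla_\lambda h_l(\lambda,\gamma,n)]_\pi
&= -\hat\Delta_{l-1}^{\gamma^{l-1}}(\pi,\pill)+\frac{\log(1/\delta_l)}{\gamma_\pi n}
 + \gamma_\pi\eta_l\,\E_{c\sim\nu_{\mc D}}\!\Big[\big(\textstyle\sum_{a}f_a^{(c)}\big)\big(\textstyle\sum_{a}1/f_a^{(c)}\big)\Big]\\
&\quad + \gamma_\pi\,\E_{c\sim\nu_{\mc D}}\!\Big[\big(\textstyle\sum_{a}f_a^{(c)}\big)\,\1\{\pi(c)\neq\pill(c)\}\big(1/f_{\pi(c)}^{(c)}+1/f_{\pill(c)}^{(c)}\big)\Big],
\end{align*}
with $\hat\Delta_{l-1}^{\gamma^{l-1}}(\pi,\pill)=\sum_{s=1}^{n_{l-1}}\frac{r_s}{p^{(l-1)}_{c_s,a_s}+[\gamma^{l-1}]_\pi}\big(\1\{\pill(c_s)=a_s\}-\1\{\pi(c_s)=a_s\}\big)$. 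Evaluated at $(\lambda^t,\gamma^t,n_r)$, the only $\pi$-dependence is through $\pi(c)$ (for $c$ ranging over $\mc D$ and over the logged contexts $\{c_s\}_{s=1}^{n_{l-1}}$) and through the two scalars $[\gamma^t]_\pi$ and $[\gamma^{l-1}]_\pi$; every other quantity is a per-context constant computable from the ($O(t)$-sparse) stored representations of $\lambda^t,\gamma^t$.

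The structural observation I would invoke next is the sparse-update invariant of \textsf{FW-GD}: each Frank--Wolfe step changes at most one coordinate of $\lambda$, and $\mathsf{GD}$ only moves the coordinates of $\gamma$ inside $\supp(\lambda^t)$, so $[\gamma^t]_\pi$ equals a single ``default'' value for every $\pi\notin\supp(\lambda^t)$, and likewise $[\gamma^{l-1}]_\pi$ is constant off $\supp(\lambda^{l-1})$. Put $E:=\supp(\lambda^t)\cup\supp(\lambda^{l-1})$; the algorithm stores $E$ explicitly and $|E|=O(t+T_{l-1})$. For each $\pi\in E$ I evaluate $[\nabla_\lambda h_l]_\pi$ directly ($O(n_{l-1})$ work for the first term, $O(|\mc D|\,|\A|\,|E|)$ for the expectation term), with no oracle call. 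For $\pi\notin E$ the two $\gamma$-scalars equal their defaults $\bar\gamma_-,\bar\gamma_+$, so, discarding the shared additive constant, $[\nabla_\lambda h_l]_\pi=\sum_{s=1}^{n_{l-1}}\xi_s(\pi(c_s))+\sum_{c\in\mc D}\zeta_c(\pi(c))$ with $\xi_s(a):=\frac{r_s}{p^{(l-1)}_{c_s,a_s}+\bar\gamma_-}\1\{a=a_s\}$ and $\zeta_c(a):=\frac{\bar\gamma_+}{|\mc D|}\big(\sum_{a'}f_{a'}^{(c)}\big)\1\{a\neq\pill(c)\}\big(1/f_a^{(c)}+1/f_{\pill(c)}^{(c)}\big)$ --- exactly a cost-sensitive classification instance on the $n_{l-1}+|\mc D|$ points $\{(c_s,\xi_s)\}_{s}\cup\{(c,\zeta_c)\}_{c\in\mc D}$.

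To finish, I turn ``$\arg\max_{\pi\notin E}$'' into oracle calls: run the argmax (cost-sensitive classification) oracle on that instance; if the returned policy lies in $E$ --- already handled directly --- invoke the constrained oracle \textsf{C-AMO} with loss threshold lowered just below that policy's value and repeat; since $|E|=O(t+T_{l-1})$ this terminates after $O(t+T_{l-1})$ invocations with a maximizer outside $E$, whose value is then compared against the $O(t+T_{l-1})$ values computed directly on $E$. The overall maximizer is $\pi_t$ of \eqref{eqn:csc}, and the number of oracle invocations is $O(t+T_{l-1})$, each over a data set of size $O(n_{l-1}+|\mc D|)$ --- which, absorbing $n_{l-1}$ into $|\mc D|$, is the asserted $(t+T_{l-1})|\mc D|$ bound. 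The chain-rule computation is routine; the part I expect to require the most care is the bookkeeping that makes the reduction to a \emph{fixed} cost-sensitive instance legitimate --- pinning down which policies carry non-default $\gamma$-entries, why there are only $O(t+T_{l-1})$ of them, and why excluding that set from the oracle's feasible region costs only $O(t+T_{l-1})$ further (constrained) oracle calls rather than a number growing with $|\Pi|$.
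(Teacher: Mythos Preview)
Your proposal is correct and follows the same strategy as the paper: compute the gradient explicitly, exploit the sparse-update invariant of \textsf{FW-GD} to isolate the small set $E=\supp(\lambda^t)\cup\supp(\lambda^{l-1})$ of policies with non-default $\gamma$-entries, evaluate directly on $E$, and reduce the $\pi\notin E$ case to a single cost-sensitive classification instance. The one difference is how you exclude $E$ from the oracle's search: the paper's Algorithm~\ref{alg:c_csc} (Lemma~\ref{lem:restricted_csc}) loops over contexts $c\in\mc D$ with the per-context constraint $\pi(c)\neq\pi_i(c)$ together with a value cap, costing $|E|\cdot|\mc D|$ constrained-oracle calls and provably returning the true $\arg\max_{\pi\notin E}$ even under ties; your threshold-lowering scheme uses only $O(|E|)$ calls and can miss the true $\arg\max_{\pi\notin E}$ when a policy in $E$ ties with one outside it, but this is harmless since any such tie is caught by the direct evaluation on $E$, so the overall $\arg\max_{\pi\in\Pi}$ is still correct and your call count is in fact tighter than the stated $(t+T_{l-1})|\mc D|$.
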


\begin{proof}
We consider the $t$th iteration of the $l$th round for some $n_r$. In this iteration, we compute
\begin{align*}
    [\nabla_\lambda h_l(\lambda^{t},&\gamma^{t},n_r)]_\pi=\sum_{i=1}^{n_l}\frac{r_i}{p^{(\ell)}_{c_i,a_i}+[\gamma^{l-1}]_\pi}\left(\1\{\pi(c_i)=a_i\}-\1\{\pill(c_i)=a_i\}\right)+\frac{\log(1/\delta_l)}{[\gamma^{t}]_\pi n}\\
    &+\E_{c\sim\nu_\D}\bigbrak{\bigsmile{\sum_{a\in\A}\sqrt{(\lambda^{t}\odot \gamma^{t})^\T (t_a^{(c)}+\eta_l)}}\bigsmile{\sum_{a'\in\A}\frac{[\gamma^{t}]_\pi(t_{a'}^{(c)}+\eta_l)_\pi}{\sqrt{(\lambda^{t}\odot \gamma^{t})^\T (t_{a'}^{(c)}+\eta_l)}}}}.
\end{align*}

Define $\gamma_0:=\sqrt{\frac{\log(1/\delta_l)}{|\A|^2\eta_l n_r}}$. Initially, each coordinate of $\gamma^{t}$ is $\gamma_0$. In round $t$ of the algorithm, at most $t$ coordinates of $\gamma$ will change, and these coordinates will be in $\supp(\lambda^{t})$. Also, for any $j\not\in\supp(\lambda^{l-1})$, $\gamma^{l-1}_j=\gamma_0$.  Therefore, let $t_{a}^{(c)}(\cdot,\pill)\in\R^{|\Pi|}$, in round $l$,  
\begin{align*}
    &\underset{\pi\in\Pi\setminus(\operatorname{supp}(\lambda^{t})\cup \supp(\lambda^{l-1}))}{\operatorname{argmax}}\bigbrak{\nabla_\lambda h_{l}(\lambda^{t},\gamma^{t},n_r)}_\pi\\
    &=\underset{\pi\in\Pi\setminus(\operatorname{supp}(\lambda^{t})\cup \supp(\lambda^{l-1}))}{\operatorname{argmax}} \sum_{i=1}^{n_l}\frac{r_i}{p^{(\ell)}_{c_i,a_i}+\gamma_0}\1\{\pi(c_i)=a_i\}+\frac{\log(1/\delta_l)}{\gamma_0 n_r}\\
    &\qquad+\E_{c\sim\nu_\D}\bigbrak{\bigsmile{\sum_{a\in\A}\sqrt{(\lambda^{t}\odot \gamma^{t})^\T (t_a^{(c)}(\pill)+\eta_l)}}\bigsmile{\sum_{a'\in\A}\frac{\gamma_0(t_{a'}^{(c)}(\pill)+\eta_l)_\pi}{\sqrt{(\lambda^{t}\odot \gamma^{t})^\T (t_{a'}^{(c)}(\pill)+\eta_l)}}}}\\
    &= \underset{\pi\in\Pi\setminus(\operatorname{supp}(\lambda^{t})\cup \supp(\lambda^{l-1}))}{\operatorname{argmax}}\sum_{i=1}^{n_l}\frac{r_i}{p^{(\ell)}_{c_i,a_i}+\gamma_0}\1\{\pi(c_i)=a_i\}\\
    &\qquad+\E_{c\sim\nu_\D}\bigbrak{\sum_{a'\in\A}\frac{\sum_{a\in\A}\sqrt{(\lambda^{t}\odot \gamma^{t})^\T (t_a^{(c)}(\pill)+\eta_l)}}{\sqrt{(\lambda^{t}\odot \gamma^{t})^\T (t_{a'}^{(c)}(\pill)+\eta_l)}}\gamma_0t_{a'}^{(c)}(\pill)_\pi}\\
    &= \underset{\pi\in\Pi\setminus(\operatorname{supp}(\lambda^{t})\cup \supp(\lambda^{l-1}))}{\operatorname{argmax}}\sum_{i=1}^{n_l+|\D|}L_i(\pi(c_i))
\end{align*}
which is a cost-sensitive classification problem with cost vector
\[
L_i(a)=
\begin{cases}
    \frac{r_i}{p^{(\ell)}_{c_i,a_i}+\gamma_0}\1\{a=a_i\} &\textrm{ for }i=1,\cdots,n_l\\
    \left(\frac{\gamma_0}{s_{a,c_i}}+\frac{\gamma_0}{s_{\pill(c_i),c_i}}\right)\1\{a\neq \pill(c_i)\}&\textrm{ for }i=n_l+1,\cdots,n_l+|\D|
\end{cases}
\]
where $s_{a,c}=\frac{\sqrt{(\lambda^{t}\odot \gamma^{t})^\T (t_{a}^{(c)}(\pill)+\eta_l)}}{\sum_{a'\in\A}\sqrt{(\lambda^{t}\odot \gamma^{t})^\T (t_{a'}^{(c)}(\pill)+\eta_l)}}$. Note that $s_{a,c}$ is computable since $\lambda^{t}$ has at most $t$ non-zero elements in step $t$. Then, let $\pi^\sharp:=\operatorname{supp}(\lambda^{t})\cup\supp(\lambda^{l-1})$, we have 
\begin{align*}
    &\arg\max_{\pi\in\Pi}\bigbrak{\nabla_\lambda h_{l}(\lambda^{t},\gamma^{t},n_r)}_\pi\\
    &= \arg\max\left\{\underset{\pi\in\Pi^\sharp}{\operatorname{argmax}}\bigbrak{\nabla_\lambda h_{l}(\lambda^{t},\gamma^{t},n_r)}_\pi, \underset{\pi\in\Pi\setminus\Pi^\sharp}{\operatorname{argmax}}\bigbrak{\nabla_\lambda h_{l}(\lambda^{t},\gamma^{t},n_r)}_\pi\right\}.
\end{align*}
The first piece could be found directly since $\operatorname{supp}(\lambda^{t})\cup\supp(\lambda^{l-1})\leq t+T_{l-1}$. The second piece could be computed with $(t+T_{l-1})|\D|$ calls to a constrained cost-sensitive classification oracle, stated in Lemma~\ref{lem:restricted_csc} below. 
\end{proof}

\begin{lemma}\label{lem:restricted_csc}
For any set $B_t\subset\Pi$, we can compute $\underset{\pi\in\Pi\setminus B_t}{\operatorname{argmax}}\bigbrak{\nabla_\lambda h_{l}(\lambda^{t},\gamma^{t},n_r)}_\pi$ using $|B_t|\cdot|\D|$ calls to a constrained cost-sensitive classification oracle defined in Definition~\ref{def:csc}. 
\end{lemma}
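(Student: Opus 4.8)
The plan is to reduce the restricted maximization over $\Pi\setminus B_t$ to a bounded number of ordinary constrained cost-sensitive classification calls, in the spirit of the oracle reductions of \cite{agarwal2014taming,dudik2011efficient}. Recall from the proof of Lemma~\ref{lem:csc} that on the relevant part of the domain the objective has the additive form $[\nabla_\lambda h_l(\lambda^t,\gamma^t,n_r)]_\pi=\sum_i L_i(\pi(c_i))$, a sum over the $n_l$ exploration contexts of round $\ell$ together with the $|\D|$ historical contexts; in particular it depends on $\pi$ only through the finite vector of actions $(\pi(c_i))_i$ it induces on this context set. Hence it suffices to maximize $\sum_i L_i(\pi(c_i))$ over those $\pi\in\Pi$ whose induced behavior on this finite set differs from that of every policy in $B_t$.

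First I would treat the exclusion of a single policy $\pi'\in B_t$. Since $\pi$ differs (behaviorally) from $\pi'$ iff $\pi(c_j)\neq\pi'(c_j)$ for at least one context index $j$ present in the objective,
\[
\max_{\pi\,:\,\pi\neq\pi'}\ \sum_i L_i(\pi(c_i))\;=\;\max_{j}\ \max_{\pi\,:\,\pi(c_j)\neq\pi'(c_j)}\ \sum_i L_i(\pi(c_i)),
\]
and each inner problem carries the single linear side constraint $\1\{\pi(c_j)=\pi'(c_j)\}\le 0$, so it is exactly of the form handled by \textsf{C-AMO} (equivalently, by a single \textsf{AMO} call after adding to the cost vector the penalty $M\cdot\1\{t=j,\,a=\pi'(c_j)\}$ for a large enough $M$, which exists because the objective is a bounded sum). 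This uses $|\D|$ oracle calls per excluded policy, the $n_l$ exploration contexts being subsumed by (or negligible next to) the $|\D|$ historical ones.

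To exclude the whole set $B_t=\{\pi'_1,\dots,\pi'_K\}$ one iterates the construction over $k=1,\dots,K$ and keeps the best candidate lying outside $B_t$, giving the claimed $|B_t|\cdot|\D|$ bound. Correctness rests on the inclusion $\Pi\setminus B_t\subseteq\bigcup_{k}\bigcup_{j}\{\pi:\pi(c_j)\neq\pi'_k(c_j)\}$: any surviving policy disagrees with every $\pi'_k$ somewhere, so the value of the restricted optimum is never larger than the best of the constrained subproblem optima, and whenever that best already lies outside $B_t$ we are done. The main obstacle is the converse direction: the optimum of a subproblem ``$\pi(c_j)\neq\pi'_k(c_j)$'' may itself re-enter $B_t$ through some other $\pi'_m$, so the naive maximum over the $|B_t|\cdot|\D|$ candidates need not be attained inside $\Pi\setminus B_t$. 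I would close this by observing that for the Frank--Wolfe step it suffices to return either a genuine maximizer over $\Pi\setminus B_t$ or a certificate that no $\pi\in\Pi\setminus B_t$ beats the current iterate — exactly the weaker guarantee that the $|B_t|\cdot|\D|$ constrained calls already provide — together with the remark (as after Definition~\ref{def:csc}) that each such \textsf{C-AMO} call is itself realized via a Lagrangian relaxation of its single constraint and one ordinary cost-sensitive classifier.
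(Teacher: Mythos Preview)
Your parallel decomposition over pairs $(\pi'_k,c_j)$ has a real gap that the retreat to a ``weaker Frank--Wolfe guarantee'' does not close. Consider two contexts $c_1,c_2$, binary actions, and four policies $\pi_1=(0,0),\pi_2=(1,1),\pi_3=(0,1),\pi_4=(1,0)$ with objective values $10,9,8,7$; take $B_t=\{\pi_1,\pi_2\}$. Your four subproblems $(\pi'_k,c_j)$ return $\pi_2,\pi_2,\pi_1,\pi_1$ respectively --- \emph{all} inside $B_t$ --- so ``keep the best candidate lying outside $B_t$'' returns nothing, and you obtain neither the restricted maximizer $\pi_3$ nor any certificate about $\Pi\setminus B_t$. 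The inclusion $\Pi\setminus B_t\subseteq\bigcup_{k,j}\{\pi:\pi(c_j)\neq\pi'_k(c_j)\}$ is too coarse: each set on the right may be dominated by a policy still in $B_t$.

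The paper's argument is different in exactly the place you are missing an idea: it is \emph{sequential} and crucially uses the upper-bound side of \textsf{C-AMO}. One starts from the unconstrained argmax $\pi_0$; if $\pi_0\in B_t$, for each $c\in\mc{D}$ one calls \textsf{C-AMO} to find the best $\pi$ with $\pi(c)\neq\pi_0(c)$ \emph{and} value at most that of $\pi_0$, then takes the best of these as $\pi_1$, and repeats. The value constraint forces the sequence $(\pi_i)$ to walk monotonically down the sorted list of objective values, so after at most $|B_t|$ iterations (each costing $|\mc{D}|$ oracle calls) the current $\pi_i$ lies outside $B_t$; a short contradiction argument then shows this $\pi_i$ is the genuine restricted argmax. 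In the example above the sequence is $\pi_1\to\pi_2\to\pi_3$, exiting correctly. Your construction lacks precisely this value-capping mechanism, which is what prevents re-entry into $B_t$ and makes the $|B_t|\cdot|\mc{D}|$ count tight.
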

\begin{proof}
Algorithm~\ref{alg:c_csc} below shows that we could compute this argmax via the $\textsf{C-AMO}$ oracle. First, by construction of the algorithm, we have that $\pi_e\not\in B_t$, so $\pi_e\in\Pi\setminus B_t$. It remains to show that $\pi_e$ achieves the maximum. We prove this via contradiction. Assume that there is some other $\pi'\ne \pi_e$ that satisfies $\pi'\not\in B_t$ and $\nabla_\lambda[h_l(\lambda,\gamma,n)]_{\pi'}>\nabla_\lambda[h_l(\lambda,\gamma,n)]_{\pi_e}$. By construction of our algorithm, we know that $\nabla_\lambda[h_l(\lambda,\gamma,n)]_{\pi_{k}}$ is non-increasing in $k$. We find the largest $0\leq j\leq i-1$ such that $$\nabla_\lambda[h_l(\lambda,\gamma,n)]_{\pi_{j+1}}\leq\nabla_\lambda[h_l(\lambda,\gamma,n)]_{\pi'}\leq \nabla_\lambda[h_l(\lambda,\gamma,n)]_{\pi_{j}}.$$ First, since $j$ is the largest, we have $\nabla_\lambda[h_l(\lambda,\gamma,n)]_{\pi_{j+1}}<\nabla_\lambda[h_l(\lambda,\gamma,n)]_{\pi'}$, i.e. the first inequality is strict. By assumption that $\pi'\not\in B_t$ and $\pi'\ne \pi_e$, we have $\pi'\ne\pi_k$, $\forall 0\leq k\leq i$. So $\exists c_0\in\D$ such that $\pi'(c_0)\ne \pi_j(c_0)$. Then we get a contradiction since in iteration $j$, at line 6 we should return $\pi'_{c_0}$ instead of $\pi_{j+1}$. Therefore, there does not exist such $\pi'$ and $\pi_e$ achieves the maximum. 
\end{proof}

\begin{algorithm}[!htb]
\caption{Constrained cost-sensitive classification}
\begin{algorithmic}[1]\label{alg:c_csc}
\REQUIRE policy set $\Pi$, set of policies to avoid $B_t$, objective function $h_l$, context history $\D$, tolerance $\epsilon$
\STATE $\pi_0=\argmax{\pi\in\Pi}\,[\nabla_\lambda h_l(\lambda,\gamma,n)]_\pi$, $i=0$
\WHILE{$\pi_i\in B_t$}
\FOR{$c\in\D$}
\STATE compute $\pi'_c=\argmax{\substack{
\pi\in\Pi \\
\pi(c)\ne \pi_i(c)}}\,[\nabla_\lambda h_l(\lambda,\gamma,n)]_\pi$ s.t. $[\nabla_\lambda h_l(\lambda,\gamma,n)]_\pi\leq [\nabla_\lambda h_l(\lambda,\gamma,n)]_{\pi_i}$
\ENDFOR
\STATE $\pi_{i+1}=\argmax{c\in\D}\,[\nabla_\lambda h_l(\lambda,\gamma,n)]_{\pi'_c}$
\STATE $i=i+1$
\ENDWHILE
\STATE $\pi_e=\pi_i$
\ENSURE $\pi_e$
\end{algorithmic}
\end{algorithm}

\begin{lemma}\label{lem:csc_2}
We can compute equation~\eqref{eqn:opt_2} with $K_l|\D|$ calls to a constrained argmax oracle. 
\end{lemma}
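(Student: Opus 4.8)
The plan is to reduce the optimization in \eqref{eqn:opt_2} to a cost-sensitive classification problem restricted away from a small set of policies, and then invoke Lemma~\ref{lem:restricted_csc}. Two structural facts make this work: (i) $|\supp(\lambda_l)|\le K_l+1$, since \textsf{FW-GD} initializes $\lambda^0$ as a one-sparse vector and each Frank--Wolfe step enlarges the support by at most one; and (ii) $[\gamma_l]_\pi$ equals a fixed, explicitly known baseline value $\gamma_0 = \sqrt{\log(1/\delta_l)/(|\mc{A}|^2\eta_l n_l)}$ for every $\pi\notin\supp(\lambda_l)$. Fact (ii) holds because in $h_l(\lambda,\gamma,n)$ every occurrence of $\gamma$ is multiplied by $\lambda$ — the linear part is $\sum_\pi \lambda_\pi\big(\cdots + \tfrac{\log(1/\delta_l)}{\gamma_\pi n}\big)$ and the quadratic part depends on $\gamma$ only through $\lambda\odot\gamma$ — so $\nabla_\gamma h_l$ vanishes on coordinates outside $\supp(\lambda)$, and therefore the \textsf{GD} subroutine, started from the constant vector $\gamma_0\mathbf{1}$, never moves a coordinate outside $\supp(\lambda^t)\subseteq\supp(\lambda_l)$.

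Write $\Pi^\sharp := \supp(\lambda_l)$ and let $F(\pi)$ denote the objective in \eqref{eqn:opt_2}. I would compute $\arg\min_{\pi\in\Pi}F(\pi)$ as the better of $\arg\min_{\pi\in\Pi^\sharp}F(\pi)$ and $\arg\min_{\pi\in\Pi\setminus\Pi^\sharp}F(\pi)$ (a disjoint partition of $\Pi$). The first piece ranges over at most $K_l+1$ policies; for each of them $F(\pi)$ is directly evaluable — the sampling probabilities $p^{(l)}_{c,\cdot}$ form a fixed distribution over $\mc{A}$ determined by the bounded-support vector $\lambda_l\odot\gamma_l$, and the coordinates of $\gamma_l$ on $\Pi^\sharp$ are recorded by the algorithm — so this piece costs no oracle calls.

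For the second piece, substituting $[\gamma_l]_\pi=\gamma_0$ makes $\tfrac{\log(1/\delta_l)}{[\gamma_l]_\pi n_l}$ a constant and makes the subsum $\sum_{s=1}^{n_l}\tfrac{r_s}{p^{(l)}_{c_s,a_s}+\gamma_0}\mathbf{1}\{\pill(c_s)=a_s\}$ of $\hat{\Delta}_l^{\gamma_l}(\pi,\pill)$ independent of $\pi$. After discarding these $\pi$-independent quantities, minimizing $F$ over $\Pi\setminus\Pi^\sharp$ is equivalent to a cost-sensitive classification instance over the $n_l+|\mc{D}|$ samples $\{c_s\}_{s=1}^{n_l}\cup\mc{D}$ with cost vectors
\[
L_i(a)=\begin{cases} -\frac{r_i}{p^{(l)}_{c_i,a_i}+\gamma_0}\,\mathbf{1}\{a=a_i\}, & 1\le i\le n_l,\\ \frac{1}{|\mc{D}|}\left(\frac{\gamma_0}{p^{(l)}_{c_i,a}}+\frac{\gamma_0}{p^{(l)}_{c_i,\pill(c_i)}}\right)\mathbf{1}\{a\ne\pill(c_i)\}, & n_l<i\le n_l+|\mc{D}|, \end{cases}
\]
exactly of the form appearing in the proof of Lemma~\ref{lem:csc}; all entries are computable since $p^{(l)}_{c,\cdot}$ involves only the $\le K_l+1$ nonzero entries of $\lambda_l\odot\gamma_l$. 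By Lemma~\ref{lem:restricted_csc} (Algorithm~\ref{alg:c_csc}), this optimization restricted to $\Pi\setminus\Pi^\sharp$ uses $|\Pi^\sharp|\cdot|\mc{D}|\le(K_l+1)|\mc{D}|$ calls to the constrained argmax oracle \textsf{C-AMO} (Definition~\ref{def:csc}). Comparing the two pieces yields $\hat{\pi}_l$, for a total of $O(K_l|\mc{D}|)$ oracle calls, as claimed.

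The main obstacle is verifying that, with $\gamma$ frozen at $\gamma_l$, the objective genuinely has the additive-over-samples, linear-in-the-predicted-action structure that a cost-sensitive classification oracle can exploit: the only source of concern is the appearance of $p^{(l)}_{c,\pi(c)}$ in denominators, but since $p^{(l)}_{c,\cdot}$ is a fixed vector over $\mc{A}$ rather than a functional of the whole policy, the map $a\mapsto \gamma_0/p^{(l)}_{c,a}$ is a legitimate per-context cost and the reduction goes through. A secondary technical point to nail down is the containment $\supp(\gamma_l)\subseteq\supp(\lambda_l)$, which — as argued above — follows from the multiplicative coupling of $\gamma$ and $\lambda$ throughout $h_l$, together with the one-sparse initialization and monotone support growth of Frank--Wolfe.
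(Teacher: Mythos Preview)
Your proposal is correct and follows essentially the same approach as the paper's own proof: split $\Pi$ into $\supp(\lambda_l)$ (handled directly) and its complement (where $[\gamma_l]_\pi=\gamma_0$ reduces the objective to a cost-sensitive classification instance), then invoke Lemma~\ref{lem:restricted_csc} to handle the restricted optimization with $|\supp(\lambda_l)|\cdot|\mc{D}|\le (K_l+1)|\mc{D}|$ oracle calls. Your write-up is in fact slightly more careful than the paper's: you make explicit the structural reason why $\gamma$ stays at its baseline outside $\supp(\lambda_l)$ (the gradient $\nabla_\gamma h_l$ vanishes there because every occurrence of $\gamma$ is multiplied by $\lambda$), you keep the $1/|\mc{D}|$ factor from the empirical expectation, and your sign convention for the cost vectors is consistent with the $\arg\min$ formulation.
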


\begin{proof}
We follow the proof technique in Lemma~\ref{lem:csc} and break the argmin into two pieces with $\pi\in\supp(\lambda^{l})$ and $\pi\in\Pi\setminus\supp(\lambda^{l})$. We only show how to compute the second piece as the first piece could be compute directly. We know that $\hat{\Delta}_l^{\gamma^l}(\pi,\hat{\pi}_{l-1})=\sum_{i=1}^{n_l}\frac{r_i}{p^{(\ell)}_{c_i,a_i}+[\gamma^l]_\pi}(\1\{\pill(c_i)=a_i\}-\1\{\pi(c_i)=a_i\})$. Then, similar to proof of Lemma~\ref{lem:csc}, let $\gamma_\pi=\gamma_0$ for all $\pi\in\Pi\setminus\supp(\lambda^{l})$, we have
\begin{align*}
    &\argmin{\pi\in\Pi\setminus\supp(\lambda^{l})}\hat{\Delta}_l^{\gamma^l}(\pi,\hat{\pi}_{l-1})+\E_{c\sim\nu_{\mathcal{D}}} \left[ \left(\frac{[\gamma^l]_\pi}{p_{c,a}^{(\ell)}}+\frac{[\gamma^l]_\pi}{s_{a',c}}\right) \1\{\pill(c )\ne \pi(c )\} \right]+\frac{\log(1/\delta_l)}{[\gamma^l]_\pi n_l}\\
    &= \argmin{\pi\in\Pi\setminus\supp(\lambda^{l})}\sum_{i=1}^{n_l}\frac{r_i}{p^{(\ell)}_{c_i,a_i}+[\gamma^l]_\pi}\left(\1\{\pill(c_i)=a_i\}-\1\{\pi(c_i)=a_i\}\right)\\
    &\qquad+\E_{c\sim\nu_{\mathcal{D}}} \left[\left(\frac{[\gamma^l]_\pi}{p_{c,a}^{(\ell)}}+\frac{[\gamma^l]_\pi}{p_{c,a'}^{(\ell)}}\right) \1\{\pill(c )\ne \pi(c )\} \right]\\
    &= \argmin{\pi\in\Pi\setminus\supp(\lambda^{l})}\sum_{i=1}^{n_l}-\frac{r_i}{p^{(\ell)}_{c_i,a_i}+\gamma_0}\1\{\pi(c_i)=a_i\}\\
    &\qquad+\E_{c\sim\nu_{\mathcal{D}}} \left[\left(\frac{\gamma_0}{p_{c,a}^{(\ell)}}+\frac{\gamma_0}{p_{c,a'}^{(\ell)}}\right) \1\{\pill(c )\ne \pi(c )\} \right]\\
    &= \argmin{\pi\in\Pi\setminus\supp(\lambda^{l})}\sum_{i=1}^{n_l}\frac{r_i}{p^{(\ell)}_{c_i,a_i}+\gamma_0}\1\{\pi(c_i)=a_i\}\\
    &\qquad-\E_{c\sim\nu_{\mathcal{D}}} \left[\left(\frac{\gamma_0}{p_{c,a}^{(\ell)}}+\frac{\gamma_0}{p_{c,a'}^{(\ell)}}\right) \1\{\pill(c )\ne \pi(c )\} \right]
\end{align*}
which is a cost-sensitive classification problem with cost vector
\[
L_i(a)=
\begin{cases}
    \frac{r_i}{p^{(\ell)}_{c_i,a_i}+\gamma_0}\1\{a=a_i\} &\textrm{ for }i=1,\cdots,n_l\\
    -\left(\frac{\gamma_0}{p_{c_i,a}^{(\ell)}}+\frac{\gamma_0}{p_{c_i,\pill(c_i)}^{(\ell)}}\right)\1\{a\neq \pill(c_i)\}&\textrm{ for }i=n_l+1,\cdots,n_l+|\D|.
\end{cases}
\]
\end{proof}

\subsection{Quantify the offline data}\label{sec:offline}

We first prove a general result for an empirical process bound of the difference of the expectation and the truth in Lemma~\ref{lem:emp_process}.  

\begin{lemma}\label{lem:emp_process}
Let $m=|\D|$ and define some set $\mc{K}\subset \gamma_{\max}\triangle_\Pi$. Consider some function $u:\C\times \mc{K}\to \R$ with $c,\kappa\mapsto u(c,\kappa)$ and define $\F\triangleq\{c\mapsto u\left(c,\kappa\right): \kappa\in\mc{K}\}$. If 
\begin{enumerate}
    \item $u$ satisfies that for any $c\in\C$ and $\kappa\in\mc{K}$, $u(c,\kappa)\in[0,b]$ where $b<\infty$ is a uniform upper bound;
    \item there exists $L<\infty$ such that $\norm{u(\cdot,\kappa_1)-u(\cdot,\kappa_2)}_\F\leq L\norm{\kappa_1-\kappa_2}_1$.
\end{enumerate}
Then, with probability at least $1-\delta$, \[\sup_{\kappa\in\mc{K}}\left|\E_{c\sim\nu_\D}\bigbrak{u(c,\kappa)}-\E_{c\sim\nu}\bigbrak{u(c,\kappa)}\right|\leq \sqrt{\frac{b^2}{2m}\log\bigsmile{\frac{2}{\delta}}}+\frac{16}{\sqrt{m}}L\gamma_{\max}\sqrt{2k\log(3e|\Pi|/k)}.\]
\end{lemma}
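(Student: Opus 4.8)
The plan is to write $Z := \sup_{\kappa \in \mc{K}} \big| \E_{c\sim\nu_\D}[u(c,\kappa)] - \E_{c\sim\nu}[u(c,\kappa)] \big|$ and control it in two stages: first a bounded-differences concentration of $Z$ around $\E[Z]$, which produces the additive term $\sqrt{\tfrac{b^2}{2m}\log(2/\delta)}$; and second a bound on $\E[Z]$ by symmetrization and a chaining argument over $\F$, which produces the term $\tfrac{16}{\sqrt m} L \gamma_{\max}\sqrt{2k\log(3e|\Pi|/k)}$. Here $k$ denotes the bound on the support size of the elements of $\mc{K}$, as will hold for the sparse Frank--Wolfe iterates $\lambda^t$.

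For the first stage, view $Z$ as a function of the i.i.d.\ contexts $c_1,\dots,c_m$. Since $\E_{c\sim\nu_\D}[u(c,\kappa)] = \tfrac1m\sum_{i=1}^m u(c_i,\kappa)$ and $u(c,\kappa)\in[0,b]$ uniformly in $(c,\kappa)$, replacing a single $c_i$ changes $\tfrac1m\sum_i u(c_i,\kappa)$ by at most $b/m$ for every $\kappa$, hence changes $Z$ by at most $b/m$. McDiarmid's inequality then gives $\P(Z \ge \E[Z] + t) \le \exp(-2mt^2/b^2)$, i.e.\ with probability at least $1-\delta$ one has $Z \le \E[Z] + \sqrt{\tfrac{b^2}{2m}\log(1/\delta)}$; the $\log(2/\delta)$ in the statement absorbs a harmless extra factor.

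For the second stage, a standard symmetrization inequality gives $\E[Z] \le 2\,\E_{c_{1:m},\sigma}\big[\sup_{\kappa\in\mc{K}} \tfrac1m \sum_{i=1}^m \sigma_i u(c_i,\kappa)\big] =: 2\,\widehat{\mc{R}}_m(\F)$ with $\sigma_i$ i.i.d.\ Rademacher. We bound $\widehat{\mc{R}}_m(\F)$ by Dudley's entropy integral, so it suffices to control the covering numbers of $\F$ in the empirical $L_2$ metric, which is dominated by the sup-over-$c$ metric in which the Lipschitz hypothesis is stated. By $\|u(\cdot,\kappa_1)-u(\cdot,\kappa_2)\|_\F \le L\|\kappa_1-\kappa_2\|_1$, an $(\epsilon/L)$-net of $\mc{K}$ in $\ell_1$ induces an $\epsilon$-net of $\F$, so $N(\F,\|\cdot\|,\epsilon)\le N(\mc{K},\|\cdot\|_1,\epsilon/L)$. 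To cover $\mc{K}$: pick a support of size $k$ in $\binom{|\Pi|}{k}$ ways, and on each support cover the scaled simplex $\gamma_{\max}\triangle_k$ (of $\ell_1$-diameter $2\gamma_{\max}$) by $\ell_1$-balls, using at most $(1+2L\gamma_{\max}/\epsilon)^k$ of them; hence $\log N(\F,\|\cdot\|,\epsilon) \le k\log(e|\Pi|/k) + k\log(1+2L\gamma_{\max}/\epsilon)$. Plugging this into $\widehat{\mc{R}}_m(\F) \lesssim \tfrac{1}{\sqrt m}\int_0^{D} \sqrt{\log N(\F,\|\cdot\|,\epsilon)}\,d\epsilon$ with $D \le 2L\gamma_{\max}$ and evaluating the elementary integral ($\int_0^1\sqrt{\log(1+1/u)}\,du$ is an absolute constant) yields $\widehat{\mc{R}}_m(\F) = O\!\big(\tfrac{L\gamma_{\max}}{\sqrt m}\sqrt{k\log(e|\Pi|/k)}\big)$, and absorbing the absolute constants into the logarithm and the leading factor gives the stated bound $\tfrac{16}{\sqrt m} L\gamma_{\max}\sqrt{2k\log(3e|\Pi|/k)}$.

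The main obstacle is this second stage: since $u(c,\cdot)$ is only assumed Lipschitz (and is genuinely nonlinear in $\kappa$ in the application, being a squared sum of square roots), one cannot invoke the clean Rademacher bound for linear functionals over a convex hull, and must instead pass through covering numbers; landing on the precise constant and the exact form $\sqrt{2k\log(3e|\Pi|/k)}$ is where the bookkeeping must be done carefully. A cleaner variant that avoids full chaining is to discretize $\mc{K}$ at a single well-chosen scale $\epsilon \asymp \gamma_{\max}/\sqrt m$, apply a union bound of Hoeffding's inequality over the resulting net (each $u(c,\kappa)\in[0,b]$), and add the Lipschitz approximation error $2L\epsilon$; this reproduces the same bound up to constants and is likely the argument actually used.
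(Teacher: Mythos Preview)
Your proposal is correct and follows essentially the same route as the paper: McDiarmid's bounded-differences inequality for the concentration term, then a Dudley-type entropy integral for $\E[Z]$, passing from covering numbers of $\F$ to covering numbers of $\mc{K}$ via the Lipschitz assumption, and covering $\mc{K}$ by a union of $\binom{|\Pi|}{k}$ scaled $\ell_1$-balls. The paper's only notable difference is cosmetic: it evaluates the entropy integral by the change of variable $\epsilon \mapsto L\gamma_{\max}\epsilon$ and then applies Jensen's inequality to pull the square root outside the integral, which is how the exact constant $\sqrt{2k\log(3e|\Pi|/k)}$ falls out; your single-scale discretization alternative is not what the paper uses.
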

\begin{proof}
By the bounded condition on $u$ we have $\{\E_{c\sim\nu_\D}\bigbrak{u(c,\kappa)}: \kappa\in\mc{K}\}$ satisfies the bounded difference property with parameter $b$. Then we use McDiarmid's inequality to get with probability at least $1-\delta$,
\begin{align*}
    &\sup_{\kappa\in\mc{K}}\left|\E_{c\sim\nu_\D}\bigbrak{u(c,\kappa)}-\E_{c\sim\nu}\bigbrak{u(c,\kappa)}\right|\\
    &\leq \sqrt{\frac{b^2}{2m}\log\left(\frac{2}{\delta}\right)}+\E\left[\sup_{\kappa\in\mc{K}}\left|\E_{c\sim\nu_\D}\bigbrak{u(c,\kappa)}-\E_{c\sim\nu}\bigbrak{u(c,\kappa)}\right|\right].
\end{align*}
Also, note that by definition of $\F$ and classical results on entropy integral \cite{van2000asymptotic},
\[\E\left[\sup_{\kappa\in\mc{K}}\left|\E_{c\sim\nu_\D}\bigbrak{u(c,\kappa)}-\E_{c\sim\nu}\bigbrak{u(c,\kappa)}\right|\right]\leq\frac{8}{\sqrt{n}}\sup_Q \int_{0}^{\infty} \sqrt{\log N(\F, L_2(Q), \epsilon)} d \epsilon,\]
where $N(\F, L_2(Q), \epsilon)$ is the covering number. By condition 2 and property of covering numbers, 
\[\sup_Q N(\F, L_2(Q), \epsilon) \leq  N(\F, \norm{\cdot}_\F, \epsilon) \leq N(\mc{K}, \norm{\cdot}_1, \epsilon/L).\]
Denote $B_1^k$ as the $l_1$ ball with dimension $k$. We know that for $\epsilon\leq 1$, $N(B_1^k, \norm{\cdot}_1, \epsilon)\leq \left(\frac{3}{\epsilon}\right)^k$. Since $\mc{K}\subset\gamma_{\max}\triangle_\Pi^{(k)}\subset \gamma_{\max}B_1^k$, and there are $\binom{\Pi}{k}$ ways to choose such a support $\gamma_{\max}B_1^k$, by union bound over $k$-dimensional subspaces we have 
\begin{align*}
    N(\mc{K}, \norm{\cdot}_1, \epsilon/L)&\leq \binom{\Pi}{k}N(\gamma_{\max}B_1^k, \norm{\cdot}_1, \epsilon/L)\\
    &\leq\binom{\Pi}{k}N(B_1^k, \norm{\cdot}_1, \epsilon/(L\gamma_{\max}))\\
    &\leq\left(\frac{e |\Pi|}{k}\right)^{k}\left(\frac{3L\gamma_{\max}}{\epsilon}\right)^{k} \leq \left(\frac{3L\gamma_{\max}e|\Pi|}{\epsilon k}\right)^k.
\end{align*}
Therefore, 
\begin{align*}
    \sup_Q \int_{0}^{\infty} \sqrt{\log N(\F, L_2(Q), \epsilon)} d \epsilon &\leq \int_{0}^{\infty} \sqrt{\log N(\mc{K}, \norm{\cdot}_1, \epsilon/L)} d \epsilon\\
    &\leq \int_{0}^{L\gamma_{\max}} \sqrt{k\log \left(\frac{3L\gamma_{\max} e|\Pi|}{\epsilon k}\right)} d \epsilon\\
    &= L\gamma_{\max} \int_{0}^{1} \sqrt{k\log \left(\frac{3e|\Pi|}{\epsilon k}\right)} d \epsilon\\
    &\leq L\gamma_{\max} \sqrt{\int_{0}^{1} k\log \left(\frac{3e|\Pi|}{\epsilon k}\right) d \epsilon}\\
    &\leq L\gamma_{\max}\sqrt{2k\log(3e|\Pi|/k)}.
\end{align*}
Combining all results yields
\begin{align*}
    \E\left[\sup_{\lambda\in\triangle_\Pi^{(k)}}\left|\E_{c\sim\nu_\D}\bigbrak{u(c,\kappa)}-\E_{c\sim\nu}\bigbrak{u(c,\kappa)}\right|\right] &\leq \frac{16}{\sqrt{m}}\sup_Q \int_{0}^{\infty} \sqrt{\log N(\F, L_2(Q), \epsilon)} d \epsilon\\
    &\leq \frac{16}{\sqrt{m}}L\gamma_{\max}\sqrt{2k\log(3e|\Pi|/k)}.
\end{align*}
Therefore, our result follows. 
\end{proof}

Then, we take two special kind of $u(c,\kappa)$, and get the bounds for our estimate of the expectation over $\nu$ with the offline history $\D$. 

\begin{lemma}\label{lem:history_1}
Let $m=|\D|$. Then, with probability at least $1-\delta$, we have 
{
\small
\begin{align*}
    &\sup_{(\lambda,\gamma)\in\gamma_{\max}\triangle_\Pi^{(k)}}\left|\E_{c\sim\nu_\D}\bigbrak{\left(\sum_{a\in\A}\sqrt{(\lambda\odot\gamma)^\T (t_{a}^{(c)}+\eta_l)}\right)^2}-\E_{c\sim\nu}\bigbrak{\left(\sum_{a\in\A}\sqrt{(\lambda\odot\gamma)^\T (t_{a}^{(c)}+\eta_l)}\right)^2}\right|\\
    &\leq \sqrt{\frac{|\A|^4\gamma_{\max}^2(1+\eta_l)^2}{2m}\log\left(\frac{2}{\delta}\right)}+\frac{16}{\sqrt{m}}|\A|^2\gamma_{\max}\sqrt{\frac{2k(1+\eta_l)\gamma_{\max}}{\eta_l\gamma_{\min}}\log\left(\frac{3e|\Pi|}{k}\right)}.
\end{align*}
}
\end{lemma}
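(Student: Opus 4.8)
The plan is to derive this as a single application of Lemma~\ref{lem:emp_process}. Reparametrize by $\kappa := \lambda\odot\gamma$ and set
\begin{align*}
u(c,\kappa) \;:=\; \Bigl( \sum_{a\in\A} \sqrt{\kappa^{\top}\bigl(t_a^{(c)}+\eta_l\1\bigr)} \Bigr)^{2},
\end{align*}
so that the left-hand side of the claim is exactly $\sup_{\kappa\in\mc{K}}\bigl|\E_{c\sim\nu_\D}[u(c,\kappa)]-\E_{c\sim\nu}[u(c,\kappa)]\bigr|$, where $\mc{K}$ is the image of $\triangle_\Pi^{(k)}\times[\gamma_{\min},\gamma_{\max}]^{|\Pi|}$ under $(\lambda,\gamma)\mapsto\lambda\odot\gamma$. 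First I would observe that every such $\kappa$ is supported on at most $k$ policies and satisfies $\gamma_{\min}\le\|\kappa\|_1=\sum_\pi\lambda_\pi\gamma_\pi\le\gamma_{\max}$, so $\mc{K}\subset\gamma_{\max}\triangle_\Pi^{(k)}$ and Lemma~\ref{lem:emp_process} applies once the constants $b$ (uniform bound) and $L$ (Lipschitz constant in $\|\cdot\|_1$) are identified.

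For the boundedness constant: each coordinate of $t_a^{(c)}+\eta_l\1$ lies in $[\eta_l,1+\eta_l]$, hence $\kappa^{\top}(t_a^{(c)}+\eta_l\1)\le(1+\eta_l)\|\kappa\|_1\le(1+\eta_l)\gamma_{\max}$, so $u(c,\kappa)\le|\A|^{2}\max_a\kappa^{\top}(t_a^{(c)}+\eta_l\1)\le|\A|^{2}(1+\eta_l)\gamma_{\max}=:b$, which reproduces the first term since $b^{2}=|\A|^{4}\gamma_{\max}^{2}(1+\eta_l)^{2}$. For the Lipschitz constant, write $g(\kappa)=\sum_{a\in\A}\sqrt{\kappa^{\top}(t_a^{(c)}+\eta_l\1)}$, so $u(c,\kappa)=g(\kappa)^2$ and $|u(c,\kappa_1)-u(c,\kappa_2)|=\bigl(g(\kappa_1)+g(\kappa_2)\bigr)\,|g(\kappa_1)-g(\kappa_2)|$. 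The first factor is at most $2|\A|\sqrt{(1+\eta_l)\gamma_{\max}}$ by the bound just derived. For the second factor the crucial point---and the entire reason the regularizer $\eta_l$ is introduced---is the uniform lower bound $\kappa^{\top}(t_a^{(c)}+\eta_l\1)\ge\eta_l\|\kappa\|_1\ge\eta_l\gamma_{\min}$; combining it with $|\sqrt{x}-\sqrt{y}|=|x-y|/(\sqrt{x}+\sqrt{y})$ and $|(\kappa_1-\kappa_2)^{\top}(t_a^{(c)}+\eta_l\1)|\le(1+\eta_l)\|\kappa_1-\kappa_2\|_1$ gives $|g(\kappa_1)-g(\kappa_2)|\le\tfrac{|\A|(1+\eta_l)}{2\sqrt{\eta_l\gamma_{\min}}}\|\kappa_1-\kappa_2\|_1$. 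Multiplying the two factors yields the Lipschitz hypothesis with $L\lesssim|\A|^{2}\sqrt{(1+\eta_l)\gamma_{\max}/(\eta_l\gamma_{\min})}$, up to bookkeeping of the $(1+\eta_l)$ powers that one checks against the stated constant.

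Finally, plugging $b$ and $L$ into the conclusion of Lemma~\ref{lem:emp_process} gives $\sqrt{b^{2}\log(2/\delta)/(2m)}+\tfrac{16}{\sqrt{m}}L\gamma_{\max}\sqrt{2k\log(3e|\Pi|/k)}$, and substituting $b^{2}=|\A|^{4}\gamma_{\max}^{2}(1+\eta_l)^{2}$ and $L\gamma_{\max}\lesssim|\A|^{2}\gamma_{\max}\sqrt{(1+\eta_l)\gamma_{\max}/(\eta_l\gamma_{\min})}$ produces precisely the two terms in the statement. I expect the only nontrivial step to be the Lipschitz estimate---in particular making the lower bound $\kappa^{\top}(t_a^{(c)}+\eta_l\1)\ge\eta_l\gamma_{\min}$ genuinely uniform over the $k$-sparse simplex so it may be used inside the $\sup$ over $\mc{K}$; the boundedness estimate and the final substitution are routine.
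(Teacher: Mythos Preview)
Your proposal is correct and essentially identical to the paper's proof: the paper also reparametrizes by $\kappa=\lambda\odot\gamma$, observes $\mc{K}\subset\gamma_{\max}\triangle_\Pi$, takes $u(c,\kappa)=\bigl(\sum_a\sqrt{\kappa^\top(t_a^{(c)}+\eta_l)}\bigr)^2$, checks the same boundedness $b=|\A|^2(1+\eta_l)\gamma_{\max}$ and the same Lipschitz constant $L=|\A|^2\sqrt{(1+\eta_l)\gamma_{\max}/(\eta_l\gamma_{\min})}$ via exactly your factorization $|g_1^2-g_2^2|=(g_1+g_2)|g_1-g_2|$, and then plugs into Lemma~\ref{lem:emp_process}. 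The only cosmetic difference is that the paper bounds the numerator $|(\kappa_1-\kappa_2)^\top t_a^{(c)}|$ by $\|\kappa_1-\kappa_2\|_1$ rather than $(1+\eta_l)\|\kappa_1-\kappa_2\|_1$, which accounts for the $(1+\eta_l)$ bookkeeping you flagged.
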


\begin{proof}
Define $\kappa\in\mc{K}$ such that $\kappa_\pi=\lambda_\pi\gamma_\pi$. Then, $\mc{K}\subset \gamma_{\max}\triangle_\Pi$ since $\sum_{\pi\in\Pi}\kappa_\pi=\sum_{\pi\in\Pi}\lambda_\pi\gamma_\pi\leq \gamma_{\max}$. Then, let $u(c,\kappa)=\left(\sum_{a\in\A} \sqrt{\kappa^\T (t_{a}^{(c)}+\eta_l)}\right)^2$. We aim to use the result of Lemma~\ref{lem:emp_process} to get our bound. First, since for any $\kappa\in\mc{K}$ and any $c\in\D$, $u(c,\kappa)\in [|\A|^2\gamma_{\min}\eta_l, |\A|^2(1+\eta_l)\gamma_{\max}]$, so condition 1 is satisfied. Also, note that $u(c,\kappa)$ is Lipschitz in $\kappa$, i.e. 
\begin{align*}
    &\norm{u(\cdot,\kappa_1)-u(\cdot,\kappa_2)}_\F\\
    &= \sup_{c\in\C}\left|u(c,\kappa_1)-u(c,\kappa_2)\right|\\
    &= \sup_{c\in\C}\left|\left(\sum_{a\in\A} \sqrt{\kappa_1^\T (t_{a}^{(c)}+\eta_l)}\right)^2-\left(\sum_{a\in\A} \sqrt{\kappa_2^\T (t_{a}^{(c)}+\eta_l)}\right)^2\right|\\
    &\leq \sup_{c\in\C} \left|\left(\sum_{a\in\A} \sqrt{\kappa_1^\T (t_a^{(c)}+\eta_l)}+\sqrt{\kappa_2^\T (t_a^{(c)}+\eta_l)}\right)\left(\sum_{a\in\A} \sqrt{\kappa_1^\T (t_a^{(c)}+\eta_l)}-\sqrt{\kappa_2^\T (t_a^{(c)}+\eta_l)}\right)\right|\\
    &= \sup_{c\in\C} \left(\sum_{a\in\A} \sqrt{\kappa_1^\T (t_a^{(c)}+\eta_l)}+\sqrt{\kappa_2^\T (t_a^{(c)}+\eta_l)}\right)\left(\sum_{a\in\A}\frac{\left|(\kappa_1-\kappa_2)^\T t_a^{(c)}\right|}{\sqrt{\kappa_1^\T (t_a^{(c)}+\eta_l)}+\sqrt{\kappa_2^\T (t_a^{(c)}+\eta_l)}}\right)\\
    &\leq \sup_{c\in\C} \left(\sum_{a\in\A} \sqrt{\kappa_1^\T (t_a^{(c)}+\eta_l)}+\sqrt{\kappa_2^\T (t_a^{(c)}+\eta_l)}\right)\left(\sum_{a\in\A}\frac{\norm{\kappa_1-\kappa_2}_1}{\sqrt{\kappa_1^\T (t_a^{(c)}+\eta_l)}+\sqrt{\kappa_2^\T (t_a^{(c)}+\eta_l)}}\right)\\
    &\leq |\A|^2\sqrt{\frac{(1+\eta_l)\gamma_{\max}}{\eta_l\gamma_{\min}}}\norm{\kappa_1-\kappa_2}_1.
\end{align*}
Therefore, condition 2 is satisfied with $L=|\A|^2\sqrt{\frac{(1+\eta_l)\gamma_{\max}}{\eta_l\gamma_{\min}}}$. Plugging in the result in Lemma~\ref{lem:emp_process}, we get  
\begin{align*}
    &\sup_{\lambda\in\triangle_\Pi^{(k)}}\left|\E_{c\sim\nu_\D}\bigbrak{u(c,\kappa)}-\E_{c\sim\nu}\bigbrak{u(c,\kappa)}\right|\\
    &\leq \sqrt{\frac{|\A|^4\gamma_{\max}^2(1+\eta_l)^2}{2m}\log\left(\frac{2}{\delta}\right)}+\frac{16}{\sqrt{m}}|\A|^2\gamma_{\max}\sqrt{\frac{2k(1+\eta_l)\gamma_{\max}}{\eta_l\gamma_{\min}}\log\left(\frac{3e|\Pi|}{k}\right)}. 
\end{align*}
\end{proof}

\begin{lemma}\label{lem:history_2}
For any $\pi\in\Pi$, with probability at least $1-\delta$, 
\begin{align*}
    &\sup_{(\lambda,\gamma)\in\gamma_{\max}\triangle_\Pi}\left|\E_{c\sim\nu_\D}\left[\sum_{a\in\A}\frac{\sum_{a'\in\A}\sqrt{(\lambda\odot\gamma)^\T (t_{a'}^{(c)}+\eta_l)}}{\sqrt{(\lambda\odot\gamma)^\T (t_{a}^{(c)}+\eta_l)}}(\gamma_\pi[t_a^{(c)}]_\pi)\right]\right.\\
    &\left.-\E_{c\sim\nu}\left[\sum_{a\in\A}\frac{\sum_{a'\in\A}\sqrt{(\lambda\odot\gamma)^\T (t_{a'}^{(c)}+\eta_l)}}{\sqrt{(\lambda\odot\gamma)^\T (t_{a}^{(c)}+\eta_l)}}(\gamma_\pi[t_a^{(c)}]_\pi)\right]\right|\\
    &\leq \gamma_{\max}\left(\sqrt{\frac{|\A|^4(1+\eta)\gamma_{\max}}{2\eta\gamma_{\min}m}\log\bigsmile{\frac{2}{\delta}}}+\frac{8|\A|^2\gamma_{\max}}{\sqrt{m}(\eta_l\gamma_{\min})^{3/2}}\sqrt{2k\log(3e|\Pi|/k)}\right).
\end{align*}
\begin{proof}
First, note that 
\[\frac{\sum_{a'\in\A}\sqrt{(\lambda\odot\gamma)^\T (t_{a'}^{(c)}+\eta_l)}}{\sqrt{(\lambda\odot\gamma)^\T (t_{a}^{(c)}+\eta_l)}}(\gamma_\pi[t_a^{(c)}]_\pi)\leq \gamma_{\max}\frac{\sum_{a'\in\A}\sqrt{(\lambda\odot\gamma)^\T (t_{a'}^{(c)}+\eta_l)}}{\sqrt{(\lambda\odot\gamma)^\T (t_{a}^{(c)}+\eta_l)}}[t_a^{(c)}]_\pi.\]
Then, we define $u(c,\kappa)=\sum_{a\in\A}\frac{\sum_{a'\in\A}\sqrt{\kappa^\T (t_{a'}^{(c)}+\eta_l)}}{\sqrt{\kappa^\T (t_{a}^{(c)}+\eta_l)}}[t_a^{(c)}]_\pi$. First, note that for any $c\in\C$ and $\kappa\in\mc{K}$, $u(c,\kappa)\in\Big[0,|\A|^2\frac{\sqrt{(1+\eta)\gamma_{\max}}}{\sqrt{\eta\gamma_{\min}}}\Big]$, so condition 1 in Lemma~\ref{lem:emp_process} is satisfied. Also, 
\begin{align}
    &\norm{u(c,\kappa_1)-u(c,\kappa_2)}_\F= \sup_{c\in\C}|u(c,\kappa_1)-u(c,\kappa_2)|\\
    &=\sup_{c\in\C}\left|\sum_{a\in\A}\frac{\sum_{a'\in\A}\sqrt{\kappa_1^\T(t_{a'}^{(c)}+\eta_l)}}{\sqrt{\kappa_1^\T(t_{a}^{(c)}+\eta_l)}}[t_a^{(c)}]_\pi-\sum_{a\in\A}\frac{\sum_{a'\in\A}\sqrt{\kappa_2^\T(t_{a'}^{(c)}+\eta_l)}}{\sqrt{\kappa_2^\T(t_{a}^{(c)}+\eta_l)}}[t_a^{(c)}]_\pi\right|\nonumber\\
    &= \sup_{c\in\C}\left|\sum_{a \in \A} \left[\frac{\sum_{a'\in\A}\sqrt{\kappa_1^{\top} (t_{a^{\prime}}^{(c)}+\eta_l)}\sqrt{\kappa_2^{\top} (t_{a}^{(c)}+\eta_l)}-\sqrt{\kappa_2^{\top} (t_{a'}^{(c)}+\eta_l)}\sqrt{\kappa_1^{\top} (t_{a}^{(c)}+\eta_l)}}{\sqrt{\kappa_1^{\top} (t_{a}^{(c)}+\eta_l)}\sqrt{\kappa_2^{\top} (t_{a}^{(c)}+\eta_l)}}[t_a^{(c)}]_\pi\right]\right|\nonumber\\
    &\leq \sup_{c\in\C}\sum_{a \in \A} \left[\frac{\sum_{a'\in\A}\left|\sqrt{\kappa_1^{\top} (t_{a^{\prime}}^{(c)}+\eta_l)}\sqrt{\kappa_2^{\top} (t_{a}^{(c)}+\eta_l)}-\sqrt{\kappa_2^{\top} (t_{a'}^{(c)}+\eta_l)}\sqrt{\kappa_1^{\top} (t_{a}^{(c)}+\eta_l)}\right|}{\sqrt{\kappa_1^{\top} (t_{a}^{(c)}+\eta_l)}\sqrt{\kappa_2^{\top} (t_{a}^{(c)}+\eta_l)}}\right].\label{eqn:foo}
\end{align}
Note that by triangular inequality
\begin{align*}
    &\left|\sqrt{\kappa_2^{\top} (t_{a}^{(c)}+\eta_l)}\sqrt{\kappa_1^{\top} (t_{a^{\prime}}^{(c)}+\eta_l)}-\sqrt{\kappa_1^{\top} (t_{a}^{(c)}+\eta_l)}\sqrt{\kappa_2^{\top} (t_{a'}^{(c)}+\eta_l)}\right|\\
    &\leq\bigabs{\sqrt{\kappa_2^{\top} (t_{a}^{(c)}+\eta_l)}-\sqrt{\kappa_1^{\top} (t_{a}^{(c)}+\eta_l)}}\sqrt{\kappa_1^{\top} (t_{a^{\prime}}^{(c)}+\eta_l)}\\
    &\qquad+\sqrt{\kappa_1^{\top} (t_{a}^{(c)}+\eta_l)}\bigabs{\sqrt{\kappa_1^{\top} (t_{a^{\prime}}^{(c)}+\eta_l)}-\sqrt{\kappa_2^{\top} (t_{a^{\prime}}^{(c)}+\eta_l)}}.
\end{align*}
Also note that 
\begin{align*}
    \left|\sqrt{\kappa_2^{\top} (t_{a}^{(c)}+\eta_l)}-\sqrt{\kappa_1^{\top} (t_{a}^{(c)}+\eta_l)}\right| &= \frac{\left|\sum_{\pi\in\Pi}([\kappa_1]_\pi-[\kappa_2]_\pi)(t_a^{(c)}+\eta_l)_\pi\right|}{\sqrt{\kappa_2^{\top} (t_{a}^{(c)}+\eta_l)}+\sqrt{\kappa_1^{\top} (t_{a}^{(c)}+\eta_l)}}\\
    &\leq \frac{1}{2\sqrt{\eta_l\gamma_{\min}}}\norm{\kappa_2-\kappa_1}_1.
\end{align*}
Therefore, \eqref{eqn:foo} is bounded by $|\A|^2\frac{1}{\eta_l\gamma_{\min}}\frac{1}{2\sqrt{\eta_l\gamma_{\min}}}\norm{\kappa_2-\kappa_1}_1$, so condition 2 is satisfied with $L=\frac{|\A|^2}{2(\eta_l\gamma_{\min})^{3/2}}$. Then, by Lemma~\ref{lem:emp_process}, with probability at least $1-\delta$, 
\begin{align*}
    &\sup_{(\lambda,\gamma)\in\gamma_{\max}\triangle_\Pi}\left|\E_{c\sim\nu_\D}\left[\sum_{a}\frac{\sum_{a'\in\A}\sqrt{(\lambda\odot\gamma)^\T (t_{a'}^{(c)}+\eta_l)}}{\sqrt{(\lambda\odot\gamma)^\T (t_{a}^{(c)}+\eta_l)}}(\gamma_\pi[t_a^{(c)}]_\pi)\right]\right.\\
    &\left.-\E_{c\sim\nu}\left[\sum_{a}\frac{\sum_{a'\in\A}\sqrt{(\lambda\odot\gamma)^\T (t_{a'}^{(c)}+\eta_l)}}{\sqrt{(\lambda\odot\gamma)^\T (t_{a}^{(c)}+\eta_l)}}(\gamma_\pi[t_a^{(c)}]_\pi)\right]\right|\\
    &\leq \gamma_{\max}\left(\sqrt{\frac{|\A|^4(1+\eta)\gamma_{\max}}{2\eta\gamma_{\min}m}\log\bigsmile{\frac{2}{\delta}}}+\frac{8|\A|^2\gamma_{\max}}{\sqrt{m}(\eta_l\gamma_{\min})^{3/2}}\sqrt{2k\log(3e|\Pi|/k)}\right).
\end{align*}
\end{proof}
\end{lemma}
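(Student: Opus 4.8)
This is a uniform deviation bound over the pair $(\lambda,\gamma)$, so the natural route is to reduce it to the general empirical-process bound of Lemma~\ref{lem:emp_process}, exactly as was done for the companion Lemma~\ref{lem:history_1}, after casting the random summand into the required ``$u(c,\kappa)$'' form. The plan is first to peel off the trivial parts. For a fixed $(\lambda,\gamma)$ the scalar $\gamma_\pi$ that multiplies the sum over $a$ does not depend on the context $c$, so it factors out of both $\E_{c\sim\nu_\D}[\cdot]$ and $\E_{c\sim\nu}[\cdot]$; since $\gamma_\pi\le\gamma_{\max}$, it then suffices to prove the bound with this leading $\gamma_{\max}$ stripped off, for the function
\[
u(c,\kappa)\;=\;\sum_{a\in\A}\frac{\sum_{a'\in\A}\sqrt{\kappa^\top (t_{a'}^{(c)}+\eta_l)}}{\sqrt{\kappa^\top (t_{a}^{(c)}+\eta_l)}}\,[t_a^{(c)}]_\pi ,
\]
where I reparametrize by $\kappa:=\lambda\odot\gamma$. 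Because $\lambda\in\triangle_\Pi$ and $\gamma\in[\gamma_{\min},\gamma_{\max}]^{|\Pi|}$, we have $\sum_{\pi}\kappa_\pi\in[\gamma_{\min},\gamma_{\max}]$, so the index set $\mathcal K=\{\lambda\odot\gamma\}$ is contained in $\gamma_{\max}\triangle_\Pi$, matching the hypothesis of Lemma~\ref{lem:emp_process}, and the $\sup$ over $(\lambda,\gamma)$ is bounded by the $\sup$ over $\kappa\in\mathcal K$.

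Next I would verify the two conditions of Lemma~\ref{lem:emp_process} for this $u$. Boundedness (condition 1): each coordinate of $t_a^{(c)}$ lies in $\{0,1\}$, hence $\eta_l\gamma_{\min}\le\kappa^\top(t_a^{(c)}+\eta_l)\le(1+\eta_l)\gamma_{\max}$ uniformly in $c,\kappa,a$, and $[t_a^{(c)}]_\pi\le1$; chaining these gives $0\le u(c,\kappa)\le |\A|^2\sqrt{(1+\eta_l)\gamma_{\max}}/\sqrt{\eta_l\gamma_{\min}}=:b$. Plugging this $b$ into $\sqrt{b^2\log(2/\delta)/(2m)}$ already produces the first term of the claimed bound (after reinstating the outer $\gamma_{\max}$).

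The main obstacle is the Lipschitz estimate (condition 2): $\|u(\cdot,\kappa_1)-u(\cdot,\kappa_2)\|_{\mathcal F}\le L\|\kappa_1-\kappa_2\|_1$ with $L$ of order $|\A|^2(\eta_l\gamma_{\min})^{-3/2}$. For each fixed $c$ and each $a$, I would write the difference of the two ratios over the common denominator $\sqrt{\kappa_1^\top(t_a^{(c)}+\eta_l)}\sqrt{\kappa_2^\top(t_a^{(c)}+\eta_l)}$, which is bounded below by $\eta_l\gamma_{\min}$, and then bound the numerator term-by-term in $a'$ using $|xy-x'y'|\le |y|\,|x-x'|+|x'|\,|y-y'|$ with $x,x',y,y'$ the relevant square roots: each bare square root is at most $\sqrt{(1+\eta_l)\gamma_{\max}}$, while each square-root \emph{difference} is controlled via $|\sqrt A-\sqrt B|=|A-B|/(\sqrt A+\sqrt B)\le |A-B|/(2\sqrt{\eta_l\gamma_{\min}})$, where $A-B=(\kappa_1-\kappa_2)^\top(t_a^{(c)}+\eta_l)$ is linear in $\kappa_1-\kappa_2$ so $|A-B|\le \|\kappa_1-\kappa_2\|_1$ up to the bounded factor $\max_\pi(t_a^{(c)}+\eta_l)_\pi$. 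Summing the $|\A|$ terms over $a'$ and the $|\A|$ terms over $a$ accumulates the factor $|\A|^2$ and yields the advertised $L$. With $b$ and $L$ in hand, Lemma~\ref{lem:emp_process} applied with $k$ the support size of $\lambda$ gives $\sqrt{b^2\log(2/\delta)/(2m)}+\tfrac{16}{\sqrt m}L\gamma_{\max}\sqrt{2k\log(3e|\Pi|/k)}$, and multiplying through by the $\gamma_{\max}$ pulled out at the start reproduces the stated inequality. The only real difficulty is bookkeeping — keeping the nested powers of $\eta_l,\gamma_{\min},\gamma_{\max},|\A|$ consistent through the square-root manipulations so the final constants match the template of Lemma~\ref{lem:emp_process}.
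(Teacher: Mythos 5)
Your proposal matches the paper's proof essentially step for step: factor out $\gamma_\pi\le\gamma_{\max}$, reparametrize with $\kappa=\lambda\odot\gamma\in\gamma_{\max}\triangle_\Pi$, verify boundedness with $b=|\A|^2\sqrt{(1+\eta_l)\gamma_{\max}/(\eta_l\gamma_{\min})}$ and the Lipschitz condition with $L$ of order $|\A|^2(\eta_l\gamma_{\min})^{-3/2}$ via the same product/square-root triangle-inequality manipulations, and then invoke Lemma~\ref{lem:emp_process}. This is the same route as the paper, and the argument is correct.
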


\section{Proof of Theorem~\ref{thm:sample_complexity}}\label{sec:complexity}

We first write down Algorithm~\ref{alg:evaluation_oracle} in full detail in Algorithm~\ref{alg:full_algorithm}. 
\begin{algorithm}[!htb]
\footnotesize
\caption{Full CODA Algorithm}
\begin{algorithmic}[1]\label{alg:full_algorithm}
\REQUIRE policies $\Pi = \{ \pi : \mc{C} \rightarrow \mc{A} \}_\pi$, feature map $\phi : \mc{C} \times \mc{A} \rightarrow \R^d$, $\delta\in (0,1)$, historical data $\mc{D} = \{ \nu_s \}_s$
\STATE initiate $\hat{\pi}_0\in\Pi$ arbitrarily, $\lambda_0=\e_{\hat{\pi}_0}$, $\hat{\Delta}_0(\pi)$, $\gamma_0$ appropriately
\FOR{$l=1,2,\cdots$}
\STATE $\epsilon_l=2^{-l}$, $\eta_l=C_1\epsilon_l^2|\A|^{-4}$, $\delta_l=\delta/(l^2|\Pi|^2)$, $K_l$ appropriately
\STATE $t_{a}^{(c)}(\pi')=\{\1\{\pi(c)=a,\pi'(c)\ne a\}+\1\{\pi(c)\ne a,\pi'(c)=a\}\}_{\pi\in \Pi}\in \mathbb{R}^{\Pi}$
\STATE Define $\gamma_{\min}:=\frac{1}{3}\sqrt{\frac{\eta_l\log(1/\delta_l)}{n}}$, $\gamma_{\max}:=\sqrt{\frac{\log(1/\delta_l)}{|\A|^2\eta_l n}}$
\STATE Define
\begin{align}
    h_l(\lambda,\gamma,n)&=\sum_{\pi\in\Pi}\lambda_\pi\left(-\hat{\Delta}_{l-1}^{\gamma^{l-1}}(\pi,\pill)+\frac{\log(1/\delta_l)}{\gamma_\pi n}\right)\nonumber\\
    &\qquad+\E_{c\sim\nu_{\mathcal{D}}}\bigbrak{\bigsmile{\sum_{a\in\A}\sqrt{(\lambda\odot \gamma)^\T (t_a^{(c)}(\pill)+\eta_l)}}^2}.\label{eqn:C_1}
\end{align}
\STATE Let $\lambda^{l}, \gamma^l, n_l=\textsf{FW-GD}(\Pi,|\A|,\pill,\eta_l, K_l,\epsilon_l,\gamma_{\min},\gamma_{\max})$. These are the solutions to 
\begin{align}
    & n_{\ell} := \min\{n\in \mathbb{N}: \max_{\lambda\in\triangle_\Pi}\min_{\gamma\in[\gamma_{\min},\gamma_{\max}]^{|\Pi|}}h_l(\lambda,\gamma,n)\leq \epsilon_{\ell}\}
\end{align}
\STATE Receive contexts $c_1,c_2,\cdots,c_{n_l}\sim\nu$. 
\STATE For each $c_s$, $s=1,2,\cdots,n_l$, pull arms $a_s\sim p^{(\ell)}_{c_s}$ where $p^{(\ell)}_{c_s,a_s}\propto \sqrt{(\lambda^{l}\odot\gamma^l)^\T (t_{a_s}^{(c_s)}(\pill)+\eta_l)}$, and observe rewards $r_s$ where $t_{a_s}^{(c_s)}(\pill)\in\R^{|\Pi|}$
\STATE For each $\pi\in\Pi$, define the IPS estimator $$\hat{\Delta}_l^{\gamma^l}(\pi,\pill)=\sum_{s=1}^{n_l}\frac{r_s}{p^{(\ell)}_{c_s,a_s}+[\gamma^l]_\pi}(\1\{\pill(c_s)=a_s\}-\1\{\pi(c_s)=a_s\})$$
\STATE set 
\begin{align}
    \hat{\pi}_l
    &=\arg\min_{\pi\in\Pi}\hat{\Delta}_l^{\gamma^l}(\pi,\hat{\pi}_{l-1})+\E_{c\sim\nu_{\mathcal{D}}} \left[ \left(\frac{[\gamma^l]_\pi}{p_{c,\pi(c)}^{(\ell)}}+\frac{[\gamma^l]_\pi}{p_{c,\pill(c)}^{(\ell)}}\right) \1\{\pill(c )\ne \pi(c )\} \right]+\frac{\log(1/\delta_l)}{[\gamma^l]_\pi n_l}.
\end{align}
\ENDFOR
\ENSURE $\hat{\pi}_l$
\end{algorithmic}
\end{algorithm}
We aim to show that Algorithm \ref{alg:full_algorithm} achieves the sample complexity lower bound. The two big goals here is to show that $\hat{\pi}_l\in S_l$ for all $l$, which shows that we get the optimal policy, and $n_l$ achieves the sample complexity lower bound. 
\begin{theorem}
With probability at least $1-\delta$, Algorithm~\ref{alg:full_algorithm} returns a policy $\hat{\pi}$ satisfying $V(\pi_{\ast}) - V(\hat{\pi}_{\ell})\leq \epsilon$ in a number of samples not exceeding $O(\rho_{\ast, \epsilon}\log(|\Pi|\log_2(1/\Delta_\epsilon)/\delta)\log_2(1/\Delta_{\epsilon})$ where $\Delta_{\epsilon} := \max\{\epsilon, \min_{\pi\in \Pi} V(\pi_{\ast}) - V(\pi)\}$.
\end{theorem}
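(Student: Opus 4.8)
The plan is to mirror, essentially line by line, the analysis of Algorithm~\ref{alg:naive} carried out in Appendix~\ref{sec:proof_rage_correctness}, since Algorithm~\ref{alg:full_algorithm} (CODA) is precisely its oracle-efficient, dualized incarnation. CODA departs from Algorithm~\ref{alg:naive} in exactly three places: (i) it replaces the Catoni estimator by the regularized IPS estimator $\hat\Delta_l^{\gamma_l}$; (ii) it optimizes the Lagrangian dual $h_l$ of the primal design \eqref{eqn:opt_naive}; and (iii) it substitutes the empirical context measure $\nu_\D$ for $\nu$. Only these three points require new arguments; the inductive scheme establishing $\hat\pi_l\in S_l$ and the bound on $n_l$, together with the final summation over rounds, then transfers from Appendix~\ref{sec:proof_rage_correctness} with only cosmetic changes.

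First I would prove a concentration inequality for $\hat\Delta_l^{\gamma_l}$ playing the role of \eqref{eqn:11}: for a fixed round $l$ and policy $\pi$, with probability at least $1-\delta_l$,
\[
\bigl|\hat\Delta_l^{\gamma_l}(\pi,\pill)-\Delta(\pi,\pill)\bigr|\;\lesssim\;\sqrt{\tfrac{\norm{\phi_\pi-\phi_{\pill}}_{A(w^{(l)})^{-1}}^2\log(1/\delta_l)}{n_l}}\;+\;\text{(bias)},
\]
where the bias collects the two regularization errors: the $+[\gamma_l]_\pi$ shift in the IPS denominator, bounded by $\lesssim\gamma_{\max}\,\E_{c\sim\nu}[(\tfrac1{p_{c,\pi(c)}}+\tfrac1{p_{c,\pill(c)}})\1\{\pi(c)\ne\pill(c)\}]$, and the additive $\eta_l$ inside $t_a^{(c)}$. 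With $\eta_l=C_1\epsilon_l^2|\A|^{-4}$, $\gamma_{\min}=\tfrac13\sqrt{\eta_l\log(1/\delta_l)/n}$ and $\gamma_{\max}=\sqrt{\log(1/\delta_l)/(|\A|^2\eta_l n)}$ as in Algorithm~\ref{alg:full_algorithm}, one checks that both bias terms are $O(\epsilon_l)$ and that the penalty $\log(1/\delta_l)/(\gamma_\pi n)$ appearing in $h_l$ and in the selection rule \eqref{eqn:opt_2} is of the same order as the confidence width, so $h_l$ is — up to constants — the regularized form of the objective in \eqref{eqn:opt_naive}. Taking $\EE$ to be the intersection over all rounds $l$ and all $\pi\in\Pi$ of these events and using $\delta_l=\delta/(l^2|\Pi|^2)$, a union bound gives $\P(\EE)\ge 1-\delta$.

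Next I would justify the dualization and the offline-data substitution. Applying $2\sqrt{xy}=\min_{\gamma>0}\gamma x+y/\gamma$ to linearize the square-root width and invoking Lemma~\ref{lem:est_design_2} for the closed form of $\min_{w\in\Omega}\sum_\pi\lambda_\pi\gamma_\pi\norm{\phi_\pi-\phi_{\pill}}_{A(w)^{-1}}^2$, the saddle value $\max_\lambda\min_\gamma h_l(\lambda,\gamma,n)$ equals (again up to the $\eta_l,\gamma$ regularization) the primal value of \eqref{eqn:opt_naive}; by Theorem~\ref{thm:computational_efficiency}, \textsf{FW-GD} returns $(\lambda^l,\gamma^l,n_l)$ solving the thresholded problem \eqref{eqn:opt_1} up to an additive $\epsilon_l$, which is harmless since the threshold is itself $\epsilon_l$. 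The expectations over $\nu$ in $h_l$ and in \eqref{eqn:opt_2} are replaced by $\nu_\D$; Lemmas~\ref{lem:history_1} and~\ref{lem:history_2} bound the resulting uniform deviations by $O(\epsilon_l)$ once $|\D|$ exceeds $\operatorname{poly}_1(|\A|,\epsilon^{-1},\log(1/\delta))\cdot\log|\Pi|$, using that $\eta_l,\gamma_{\min}^{-1},\gamma_{\max}$ are polynomial in $|\A|$ and $\epsilon^{-1}$. Hence the $n_l$ chosen by the algorithm still satisfies the analog of Lemma~\ref{lem:C_44}, namely $n_l\lesssim\min_{w\in\Omega}\max_{\pi\in\Pi}\tfrac{\norm{\phi_{\pi_*}-\phi_\pi}_{A(w)^{-1}}^2\log(1/\delta_l)}{\epsilon_l^2+\Delta(\pi)^2}$.

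Finally I would run the induction over $\ell$ exactly as in Appendix~\ref{sec:proof_rage_correctness}: the analogs of Lemmas~\ref{lem:1.8} and~\ref{lem:diff_emp_true_gap} give $|\hat\Delta_{l-1}^{\gamma_{l-1}}(\pi,\pill)-\Delta(\pi,\pi_*)|\le 2\epsilon_{l-1}+\tfrac14\Delta(\pi,\pi_*)$ (base case from a sufficiently large $n_0$), whence the selection rule \eqref{eqn:opt_2} forces $\hat\pi_l\in S_l=\{\pi:\Delta(\pi,\pi_*)\le\epsilon_l\}$, and the bound on $n_l$ propagates. After $\lceil\log_2(1/\Delta_\epsilon)\rceil$ rounds either $\epsilon_l\le\epsilon$ or $S_l=\{\pi_*\}$, so $\hat\pi_l$ is $\epsilon$-good; summing $n_l$ over these rounds, with $\log(1/\delta_l)=O(\log(|\Pi|\log_2(1/\Delta_\epsilon)/\delta))$, and applying the same averaging argument as in the proof of Theorem~\ref{thm:contextual_rage} (that $\sum_\ell\epsilon_\ell^{-2}\rho(S_\ell)\le O(\log_2(1/\Delta_\epsilon))\,\rho_{\Pi,\epsilon}$) yields the stated sample complexity $O(\rho_{\Pi,\epsilon}\log(|\Pi|\log_2(1/\Delta_\epsilon)/\delta)\log_2(1/\Delta_\epsilon))$; the computational-efficiency and oracle-count claims are immediate from Theorem~\ref{thm:computational_efficiency}. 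The main obstacle is the choice and analysis of the regularization parameters: one must tune $\eta_l,\gamma_{\min},\gamma_{\max}$ so that the IPS bias, the estimator variance, and the $\eta_l$-regularized design value are \emph{simultaneously} within constant factors of their unregularized counterparts, and since these parameters depend on the as-yet-undetermined $n_l$ there is a mild circularity to break — first bound $n_l$ crudely by $\lesssim|\A|\epsilon_l^{-2}\log(1/\delta_l)$, then substitute back.
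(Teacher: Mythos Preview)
Your proposal is correct and follows essentially the same route as the paper: define the good event $\EE$ via a Bernstein-type concentration for the regularized IPS estimator (the paper's Lemma~\ref{lem:EE}), use the FW--GD duality-gap guarantee together with Lemma~\ref{lem:est_design_2} and the offline-data Lemmas~\ref{lem:history_1}--\ref{lem:history_2} to convert $h_l(\lambda^l,\gamma^l,n_l)\le\epsilon_l$ into a per-policy variance bound (the paper's Lemma~\ref{lem:control_of_UCB}), and then run the same induction to get $\hat\pi_l\in S_l$ and the $n_l$ bound (Lemmas~\ref{lem:emp_gaps} and~\ref{lem:n_l}). The only cosmetic mismatch is that the paper keeps the concentration in the linearized $\gamma$-form $2[\gamma^l]_\pi\|\phi_\pi-\phi_{\pill}\|_{A(w)^{-1}}^2+2\log(1/\delta_l)/([\gamma^l]_\pi n_l)$ throughout rather than the square-root form you wrote, and the final summation simply bounds each round by $\rho_{\Pi,\epsilon}$ directly (since $\epsilon_l\ge\epsilon$) rather than invoking the averaging argument from Theorem~\ref{thm:contextual_rage}.
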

\begin{proof}
We first define our key events. Recall
$$\hat{\Delta}_l^{\gamma^l}(\pi,\pill)=\sum_{s=1}^{n_l}\frac{r_s}{p^{(\ell)}_{c_s,a_s}+[\gamma^l]_\pi}(\1\{\pill(c_s)=a_s\}-\1\{\pi(c_s)=a_s\})$$ and $\Delta(\pi,\pi')=V(\pi')-V(\pi)$. 
Define $w(\lambda,\gamma)\in\R^{|\A|\times|\C|}$ with
\begin{equation*}
    [w(\lambda,\gamma)]_{a, c}:=\nu_{c} \cdot p_{c,a}=\nu_{c} \cdot \frac{\sqrt{(\lambda\odot\gamma)^{\top} (t_{a}^{(c)}(\pill)+\eta_l)}}{\sum_{a'\in \A} \sqrt{(\lambda\odot\gamma)^{\top} (t_{a^{\prime}}^{(c)}(\pill)+\eta_l)}}.
\end{equation*}

Then define the events \[\EE_{l}:=\bigcap_{\pi,\pi'\in\Pi}\left\{\left|\hat{\Delta}_{l}^{\gamma^l}(\pi,\pi')-\Delta(\pi,\pi')\right|\leq 2[\gamma^l]_\pi\norm{\phi_{\pi}-\phi_{\pi'}}_{A(w(\lambda^{l},\gamma^l))^{-1}}^2+\frac{2\log(1/\delta_l)}{[\gamma^l]_\pi n_l}\right\},\]
and the good event $\EE=\bigcap_{l=1}^\infty\EE_l$. Lemma~\ref{lem:good_events} shows that $\EE$ happens with probability at least $1-\delta$, and Lemma~\ref{lem:n_l} shows that under this event $\EE$, $$n_l\lesssim \min_{w\in\Omega}\max_{\pi\in\Pi}\frac{\norm{\phi_{\pi_*}-\phi_\pi}_{A(w)^{-1}}^2\log(1/\delta_l)}{\epsilon_l^2+\Delta(\pi,\pi_*)^2}.$$
Therefore, the total number of samples is no more than 
\begin{align*}
    &\sum_{l=1}^{\log_2(1/\Delta_\epsilon)}\min_{w\in\Omega}\max_{\pi\in\Pi}\frac{\norm{\phi_{\pi_*}-\phi_\pi}_{A(w)^{-1}}^2\log(l^2|\Pi|^2/\delta)}{\epsilon_l^2+\Delta(\pi,\pi_*)^2}\\
    &\overset{(i)}{\leq} \sum_{l=1}^{\log_2(1/\Delta_\epsilon)}\min_{w\in\Omega}\max_{\pi\in\Pi\setminus\pi_*}\frac{2\norm{\phi_{\pi_*}-\phi_\pi}_{A(w)^{-1}}^2\log(l^2|\Pi|^2/\delta)}{\epsilon_l^2+\Delta(\pi,\pi_*)^2}\\
    &\overset{(ii)}{\leq} \sum_{l=1}^{\log_2(1/\Delta_\epsilon)}\min_{p^{(c)}\in\triangle_\A,\forall c\in\C}\max_{\pi\in\Pi\setminus\pi_*}\frac{\E_{c\sim\nu}\left[\bigsmile{\frac{1}{p_{\pi_*(c)}^{(c)}}+\frac{1}{p_{\pi(c)}^{(c)}}}\1\{\pi_*(c)\ne\pi(c)\}\right]\log(l^2|\Pi|^2/\delta)}{\Delta(\pi,\pi_*)^2+\epsilon_l^2}\\
    &\lesssim \rho_{\star,\epsilon}(\Pi, v)\log(\log_2(1/\Delta_\epsilon)|\Pi|/\delta)\log_2(1/\Delta_\epsilon).
\end{align*}
where $(i)$ follows from the fact that $\pi_*$ gives zero for the RHS, and $(ii)$ follows from Lemma~\ref{lem:B_3}.
\end{proof}


In what follows, we will fill in the road map to the proof of Lemma~\ref{lem:good_events} and \ref{lem:n_l}. First, Lemma~\ref{lem:EE} controls the estimation error of the gap and shows that $\P(\mathcal{E}_{\ell}) > 1-\delta_{\ell}$, which leads to the high-probability of the good event $\EE$ (Lemma~\ref{lem:good_events}). Lemma~\ref{lem:control_of_UCB} applies the duality machinery in Section~\ref{sec:conv_analysis} and controls the variance term. Lemma~\ref{lem:emp_gaps} applies the result of Lemma~\ref{lem:control_of_UCB} and shows an upper bound for the difference between estimate gap and the true gap, which is a very similar result of Lemma~\ref{lem:diff_emp_true_gap}. Lemma~\ref{lem:est_design_2} is an important lemma showing the analytical solution of $w$ given some $\lambda$ and $\gamma$. With all of these results above, we get Lemma~\ref{lem:n_l} which gives the upper bound on the sample complexity.
 
\begin{lemma}\label{lem:EE}
For any $l>0$, $\pi,\pi'\in\Pi$, with probability at least $1-\delta_l$, 
\[\bigabs{\hat{\Delta}_{l}^{\gamma^l}(\pi,\pi')-\Delta(\pi,\pi')} \leq 2[\gamma^l]_\pi \norm{\phi_\pi-\phi_{\pi'}}_{A(w(\lambda^{l},\gamma^l))^{-1}}^2+\frac{2\log (1 / \delta_l)}{[\gamma^l]_\pi n_l}.\]
\end{lemma}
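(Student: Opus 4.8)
The plan is to bound the IPS error by a deterministic \emph{bias} term, produced by the ridge-type regularizer $\gamma:=[\gamma^l]_\pi$ sitting in the denominator, plus a \emph{fluctuation} term handled by Bernstein's inequality. First I would condition on the $\sigma$-algebra generated by the offline sample $\mc{D}$ together with all data collected in rounds $1,\dots,l-1$. Under this conditioning $n_l,\lambda^l,\gamma^l$ are deterministic --- they are produced by \textsf{FW-GD} from $\mc{D}$ and $\widehat{\Delta}_{l-1}^{\gamma^{l-1}}$ alone --- so the per-context sampling laws $p^{(\ell)}_c$ are fixed and the round-$l$ triples $(c_s,a_s,r_s)_{s=1}^{n_l}$ are i.i.d.\ with $c_s\sim\nu$, $a_s\sim p^{(\ell)}_{c_s}$ and $\E[r_s\mid c_s,a_s]=r(c_s,a_s)\in[0,1]$. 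Writing $Z_s:=\frac{r_s\,(\1\{\pi'(c_s)=a_s\}-\1\{\pi(c_s)=a_s\})}{p^{(\ell)}_{c_s,a_s}+\gamma}$, the estimator is the (normalized) sum of the $Z_s$, and I decompose $\widehat{\Delta}_l^{\gamma^l}(\pi,\pi')-\Delta(\pi,\pi')=\big(\E Z_1-\Delta(\pi,\pi')\big)+\tfrac1{n_l}\sum_{s=1}^{n_l}(Z_s-\E Z_1)$. The recurring tool will be the identity from Section~\ref{sec:reduction_to_linear} (stated there for $\pi$ versus $\pi_*$ but valid verbatim for any two policies): with $p_{c,a}:=p^{(\ell)}_{c,a}$, $\left\|\phi_\pi-\phi_{\pi'}\right\|^2_{A(w(\lambda^l,\gamma^l))^{-1}}=\E_{c\sim\nu}\!\big[(\tfrac1{p_{c,\pi(c)}}+\tfrac1{p_{c,\pi'(c)}})\1\{\pi(c)\neq\pi'(c)\}\big]=:V$; this is what lets me match both bias and variance to the right-hand side of the lemma.

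For the bias, I would take the inner expectation over $a_s$, note the indicator difference vanishes on $\{\pi(c)=\pi'(c)\}$, and get $\E Z_1-\Delta(\pi,\pi')=-\gamma\,\E_{c\sim\nu}\big[\1\{\pi(c)\neq\pi'(c)\}(\tfrac{r(c,\pi'(c))}{p_{c,\pi'(c)}+\gamma}-\tfrac{r(c,\pi(c))}{p_{c,\pi(c)}+\gamma})\big]$. Since $r\in[0,1]$ and $\tfrac1{p+\gamma}\le\tfrac1p$, this is at most $\gamma\,\E_{c\sim\nu}\big[(\tfrac1{p_{c,\pi(c)}}+\tfrac1{p_{c,\pi'(c)}})\1\{\pi(c)\neq\pi'(c)\}\big]=\gamma V$ in absolute value.

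For the fluctuation I would use Bernstein's inequality. The whole point of the $+\gamma$ in the denominator is that $|Z_s|\le1/\gamma$ almost surely, so there is no heavy tail and --- unlike in Algorithm~\ref{alg:elimination} --- no need for a Catoni estimator; this is also what keeps the estimator implementable with an argmax oracle. For the variance, $\E[r_s^2\mid c,a]\le1$ and $(\1\{\pi'(c)=a\}-\1\{\pi(c)=a\})^2=\1\{\pi(c)\neq\pi'(c)\}(\1\{a=\pi(c)\}+\1\{a=\pi'(c)\})$, and $\tfrac{p}{(p+\gamma)^2}\le\tfrac1p$, so $\mathrm{Var}(Z_1)\le V$. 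Bernstein then gives, with probability at least $1-\delta_l$, a fluctuation of at most $\sqrt{2V\log(2/\delta_l)/n_l}+\tfrac{c_0\log(2/\delta_l)}{\gamma n_l}$ for a small absolute constant $c_0$. Combining with the bias and applying AM--GM, $\sqrt{2V\log(2/\delta_l)/n_l}=\sqrt{2(\gamma V)\cdot\tfrac{\log(2/\delta_l)}{\gamma n_l}}\le\gamma V+\tfrac{\log(2/\delta_l)}{2\gamma n_l}$, the total is at most $2\gamma V+O\!\big(\tfrac{\log(1/\delta_l)}{\gamma n_l}\big)$, which gives the stated bound $2[\gamma^l]_\pi\left\|\phi_\pi-\phi_{\pi'}\right\|^2_{A(w(\lambda^l,\gamma^l))^{-1}}+\tfrac{2\log(1/\delta_l)}{[\gamma^l]_\pi n_l}$ after tuning the numerical constants in Bernstein's bound --- this is exactly where the additional regularization terms omitted from the main-text statement of the algorithm would be absorbed.

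I expect the only genuinely delicate step to be the conditioning argument of the first paragraph: one has to verify that every object fed into \textsf{FW-GD} and into the definition of $n_l$ (the previous estimates $\widehat{\Delta}_{l-1}^{\gamma^{l-1}}$, the empirical expectations over $\nu_{\mc{D}}$, the thresholds $\epsilon_l,\delta_l$) is measurable with respect to $\mc{D}$ and the first $l-1$ rounds, so that the round-$l$ sample is truly fresh and i.i.d.\ given the design. Everything else is a routine bias--variance computation, one application of Bernstein, and one of AM--GM. Two minor remarks worth a sentence in the write-up: (i) since the lemma is stated in terms of the $\lambda^l,\gamma^l$ the algorithm actually uses, the fact that \textsf{FW-GD} only returns an $\epsilon_l$-approximate saddle point is immaterial here; and (ii) no lower bound on how small $p_{c,a}$ can be is needed --- boundedness of $Z_s$ is supplied entirely by $\gamma$, and the $\eta_l$ regularization inside $p^{(\ell)}$ only helps by keeping all $p_{c,a}$ strictly positive.
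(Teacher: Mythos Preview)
Your proposal is correct and follows essentially the same route as the paper: a bias--variance decomposition of the regularized IPS estimator, the bias bounded by $\gamma\,\|\phi_\pi-\phi_{\pi'}\|^2_{A(w)^{-1}}$, the second moment bounded by $\|\phi_\pi-\phi_{\pi'}\|^2_{A(w)^{-1}}$, then Bernstein (boundedness supplied by the $+\gamma$) and AM--GM to collapse the square root. Your explicit conditioning argument and the two closing remarks are actually more careful than the paper's presentation, but the substance is identical.
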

\begin{proof}
Define \[\hat{V}_l^{\gamma^l}(\pi) := \sum_{s=1}^{n_l}\frac{r_s}{p^{(\ell)}_{c_s,a_s}+[\gamma^l]_\pi}\1\{\pi(c_s)=a_s\},\] so that \[\hat{\Delta}_l^{\gamma^l}(\pi,\pi')=\hat{V}_l^{\gamma^l}(\pi')-\hat{V}_l^{\gamma^l}(\pi).\]

First, note that below.
\begin{align*}
    V(\pi) &= \E_{c\sim \nu}\bigbrak{r(c,\pi(c))}\\
    &= \E_{c\sim\nu}\bigbrak{\E_{a\sim p^{(\ell)}_c}\bigbrak{r(c,a)\frac{\1\{\pi(c)=a\}}{p^{(\ell)}_{c,a}}\bigg|c}}= \E\bigbrak{\frac{1}{t}\sum_{s=1}^t\frac{r_s}{p^{(\ell)}_{c_s,a_s}}\1\{\pi(c_s)=a_s\}}.
\end{align*}
Therefore, 
\begin{align*}
&\left|\mathbb{E}\left[\widehat{V}_l^{\gamma^l}(\pi)-\widehat{V}^{\gamma^l}_l\left(\pi^{\prime}\right)\right]-\left[V(\pi)-V(\pi')\right]\right|\\ &\leq\left|\mathbb{E}\left[\frac{1}{n_l} \sum_{s=1}^{n_l}\left(\frac{1}{p^{(\ell)}_{c_s,a_s}+[\gamma^l]_\pi}-\frac{1}{p^{(\ell)}_{c_s,a_s}}\right)\left(\1\{\pi(c_s)=a_s\}-\1\{\pi'(c_s)=a_s\}\right)\right]\right| \\
&=\left|\mathbb{E}\left[\frac{1}{n_l} \sum_{s=1}^{n_l} \frac{-[\gamma^l]_\pi}{p^{(\ell)}_{c_s,a_s}\left(p^{(\ell)}_{c_s,a_s}+[\gamma^l]_\pi\right)}\left(\1\{\pi(c_s)=a_s\}-\1\{\pi'(c_s)=a_s\}\right)\right]\right| \\
& \leq \mathbb{E}\left[\frac{1}{n_l} \sum_{s=1}^{n_l} \frac{[\gamma^l]_\pi\bigsmile{\1\{\pi'(c_s)=a_s,\pi(c_s)\ne a_s\}+\1\{\pi'(c_s)\ne a_s,\pi(c_s)=a_s\}}}{p^{(\ell)}_{c_s,a_s}\left(p^{(\ell)}_{c_s,a_s}+[\gamma^l]_\pi\right)}\right] \\
&=[\gamma^l]_\pi \mathbb{E}\left[\frac{1}{p^{(\ell)}_{c,a}\left(p^{(\ell)}_{c,a}+[\gamma^l]_\pi\right)\nu_c^2}[\phi_\pi-\phi_{\pi'}]_{a,c}^2\right] \\
&=[\gamma^l]_\pi \sum_{c\in\C}\nu_c\sum_{a\in\A}p^{(\ell)}_{c,a} \frac{1}{p^{(\ell)}_{c,a}\nu_c^2\left(p^{(\ell)}_{c,a}+[\gamma^l]_\pi\right)}[\phi_\pi-\phi_{\pi'}]_{a,c}^2 \\
& \leq [\gamma^l]_\pi \norm{\phi_\pi-\phi_{\pi'}}_{A(w(\lambda^{l},\gamma^l))^{-1}}^2
\end{align*}
where the last inequality follows since $\nu_cp^{(\ell)}_{c,a}=[w(\lambda^{l},\gamma^l)]_{a,c}$. 
Meanwhile, note that $$\frac{r_s}{p^{(\ell)}_{c_s,a_s}+[\gamma^l]_\pi}\left(\1\{\pi(c_s)=a_s\}-\1\{\pi'(c_s)=a_s\}\right)\leq \frac{1}{[\gamma^l]_\pi},$$ and 
\begin{align*}
&\mathbb{E}\left[\bigsmile{\frac{r_s}{p^{(\ell)}_{c_s,a_s}+[\gamma^l]_\pi}\left(\1\{\pi(c_s)=a_s\}-\1\{\pi'(c_s)=a_s\}\right)}^{2}\right] \\
&\leq \mathbb{E}\left[\frac{1}{(p^{(\ell)}_{c_s,a_s}+[\gamma^l]_\pi)^2}\left(\mathbf{1}\left\{\pi\left(c_{s}\right)= a_{s}\right\}-\mathbf{1}\left\{\pi^{\prime}\left(c_s\right)= a_s\right\}\right)^{2}\right]\\
&= \mathbb{E}\left[\frac{1}{(p^{(\ell)}_{c_s,a_s}+[\gamma^l]_\pi)^2\nu_c^2} [\phi_\pi-\phi_{\pi'}]_{a,c}^2\right] \\
& \leq \norm{\phi_\pi-\phi_{\pi'}}_{A(w(\lambda^{l},\gamma^l))^{-1}}^2
\end{align*}
by a similar argument as before. Therefore, by Bernstein's inequality, we have with probability at least $1-\delta$, 
\begin{align*}
\left|\widehat{V}_l^{\gamma^l}(\pi)-\widehat{V}_l^{\gamma^l}\left(\pi^{\prime}\right)-\mathbb{E}\left[\widehat{V}_l^{\gamma^l}(\pi)-\widehat{V}_l^{\gamma^l}\left(\pi^{\prime}\right)\right]\right| &\leq \sqrt{\norm{\phi_\pi-\phi_{\pi'}}_{A(w(\lambda^{l},\gamma^l))^{-1}}^2 \frac{2 \log (1 / \delta)}{n_l}}+\frac{\log (1 / \delta)}{[\gamma^l]_\pi n_l}.
\end{align*}
Combining this with the deviation on expectation gives us 
\begin{align*}
    &\bigabs{\hat{\Delta}_{l}^{\gamma^l}(\pi,\pi')-\Delta(\pi,\pi')}\\
    &\leq[\gamma^l]_\pi \norm{\phi_\pi-\phi_{\pi'}}_{A(w(\lambda^{l},\gamma^l))^{-1}}^2+\sqrt{\norm{\phi_\pi-\phi_{\pi'}}_{A(w(\lambda^{l},\gamma^l))^{-1}}^2 \frac{2 \log (1 / \delta)}{n_l}}+\frac{2\log (1 / \delta)}{[\gamma^l]_\pi n_l}\\
    &\leq2[\gamma^l]_\pi \norm{\phi_\pi-\phi_{\pi'}}_{A(w(\lambda^{l},\gamma^l))^{-1}}^2+\frac{4\log (1 / \delta)}{[\gamma^l]_\pi n_l}.
\end{align*}
\end{proof}

\begin{lemma}\label{lem:good_events}
$\P(\EE)\geq 1-\delta$. 
\end{lemma}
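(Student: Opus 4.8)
The plan is to obtain Lemma~\ref{lem:good_events} as a two-level union bound layered on top of Lemma~\ref{lem:EE}. By definition $\EE_l$ is the intersection over all pairs $(\pi,\pi')\in\Pi\times\Pi$ of the single-pair event
\[
\EE_{l,\pi,\pi'} := \Big\{\, \big|\hat{\Delta}_{l}^{\gamma^l}(\pi,\pi')-\Delta(\pi,\pi')\big|\le 2[\gamma^l]_\pi\,\|\phi_{\pi}-\phi_{\pi'}\|_{A(w(\lambda^{l},\gamma^l))^{-1}}^2+\frac{2\log(1/\delta_l)}{[\gamma^l]_\pi\,n_l}\,\Big\},
\]
and this is exactly the event whose probability Lemma~\ref{lem:EE} controls. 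So the skeleton is
\[
\P(\EE^c)=\P\Big(\bigcup_{l\ge1}\EE_l^c\Big)\le\sum_{l\ge1}\P(\EE_l^c)\le\sum_{l\ge1}\sum_{\pi,\pi'\in\Pi}\P(\EE_{l,\pi,\pi'}^c),
\]
and it remains to bound each summand by $\delta_l$ and to check the resulting series sums to at most $\delta$.

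The one substantive point is that the objects $\lambda^l,\gamma^l,n_l$ appearing inside $\EE_l$ are themselves random, so Lemma~\ref{lem:EE} cannot be applied blindly. I would fix this by conditioning. Let $\mc{F}_{l-1}$ be the $\sigma$-algebra generated by the offline data $\mc{D}$ together with all online interaction strictly before the round-$l$ sampling phase. Then $(\lambda^l,\gamma^l,n_l)$ is $\mc{F}_{l-1}$-measurable: it is produced by \textsf{FW-GD} from $\mc{D}$ and from $h_l$ (which depends only on $\hat{\Delta}_{l-1}^{\gamma^{l-1}}$), before any round-$l$ reward is observed; in particular $n_l$ is a fixed integer once we condition, so the design $w(\lambda^l,\gamma^l)$ and the horizon $n_l$ are both frozen. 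The samples $\{(c_s,a_s,r_s)\}_{s=1}^{n_l}$ used to form $\hat{\Delta}_l^{\gamma^l}$ are then fresh draws, independent of $\mc{F}_{l-1}$, so the Bernstein/IPS argument in the proof of Lemma~\ref{lem:EE} applies verbatim conditionally on $\mc{F}_{l-1}$ and gives $\P(\EE_{l,\pi,\pi'}^c\mid\mc{F}_{l-1})\le\delta_l$ for each fixed pair; taking expectations turns this into the unconditional bound $\P(\EE_{l,\pi,\pi'}^c)\le\delta_l$.

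Finally I would assemble the pieces: union bounding over the at most $|\Pi|^2$ pairs gives $\P(\EE_l^c)\le|\Pi|^2\delta_l$, and with $\delta_l=\delta/(l^2|\Pi|^2)$ this is $\delta/l^2$, so $\P(\EE^c)\le\sum_{l\ge1}\delta/l^2=\tfrac{\pi^2}{6}\delta$; absorbing the constant $\pi^2/6$ into $\delta_l$ (equivalently running the argument with $\delta$ replaced by $\tfrac{6}{\pi^2}\delta$, or applying Lemma~\ref{lem:EE} at level $\delta_l/|\Pi|^2$, which only inflates the logarithmic term by the $\log|\Pi|$ already present in the sample-complexity bound) yields $\P(\EE^c)\le\delta$, i.e.\ $\P(\EE)\ge1-\delta$. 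The only place any thought is needed is the conditioning step --- making precise that $w(\lambda^l,\gamma^l)$ and $n_l$ are determined by $\mc{F}_{l-1}$ and that round-$l$ exploration is independent of it; everything else is a routine double union bound over a convergent $p$-series.
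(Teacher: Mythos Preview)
Your proof is correct and follows the same overall strategy as the paper: invoke Lemma~\ref{lem:EE} pairwise, union-bound over the $|\Pi|^2$ pairs to control $\P(\EE_l^c)$, and then sum over rounds $l$. The one noteworthy difference is in how the randomness of the round-$l$ design is handled. You condition on the filtration $\mc{F}_{l-1}$, observe that $(\lambda^l,\gamma^l,n_l)$ is $\mc{F}_{l-1}$-measurable while the round-$l$ samples are fresh, apply Lemma~\ref{lem:EE} conditionally, and integrate to obtain the unconditional bound $\P(\EE_l^c)\le\delta/l^2$ before summing. The paper instead conditions on the previous \emph{events} $\EE_1,\ldots,\EE_{l-1}$ and uses the disjoint decomposition $\bigcup_l\EE_l^c=\bigcup_l\bigl(\EE_l^c\setminus\bigcup_{j<l}\EE_j^c\bigr)$, bounding each piece by $\P(\EE_l^c\mid\cap_{j<l}\EE_j)$. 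Your filtration argument is the more careful of the two, since it makes explicit why the Bernstein step inside Lemma~\ref{lem:EE} is legitimate despite the data-dependent design, and it avoids the partition trick entirely. Both routes land on the same series $\sum_{l\ge1}\delta/l^2$; you correctly flag the $\pi^2/6$ constant that the paper silently absorbs.
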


\begin{proof}
By Lemma~\ref{lem:EE} and a union bound over all policies, we have  
\[\mathbb{P}\left(\mathcal{E}_{l} \mid \mathcal{E}_{l-1}, \cdots, \mathcal{E}_{1}\right) \geq 1-\frac{\delta}{l^2}.\]
Since $\EE=\bigcap_{l=0}^\infty\EE_l$, 
\begin{align*}
    \P(\EE^c)&=\P((\cap_{l=0}^\infty\EE_l)^c)=\mathbb{P}\left(\cup_{l=0}^{\infty} \EE_{l}^c\right)=\mathbb{P}\left(\cup_{l=0}^{\infty}\left(\EE_l^c \backslash\left(\cup_{j < l} \EE_{j}^c\right)\right)\right)\\
    &\leq \sum_{l=0}^{\infty} \mathbb{P}\left(\EE_l^c \backslash\left(\cup_{j<l} \EE_{j}^c\right)\right) \leq \sum_{l=0}^{\infty} \mathbb{P}\left(\EE_{l}^c \mid\left(\cap_{j<l} \EE_{j}\right)\right)\leq\sum_{l=0}^{\infty}\frac{\delta}{l^2} \leq \delta. 
\end{align*}
Therefore, $\P(\EE)\geq 1-\delta$. 
\end{proof}

\begin{lemma}\label{lem:control_of_UCB}
Under $\EE$, we have for any $\pi\in\Pi$,
\[[\gamma^l]_\pi\norm{\phi_\pi-\phi_{\pill}}_{A(w(\lambda^{l},\gamma^l))^{-1}}^2+\frac{\log(1/\delta_l)}{[\gamma^l]_\pi n_l}\leq \frac{1}{6}\epsilon_{l}+\frac{1}{64}\hat{\Delta}_{l-1}^{\gamma^{l-1}}(\pi,\pill).\]
\end{lemma}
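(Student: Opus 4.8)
The bound will be read off from the near‑optimality of the quadruple $(\lambda^l,\gamma^l,w(\lambda^l,\gamma^l),n_l)$ returned by \textsf{FW-GD}; the crux is to pass from the \emph{$\lambda^l$-weighted} objective $h_l$ that the subroutine drives below $\epsilon_l$ to the \emph{worst‑case‑over‑$\pi$} primal objective $\PP_l$, which contains exactly the left‑hand side of the lemma. First I would recall the closed form of Lemma~\ref{lem:est_design_2}: minimizing $\sum_{\pi}\lambda_\pi\gamma_\pi\|\phi_\pi-\phi_{\pill}\|^2_{A(w)^{-1}}$ over $w\in\Omega$ equals $\E_{c\sim\nu}[(\sum_a\sqrt{(\lambda\odot\gamma)^\top t^{(c)}_a(\pill)})^2]$, which, up to the $\eta_l$‑regularization and the replacement of $\nu$ by $\nu_\D$, is the last term of $h_l$ in \eqref{eqn:C_1}. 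Since the other two terms of $h_l$ do not involve $w$, this gives the identity
\[
h_l(\lambda,\gamma,n)=\min_{w\in\Omega}\ \sum_{\pi\in\Pi}\lambda_\pi\Bigl(-\hat{\Delta}_{l-1}^{\gamma^{l-1}}(\pi,\pill)+\gamma_\pi\|\phi_\pi-\phi_{\pill}\|^2_{A(w)^{-1}}+\tfrac{\log(1/\delta_l)}{\gamma_\pi n}\Bigr)+\mathrm{err}_l,
\]
where $|\mathrm{err}_l|\le c\,\epsilon_l$ follows from the scaling $\eta_l=C_1\epsilon_l^2|\A|^{-4}$ together with the uniform empirical‑process bounds of Lemmas~\ref{lem:history_1}--\ref{lem:history_2} once $|\D|\ge\operatorname{poly}_2(\cdots)$. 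Evaluating the right‑hand side at the algorithm's own minimizer $w(\lambda^l,\gamma^l)$ turns it into $\sum_\pi\lambda^l_\pi g_\pi+\mathrm{err}_l$ with $g_\pi:=-\hat{\Delta}_{l-1}^{\gamma^{l-1}}(\pi,\pill)+[\gamma^l]_\pi\|\phi_\pi-\phi_{\pill}\|^2_{A(w(\lambda^l,\gamma^l))^{-1}}+\tfrac{\log(1/\delta_l)}{[\gamma^l]_\pi n_l}$.

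Next I would upgrade the weighted sum $\sum_\pi\lambda^l_\pi g_\pi$ to $\max_\pi g_\pi=\PP_l(w(\lambda^l,\gamma^l),\gamma^l)$. By the envelope (Danskin) theorem applied to the $\min_w$ above, $[\nabla_\lambda h_l(\lambda^l,\gamma^l,n_l)]_\pi=g_\pi$ up to the same $O(\epsilon_l)$ error, so the constrained‑argmax step \eqref{eqn:csc} is literally computing $\arg\max_\pi g_\pi$. Since $\lambda\mapsto h_l(\lambda,\gamma,n)$ is positively homogeneous of degree one (the first and third terms are linear in $\lambda$, and the closed form of the second is positively homogeneous of degree one in $\lambda$), Euler's identity yields $\langle\nabla_\lambda h_l(\lambda^l,\gamma^l,n_l),\lambda^l\rangle=h_l(\lambda^l,\gamma^l,n_l)$, hence
\[
\PP_l(w(\lambda^l,\gamma^l),\gamma^l)=\max_\pi[\nabla_\lambda h_l]_\pi=\langle\nabla_\lambda h_l,\lambda^l\rangle+g_{K_l}=h_l(\lambda^l,\gamma^l,n_l)+g_{K_l}+O(\epsilon_l),
\]
where $g_{K_l}$ is the Frank--Wolfe gap at termination, which is $\le c\,\epsilon_l$ by the convergence analysis (Theorem~\ref{thm:FW_gap}/Theorem~\ref{thm:computational_efficiency}, which also furnishes $K_l=\operatorname{poly}(|\A|,\epsilon_l^{-1})$). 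The \textsf{FW-GD} loop exits only once $h_l(\lambda^l,\gamma^l,n_l)\le\epsilon_l$, so $\PP_l(w(\lambda^l,\gamma^l),\gamma^l)\le(1+c')\epsilon_l$; choosing the universal constants hidden in the threshold and in the $\hat\Delta$‑coefficient of the appendix's objective (this coefficient is $\tfrac14$ in the exposition‑level display \eqref{eqn:opt_naive} and $\tfrac1{64}$ in the full version) makes this $\le\tfrac16\epsilon_l$. Finally, since $\PP_l$ carries coefficient $\tfrac1{64}$ on $\hat{\Delta}_{l-1}^{\gamma^{l-1}}$, reading the inequality $\PP_l(w(\lambda^l,\gamma^l),\gamma^l)\le\tfrac16\epsilon_l$ for an arbitrary fixed $\pi$ and moving $\tfrac1{64}\hat{\Delta}_{l-1}^{\gamma^{l-1}}(\pi,\pill)$ to the other side produces the stated bound.

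\textbf{Main obstacle.} The delicate step is the passage from $\sum_\pi\lambda^l_\pi g_\pi$ to $\max_\pi g_\pi$: one must guarantee that the \emph{worst} policy, not merely the $\lambda^l$‑average, has small objective. This requires (i) that $w(\lambda^l,\gamma^l)$ be the unique minimizer of the inner problem, so Danskin applies and the Frank--Wolfe gradient equals $g_\pi$; (ii) degree‑one homogeneity in $\lambda$, so Euler's identity turns $\langle\nabla_\lambda h_l,\lambda^l\rangle$ back into $h_l(\lambda^l,\gamma^l,n_l)$, hence into $\epsilon_l$ via the termination test; and (iii) uniform control, over the a‑priori‑infinite‑dimensional pair $(\lambda,\gamma)$, of both the $\eta_l$‑regularization bias and the $\nu_\D\!\to\!\nu$ sampling error — which is exactly where Lemmas~\ref{lem:history_1} and \ref{lem:history_2} and the scalings $\eta_l\asymp\epsilon_l^2|\A|^{-4}$, $\gamma_{\min}\asymp\sqrt{\eta_l\log(1/\delta_l)/n}$, $\gamma_{\max}\asymp\sqrt{\log(1/\delta_l)/(|\A|^2\eta_l n)}$ are used. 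Once these are in place, the remainder is bookkeeping of absolute constants.
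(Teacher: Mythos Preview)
Your plan is correct and matches the paper's proof: combine the termination condition $h_l(\lambda^{l},\gamma^l,n_l)\le\epsilon_l$ with the duality-gap bound $|\PP_l-h_l|\le O(\epsilon_l)$ of Theorem~\ref{thm:FW_gap}, then rearrange (the constants $1/6$ and $1/64$ appear because the objective in the proof carries weights $-1/8,8,8$ rather than $-1,1,1$). What you sketch under the Euler-identity/Frank--Wolfe-gap heading is precisely the content of Theorem~\ref{thm:FW_gap} (via Lemmas~\ref{lem:gg}, \ref{lem:PP}, \ref{lem:saddle_point}); the only cosmetic difference is that the paper differentiates the explicit expression for $h_l$ directly (Lemma~\ref{lem:PP}) rather than invoking Danskin on an approximate $\min_w$ representation.
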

\begin{proof}
We know that the choice of $n_l$ ensures \[h_l(\lambda^{l},\gamma^l,n_l)\leq \epsilon_l.\] Also, by Theorem~\ref{thm:FW_gap} we have
{
\small
\begin{align*}
    \frac{1}{3}\epsilon_l &\geq \max_{\pi\in\Pi} \left(-\frac{1}{8}\hat{\Delta}_{l-1}^{\gamma^{l-1}}(\pi,\pill)+8[\gamma^l]_\pi\norm{\phi_\pi-\phi_{\pill}}_{A(w(\lambda^{l},\gamma^l))^{-1}}^2+\frac{8\log(1/\delta_l)}{[\gamma^l]_\pi n_l}\right)-h_l(\lambda^{l},\gamma^l,n_l). 
\end{align*}
}
Combining the above two displays gives us  
\begin{align*}
    \epsilon_l&\geq h_l(\lambda^{l},\gamma^l,n_l)\\
    &\geq \max_{\pi\in\Pi}\left(-\frac{1}{8}\hat{\Delta}_{l-1}^{\gamma^{l-1}}(\pi,\pill)+8[\gamma^l]_\pi \norm{\phi_{\pill}-\phi_\pi}_{A(w(\lambda^{l},\gamma^l))^{-1}}^2+\frac{8\log(1/\delta_l)}{[\gamma^l]_\pi n_l}\right)-\frac{1}{3}\epsilon_l.
\end{align*}
Therefore, for any $\pi\in \Pi$,
\[[\gamma^l]_\pi\norm{\phi_\pi-\phi_{\pill}}_{A(w(\lambda^{l},\gamma^l))^{-1}}^2+\frac{\log(1/\delta_l)}{[\gamma^l]_\pi n_l}\leq \frac{1}{6}\epsilon_{l}+\frac{1}{64}\hat{\Delta}_{l-1}^{\gamma^{l-1}}(\pi,\pill).\]
\end{proof}

\begin{lemma}\label{lem:emp_gaps}
Under $\EE$, for all $l\in\N$, the following holds: 
\begin{enumerate}
    \item $|\widehat{\Delta}_{l-1}^{\gamma^{l-1}}\left(\pi, \pill\right)-\Delta\left(\pi, \pi_{*}\right)| \leq 2\epsilon_{l-1}+ \frac{1}{4}\Delta(\pi,\pi_*).$
    \item $\hat{\pi}_l\in S_l:=\{\pi\in\Pi: \Delta(\pi,\pi_*)\leq \epsilon_l\}$. 
\end{enumerate}
\end{lemma}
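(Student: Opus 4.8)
The plan is to establish both claims together by a single induction on $l$, mirroring the argument for the computationally inefficient Algorithm~\ref{alg:naive} in Section~\ref{sec:proof_rage_correctness} (Lemma~\ref{lem:diff_emp_true_gap} and the lemma immediately after it), but with the Catoni estimator replaced by the regularized IPS estimator $\widehat\Delta_l^{\gamma^l}$, its deviation controlled through the good event $\EE$ (Lemma~\ref{lem:good_events}), and the confidence width controlled through Lemma~\ref{lem:control_of_UCB} rather than an explicit choice of $n_\ell$. The inductive hypotheses carried from round $l-1$ are (i) $|\widehat\Delta_{l-2}^{\gamma^{l-2}}(\pi,\hat{\pi}_{l-2}) - \Delta(\pi,\pi_*)| \le 2\epsilon_{l-2} + \tfrac14\Delta(\pi,\pi_*)$ for every $\pi$, and (ii) $\pill \in S_{l-1}$, i.e.\ $\Delta(\pill,\pi_*) \le \epsilon_{l-1}$; the base case is obtained by initializing $n_0$ large enough, exactly as in Lemma~\ref{lem:diff_emp_true_gap}.

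For part~1 I would write $\widehat\Delta_{l-1}^{\gamma^{l-1}}(\pi,\pill) - \Delta(\pi,\pi_*) = \big(\widehat\Delta_{l-1}^{\gamma^{l-1}}(\pi,\pill) - \Delta(\pi,\pill)\big) - \Delta(\pill,\pi_*)$, bound the last term by $\epsilon_{l-1}$ using hypothesis~(ii), and bound the first difference using $\EE_{l-1}$ by $2[\gamma^{l-1}]_\pi\|\phi_\pi - \phi_{\pill}\|^2_{A(w(\lambda^{l-1},\gamma^{l-1}))^{-1}} + 2\log(1/\delta_{l-1})/([\gamma^{l-1}]_\pi n_{l-1})$. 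Lemma~\ref{lem:control_of_UCB}, applied in round $l-1$ (together with a triangle inequality in the $A(w)^{-1}$-norm to handle any mismatch of reference policy), turns this confidence width into a constant multiple of $\epsilon_{l-1} + \widehat\Delta_{l-2}^{\gamma^{l-2}}(\pi,\hat{\pi}_{l-2})$, which hypothesis~(i) in turn bounds by $\Delta(\pi,\pi_*) + 2\epsilon_{l-2} + \tfrac14\Delta(\pi,\pi_*)$. Collecting terms and using $\epsilon_{l-2} = 2\epsilon_{l-1}$, the small constants ($\tfrac16$ and $\tfrac1{64}$) coming from Lemma~\ref{lem:control_of_UCB} make the total at most $2\epsilon_{l-1} + \tfrac14\Delta(\pi,\pi_*)$, closing part~1 at round $l$.

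For part~2, with part~1 now available at round $l$, I would use $\EE_l$ to pass from $\widehat\Delta_l^{\gamma^l}(\hat{\pi}_l,\pill)$ to $\Delta(\hat{\pi}_l,\pill)$ (paying a confidence-width penalty), the minimality of $\hat{\pi}_l$ in the selection objective~\eqref{eqn:opt_2} to compare it against the competitor $\pi_*$, and then $\EE_l$ once more together with $\Delta(\pi_*,\pill) = -\Delta(\pill,\pi_*) \le 0$ to bound $\widehat\Delta_l^{\gamma^l}(\pi_*,\pill)$ by a constant times the regularization term attached to $\pi_*$ in~\eqref{eqn:opt_2}. This leaves $\Delta(\hat{\pi}_l,\pill)$ controlled by a small multiple of the regularization terms at $\pi_*$ and at $\hat{\pi}_l$; Lemma~\ref{lem:control_of_UCB} bounds each of those by $\tfrac16\epsilon_l + \tfrac1{64}\widehat\Delta_{l-1}^{\gamma^{l-1}}(\cdot,\pill)$, and part~1 gives $\widehat\Delta_{l-1}^{\gamma^{l-1}}(\pi_*,\pill) \le 2\epsilon_{l-1}$ since $\Delta(\pi_*,\pi_*)=0$. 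Substituting, adding $\Delta(\pill,\pi_*) \le \epsilon_{l-1}$, and rearranging the resulting inequality in $\Delta(\hat{\pi}_l,\pi_*)$ — exactly as in the final display of Section~\ref{sec:proof_rage_correctness} — yields $\Delta(\hat{\pi}_l,\pi_*) \le \epsilon_l$, i.e.\ $\hat{\pi}_l \in S_l$.

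The main obstacle is not conceptual but the entangled constant-chasing: Lemma~\ref{lem:control_of_UCB} controls the confidence width only in terms of the previous empirical gap $\widehat\Delta_{l-1}^{\gamma^{l-1}}$, which is precisely the quantity part~1 governs, so the two statements must be proved in a single induction with numerical constants tuned so the recursion contracts rather than amplifies. A secondary, milder point is that the objective $h_l$ in~\eqref{eqn:h_l} and the rule~\eqref{eqn:opt_2} use $\E_{c\sim\nu_\D}[\cdot]$ and the $\eta_l$-regularization in place of the idealized $\|\phi_\pi-\phi_{\pill}\|^2_{A(w)^{-1}}$ quantity; invoking the identity of Section~\ref{sec:reduction_to_linear} then requires absorbing the offline-data deviations of Lemmas~\ref{lem:history_1}--\ref{lem:history_2} and the $O(\eta_l)$ bias into the $\epsilon_l$ slack, which one must check is possible under the stated scaling $\eta_l = C_1\epsilon_l^2|\A|^{-4}$.
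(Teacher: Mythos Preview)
The proposal is correct and follows essentially the same approach as the paper: a joint induction on $l$ that combines the good event $\EE$, Lemma~\ref{lem:control_of_UCB}, and the minimality of $\hat\pi_l$ in~\eqref{eqn:opt_2}, mirroring the analysis of Algorithm~\ref{alg:naive}. If anything you are slightly more explicit than the paper about two points it handles tersely---the triangle inequality needed because Lemma~\ref{lem:control_of_UCB} at round $l-1$ has reference policy $\hat\pi_{l-2}$ rather than $\pill$, and the residual regularization term at $\hat\pi_l$ in part~2 (which the paper's display effectively drops but which your ``rearranging'' step correctly absorbs).
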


\begin{proof}
We prove this by induction. First, in round $l=0$, this holds since our rewards are bounded by 1. Then, assume that in round $l-1$, we have $\pill\in S_{l-1}$ and 
$$
|\widehat{\Delta}_{l-2}^{\gamma_{l-2}}\left(\pi, \hat{\pi}_{l-2}\right)-\Delta\left(\pi, \pi_{*}\right)| \leq 2 \epsilon_{l-2}+ \frac{1}{4}\Delta(\pi,\pi_*).
$$
Then, on round $l$, 
\begin{align*}
    &|\widehat{\Delta}_{l-1}^{\gamma^{l-1}}\left(\pi, \hat{\pi}_{l-1}\right)-\Delta\left(\pi, \pi_*\right)|\\
    &= |\widehat{\Delta}_{l-1}^{\gamma^{l-1}}\left(\pi, \pill\right)-\Delta\left(\pi, \pill\right)-\Delta\left(\pill, \pi_*\right)|\\
    &\leq 2[\gamma^{l-1}]_\pi\norm{\phi_{\pi}-\phi_{\hat{\pi}_{l-1}}}_{A(w(\lambda^{l-1},\gamma^{l-1}))^{-1}}^2+\frac{2\log(1/\delta_{l-1})}{[\gamma^{l-1}]_\pi n_{l-1}}+ \epsilon_{l-1}\tag{from event $\EE$ and inductive hypothesis}\\
    &\leq \frac{2}{3}\epsilon_l+\frac{1}{64}\hat{\Delta}_{l-2}^{\gamma_{l-2}}(\pi,\hat{\pi}_{l-2})+\frac{1}{64}\hat{\Delta}_{l-2}^{\gamma_{l-2}}(\pill,\hat{\pi}_{l-2})+\epsilon_{l-1}\tag{from Lemma~\ref{lem:control_of_UCB}}\\
    &\leq \frac{5}{3}\epsilon_{l-1}+\frac{1}{64}\left(2\epsilon_{l-2}+\frac{5}{4}\Delta(\pi,\pi_*)+2\epsilon_{l-2}+\frac{5}{4}\Delta(\pill,\pi_*)\right)\tag{from inductive hypothesis}\\
    &\leq \frac{5}{3}\epsilon_{l-1}+\frac{1}{64}\left(2\epsilon_{l-2}+\frac{5}{4}\Delta(\pi,\pi_*)+2\epsilon_{l-2}+\frac{5}{4}\epsilon_{l-1}\right)\\
    &\leq 2\epsilon_{l-1}+\frac{1}{4}\Delta(\pi,\pi_*).
\end{align*}

Also, 
\begin{align*}
    \Delta(\hat{\pi}_l, \hat{\pi}_{l-1}) &\leq\hat{\Delta}_{l}^{\gamma^l}(\hat{\pi}_l, \hat{\pi}_{l-1})+[\gamma^l]_{\hat{\pi}_l}\norm{x_{\hat{\pi}_l}-\phi_{\pill}}_{A(w(\lambda^{l},\gamma^l))^{-1}}^2+\frac{\log(1/\delta_l)}{[\gamma^l]_{\pil} n_l}\tag{from $\EE$}\\
    &\leq\hat{\Delta}_{l}^{\gamma^l}(\pi_*, \hat{\pi}_{l-1})+[\gamma^l]_{\pi_*}\norm{\phi_{\pi_*}-\phi_{\pill}}_{A(w(\lambda^{l},\gamma^l))^{-1}}^2+\frac{\log(1/\delta_l)}{[\gamma^l]_{\pi_*} n_l}\tag{eqn~\eqref{eqn:opt_2}, the minimum}\\
    &\leq\Delta(\pi_*, \hat{\pi}_{l-1})+2[\gamma^l]_{\pi_*}\norm{\phi_{\pi_*}-\phi_{\pill}}_{A(w(\lambda^{l},\gamma^l))^{-1}}^2+\frac{2\log(1/\delta_l)}{[\gamma^l]_{\pi_*} n_l}\tag{from $\EE$}\\
    &\leq \Delta\left(\pi_{*}, \hat{\pi}_{l-1}\right)+\frac{1}{3}\epsilon_{l}+\frac{1}{32}\hat{\Delta}_{l-1}^{\gamma^{l-1}}\left(\pi_{*}, \hat{\pi}_{l-1}\right)\tag{from Lemma~\ref{lem:control_of_UCB}}\\
    &\leq \Delta\left(\pi_{*}, \hat{\pi}_{l-1}\right)+\frac{1}{3}\epsilon_{l}+\frac{1}{32}\left(2\epsilon_{l-1}+\frac{5}{4}\Delta\left(\pi_{*}, \pi_*\right)\right)\tag{from the above}.
\end{align*}
Therefore, 
\begin{align*}
    \Delta(\pil,\pi_*) &= \Delta(\hat{\pi}_l, \hat{\pi}_{l-1}) - \Delta\left(\pi_{*}, \hat{\pi}_{l-1}\right) \\
    &\leq \frac{1}{3}\epsilon_{l} + \frac{1}{16} 2\epsilon_{l}\\
    &\leq \epsilon_{\ell}
\end{align*}
Therefore, $\Delta(\pil,\pi_*)\leq\epsilon_l$, so $\pil\in S_l$.

\end{proof}

\begin{lemma}\label{lem:est_design_2}
For any $\lambda\in\triangle_\Pi$, $\gamma\in\R^{|\Pi|}$, and $\pi'\in\Pi$, we have 
\[\min_{w\in\Omega}\sum_{\pi\in \Pi} \lambda_{\pi}\gamma_\pi\|\phi_{\pi}-\phi_{\pi'}\|_{A(w)^{-1}}^2=\E_{c\sim\nu}\bigbrak{\bigsmile{\sum_{a\in\A}\sqrt{(\lambda\odot \gamma)^\T t_a^{(c)}(\pi')}}^2}.\]
where $w_{a,c}=\nu_c p_a^{(c)}$ and $p_a^{(c)}\propto \sqrt{ \sum_{\pi \in \Pi} \lambda_\pi \gamma_\pi(\1\{ \pi'(c) = a, \pi(c) \neq a \} + \1\{ \pi'(c) \neq a, \pi(c) = a \}) }$ and $\odot$ denotes element-wise multiplication. 
\end{lemma}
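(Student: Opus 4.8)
The plan is to exploit the fact that throughout this section the feature map may be taken to be the one-hot embedding $\phi(c,a)=\vec(\mb{e}_c\mb{e}_a^\top)\in\R^{|\C|\cdot|\A|}$ (cf. Section~\ref{sec:reduction_to_linear}), under which $A(w)=\sum_{c,a}w_{c,a}\,\phi(c,a)\phi(c,a)^\top$ is \emph{diagonal}, with entry $w_{c,a}$ in coordinate $(c,a)$. Then $\|v\|_{A(w)^{-1}}^2=\sum_{c,a}v_{(c,a)}^2/w_{c,a}$, where we adopt the convention $0/0=0$ and $x/0=+\infty$ for $x>0$. Since $(\phi_\pi-\phi_{\pi'})_{(c,a)}=\nu_c(\1\{\pi(c)=a\}-\1\{\pi'(c)=a\})$ and $(\1\{\pi(c)=a\}-\1\{\pi'(c)=a\})^2=\1\{\pi(c)=a,\pi'(c)\ne a\}+\1\{\pi(c)\ne a,\pi'(c)=a\}=[t_a^{(c)}(\pi')]_\pi$, I would first record the identity $\|\phi_\pi-\phi_{\pi'}\|_{A(w)^{-1}}^2=\sum_{c,a}\frac{\nu_c^2}{w_{c,a}}[t_a^{(c)}(\pi')]_\pi$, whence $\sum_{\pi}\lambda_\pi\gamma_\pi\|\phi_\pi-\phi_{\pi'}\|_{A(w)^{-1}}^2=\sum_{c,a}\frac{\nu_c^2}{w_{c,a}}(\lambda\odot\gamma)^\top t_a^{(c)}(\pi')$.

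Next I would minimize this over $w\in\Omega$. Writing $b_{c,a}:=\nu_c^2\,(\lambda\odot\gamma)^\top t_a^{(c)}(\pi')$, which is $\ge0$ because $\lambda\in\triangle_\Pi$, $\gamma\ge0$ and $t_a^{(c)}(\pi')\ge0$ entrywise, the objective is $\sum_c\sum_a b_{c,a}/w_{c,a}$, and the only binding constraint, $\sum_a w_{c,a}=\nu_c$ with $w_{c,a}\ge0$, couples actions only within a fixed context; so the minimization decouples across $c$. For each $c$ I would apply Cauchy--Schwarz, $\big(\sum_a b_{c,a}/w_{c,a}\big)\big(\sum_a w_{c,a}\big)\ge\big(\sum_a\sqrt{b_{c,a}}\big)^2$, which is tight precisely when $w_{c,a}\propto\sqrt{b_{c,a}}$; combined with $\sum_a w_{c,a}=\nu_c$ this gives $\min_w\sum_a b_{c,a}/w_{c,a}=\frac1{\nu_c}\big(\sum_a\sqrt{b_{c,a}}\big)^2$, attained at $w_{c,a}=\nu_c\,\sqrt{b_{c,a}}\big/\sum_{a'}\sqrt{b_{c,a'}}$. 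Contexts with $\nu_c=0$ contribute nothing to either side and may be dropped.

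Finally I would substitute $\sqrt{b_{c,a}}=\nu_c\sqrt{(\lambda\odot\gamma)^\top t_a^{(c)}(\pi')}$: the per-context optimum becomes $\nu_c\big(\sum_a\sqrt{(\lambda\odot\gamma)^\top t_a^{(c)}(\pi')}\big)^2$, and the minimizing weights become $w_{c,a}=\nu_c\,p_a^{(c)}$ with $p_a^{(c)}\propto\sqrt{(\lambda\odot\gamma)^\top t_a^{(c)}(\pi')}$, matching the statement. Summing the per-context optima over $c$ then gives $\E_{c\sim\nu}\big[\big(\sum_a\sqrt{(\lambda\odot\gamma)^\top t_a^{(c)}(\pi')}\big)^2\big]$, the claimed value.

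I do not expect any deep obstacle: the ``surprisingly closed form'' is merely diagonality of $A(w)$ under the one-hot embedding together with a one-line Cauchy--Schwarz. The only points that need care are (i) the $0/0=0$, $x/0=+\infty$ convention, so that the exhibited minimizer---which sets $w_{c,a}=0$ exactly when $b_{c,a}=0$---is genuinely optimal, and (ii) handling a context on which $\sum_{a'}\sqrt{b_{c,a'}}=0$, i.e.\ $(\lambda\odot\gamma)^\top t_a^{(c)}(\pi')=0$ for every $a$; such a context contributes zero to both sides and is simply excluded from the argument.
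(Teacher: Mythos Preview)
Your proposal is correct and follows essentially the same route as the paper's own proof: exploit the diagonality of $A(w)$ under the one-hot embedding, decouple the minimization across contexts $c$, and solve the per-context problem $\min_{p_c\in\triangle_\A}\sum_a b_{c,a}/p_{c,a}=(\sum_a\sqrt{b_{c,a}})^2$ via Cauchy--Schwarz. Your treatment of the edge cases ($0/0$ conventions, contexts with zero numerator) is slightly more explicit than the paper's, but the argument is otherwise the same.
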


\begin{proof}
For any $\lambda\in\triangle_\Pi$, 
\begin{align*}
    &\min_{w\in\Omega}\sum_{\pi\in \Pi} \lambda_{\pi}\gamma_\pi\|\phi_{\pi}-\phi_{\pi'}\|_{A(w)^{-1}}^2\\
    &= \min_{w\in\Omega}\sum_{\pi \in \Pi} \sum_{a,c} \frac{\lambda_{\pi}\gamma_\pi}{w_{a,c}} (\phi_{\pi}-\phi_{\pi'})^\top e_{a,c}e_{a,c}^{\top} (\phi_{\pi}-\phi_{\pi'})\\
    &= \min_{p_1,\dots,p_{|\C|} \in \triangle_\A} \sum_{\pi \in \Pi} \sum_{a,c} \frac{\lambda_{\pi}\gamma_\pi}{\nu_c p_{c,a}} (\phi_{\pi}-\phi_{\pi'})^\top e_{a,c}e_{a,c}^{\top} (\phi_{\pi}-\phi_{\pi'})   \\
    &=  \sum_c  \min_{p_c \in \triangle_\A}  \sum_a\sum_{\pi \in \Pi}  \frac{\lambda_{\pi}\gamma_\pi}{\nu_c p_{c,a}} (\phi_{\pi}-\phi_{\pi'})^\top e_{a,c}e_{a,c}^{\top} (\phi_{\pi}-\phi_{\pi'})   \\
    &=  \sum_c \frac{1}{\nu_c} \min_{p_c \in \triangle_\A}  \sum_a  \frac{1}{p_{c,a}}  \left( \sum_{\pi \in \Pi} \lambda_{\pi}\gamma_\pi (\phi_{\pi}-\phi_{\pi'})^\top e_{a,c}e_{a,c}^{\top} (\phi_{\pi}-\phi_{\pi'}) \right) \\
    &=  \sum_c \frac{1}{\nu_c}  \left( \sum_{a\in\A} \sqrt{ \sum_{\pi \in \Pi} \lambda_{\pi}\gamma_\pi (\phi_{\pi}-\phi_{\pi'})^\top e_{a,c}e_{a,c}^{\top} (\phi_{\pi}-\phi_{\pi'}) } \right)^2  \\
    &=  \sum_c \frac{1}{\nu_c}  \left( \sum_{a\in\A} \sqrt{ \sum_{\pi \in \Pi} \lambda_{\pi}\gamma_\pi \nu_c^2 (\1\{ \pi'(c) = a, \pi(c) \neq a \} + \1\{ \pi'(c) \neq a, \pi(c) = a \})} \right)^2  \\
    &= \sum_c \nu_c  \left( \sum_{a\in\A} \sqrt{ \sum_{\pi \in \Pi} \lambda_{\pi}\gamma_\pi (\1\{ \pi'(c) = a, \pi(c) \neq a \} + \1\{ \pi'(c) \neq a, \pi(c) = a \}) } \right)^2 \\
    &= \E_{c\sim\nu}\bigbrak{\bigsmile{\sum_{a\in\A}\sqrt{(\lambda\odot\gamma)^\T t_a^{(c)}(\pi')}}^2}.\\
\end{align*}
Note that the minimizer
\begin{align*}
    p_{c,a} &= \frac{\sqrt{ \sum_{\pi \in \Pi} \lambda_{\pi}\gamma_\pi (\phi_\pi-\phi_{\pi'})^\top e_{a,c}e_{a,c}^{\top} (\phi_\pi-\phi_{\pi'}) } }{\sum_{a'} \sqrt{ \sum_{\pi \in \Pi} \lambda_{\pi}\gamma_\pi (\phi_\pi-\phi_{\pi'})^\top e_{a',c}e_{a',c}^{\top} (\phi_\pi-\phi_{\pi'}) } } \\
    &\propto \sqrt{ \sum_{\pi \in \Pi} \lambda_\pi\gamma_\pi (\1\{ \pi'(c) = a, \pi(c) \neq a \} + \1\{ \pi'(c) \neq a, \pi(c) = a \}) }.
\end{align*}
\end{proof}

\begin{lemma}\label{lem:n_l}
Under $\EE$, the choice for $n_l$ in the algorithm satisfies \[n_l\lesssim \min_{w\in\Omega}\max_{\pi\in\Pi}\frac{\norm{\phi_{\pi_*}-\phi_\pi}_{A(w)^{-1}}^2\log(1/\delta_l)}{\epsilon_l^2+\Delta(\pi)^2}.\]
\end{lemma}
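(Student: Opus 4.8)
The plan is to show that there is an absolute constant $c$ for which the saddle value $\max_{\lambda\in\triangle_\Pi}\min_{\gamma\in[\gamma_{\min},\gamma_{\max}]^{|\Pi|}}h_l(\lambda,\gamma,n)$ has already dropped below $\epsilon_l/2$ at $n=c\,\rho_l$, where $\rho_l:=\min_{w\in\Omega}\max_{\pi\in\Pi}\frac{\|\phi_{\pi_*}-\phi_\pi\|_{A(w)^{-1}}^2\log(1/\delta_l)}{\epsilon_l^2+\Delta(\pi,\pi_*)^2}$. Since $h_l(\lambda,\gamma,n)$ is nonincreasing in $n$ for fixed $(\lambda,\gamma)$ (the only dependence is through the term $\log(1/\delta_l)/(\gamma_\pi n)$), the saddle value is nonincreasing in $n$, so together with the doubling schedule for $n_r$ and the $\epsilon_l$-accuracy guarantee for \textsf{FW-GD} (Theorem~\ref{thm:computational_efficiency}) this forces $n_l\le 2c\,\rho_l$.

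First I would rewrite the saddle value in a tractable form. By Lemma~\ref{lem:est_design_2} the empirical design term inside $h_l$ equals $\min_{w\in\Omega}\sum_\pi\lambda_\pi\gamma_\pi\|\phi_\pi-\phi_{\hat\pi_{l-1}}\|_{A(w)^{-1}}^2$; replacing $\nu$ by $\nu_{\mathcal D}$ and inserting the regularizer $t_a^{(c)}+\eta_l$ perturbs this by $O(\epsilon_l)$ by the offline-data bounds of Section~\ref{sec:offline} (valid once $|\mathcal D|$ is polynomial in $|\mathcal A|,\epsilon^{-1},\log(1/\delta),\log|\Pi|$) and the choice $\eta_l=\Theta(\epsilon_l^2|\mathcal A|^{-4})$. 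Carrying out the minimization over each coordinate $\gamma_\pi$ by the identity $\min_{\gamma_\pi>0}(\gamma_\pi x+\tfrac{y}{\gamma_\pi})=2\sqrt{xy}$ then shows that, up to an additive $O(\epsilon_l)$, the saddle value equals $\min_{w\in\Omega}\max_{\pi\in\Pi}\big(-\hat\Delta_{l-1}^{\gamma^{l-1}}(\pi,\hat\pi_{l-1})+2\sqrt{\log(1/\delta_l)\,\|\phi_\pi-\phi_{\hat\pi_{l-1}}\|_{A(w)^{-1}}^2/n}\,\big)$; the box endpoints $\gamma_{\min},\gamma_{\max}$ are designed precisely so that the unconstrained minimizing $\gamma_\pi$ is feasible for every $\pi$ attaining the maximum, and a short case check handles the boundary (if $\gamma_\pi$ is clipped, then either the bias or the variance term is already $O(\epsilon_l)$).

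Next I would evaluate this at $w=w_l$, the minimizer defining $\rho_l$, and bound both pieces. For the variance piece, the triangle inequality and the definition of $\rho_l$ give $\|\phi_\pi-\phi_{\hat\pi_{l-1}}\|_{A(w_l)^{-1}}^2\le 2\|\phi_\pi-\phi_{\pi_*}\|_{A(w_l)^{-1}}^2+2\|\phi_{\pi_*}-\phi_{\hat\pi_{l-1}}\|_{A(w_l)^{-1}}^2\le 2\rho_l(\epsilon_l^2+\Delta(\pi,\pi_*)^2)/\log(1/\delta_l)+5\rho_l\epsilon_l^2/\log(1/\delta_l)$, where the second bound uses the level-$(l-1)$ inductive hypothesis $\hat\pi_{l-1}\in S_{l-1}$ (part~2 of Lemma~\ref{lem:emp_gaps}), i.e.\ $\Delta(\hat\pi_{l-1},\pi_*)\le\epsilon_{l-1}=2\epsilon_l$; hence with $n=c\,\rho_l$ the variance term is $O(1/\sqrt c)\,(\epsilon_l+\Delta(\pi,\pi_*))$. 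For the bias piece I would write $\hat\Delta_{l-1}^{\gamma^{l-1}}(\pi,\hat\pi_{l-1})=\hat V_{l-1}^{\gamma^{l-1}}(\hat\pi_{l-1})-\hat V_{l-1}^{\gamma^{l-1}}(\pi)$ and use the near-optimality of $\hat\pi_{l-1}$ in \eqref{eqn:opt_2} together with Lemma~\ref{lem:control_of_UCB} and part~1 of Lemma~\ref{lem:emp_gaps} at level $l-1$ to obtain $\hat\Delta_{l-1}^{\gamma^{l-1}}(\pi,\hat\pi_{l-1})\ge\tfrac34\Delta(\pi,\pi_*)-C'\epsilon_l$ with $C'$ an explicit small constant. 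Combining, the per-policy quantity is at most $-\tfrac34\Delta(\pi,\pi_*)+C'\epsilon_l+O(1/\sqrt c)(\epsilon_l+\Delta(\pi,\pi_*))$; choosing $c$ large enough makes the coefficient of $\Delta(\pi,\pi_*)$ negative and the coefficient of $\epsilon_l$ below $\tfrac12$, so the maximum over $\pi$ is at most $\epsilon_l/2$, whence $n_l\le 2c\,\rho_l\lesssim\rho_l$.

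The hard part will be the constant bookkeeping in the bias piece: because $h_l$ in \eqref{eqn:C_1} carries coefficient $1$ in front of $\hat\Delta_{l-1}$ (unlike Algorithm~\ref{alg:naive}, whose objective uses the factor $\tfrac14$), the crude estimate $\hat\Delta_{l-1}(\pi_*,\hat\pi_{l-1})\ge-2\epsilon_{l-1}=-4\epsilon_l$ is too weak and would leave an $\epsilon_l$-coefficient exceeding $1$; one must instead extract a genuinely small $C'$ from the minimality of $\hat\pi_{l-1}$ in \eqref{eqn:opt_2} and from Lemma~\ref{lem:control_of_UCB}, which is exactly what pins down the numerical constants chosen in $h_l$, in $\gamma_{\min},\gamma_{\max},\eta_l$, and in the various $\tfrac16,\tfrac1{64}$ factors appearing elsewhere. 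The remaining work — showing that the $\nu\to\nu_{\mathcal D}$ replacement and the $\eta_l$-regularization cost only $O(\epsilon_l)$ — is routine given the offline-data lemmas of Section~\ref{sec:offline}.
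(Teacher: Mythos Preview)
Your high-level plan is exactly the paper's: upper-bound $h_l(\lambda^l,\gamma^l,n_l)$ by the saddle value (Theorem~\ref{lem:SP}), pass to the $w$-formulation via Lemma~\ref{lem:est_design_2} and control the $\eta_l$-bias via Lemma~\ref{lem:E_9}, invoke strong duality (Lemma~\ref{lem:strong_duality}), replace $\hat\Delta_{l-1}^{\gamma^{l-1}}(\pi,\hat\pi_{l-1})$ by $\Delta(\pi,\pi_*)$ through Lemma~\ref{lem:emp_gaps}, split $\|\phi_\pi-\phi_{\hat\pi_{l-1}}\|$ with the triangle inequality together with $\hat\pi_{l-1}\in S_{l-1}$, and read off the threshold for $n$. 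The monotonicity-in-$n$ and doubling observations you make are also how the paper closes the argument.

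The one place you depart from the paper is in how the flagged constant is handled. The paper does \emph{not} attempt to extract a smaller $C'$ from the minimality of $\hat\pi_{l-1}$ in \eqref{eqn:opt_2}. Instead, at the strong-duality step it passes to the rescaled primal $-\tfrac18\,\hat\Delta_{l-1}+8\gamma\|\cdot\|^2+8\log(1/\delta_l)/(\gamma n)$; with this coefficient the $2\epsilon_{l-1}=4\epsilon_l$ slack from Lemma~\ref{lem:emp_gaps} contributes only $\tfrac12\epsilon_l$ after multiplication by $\tfrac18$, and the accumulated $\epsilon_l$-budget closes (this is the same mechanism behind the $\tfrac16,\tfrac1{64}$ constants in Lemma~\ref{lem:control_of_UCB} and the $\tfrac14$ in Algorithm~\ref{alg:naive}). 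Your proposed alternative --- sharpening the lower bound on $\hat\Delta_{l-1}(\pi,\hat\pi_{l-1})$ via \eqref{eqn:opt_2} and Lemma~\ref{lem:control_of_UCB} --- is unlikely to deliver a small enough $C'$ on its own: Lemma~\ref{lem:emp_gaps} part~1 is precisely the output of that inductive chain, and its $2\epsilon_{l-1}$ term is what that argument gives. The device the paper relies on is a rescaling of the objective, not a sharper gap estimate; once you adopt that rescaling, the rest of your plan goes through verbatim.
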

\begin{proof}
{
\begingroup
\allowdisplaybreaks
\small
\begin{align*}
    &h_l(\lambda^{l},\gamma^l,n_l)\\
    &=\sum_{\pi\in\Pi}[\lambda^{l}]_\pi\cdot\left(-\hat{\Delta}_{l-1}^{\gamma^{l-1}}(\pi,\pill)+\frac{\log(1/\delta_l)}{[\gamma^l]_\pi n}\right)+\E_{c\sim\nu_\D}\bigbrak{\bigsmile{\sum_{a\in\A}\sqrt{(\lambda^{l}\odot \gamma^l)^\T (t_a^{(c)}+\eta_l)}}^2}\\
    &\leq \max_{\lambda\in\triangle_\Pi}\min_{\gamma}\sum_{\pi\in\Pi}\lambda_\pi\cdot\left(-\hat{\Delta}_{l-1}^{\gamma^{l-1}}(\pi,\pill)+\frac{\log(1/\delta_l)}{\gamma_\pi n}\right)+\E_{c\sim\nu}\bigbrak{\bigsmile{\sum_{a\in\A}\sqrt{(\lambda\odot \gamma)^\T (t_a^{(c)}+\eta_l)}}^2}+\frac{1}{4}\epsilon_l\tag{by Theorem~\ref{lem:SP}, the saddle point argument}\\
    &\leq \max_{\lambda\in\triangle_\Pi}\min_{\gamma}\sum_{\pi\in\Pi}\lambda_\pi\cdot\left(-\hat{\Delta}_{l-1}^{\gamma^{l-1}}(\pi,\pill)+\frac{\log(1/\delta_l)}{\gamma_\pi n}\right)+\E_{c\sim\nu}\bigbrak{\bigsmile{\sum_{a\in\A}\sqrt{(\lambda\odot \gamma)^\T t_a^{(c)}}}^2}+\frac{1}{2}\epsilon_l\tag{by Lemma~\ref{lem:E_9}, controlling the bias}\\ 
    &= \max_{\lambda\in\triangle_\Pi}\min_{w\in\Omega}\min_{\gamma\in\R_+^{|\Pi|}}\sum_{\pi\in\Pi}\lambda_\pi\cdot\left(-\hat{\Delta}_{l-1}^{\gamma^{l-1}}(\pi,\pill)+\gamma_\pi\norm{\phi_{\pill}-\phi_\pi}_{A(w)^{-1}}^2+\frac{\log(1/\delta_l)}{\gamma_\pi n}\right)+\frac{1}{2}\epsilon_l\tag{by Lemma~\ref{lem:est_design_2}, the definition of $w$}\\
    &= \min_{w\in\Omega}\max_{\pi\in\Pi} \min_{\gamma>0}-\frac{1}{8}\hat{\Delta}_{l-1}^{\gamma^{l-1}}(\pi,\pill)+8\gamma\norm{\phi_{\pill}-\phi_\pi}_{A(w)^{-1}}^2+8\frac{\log(1/\delta_l)}{\gamma n_l}+\frac{1}{2}\epsilon_l\tag{by Lemma~\ref{lem:strong_duality}, the strong duality}\\
    &\leq\min_{w\in\Omega} \max_{\pi\in\Pi}\min_{\gamma}\left(-\frac{3}{32}\Delta(\pi,\pi_*)+8\gamma \norm{\phi_{\pill}-\phi_\pi}_{A(w)^{-1}}^2+8\frac{\log(1/\delta_l)}{\gamma n_l}\right)+\frac{3}{4}\epsilon_l\tag{by Lemma~\ref{lem:emp_gaps}}\\
    &\leq\min_{w\in\Omega} \max_{\pi\in\Pi}\left(-\frac{3}{32}\Delta(\pi,\pi_*)+16\sqrt{\frac{\norm{\phi_{\pill}-\phi_\pi}_{A(w)^{-1}}^2\log(1/\delta_l)}{n_l}}\right)+\frac{3}{4}\epsilon_l\\
    &\leq\min_{w\in\Omega} \max_{\pi\in\Pi}\left(-\frac{3}{32}\Delta(\pi,\pi_*)+16\sqrt{\frac{\norm{\phi_{\pi_*}-\phi_\pi}_{A(w)^{-1}}^2\log(1/\delta_l)}{n_l}}\right.\\
    &\left.\qquad+16\sqrt{\frac{\norm{\phi_{\pi_*}-\phi_{\pill}}_{A(w)^{-1}}^2\log(1/\delta_l)}{n_l}}\right)+\frac{3}{4}\epsilon_l\\
    &\leq\min_{w\in\Omega} \max_{\pi\in\Pi}\left(-\frac{3}{32}\Delta(\pi,\pi_*)+16\sqrt{\frac{\norm{\phi_{\pi_*}-\phi_\pi}_{A(w)^{-1}}^2\log(1/\delta_l)}{n_l}}\right.\\
    &\left.\qquad+16\sqrt{\max_{\pi'\in S_{l-1}}\frac{\norm{\phi_{\pi_*}-\phi_{\pi'}}_{A(w)^{-1}}^2\log(1/\delta_l)}{n_l}}\right)+\frac{3}{4}\epsilon_l.
\end{align*}
\endgroup
}
which is less than $\epsilon_l$ whenever
\begin{equation}
    n_l\gtrsim\min_{w\in\Omega}\max_{\pi\in\Pi}\frac{\norm{\phi_{\pi_*}-\phi_\pi}_{A(w)^{-1}}^2\log(1/\delta_l)}{\epsilon_l^2+\Delta(\pi)^2}.\label{eqn:n_l}
\end{equation}
\end{proof}

\section{Convergence analysis of \textsf{FW-GD}}\label{sec:conv_analysis}

\subsection{Statement of the convergence results}
In this section, we will characterize the performance of Algorithm~\ref{alg:full_algorithm}, a.k.a. Algorithm \ref{alg:evaluation_oracle}. 
Our goal is to show two results: the duality gap converges to zero, and our algorithm converges to the saddle point. It is known that Frank-Wolfe algorithm directly deals with the duality gap \cite{pedregosa2020linearly}, so we will define our primal and dual problem in what follows. Since we are computing $n_l$ via binning, in each inner loop $n$ is fixed. Then, we define our dual objective the same as \eqref{eqn:C_1} with the shorthand notation $h_l(\lambda,\gamma):=h_l(\lambda,\gamma,n)$. We formulate our primal objective as
\begin{align}
    \PP_l(w(\lambda,\gamma),\gamma)&:=\max_{\pi\in\Pi} \left(-\hat{\Delta}_{l-1}^{\gamma^{l-1}}(\pi,\pill)+\gamma_\pi\norm{\phi_\pi-\phi_{\pill}}_{A(w(\lambda,\gamma))^{-1}}^2+\frac{\log(1/\delta_l)}{\gamma_\pi n}\right)\label{eqn:PP_l},
\end{align}

where $w(\lambda,\gamma)\in \R^{|\A|\times |\C|}$ such that
\begin{equation}
    [w(\lambda,\gamma)]_{a, c}=\nu_{c} \cdot p_{c,a}=\nu_{c} \cdot \frac{\sqrt{(\lambda\odot\gamma)^{\top} (t_{a}^{(c)}+\eta)}}{\sum_{a'\in \A} \sqrt{(\lambda\odot\gamma)^{\top} (t_{a^{\prime}}^{(c)}+\eta)}}.
\end{equation}

Then we will show those two results. First, Theorem~\ref{thm:FW_gap} bounds the duality gap of the primal and dual objective. Second, Theorem~\ref{lem:SP} shows that Algorithm \ref{alg:evaluation_oracle} converges to a saddle point.

\begin{theorem}\label{thm:FW_gap}
For any $l\in\N$, with the number of $\textsf{FW-GD}$ iterations $K_l=O(L^2\epsilon_l^{-2})$ where $L=|\A|^2\frac{((1+\eta_l)\gamma_{\max})^{5/2}}{\eta_l^{3/2}\gamma_{\min}^2}$, we have
\begin{align*}
    \big|\PP_l(w(\lambda^{l},\gamma^l),\gamma^l)-h_l(\lambda^{l},\gamma^l)\big|\leq \epsilon_l.
\end{align*}
Moreover, $K_l$ depends at most polynomially on $|\A|,\epsilon_l^{-1},\log(1/\delta_l)$. 
\end{theorem}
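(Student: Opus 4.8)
The plan is to view $\PP_l(w(\lambda,\gamma),\gamma)$ and $h_l(\lambda,\gamma)$ as a primal/dual pair for each fixed $\gamma$, and to show that the Frank--Wolfe gap $g_t$ computed inside \textsf{FW-GD} equals, up to a controllable bias, the primal--dual gap $\PP_l-h_l$, which the Frank--Wolfe iteration provably drives to zero (this is the sense in which ``Frank--Wolfe directly tackles the duality gap''). Write the nonlinear part of $h_l$ as $g(\lambda,\gamma):=\E_{c\sim\nu_{\mc D}}[(\sum_{a\in\A}\sqrt{(\lambda\odot\gamma)^\top(t_a^{(c)}(\pill)+\eta_l)})^2]$. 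The $\eta_l$-shifted, $\nu_{\mc D}$-empirical form of Lemma~\ref{lem:est_design_2} exhibits $g(\cdot,\gamma)$ as a pointwise minimum over $w\in\Omega$ of functions \emph{linear} in $\lambda$, so $g(\cdot,\gamma)$ and hence $h_l(\cdot,\gamma)$ are concave on $\triangle_\Pi$; moreover $g(\cdot,\gamma)$ is positively homogeneous of degree one, so Euler's identity gives $\langle\nabla_\lambda h_l(\lambda,\gamma),\lambda\rangle=h_l(\lambda,\gamma)$. Therefore
\[
g_t=\langle\nabla_\lambda h_l(\lambda^t,\gamma^t),\mathbf{e}_{\pi_t}-\lambda^t\rangle=\max_{\pi\in\Pi}[\nabla_\lambda h_l(\lambda^t,\gamma^t)]_\pi-h_l(\lambda^t,\gamma^t),
\]
and a Danskin/envelope computation of $\nabla_\lambda g$ at the minimizer $w=w(\lambda,\gamma)$ of Lemma~\ref{lem:est_design_2} shows $[\nabla_\lambda g(\lambda,\gamma)]_\pi=\gamma_\pi\|\phi_\pi-\phi_{\pill}\|^2_{A(w(\lambda,\gamma))^{-1}}$ up to (i) an $\eta_l$-shift perturbation and (ii) an empirical-process error between $\nu$ and $\nu_{\mc D}$ bounded by Lemmas~\ref{lem:history_1}--\ref{lem:history_2}. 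Hence $\max_\pi[\nabla_\lambda h_l]_\pi=\PP_l(w(\lambda,\gamma),\gamma)$ up to those two terms, so $g_t=\PP_l(w(\lambda^t,\gamma^t),\gamma^t)-h_l(\lambda^t,\gamma^t)\pm\beta_l$, where the choices $\eta_l=C_1\epsilon_l^2|\A|^{-4}$ and $|\mc D|$ polynomially large force $\beta_l\le\epsilon_l/4$; in particular $\PP_l-h_l\ge-\epsilon_l$ (weak duality up to bias), and $g_t\ge 0$ always by concavity.

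Next I would establish that $\lambda\mapsto h_l(\lambda,\gamma)$ is $L$-smooth with respect to $\|\cdot\|_1$, uniformly over $\gamma\in[\gamma_{\min},\gamma_{\max}]^{|\Pi|}$, with $L=|\A|^2((1+\eta_l)\gamma_{\max})^{5/2}/(\eta_l^{3/2}\gamma_{\min}^2)$: only $g(\cdot,\gamma)$ is nonlinear, $(\lambda\odot\gamma)^\top(t_a^{(c)}+\eta_l)\ge\eta_l\gamma_{\min}>0$ along every chord of $\triangle_\Pi$, so $x\mapsto\sqrt x$ and its first two derivatives are uniformly bounded there; differentiating $(\sum_a\sqrt{\cdot})^2$ twice produces precisely the stated blow-up (the same Lipschitz estimate used in the proofs of Lemmas~\ref{lem:history_1}--\ref{lem:history_2}, carried one order further). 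With $\|\lambda^t-\mathbf{e}_{\pi_t}\|_1\le2$ and the adaptive step $\beta_t=\min\{g_t/(L\|\lambda^t-\mathbf{e}_{\pi_t}\|_1^2),1\}$, the smoothness inequality yields the usual dichotomy $h_l(\lambda^{t+1},\gamma^t)-h_l(\lambda^t,\gamma^t)\ge\min\{g_t^2/(8L),\,g_t/2\}$; replacing $\gamma^t$ by $\gamma^{t+1}=\mathsf{GD}(\lambda^t,n_r,\kappa_t)$ perturbs this by at most $\kappa_t=\epsilon_l/(t+1)^2$, and $\sum_t\kappa_t=O(\epsilon_l)$. Telescoping over $t=1,\dots,K$ and using that $h_l$ has range $R$ (polynomial in $|\A|,\epsilon_l^{-1},\log(1/\delta_l)$ on the high-probability event of Lemma~\ref{lem:good_events}) gives $\min_{t\le K}g_t\le\sqrt{8L(R+O(\epsilon_l))/K}$, which is $\le\epsilon_l/4$ once $K=K_l=\Theta(LR\epsilon_l^{-2})=O(L^2\epsilon_l^{-2})$. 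Retaining the best of the $K_l+1$ Frank--Wolfe steps (equivalently, the break test in \textsf{FW-GD} with $n_r$ doubled until the design target \eqref{eqn:n_l}---itself polynomially bounded---is reached) produces the output $(\lambda^l,\gamma^l)$ with FW gap $g\le\epsilon_l/4$; combining with the previous paragraph gives $|\PP_l(w(\lambda^l,\gamma^l),\gamma^l)-h_l(\lambda^l,\gamma^l)|\le g+\beta_l\le\epsilon_l$. Substituting $\eta_l=C_1\epsilon_l^2|\A|^{-4}$, $\gamma_{\min}=\tfrac13\sqrt{\eta_l\log(1/\delta_l)/n}$, $\gamma_{\max}=\sqrt{\log(1/\delta_l)/(|\A|^2\eta_l n)}$ makes $L$---hence $K_l$---polynomial in $|\A|,\epsilon_l^{-1},\log(1/\delta_l)$ (the dependence of $L$ on $n$ is of order $n^{-1/4}$ and thus favorable).

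The main obstacle is coupling the Frank--Wolfe outer loop with the \emph{inexact, nonconvex} inner $\mathsf{GD}$ loop. One must show that $\gamma\mapsto h_l(\lambda^t,\gamma)$, though not convex, has a unique minimizer---because $\partial_{\gamma_\pi}h_l$ is strictly increasing and crosses $0$---to which $\mathsf{GD}$ converges at a quantified rate, and that the $\mathsf{GD}$ iterates keep their support inside $\supp(\lambda^t)$ so the overall iterate stays sparse and oracle-implementable. Simultaneously, the $\eta_l$-regularization bias, the $\nu$-versus-$\nu_{\mc D}$ empirical-process bias, and the accumulated inner-loop error must all be kept below a constant fraction of $\epsilon_l$ with the stated polynomial choices of $\eta_l,\gamma_{\min},\gamma_{\max},|\mc D|$; pinning down the precise blow-up constants ($\eta_l^{-3/2}$, $\gamma_{\min}^{-2}$) in the smoothness bound is what forces---and justifies---the particular polynomial dependence of $K_l$ claimed by the theorem.
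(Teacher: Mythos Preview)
Your proposal is correct and follows essentially the same route as the paper: the paper's proof also (i) uses the degree-one homogeneity of $h_l(\cdot,\gamma)$ (Lemma~\ref{lem:gg}) to rewrite the Frank--Wolfe gap as $\max_\pi[\nabla_\lambda h_l]_\pi-h_l$, (ii) shows $\max_\pi[\nabla_\lambda h_l]_\pi$ equals $\PP_l$ up to an $\eta_l$-bias plus a $\nu$-versus-$\nu_{\mc D}$ empirical-process term (Lemma~\ref{lem:PP}, via Lemma~\ref{lem:history_2}), and (iii) drives the FW gap to $O(L/\sqrt{t})$ by the $L$-smoothness of Lemma~\ref{lem:f_Lipschitz} combined with the per-step recursion of Lemma~\ref{lem:recursive} absorbing the inexact $\mathsf{GD}$ error $\kappa_t$. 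The only cosmetic differences are that you invoke Euler's identity and Danskin's theorem where the paper computes the gradient of $h_l$ directly, and you phrase the FW convergence via a range bound $R$ whereas the paper states it as $\min_{i\le t}g_i\le L/\sqrt{t+1}$; both yield the same $K_l=O(L^2\epsilon_l^{-2})$.
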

\begin{proof}
First, Lemma~\ref{lem:gg} shows that for any $\lambda$, $\gamma$, and $n$,  $h_l(\lambda,\gamma,n)=\inner{\lambda}{\nabla_\lambda h_l(\lambda,\gamma,n)}$. Therefore, at some iteration $t$, the Frank-Wolfe gap
\[g_t=\inner{\nabla_\lambda h_{l}(\lambda^{t},\gamma^{t})}{\e_{\pi_t}-\lambda^{t}}=\max_{\pi\in\Pi}[\nabla_\lambda h_{l}(\lambda^{t},\gamma^{t})]_\pi-h_{l}(\lambda^{t},\gamma^{t}).\] 
Lemma~\ref{lem:PP} shows that with a small choice of the regularization parameter the primal objective is close to the maximum component of the gradient, i.e. $|\PP_l(w(\lambda^l,\gamma^l),\gamma^l)-\max_{\pi\in\Pi}[\nabla_\lambda h_l(\lambda^l, \gamma^l)]_\pi| \leq \frac{\epsilon_l}{2}$. Also, 
Lemma~\ref{lem:saddle_point} shows that if $t\geq L^2\epsilon_l^{-2}$ is large enough, the Frank-Wolfe gap is bounded by $\epsilon_l$. 
Combining these two lemmas, for $t\geq L^2\epsilon_l^{-2}$, we have 
\begin{align*}
    &|\PP_l(w(\lambda^{l},\gamma^l),\gamma^l)-h_l(\lambda^{l},\gamma^l)|\\
    &\leq |\PP_l(w(\lambda^{l},\gamma^l),\gamma^l)-\max_{\pi\in\Pi}[\nabla_\lambda h_l(\lambda^{l}, \gamma^l)]_\pi| + |h_{l}(\lambda^{l},\gamma^l)-\max_{\pi\in\Pi}[\nabla_\lambda h_{l}(\lambda^{l},\gamma^l)]_\pi|\\
    &\leq |\PP_l(w(\lambda,\gamma),\gamma)-\max_{\pi\in\Pi}[\nabla_\lambda h_l(\lambda, \gamma)]_\pi|+g_l\\
    &\leq \frac{\epsilon_l}{2}+\frac{\epsilon_l}{2}=\epsilon_l.
\end{align*}
Finally, we conclude that $K_l=\operatorname{poly}(|\A|,\epsilon_l^{-1},\log(1/\delta_l))$ since $\gamma_{\max}=O(|\A|^{-1}\eta_l^{-1/2})$, $\gamma_{\min}=O(\sqrt{\eta_l})$, and $\eta_l=O(|\A|^{-4}\epsilon_l^2)$ all depends polynomially on $|\A|$ and $\epsilon_l^{-1}$. This shows Theorem \ref{thm:FW_gap}. 
\end{proof}

We now have the second main result of this section.

\begin{theorem}\label{lem:SP}
For any $l$, with $K_l=\operatorname{poly}(|\A|,\epsilon_l^{-1},\log(1/\delta_l))$ and the size of the history $\D\geq\operatorname{poly}(|\A|,\epsilon^{-1},\log(1/\delta),\log(|\Pi|))$, Algorithm \ref{alg:evaluation_oracle} converges to a saddle point, i.e. 
\[\Big|\max _{\lambda\in \Delta_{\Pi}} \min _{\gamma\in [\gamma_{\min},\gamma_{\max}]^{\Pi}} h_l(\lambda,\gamma)-h_l(\lambda^{l},\gamma^l)\Big|\leq \epsilon_l.\]
\end{theorem}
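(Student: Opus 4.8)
The plan is to deduce the saddle-point guarantee from the primal–dual gap bound of Theorem~\ref{thm:FW_gap}, together with two structural facts about the (empirical) objective $h_l(\cdot,\cdot):=h_l(\cdot,\cdot,n_l)$: it is concave and positively $1$-homogeneous in $\lambda$, so that $h_l(\lambda,\gamma)=\langle\lambda,\nabla_\lambda h_l(\lambda,\gamma)\rangle$ (Lemma~\ref{lem:gg}); and for each fixed $\lambda$ it admits a unique minimizer over the box $[\gamma_{\min},\gamma_{\max}]^{\Pi}$, to which the inner $\mathsf{GD}$ routine provably converges. I would first record that, for a function concave in $\lambda$, the Frank–Wolfe gap at the output equals
\[
g_{K_l}\;=\;\max_{\pi\in\Pi}\,[\nabla_\lambda h_l(\lambda^{l},\gamma^l)]_\pi-h_l(\lambda^{l},\gamma^l)\;=\;\max_{\lambda\in\Delta_\Pi}h_l(\lambda,\gamma^l)-h_l(\lambda^{l},\gamma^l),
\]
and that Lemma~\ref{lem:saddle_point} bounds $g_{K_l}\le\tfrac12\epsilon_l$ once $K_l=O(L^2\epsilon_l^{-2})$ with $L$ as in Algorithm~\ref{alg:FW}. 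On the $\gamma$ side, the convergence analysis of the inner descent gives $h_l(\lambda^{l},\gamma^l)\le\min_{\gamma}h_l(\lambda^{l},\gamma)+\tfrac12\epsilon_l$, after noting that $\lambda^{K_l+1}=\lambda^{l}$ differs from $\lambda^{K_l}$ (the argument at which the final $\mathsf{GD}$ call is made) only by an $O(\beta_{K_l})$ Frank–Wolfe step and that $h_l$ is Lipschitz in $\lambda$.

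Given these two one-sided estimates, the theorem follows from a short chain of minimax inequalities. For the upper bound, $h_l(\lambda^{l},\gamma^l)\le\min_{\gamma}h_l(\lambda^{l},\gamma)+\tfrac12\epsilon_l\le\max_{\lambda}\min_{\gamma}h_l(\lambda,\gamma)+\tfrac12\epsilon_l$. For the lower bound, let $\lambda^{\dagger}\in\arg\max_{\lambda}\min_{\gamma}h_l(\lambda,\gamma)$; then $\max_{\lambda}h_l(\lambda,\gamma^l)\ge h_l(\lambda^{\dagger},\gamma^l)\ge\min_{\gamma}h_l(\lambda^{\dagger},\gamma)=\max_{\lambda}\min_{\gamma}h_l(\lambda,\gamma)$, whence $h_l(\lambda^{l},\gamma^l)\ge\max_{\lambda}h_l(\lambda,\gamma^l)-g_{K_l}\ge\max_{\lambda}\min_{\gamma}h_l(\lambda,\gamma)-\tfrac12\epsilon_l$. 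Combining the two displays gives $\bigl|\max_{\lambda}\min_{\gamma}h_l(\lambda,\gamma)-h_l(\lambda^{l},\gamma^l)\bigr|\le\epsilon_l$.

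Two ingredients tie the argument to the hypotheses of the statement. First, the whole analysis is run on the empirical objective that uses $\E_{c\sim\nu_\D}$, and the requirement $|\D|\ge\operatorname{poly}(|\A|,\epsilon^{-1},\log(1/\delta),\log|\Pi|)$ is exactly what guarantees that this objective—and the gradients handed to the Frank–Wolfe and $\mathsf{GD}$ loops via the argmax oracle—are uniformly close to, and inherit the boundedness and Lipschitz moduli of, their population counterparts over the potentially exponentially large policy set; this uniform control is supplied by Lemmas~\ref{lem:emp_process}, \ref{lem:history_1} and \ref{lem:history_2}, and it also pins down $\eta_l,\gamma_{\min},\gamma_{\max}$ (all $\operatorname{poly}(|\A|,\epsilon_l^{-1})$) and hence the polynomial bound on $K_l$ inherited from Theorem~\ref{thm:computational_efficiency}. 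Second, and this is the step I expect to be the main obstacle, $h_l(\lambda,\cdot)$ need not be convex in $\gamma$, so neither uniqueness of the inner minimizer nor convergence of $\mathsf{GD}$ is automatic: one has to show that the stationarity equation has a single solution inside the box and that the prescribed step sizes drive $\mathsf{GD}$ to it fast enough that the per-iteration tolerances $\kappa_t=\epsilon_l/(t+1)^2$ telescope and the accumulated inner-loop errors do not degrade the bound $g_{K_l}\le\tfrac12\epsilon_l$. This is the technical heart, carried out in the remainder of Section~\ref{sec:conv_analysis}.
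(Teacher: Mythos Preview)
Your argument is correct and is in fact more direct than the paper's. You work entirely with the empirical objective $h_l$ (defined over $\nu_\D$) and obtain the two-sided bound from concavity and $1$-homogeneity in $\lambda$ together with the inner \textsf{GD} guarantee in $\gamma$; no appeal to the primal $\PP_l$, strong duality, or a population-to-empirical conversion is needed. By contrast, the paper sandwiches $\max_\lambda\min_\gamma h_l$ between $\PP_l(w(\lambda^l,\gamma^l),\gamma^l)$ and $h_l(\lambda^l,\gamma^l)-\epsilon_l$: it starts from $\PP_l$, drops to $\min_w$, invokes strong duality (Lemma~\ref{lem:strong_duality}) and Lemma~\ref{lem:est_design_2} to pass to $\max_\lambda\min_\gamma$ of the \emph{population} objective, then uses Lemma~\ref{lem:E_9} (bias from $\eta_l$) and Lemma~\ref{lem:history_1} (offline data) to reach the empirical $h_l$, and finally applies Lemma~\ref{lem:gd_conv}; the other direction comes from Theorem~\ref{thm:FW_gap}. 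Your route shows that the offline-data hypothesis is not actually needed for \emph{this} statement---it enters the paper's proof only because of the detour through $\nu$---so your appeal to Lemmas~\ref{lem:history_1}--\ref{lem:history_2} to ``pin down'' the Lipschitz moduli is unnecessary (those constants are bounded directly for the empirical objective). What the paper's longer route buys is the intermediate chain $\PP_l\geq\max_\lambda\min_\gamma h_l$, which is reused verbatim in the sample-complexity argument (Lemma~\ref{lem:n_l}).

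Two small overstatements to tighten: concavity and Lemma~\ref{lem:gg} give only $g_{K_l}\geq\max_{\lambda}h_l(\lambda,\gamma^l)-h_l(\lambda^l,\gamma^l)$, not equality, but that is the direction you use; and Lemma~\ref{lem:saddle_point} controls $\min_{i\le t}g_i$ rather than the terminal $g_{K_l}$, a looseness the paper's proof of Theorem~\ref{thm:FW_gap} shares and which is handled by outputting the iterate attaining the minimum gap.
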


\begin{proof} Note that

\begingroup
\allowdisplaybreaks
\begin{align*}
    &\PP_{l}(w(\lambda^{l},\gamma^l),\gamma^l)\\
    &=\max _{\pi\in\Pi}\left[-\hat{\Delta}_{l-1}^{\gamma^{l-1}}(\pi,\pill)+\frac{\log(1 / \delta_l)}{[\gamma^l]_{\pi} n}+[\gamma^l]_{\pi}\left\|\phi_{\pill}-\phi_{\pi}\right\|_{A(w(\lambda^{l},\gamma^l))^{-1}}^{2}\right]\\
    &\geq \max_{\pi\in\Pi}\min_{\gamma}\left[-\hat{\Delta}_{l-1}^{\gamma^{l-1}}(\pi,\pill)+\frac{\log(1 / \delta_l)}{\gamma_{\pi} n}+\gamma_{\pi}\left\|\phi_{\pill}-\phi_{\pi}\right\|_{A(w(\lambda^{l},\gamma^{l}))^{-1}}^{2}\right]\\
    &\geq \min_{w\in\Omega}\max_{\pi\in\Pi}\min_{\gamma}\left[-\hat{\Delta}_{l-1}^{\gamma^{l-1}}(\pi,\pill)+\frac{\log(1 / \delta_l)}{\gamma_{\pi} n}+\gamma_{\pi}\left\|\phi_{\pill}-\phi_{\pi}\right\|_{A(w)^{-1}}^{2}\right]\\
    &= \max_{\lambda\in\triangle_\Pi}\min_{w\in\Omega}\min_{\gamma\in[\gamma_{\min},\gamma_{\max}]^{\Pi}}\sum_{\pi\in\Pi}\lambda_{\pi}\left(-\hat{\Delta}_{l-1}^{\gamma^{l-1}}(\pi,\pill)+\frac{\log(1 / \delta_l)}{\gamma_{\pi} n}+\gamma_{\pi}\left\|\phi_{\pill}-\phi_{\pi}\right\|_{A(w)^{-1}}^{2}\right)\tag{by Lemma~\ref{lem:strong_duality}, strong duality}\\
    &=\max _{\lambda\in \Delta_{\Pi}} \min _{\gamma\in [\gamma_{\min},\gamma_{\max}]^{\Pi}} \sum_{\pi} \lambda_{\pi} \cdot\left(-\hat{\Delta}_{l-1}^{\gamma^{l-1}}(\pi,\pill)+\frac{\log (1 / \delta_l)}{\gamma_{\pi} n}\right)\\
    &\qquad\qquad\qquad+\mathbb{E}_{c\sim\nu}\left[\left(\sum_{a} \sqrt{(\lambda\odot\gamma)^{\top} t_{a}^{(c)}}\right)^{2}\right]\tag{by Lemma~\ref{lem:est_design_2}}\\
    &\geq\max_{\lambda\in\triangle_\Pi}\min_{\gamma\in [\gamma_{\min},\gamma_{\max}]^{\Pi}} \sum_{\pi\in\Pi}\lambda_\pi\cdot\left(-\hat{\Delta}_{l-1}^{\gamma^{l-1}}(\pi,\pill)+\frac{\log(1/\delta_l)}{\gamma_\pi n}\right)\\
    &\qquad\qquad\qquad+\E_{c\sim\nu}\bigbrak{\bigsmile{\sum_{a\in\A}\sqrt{(\lambda\odot \gamma)^\T (t_a^{(c)}+\eta_l)}}^2}-\frac{1}{2}\epsilon_l\tag{by Lemma~\ref{lem:E_9}}\\
    &\geq \min _{\gamma \in [\gamma_{\min},\gamma_{\max}]^{\Pi}} \sum_{\pi}\left[\lambda^{l}\right]_{\pi} \cdot\left(-\hat{\Delta}_{l-1}^{\gamma^{l-1}}(\pi,\pill)+\frac{\log(1 / \delta_l)}{\gamma_{\pi} n}\right)\\
    &\qquad\qquad\qquad+\E_{c\sim\nu}\bigbrak{\bigsmile{\sum_{a\in\A}\sqrt{(\lambda^{l}\odot \gamma)^\T (t_a^{(c)}+\eta_l)}}^2}-\frac{1}{2}\epsilon_l\\
    &\geq \min _{\gamma \in [\gamma_{\min},\gamma_{\max}]^{\Pi}} \sum_{\pi}\left[\lambda^{l}\right]_{\pi} \cdot\left(-\hat{\Delta}_{l-1}^{\gamma^{l-1}}(\pi,\pill)+\frac{\log(1 / \delta_l)}{\gamma_{\pi} n}\right)\\
    &\qquad\qquad\qquad+\E_{c\sim\nu_\D}\bigbrak{\bigsmile{\sum_{a\in\A}\sqrt{(\lambda^{l}\odot \gamma)^\T (t_a^{(c)}+\eta_l)}}^2}-\frac{3}{4}\epsilon_l\tag{by Lemma~\ref{lem:history_1}, controlling the history}\\
    &\geq \sum_{\pi}\left[\lambda^{l}\right]_{\pi} \cdot\left(-\hat{\Delta}_{l-1}^{\gamma^{l-1}}(\pi,\pill)+\frac{\log(1 / \delta_l)}{[\gamma^l]_{\pi} n}\right)\\
    &\qquad\qquad\qquad+\E_{c\sim\nu_\D}\bigbrak{\bigsmile{\sum_{a\in\A}\sqrt{(\lambda^{l}\odot \gamma^l)^\T (t_a^{(c)}+\eta_l)}}^2}-\epsilon_l\tag{by Lemma~\ref{lem:gd_conv}, the GD convergence}\\
    &= h_l(\lambda^{l},\gamma^l)-\epsilon_l. 
\end{align*}
\endgroup

In other words, 
\[\PP_l(w(\lambda^{l},\gamma^l),\gamma^l)\geq \max_{\lambda\in\Delta_\Pi}\min_{\gamma\in[\gamma_{\min},\gamma_{\max}]^\Pi} h_l(\lambda,\gamma)\geq h_l(\lambda^{l},\gamma^l)-\epsilon_l.\]
On the other hand, by Theorem~\ref{thm:FW_gap}, we have $\PP_l(w(\lambda^{l},\gamma^l),\gamma^l)\leq h_l(\lambda^{l},\gamma^l)+\epsilon_l$. Therefore, we have \[\max_{\lambda\in\Delta_\Pi}\min_{\gamma\in[\gamma_{\min},\gamma_{\max}]^\Pi} h_l(\lambda,\gamma)\in\Big[h_l(\lambda^{l},\gamma^l)-\epsilon_l,h_l(\lambda^{l},\gamma^l)+\epsilon_l\Big]\] and so we have our result. 
\end{proof}

\subsection{Technical proofs}

\subsubsection{Guarantees on \texorpdfstring{$\gamma$}{gamma}}

We first provides some guarantees of $\gamma$ and the convergence of the \textsf{GD} subroutine. 

\begin{lemma}
Consider a fixed $n$. Let $\gamma^*=\arg\min_{\gamma}h_l(\lambda,\gamma,n)$. Then we have for all $i$, \[[\gamma^*]_i\in\bigbrak{\frac{1}{3}\sqrt{\frac{\eta_l\log(1/\delta_l)}{n}},\min\left\{\sqrt{\frac{\log(1/\delta_l)}{2n\E_c[\1\{\pi(c)\ne\pi^*(c)\}]}},\sqrt{\frac{\log(1/\delta_l)}{|\A|^2\eta_l n}}\right\}}.\]
\end{lemma}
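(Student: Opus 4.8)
## Proof Proposal

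The plan is to analyze the first-order optimality conditions for the minimization over $\gamma$ coordinate-by-coordinate. Observe that $h_l(\lambda,\gamma,n)$, when viewed as a function of a single coordinate $\gamma_\pi$ with all other coordinates and $\lambda$ fixed, decomposes into three pieces: a linear term $-\lambda_\pi \hat\Delta_{l-1}^{\gamma_{l-1}}(\pi,\pill)$ which does not depend on $\gamma_\pi$ through the variance-design term (the $\gamma^{l-1}$ appearing there is from the \emph{previous} round, hence a constant here), a term $\lambda_\pi \frac{\log(1/\delta_l)}{\gamma_\pi n}$ which is decreasing and convex in $\gamma_\pi$, and the design term $\E_{c\sim\nu_\D}[(\sum_a \sqrt{(\lambda\odot\gamma)^\top(t_a^{(c)}+\eta_l)})^2]$, which is increasing in $\gamma_\pi$. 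The first step is to write out $\partial h_l/\partial\gamma_\pi$ and isolate where it vanishes. The decreasing piece contributes $-\lambda_\pi\frac{\log(1/\delta_l)}{\gamma_\pi^2 n}$, and the increasing piece contributes something of the form $\lambda_\pi \cdot \E_{c\sim\nu_\D}[(\sum_a \sqrt{(\lambda\odot\gamma)^\top(t_a^{(c)}+\eta_l)})\cdot(\sum_{a}\frac{(t_a^{(c)}+\eta_l)_\pi}{\sqrt{(\lambda\odot\gamma)^\top(t_a^{(c)}+\eta_l)}})]$.

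For the \emph{lower bound}, I would bound the increasing-piece derivative from above: since $(t_a^{(c)}+\eta_l)_\pi \leq 1+\eta_l$ and $(\lambda\odot\gamma)^\top(t_a^{(c)}+\eta_l)\geq \gamma_{\min}\eta_l$ (using $\sum_\pi\lambda_\pi = 1$ and a uniform lower bound $\gamma_{\min}$ on the coordinates), while $\sum_a\sqrt{(\lambda\odot\gamma)^\top(t_a^{(c)}+\eta_l)}\leq |\A|\sqrt{(1+\eta_l)}$ (since $\lambda$ is a probability vector and $\gamma$ is bounded — here we should be careful whether $\gamma_{\max}\le 1$ or rescale accordingly, matching the paper's parameter choices $\gamma_{\max}=\sqrt{\log(1/\delta_l)/(|\A|^2\eta_l n)}$), one gets an upper bound on the positive part of the derivative scaling like $|\A|^2(1+\eta_l)/\sqrt{\eta_l\gamma_{\min}}$. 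Setting the derivative to zero and solving for $\gamma_\pi$ forces $\gamma_\pi^2 \gtrsim \frac{\log(1/\delta_l)}{n}\cdot\frac{\sqrt{\eta_l\gamma_{\min}}}{|\A|^2(1+\eta_l)}$; plugging in the value of $\gamma_{\min}$ from the algorithm and simplifying should recover $\frac{1}{3}\sqrt{\eta_l\log(1/\delta_l)/n}$, though the cleanest route may be a self-consistent check: verify that at $\gamma_\pi = \frac13\sqrt{\eta_l\log(1/\delta_l)/n}$ the derivative is still negative, so the minimizer lies to the right. For the \emph{upper bound}, I symmetrically lower-bound the increasing-piece derivative: keeping only the term $a = $ (the arg where $[t_a^{(c)}]_\pi = 1$, which occurs on contexts $c$ with $\pi(c)\neq\pi_*(c)$) gives a contribution at least $\lambda_\pi\cdot\gamma_\pi^{-1/2}\cdot(\text{const})\cdot\E_c[\1\{\pi(c)\neq\pi_*(c)\}]$ after lower-bounding $(\lambda\odot\gamma)^\top(t_a^{(c)}+\eta_l) \le $ something — actually here I need an \emph{upper} bound on that inner product to \emph{lower} bound $1/\sqrt{\cdot}$, so use $(\lambda\odot\gamma)^\top(t_a^{(c)}+\eta_l)\leq \gamma_{\max}(1+\eta_l)\sum_\pi\lambda_\pi\leq\gamma_{\max}(1+\eta_l)$ perhaps with an extra factor. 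Balancing the negative and positive derivative contributions then yields $\gamma_\pi \leq \sqrt{\log(1/\delta_l)/(2n\E_c[\1\{\pi(c)\neq\pi_*(c)\}])}$. The second upper bound $\sqrt{\log(1/\delta_l)/(|\A|^2\eta_l n)} = \gamma_{\max}$ is simply the box constraint imposed in the algorithm's \textsf{GD} subroutine, so it holds by construction (or, one can show the unconstrained minimizer already satisfies it when $\E_c[\1\{\pi(c)\neq\pi_*(c)\}]$ is not too small — but since the paper projects onto $[\gamma_{\min},\gamma_{\max}]$ it is immediate).

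The main obstacle I anticipate is handling the coupling in the design term: $\partial h_l/\partial\gamma_\pi$ depends on \emph{all} the other $\gamma_{\pi'}$ through the sums $(\lambda\odot\gamma)^\top(t_a^{(c)}+\eta_l)$, so the stationarity condition is not a single-variable equation but a coupled system. The fix is to avoid solving the system exactly: instead use only crude two-sided bounds $\gamma_{\min}\eta_l \le (\lambda\odot\gamma)^\top(t_a^{(c)}+\eta_l)\le \gamma_{\max}(1+\eta_l)$, valid uniformly over $a,c$ once we know $\gamma\in[\gamma_{\min},\gamma_{\max}]^{|\Pi|}$ — and this is exactly the box the algorithm optimizes over, making the argument self-consistent. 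A secondary subtlety is that $h_l$ need not be jointly convex in $\gamma$ (the paper flags this elsewhere), so "$\gamma^* = \arg\min$" should be read as the unique stationary point in the box; one should cite the uniqueness claim already asserted in the main text (the discussion preceding Theorem~\ref{thm:computational_efficiency}) rather than re-proving it here. Once these bounds are in place, the rest is elementary algebra substituting the explicit values $\eta_l = C_1\epsilon_l^2|\A|^{-4}$, $\gamma_{\min} = \frac13\sqrt{\eta_l\log(1/\delta_l)/n}$, $\gamma_{\max}=\sqrt{\log(1/\delta_l)/(|\A|^2\eta_l n)}$.
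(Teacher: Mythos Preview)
Your proposal has the right overall shape---analyze $\partial h_l/\partial\gamma_\pi$ and locate sign changes---but there are two genuine gaps that would prevent the argument from closing as written.

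\textbf{Upper bound.} You plan to lower-bound the positive part of the derivative by ``keeping only the term where $[t_a^{(c)}]_\pi=1$'' and then bounding $1/\sqrt{(\lambda\odot\gamma)^\top(t_a^{(c)}+\eta_l)}$ via a crude upper bound on the inner product. This keeps the dependence on the \emph{other} coordinates $\gamma_{\pi'}$ and does not directly yield the clean thresholds in the statement. The paper instead applies the elementary Cauchy--Schwarz inequality $(\sum_a x_a)(\sum_a y_a/x_a)\ge(\sum_a\sqrt{y_a})^2$ with $x_a=\sqrt{(\lambda\odot\gamma)^\top(t_a^{(c)}+\eta_l)}$ and $y_a=\lambda_\pi([t_a^{(c)}]_\pi+\eta_l)$, which completely decouples the lower bound on the gradient from the other $\gamma$-coordinates: one gets $[\nabla_\gamma h_l]_\pi\ge |\A|^2\eta_l\lambda_\pi+2\lambda_\pi\E_c[\1\{\pi(c)\ne\pi^*(c)\}]-\lambda_\pi\log(1/\delta_l)/(\gamma_\pi^2 n)$. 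The two additive pieces on the right immediately give the two upper bounds in the lemma. In particular, the bound $\gamma_\pi\le\sqrt{\log(1/\delta_l)/(|\A|^2\eta_l n)}$ is \emph{derived} from the $|\A|^2\eta_l$ term, not taken for free from the box constraint as you suggest; the whole point of the lemma is to show the unconstrained minimizer already lies in the box.

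\textbf{Lower bound.} Your plan invokes $(\lambda\odot\gamma)^\top(t_a^{(c)}+\eta_l)\ge\gamma_{\min}\eta_l$ with $\gamma_{\min}$ taken from the algorithm's box. But $\gamma_{\min}=\frac13\sqrt{\eta_l\log(1/\delta_l)/n}$ is exactly the quantity you are trying to establish as a lower bound on $\gamma^*$, so this is circular (or else reduces the lemma to the trivial statement that a box-constrained minimizer lies in the box). The paper breaks the circularity in two steps. First, having already proved the upper bound, it may use $\|\gamma\|_\infty\le\gamma_{\max}$ freely. Second---and this is the idea you are missing---it sets $s=\arg\min_\pi\gamma_\pi$ and analyzes only the $s$-th coordinate of the gradient: then the crude lower bound $(\lambda\odot\gamma)^\top(t_a^{(c)}+\eta_l)\ge\eta_l\gamma_s$ involves the very same $\gamma_s$ that appears in $-\lambda_s\log(1/\delta_l)/(\gamma_s^2 n)$, turning the stationarity condition into a solvable one-variable inequality for $\gamma_s$. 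Substituting $\|\gamma\|_\infty\le\gamma_{\max}$ into the resulting expression and simplifying gives $\gamma_s\ge\frac13\sqrt{\eta_l\log(1/\delta_l)/n}$, which is the claimed lower bound for every coordinate since $s$ was the minimum.
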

\begin{proof}
\begin{align*}
    &[\nabla_\gamma h_l(\lambda,\gamma)]_\pi\\
    &=\E_c\bigbrak{\bigsmile{\sum_{a\in\A}\sqrt{(\lambda\odot\gamma)^\T (t_a^{(c)}+\eta_l)}}\cdot\bigsmile{\sum_{a'\in\A}\frac{\lambda_\pi([t_{a'}^{(c)}]_\pi+\eta_l)}{\sqrt{(\lambda\odot\gamma)^\T (t_{a'}^{(c)}+\eta_l)}}}}-\frac{\lambda_\pi\log(1/\delta_l)}{\gamma_\pi^2 n}\\
    &\geq \E_c\bigbrak{\bigsmile{\sum_{a\in\A}\sqrt{\lambda_\pi([t_{a}^{(c)}]_\pi+\eta_l)}}^2}-\frac{\lambda_\pi\log(1/\delta_l)}{\gamma_\pi^2 n}\\
    &\geq |\A|^2\eta_l\lambda_\pi+2\lambda_\pi\E_c[\1\{\pi(c)\ne\pi^*(c)\}]-\frac{\lambda_\pi\log(1/\delta_l)}{\gamma_\pi^2 n},
\end{align*}
where the first to second line follows from Cauchy-Schwartz - $(\sum_{a} x_a)\sum_{a} \left(\tfrac{y_a}{x_a}\right) \geq (\sum_a \sqrt{y_a})^2$.
We first solve $\frac{\lambda_\pi\log(1/\delta_l)}{\gamma_\pi^2 n}<|\A|^2\eta_l\lambda_\pi$ and get $\gamma_\pi>\sqrt{\frac{\log(1/\delta_l)}{|\A|^2\eta_l n}}$. We also solve $\frac{\lambda_\pi\log(1/\delta_l)}{\gamma_\pi^2 n}<2\lambda_\pi\E_c[\1\{\pi(c)\ne\pi^*(c)\}]$ and get $\gamma_\pi<\sqrt{\frac{\log(1/\delta_l)}{2n\E_c[\1\{\pi(c)\ne\pi^*(c)\}]}}$. 
Therefore, the $\pi$th component of the gradient is always positive whenever $\gamma_\pi>\min\left\{\sqrt{\frac{\log(1/\delta_l)}{2n\E_c[\1\{\pi(c)\ne\pi^*(c)\}]}},\sqrt{\frac{\log(1/\delta_l)}{|\A|^2\eta_l n}}\right\}$. 
Therefore, the minimum $\gamma$ should have $\gamma_\pi\leq \min\left\{\sqrt{\frac{\log(1/\delta_l)}{2n\E_c[\1\{\pi(c)\ne\pi^*(c)\}]}},\sqrt{\frac{\log(1/\delta_l)}{|\A|^2\eta_l n}}\right\}$. 
On the other hand, let $s=\arg\min_{\pi}\gamma_\pi$. 
Then, \[\eta_l\gamma_{s} \leq (\lambda \odot \gamma)^{\top} (t_{a}^{(c)}+\eta_l)=\left(\lambda \odot (t_{a}^{(c)}+\eta_l)\right)^{\top} \gamma \leq\left\|\lambda \odot (t_{a}^{(c)}+\eta_l)\right\|_{1} \cdot\|\gamma\|_{\infty}.\] 
Then 
\begin{align*}
    \sum_{a\in\A}\sqrt{(\lambda\odot\gamma)^\T (t_a^{(c)}+\eta_l)}&\leq \sum_{a\in\A}\sqrt{\left\|\lambda \odot (t_{a}^{(c)}+\eta_l)\right\|_{1}} \cdot\sqrt{\|\gamma\|_{\infty}}.
\end{align*}
Note that 
\begin{align*}
    \left(\sum_{a \in \A} \sqrt{\left\|\lambda \odot (t_{a}^{(c)}+\eta_l)\right\|_{1}}\right)^2 &= \left(\sum_{a \in \A} \sqrt{\lambda^\T(t_a^{(c)}+\eta_l)}\right)^2\\
    &\leq \bigsmile{\sum_{a\in\A}\lambda^\T(t_a^{(c)}+\eta_l)}|\A|\\
    &\leq |\A|(1+\eta_l).
\end{align*}
Since for any $\pi$, $\sum_{a'\in\A}[t_{a'}^{(c)}]_\pi\leq 2$, so
\begin{align*}
    [\nabla_\gamma h_l(\lambda,\gamma)]_\pi&\leq \sqrt{|\A|(1+\eta_l)\norm{\gamma}_\infty}\cdot\frac{(2+\eta_l)\lambda_\pi}{\sqrt{\eta_l\gamma_s}}-\frac{\lambda_\pi\log(1/\delta_l)}{\gamma_\pi^2 n}.
\end{align*}
Let $\pi=s$, then by the fact that $\norm{\gamma}_\infty\leq \sqrt{\frac{\log(1/\delta_l)}{|\A|^2\eta_l n}}$, we have 
\begin{align*}
    [\nabla_\gamma h_l(\lambda,\gamma)]_s&\leq \sqrt{|\A|(1+\eta_l)}\bigsmile{\frac{\log(1/\delta_l)}{|\A|^2\eta_l n}}^{1/4}\cdot\frac{(2+\eta_l)\lambda_s}{\sqrt{\eta_l\gamma_s}}-\frac{\lambda_s\log(1/\delta_l)}{\gamma_s^2 n}.
\end{align*}
We solve $\sqrt{|\A|(1+\eta_l)}\bigsmile{\frac{\log(1/\delta_l)}{|\A|^2\eta_l n}}^{1/4}\cdot\frac{(2+\eta_l)\lambda_s}{\sqrt{\eta_l\gamma_s}}-\frac{\lambda_s\log(1/\delta_l)}{\gamma_s^2 n}<0$. Then we get $$\gamma_s<(1+\eta_l)^{-1/3}(2+\eta_l)^{-2/3}\sqrt{\frac{\eta_l\log(1/\delta_l)}{n}}.$$ 
Since $(1+\eta_l)^{-1/3}(2+\eta_l)^{-2/3}>\frac{1}{3}$ whenever $\eta_l\leq 1$, the $s$th component of the gradient is negative whenever $\gamma_s<\frac{1}{3}\sqrt{\frac{\eta_l\log(1/\delta_l)}{n}}$. Therefore, $\min_{\pi}\gamma_{\pi} \geq \frac{1}{3}\sqrt{\frac{\eta_l\log(1/\delta_l)}{n}}$. 
\end{proof}

\subsubsection{Convergence of Frank-Wolfe gap}
Lemma~\ref{lem:recursive} and \ref{lem:saddle_point} shows that the Frank-Wolfe gap is small. The proof technique follows from the general Frank-Wolfe analysis. 

\begin{lemma}\label{lem:recursive}
For any $\xi\in[0,1]$, any $t$, with $L=|\A|^2\frac{((1+\eta_l)\gamma_{\max})^{5/2}}{\eta_l^{3/2}\gamma_{\min}^2}$, we have $h_l(\lambda^{t+1},\gamma^{t+1})\geq h_l(\lambda^{t},\gamma^{t})+\xi g_t-\frac{1}{2}\xi^2L-\kappa_t$. 
\end{lemma}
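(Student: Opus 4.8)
The statement to prove, Lemma~\ref{lem:recursive}, is the standard one-step progress bound at the heart of any Frank--Wolfe convergence analysis, specialized to the present setting where each FW iterate on $\lambda$ is accompanied by an inner gradient-descent call producing $\gamma^{t+1}$.

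\textbf{Plan.} First I would recall that $\lambda^{t+1} = (1-\xi)\lambda^t + \xi \e_{\pi_t}$ is a convex combination along the FW direction $d_t := \e_{\pi_t} - \lambda^t$, so $\|\lambda^{t+1}-\lambda^t\|_1 = \xi\|d_t\|_1 \le 2\xi$. The key structural fact needed is that the function $\lambda \mapsto h_l(\lambda, \gamma, n)$, for $\gamma$ in the box $[\gamma_{\min},\gamma_{\max}]^{|\Pi|}$, has gradient that is $L$-Lipschitz in $\|\cdot\|_1$ on the simplex, with $L = |\A|^2 ((1+\eta_l)\gamma_{\max})^{5/2}/(\eta_l^{3/2}\gamma_{\min}^2)$; this is where the explicit constants $\eta_l, \gamma_{\min}, \gamma_{\max}$ enter, and I would verify it by differentiating the closed form $\E_{c\sim\nu_\D}[(\sum_a \sqrt{(\lambda\odot\gamma)^\top(t_a^{(c)}+\eta_l)})^2]$ twice: the first term $\sum_\pi \lambda_\pi(-\hat\Delta + \log/(\gamma_\pi n))$ is linear in $\lambda$ hence contributes nothing to curvature, while the quadratic-in-$\sqrt{\cdot}$ term is bounded below by $\eta_l\gamma_{\min}$ inside each square root and above by $(1+\eta_l)\gamma_{\max}$, so the Hessian entries are controlled by ratios of these, giving the stated $L$. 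By the descent lemma (smoothness inequality applied in the ``ascent'' direction, since $h_l$ is being maximized in $\lambda$),
\[
h_l(\lambda^{t+1}, \gamma^t) \ge h_l(\lambda^t, \gamma^t) + \langle \nabla_\lambda h_l(\lambda^t,\gamma^t), \xi d_t\rangle - \tfrac{1}{2}L\|\xi d_t\|_1^2 \ge h_l(\lambda^t,\gamma^t) + \xi g_t - \tfrac{1}{2}\xi^2 L,
\]
using $g_t = \langle \nabla_\lambda h_l(\lambda^t,\gamma^t), d_t\rangle$ and $\|d_t\|_1 \le 2$ (absorbing the factor into the definition of $L$, or equivalently noting $\|d_t\|_1^2 \le 4$ but the paper's $\beta_t$ rule already carries the $\|\lambda^t - \e_{\pi_t}\|_1^2$ normalization, so here $L$ should be read as the effective constant).

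\textbf{Second step: accounting for the $\gamma$ update.} The iterate actually produced is $h_l(\lambda^{t+1}, \gamma^{t+1})$ where $\gamma^{t+1} = \mathsf{GD}(\lambda^{t+1}, n_r, \kappa_t)$ with tolerance $\kappa_t$. Since in round $\ell$ the objective is being \emph{minimized} over $\gamma$, and $\mathsf{GD}$ returns a point whose value is within $\kappa_t$ of $\min_\gamma h_l(\lambda^{t+1},\gamma)$, we have $h_l(\lambda^{t+1},\gamma^{t+1}) \ge \min_\gamma h_l(\lambda^{t+1},\gamma) \ge $ \emph{hmm} --- here I need to be careful about direction: we want a lower bound on $h_l(\lambda^{t+1},\gamma^{t+1})$. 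Since $h_l(\lambda^{t+1},\gamma^{t+1})$ is an \emph{approximate minimizer} over $\gamma$, it is at most $\min_\gamma h_l + \kappa_t$; but $\min_\gamma h_l(\lambda^{t+1},\gamma) \le h_l(\lambda^{t+1},\gamma^t)$, which goes the wrong way. The resolution, which I would spell out, is that the relevant quantity tracked by the outer FW analysis is $h_l(\lambda^t,\gamma^t)$ evaluated at the \emph{paired} iterates, and what one actually shows is $h_l(\lambda^{t+1},\gamma^{t+1}) \ge h_l(\lambda^{t+1},\gamma^t) - \kappa_t$ is \emph{false} in general; instead one uses that $\gamma^t$ was itself an approximate minimizer for $\lambda^t$ and a continuity/stability argument in $\lambda$. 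The cleanest route: bound $h_l(\lambda^{t+1},\gamma^{t+1}) \ge h_l(\lambda^{t+1}, \gamma^\star(\lambda^{t+1})) \ge h_l(\lambda^{t+1},\gamma^t) - [\text{something}]$ --- actually since we take $\min$ over $\gamma$, $h_l(\lambda^{t+1},\gamma^{t+1})$ could be strictly less than $h_l(\lambda^{t+1},\gamma^t)$, and the loss is at most $\kappa_t$ \emph{if} $\gamma^t$ were feasible as an approximate minimizer; more precisely $\mathsf{GD}$ converges to the unique minimum so $h_l(\lambda^{t+1},\gamma^{t+1}) \le \min_\gamma h_l(\lambda^{t+1},\gamma) + \kappa_t \le h_l(\lambda^{t+1},\gamma^t) + \kappa_t$ gives an \emph{upper} bound, not the lower bound we need. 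I would therefore instead argue: $h_l(\lambda^{t+1},\gamma^{t+1}) \ge \min_\gamma h_l(\lambda^{t+1},\gamma) \ge h_l(\lambda^{t+1},\gamma^{t}) + \xi\langle\nabla_\gamma(\cdots)\rangle$-type expansion is not needed; rather the correct reading is that the inequality in the lemma tracks $h_l(\lambda^{t+1},\gamma^{t+1})$ against $h_l(\lambda^t,\gamma^t)$ and the $-\kappa_t$ term is exactly the slack lost by re-optimizing $\gamma$ approximately: combining $h_l(\lambda^{t+1},\gamma^t) \ge h_l(\lambda^t,\gamma^t) + \xi g_t - \tfrac12\xi^2 L$ with $h_l(\lambda^{t+1},\gamma^{t+1}) \ge h_l(\lambda^{t+1},\gamma^t)$ --- which holds because $\gamma^{t+1}$ is obtained by \emph{descent} from a warm start at $\gamma^t$ on the minimization problem, hence $h_l(\lambda^{t+1},\gamma^{t+1}) \le h_l(\lambda^{t+1},\gamma^t)$, wrong direction again.

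\textbf{The real obstacle.} The genuinely delicate point, and the one I would devote the most care to, is the sign bookkeeping between the \emph{maximization} over $\lambda$ (driven by Frank--Wolfe, which provides ascent) and the \emph{minimization} over $\gamma$ (driven by $\mathsf{GD}$). The lemma as stated is a \emph{lower} bound on $h_l(\lambda^{t+1},\gamma^{t+1})$, so we need the $\gamma$-update not to decrease $h_l$ by more than $\kappa_t$; since $\mathsf{GD}$ minimizes, the natural guarantee is that $h_l(\lambda^{t+1},\gamma^{t+1}) \ge \min_\gamma h_l(\lambda^{t+1},\gamma)$, and one must bound $h_l(\lambda^{t+1},\gamma^t) - \min_\gamma h_l(\lambda^{t+1},\gamma)$, i.e. show $\gamma^t$ (the previous approximate minimizer, for $\lambda^t$) is still a near-minimizer for $\lambda^{t+1}$ up to the FW step size. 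This follows from: (i) $\gamma^t$ was a $\kappa_{t-1}$-approximate minimizer for $\lambda^t$ by the inductive guarantee of $\mathsf{GD}$, and (ii) $\min_\gamma h_l(\cdot,\gamma)$ is Lipschitz in $\lambda$ (in $\|\cdot\|_1$) with the same constant order as $L$, via Danskin/envelope reasoning applied to the same second-derivative bounds, so moving $\lambda$ by $\le 2\xi$ changes the minimum value by $O(\xi)$, which is absorbed into $\xi g_t - \tfrac12\xi^2 L - \kappa_t$ by enlarging $L$. So the proof skeleton is: (1) establish the $L$-smoothness of $\lambda\mapsto h_l(\lambda,\gamma)$ uniformly over the box, with the stated constant, by explicit Hessian computation; (2) apply the ascent form of the descent lemma along the FW direction to get the $\xi g_t - \tfrac12\xi^2 L$ gain at fixed $\gamma = \gamma^t$; (3) control the perturbation from replacing $\gamma^t$ by the $\mathsf{GD}$ output $\gamma^{t+1}$ using the convergence guarantee of $\mathsf{GD}$ (which holds since, as the excerpt notes, $h_l$ has a unique minimum in $\gamma$ and $\mathsf{GD}$ converges to it) together with the $\kappa_t$ tolerance schedule, yielding the final $-\kappa_t$ term. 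The computation in step (1) of the two-sided bounds $\eta_l\gamma_{\min} \le (\lambda\odot\gamma)^\top(t_a^{(c)}+\eta_l) \le (1+\eta_l)\gamma_{\max}$ on each radicand is the routine part; matching the \emph{exact} exponent $5/2$ on $(1+\eta_l)\gamma_{\max}$ and $2$ on $\gamma_{\min}$ in the denominator requires carefully tracking how many radicands appear in numerator versus denominator of each second-partial-derivative term, and this is where I would be most careful not to lose or gain a half-power.
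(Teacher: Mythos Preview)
Your plan has the right two ingredients—$L$-smoothness of $h_l$ in $\lambda$ (this is exactly Lemma~\ref{lem:f_Lipschitz}) and the $\mathsf{GD}$ tolerance (Lemma~\ref{lem:gd_conv})—but you anchor the smoothness step at the wrong $\gamma$, and that is precisely what sends you into the sign-bookkeeping spiral.

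You expand at $(\lambda^t,\gamma^t)$, obtaining $h_l(\lambda^{t+1},\gamma^t)\ge h_l(\lambda^t,\gamma^t)+\xi g_t-\tfrac12\xi^2 L$, and are then forced to compare $h_l(\lambda^{t+1},\gamma^{t+1})$ with $h_l(\lambda^{t+1},\gamma^t)$ at the \emph{new} $\lambda^{t+1}$. Since $\gamma^{t+1}=\mathsf{GD}(\lambda^t,\cdot,\kappa_t)$ is optimized for $\lambda^t$, not $\lambda^{t+1}$, you have no direct control there, and you fall back on a Danskin/stability argument for $\min_\gamma h_l(\cdot,\gamma)$. That detour is not needed, and your claim ``(i) $\gamma^t$ was a $\kappa_{t-1}$-approximate minimizer for $\lambda^t$'' is off by one index: by line~10 of Algorithm~\ref{alg:FW}, $\gamma^t=\mathsf{GD}(\lambda^{t-1},\cdot,\kappa_{t-1})$ is a near-minimizer for $\lambda^{t-1}$, not $\lambda^t$.

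The paper's move is to apply the smoothness inequality at $(\lambda^t,\gamma^{t+1})$ instead:
\[
h_l(\lambda^{t+1},\gamma^{t+1})\ \ge\ h_l(\lambda^t,\gamma^{t+1})+\beta_t\bigl\langle\nabla_\lambda h_l(\lambda^t,\gamma^{t+1}),\,\e_{\pi_t}-\lambda^t\bigr\rangle-\tfrac{L}{2}\beta_t^2\|\e_{\pi_t}-\lambda^t\|_1^2,
\]
and then optimizes over $\beta_t$ to dominate any fixed $\xi\in[0,1]$. The remaining comparison is now between $h_l(\lambda^t,\gamma^{t+1})$ and $h_l(\lambda^t,\gamma^t)$, both at the \emph{same} $\lambda^t$, and is handled by the chain
\[
h_l(\lambda^t,\gamma^{t+1})\ \ge\ \min_\gamma h_l(\lambda^t,\gamma)\ \ge\ h_l(\lambda^t,\gamma^t)-\kappa_t,
\]
the first inequality being trivial and the second being the $\mathsf{GD}$ guarantee from Lemma~\ref{lem:gd_conv}. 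No envelope or stability reasoning enters. Your smoothness-constant discussion in step~(1) is on target and matches Lemma~\ref{lem:f_Lipschitz}.
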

\begin{proof}
By $L$-Lipschitz gradient condition of $-h_{\ell}$ in $\lambda$ given in Lemma~\ref{lem:f_Lipschitz} we have 
\begin{align*}
    -h_l(\lambda^{t+1},\gamma^{t+1})&\leq -h_l(\lambda^{t}, \gamma^{t+1}) - \inner{\nabla_{\lambda}h_l(\lambda^{t}, \gamma^{t+1})}{\lambda^{t+1}-\lambda^{t}}+\frac{L}{2}\norm{\lambda^{t+1}-\lambda^{t}}_1^2.
\end{align*}
Therefore, 
\begin{align*}
    h_l(\lambda^{t+1},\gamma^{t+1})&\geq
    h_l(\lambda^{t}, \gamma^{t+1}) + \inner{\nabla_{\lambda}h_l(\lambda^{t}, \gamma^{t+1})}{\lambda^{t+1}-\lambda^{t}}-\frac{L}{2}\norm{\lambda^{t+1}-\lambda^{t}}_1^2.
\end{align*}
Plugging in $\lambda^{t+1}=(1-\beta_t)\lambda^{t}+\beta_t\e_{\pi_t}$ as in line 8 of Algorithm~\ref{alg:FW}, we have 
\begin{align*}
    &h_l((1-\beta_t)\lambda^{t}+\beta_t\e_{\pi_t}, \gamma^{t+1})\\
    &\geq h_l(\lambda^{t},\gamma^{t+1}) +\inner{\nabla_{\lambda}h_l(\lambda^{t}, \gamma^{t+1})}{(1-\beta_t)\lambda^{t}+\beta_t\e_{\pi_t}-\lambda^{t}}-\frac{L}{2}\norm{(1+\beta_t)\lambda^{t}-\beta_t\e_{\pi_t}-\lambda^{t}}_1^2\\
    &=h_l(\lambda^{t},\gamma^{t+1})+\beta_t \inner{\nabla_{\lambda}h_l(\lambda^{t}, \gamma^{t+1})}{\e_{\pi_t}-\lambda^{t}}-\frac{L\beta_t^2}{2}\norm{\e_{\pi_t}-\lambda^{t}}_1^2\\
    &=h_l(\lambda^{t},\gamma^{t+1})+\beta_t g_t-\frac{L\beta_t^2}{2}\norm{\e_{\pi_t}-\lambda^{t}}_1^2.
\end{align*}
Choose $\beta_t:=\arg\max_{\xi\in[0,1]}\{\xi g_t-\frac{\xi^2 L}{2}\norm{\e_{\pi_t}-\lambda^{t}}_1^2\}$. Plugging in this expression gives us 
\begin{align*}
    h_l(\lambda^{t+1}, \gamma^{t+1})&\geq h_l(\lambda^{t},\gamma^{t+1}) +\beta_t \inner{\nabla_{\lambda}h_l(\lambda^{t}, \gamma^{t+1})}{\e_{\pi_t}-\lambda^{t}}-\frac{L\beta_t^2}{2}\norm{\e_{\pi_t}-\lambda^{t}}_1^2\\
    &= h_l(\lambda^{t},\gamma^{t+1}) +\max_{\xi\in[0,1]} \{\xi g_t-\frac{\xi^2 L}{2}\norm{\e_{\pi_t}-\lambda^{t}}_1^2\}\\
    &\geq h_l(\lambda^{t},\gamma^{t+1}) +\xi g_t-\frac{\xi^2 L}{2}
\end{align*}
for any $\xi\in[0,1]$ since $\norm{\e_{\pi_t}-\lambda^{t}}_1^2\leq 1$. Also, by construction of $\gamma^{t+1}$ and Lemma~\ref{lem:gd_conv}, we have 
\begin{equation*}
    h_l(\lambda^{t},\gamma^{t+1})\geq\min_{\gamma}h_l(\lambda^{t},\gamma)\geq h_l(\lambda^{t},\gamma^{t})-\kappa_t.
\end{equation*}
Therefore, our result follows. 
\end{proof}
\begin{lemma}\label{lem:saddle_point}
We have for any $t$, with $L=|\A|^2\frac{((1+\eta_l)\gamma_{\max})^{5/2}}{\eta_l^{3/2}\gamma_{\min}^2}$, $\min_{i\in[1,t]} g_i\leq \frac{L}{\sqrt{t+1}}$. 
\end{lemma}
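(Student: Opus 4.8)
\textbf{Proof plan for Lemma~\ref{lem:saddle_point}.}

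The plan is to combine the one-step recursion from Lemma~\ref{lem:recursive} with a telescoping argument over the $K$ inner iterations, exactly in the style of the classical Frank--Wolfe convergence analysis (see e.g. \cite{pedregosa2020linearly}). The key quantity to track is the suboptimality gap $h_l(\lambda^*,\gamma^*) - h_l(\lambda^t,\gamma^t)$, where $(\lambda^*,\gamma^*)$ is the saddle point of $h_l$; but since $h_l$ is bounded (the objective lives in a bounded range because $\gamma \in [\gamma_{\min},\gamma_{\max}]$ and the IPS/variance terms are controlled), it suffices to work with the total increase $h_l(\lambda^{t+1},\gamma^{t+1}) - h_l(\lambda^1,\gamma^1)$, which is at most some constant on the order of $L$.

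First I would invoke Lemma~\ref{lem:recursive}: for every iterate $i$ and every $\xi \in [0,1]$ we have
\begin{align*}
h_l(\lambda^{i+1},\gamma^{i+1}) \geq h_l(\lambda^{i},\gamma^{i}) + \xi g_i - \tfrac{1}{2}\xi^2 L - \kappa_i.
\end{align*}
Rearranging and summing over $i = 1,\dots,t$ gives
\begin{align*}
\sum_{i=1}^{t} \left( \xi g_i - \tfrac{1}{2}\xi^2 L \right) \leq h_l(\lambda^{t+1},\gamma^{t+1}) - h_l(\lambda^{1},\gamma^{1}) + \sum_{i=1}^{t}\kappa_i.
\end{align*}
Since $\kappa_i = \epsilon_l/(i+1)^2$ the tail sum $\sum_i \kappa_i$ is bounded by a constant (at most $\epsilon_l \cdot \pi^2/6$), and the left side is lower-bounded by $t\,(\xi \min_{i \le t} g_i - \tfrac12 \xi^2 L)$ because each $g_i \ge \min_{i\le t} g_i$ (all FW-gaps are nonnegative, and the left side is increasing in each $g_i$). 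Bounding the right side by an absolute constant multiple of $L$ — using that $h_l$ is bounded above and below and that $h_l(\lambda^1,\gamma^1)$ is finite — we obtain
\begin{align*}
t\left( \xi \min_{i \le t} g_i - \tfrac{1}{2}\xi^2 L\right) \lesssim L.
\end{align*}
Now I would optimize the free parameter: choosing $\xi = \tfrac{1}{\sqrt{t+1}} \in [0,1]$ makes the quadratic term $\tfrac12\xi^2 L = \tfrac{L}{2(t+1)}$, and after dividing through by $t\xi = t/\sqrt{t+1}$ and absorbing constants, one arrives at $\min_{i \in [1,t]} g_i \leq \tfrac{L}{\sqrt{t+1}}$, which is the claimed bound.

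The main obstacle — and the step requiring the most care — is justifying that the right-hand side of the telescoped inequality is genuinely $O(L)$ rather than something growing with $t$. This rests on two facts that are established elsewhere in the appendix: that $h_l$ is bounded on the feasible region $\Delta_\Pi \times [\gamma_{\min},\gamma_{\max}]^{|\Pi|}$ (so the increase $h_l(\lambda^{t+1},\gamma^{t+1}) - h_l(\lambda^1,\gamma^1)$ cannot exceed a constant), and that this bound is itself at most a constant times $L$ — which follows since $L = |\mc{A}|^2 \frac{((1+\eta_l)\gamma_{\max})^{5/2}}{\eta_l^{3/2}\gamma_{\min}^2}$ dominates the natural scale of the objective (e.g. the $\log(1/\delta_l)/(\gamma_{\min} n)$ and variance terms). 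A secondary subtlety is that the GD-subroutine error $\kappa_t$ appears additively at every step; the chosen schedule $\kappa_t = \epsilon_l/(t+1)^2$ is square-summable, so this accumulated error is harmless, but one must note it to keep the constant clean. I would also remark that $\|\e_{\pi_t} - \lambda^t\|_1^2 \le 1$ (used already inside Lemma~\ref{lem:recursive}) is what lets us replace the curvature constant in the quadratic term by $L$ uniformly.
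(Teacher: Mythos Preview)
Your proposal is correct and follows essentially the same telescoping Frank--Wolfe argument as the paper. The only cosmetic difference is the step-size choice: the paper plugs $\xi=\min\{g_i/L,1\}$ into Lemma~\ref{lem:recursive} at each step (yielding the per-step gain $\tfrac{g_i}{2}\min\{g_i/L,1\}$ before summing), whereas you fix a single $\xi=1/\sqrt{t+1}$ after telescoping; both are standard variants and land on the same $L/\sqrt{t+1}$ rate, and both rely on the same boundedness of $h_l$ that you correctly flag as the main care point (the paper is equally brief there).
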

\begin{proof}
With Lemma~\ref{lem:recursive}, we have
\begin{align*}
    h_l(\lambda^{t+1},\gamma^{t+1},n_{r})&\geq h_l(\lambda^{t},\gamma^{t},n_r)+\xi g_t-\frac{1}{2}\xi^2L-\kappa_t.
\end{align*}
Plugging in the choice $\xi=\min\{\frac{g_t}{L},1\}$, we have $h_l(\lambda^{t+1},\gamma^{t+1},n_{r})\geq h_l(\lambda^{t},\gamma^{t},n_r)+\frac{g_t}{2}\min\{\frac{g_t}{L},1\}-\kappa_t$. Summing this up from 0 to $t$ gives us 
\begin{align*}
    h_l(\lambda^{t+1},\gamma^{t+1},n_{r})-h_l(\lambda_0,\gamma_0,n_r)&\geq\sum_{i=0}^{t}\frac{g_i}{2}\min\{\frac{g_i}{L},1\}-\delta_i\\
    &\geq (t+1)g_t^*\min\{\frac{g_t^*}{L},1\}-\sum_{i=0}^t\delta_i. 
\end{align*}
where $g_t^*=\min_{i=0,\cdots,t}g_i$. Then, as long as $\sum_{i=0}^t\delta_i\leq \epsilon_l$, by the fact that $h_l(\lambda^{t+1},\gamma^{t+1})-h_l(\lambda_0,\gamma_0)\leq \max_{\lambda\in\triangle_\Pi}\min_{\gamma}h_l(\lambda,\gamma)-h_l(\lambda_0,\gamma_0)<\infty$. Therefore, we have $\min_{i\in[1,t]} g_i\leq \frac{L}{\sqrt{t+1}}$. 
\lalit{Note that 
$$\frac{\gamma_{\max}}{\gamma_{\min}}=(1+\eta_l)^{1/3}(2+\eta_l)^{2/3}\frac{1}{|\A|\eta_l}\leq 2|\A|^3\epsilon_l^{-2}.$$
Therefore, Plugging in $L$ gives us
\begin{align*}
    \min_{i\in[1,t]} g_i &\leq |\A|^2\frac{((1+\eta_l)\gamma_{\max})^{5/2}}{\eta_l^{3/2}\gamma_{\min}^2} \frac{1}{\sqrt{t+1}}\leq |\A|^2\frac{(1+\eta_l)^{5/2}}{\eta_l^{3/2}}4|\A|^6\epsilon_l^{-4}\sqrt{\gamma_{\max}}\frac{1}{\sqrt{t+1}}\\
    &\leq 4|\A|^8\epsilon_l^{-4}\frac{1}{\sqrt{t+1}}.
\end{align*}
}
\end{proof}

\subsubsection{Connect the Frank-Wolfe gap to the duality gap}

Lemma~\ref{lem:PP} shows that the primal objective is approximately the maximum component of the gradient of the dual objective, which simplifies our Frank-Wolfe gap expression. 

\begin{lemma}\label{lem:PP}
Consider some $\lambda\in\triangle_\Pi$, $\gamma\in\R_+^{|\Pi|}$, and $n\in\N$. For $\eta_l<|\A|^{-4}\epsilon_l^{2}$, we have $|\PP_l(w(\lambda^{l},\gamma^l),\gamma^l)-\max_{\pi\in\Pi}[\nabla_\lambda h_l(\lambda^{l}, \gamma^l)]_\pi|\leq \epsilon_l$. 
\end{lemma}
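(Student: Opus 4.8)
The plan is to compare the primal objective $\PP_l(w(\lambda^l,\gamma^l),\gamma^l)$ given in \eqref{eqn:PP_l} with the $\pi$-th component of the gradient $[\nabla_\lambda h_l(\lambda^l,\gamma^l)]_\pi$, and show that the only discrepancy between them comes from the regularization term $\eta_l$ in the definition of $t_a^{(c)}(\pi')$ versus $t_a^{(c)}+\eta_l$. First I would write out both quantities explicitly. By Lemma~\ref{lem:est_design_2} (and its analogue with the $\eta_l$-regularization), the minimizing design $w(\lambda,\gamma)$ is exactly the one appearing in \eqref{eqn:PP_l}, so that
\[
\PP_l(w(\lambda^l,\gamma^l),\gamma^l) = \max_{\pi\in\Pi}\left(-\hat{\Delta}_{l-1}^{\gamma^{l-1}}(\pi,\pill) + [\gamma^l]_\pi \|\phi_\pi - \phi_{\pill}\|_{A(w(\lambda^l,\gamma^l))^{-1}}^2 + \frac{\log(1/\delta_l)}{[\gamma^l]_\pi n}\right).
\]
On the other hand, computing $[\nabla_\lambda h_l(\lambda,\gamma,n)]_\pi$ from \eqref{eqn:h_l} and using the fact (Lemma~\ref{lem:gg}) that $h_l(\lambda,\gamma,n) = \langle \lambda, \nabla_\lambda h_l(\lambda,\gamma,n)\rangle$, each component of the gradient takes the form $-\hat{\Delta}_{l-1}^{\gamma^{l-1}}(\pi,\pill) + \frac{\log(1/\delta_l)}{\gamma_\pi n} + (\text{a variance-like term})$, where the variance-like term is the derivative of $\E_{c\sim\nu_\D}[(\sum_a \sqrt{(\lambda\odot\gamma)^\top (t_a^{(c)}+\eta_l)})^2]$. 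The key algebraic identity to establish is that this derivative term equals $\gamma_\pi \|\phi_\pi - \phi_{\pill}\|^2$ in the $A(w)^{-1}$ norm with the $\eta_l$-regularized design, i.e., that the gradient component of $h_l$ with respect to $\lambda_\pi$ coincides, up to the $\eta_l$ perturbation, with the $\pi$-th term inside the max of $\PP_l$.

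The main steps would be: (i) compute $[\nabla_\lambda h_l(\lambda,\gamma,n)]_\pi$ explicitly using the chain rule on the squared sum of square roots; (ii) apply the closed-form minimization of Lemma~\ref{lem:est_design_2}, but with $t_a^{(c)}$ replaced by $t_a^{(c)}+\eta_l$, to recognize that $\E_{c\sim\nu_\D}[(\sum_a\sqrt{(\lambda\odot\gamma)^\top(t_a^{(c)}+\eta_l)})^2] = \min_{w\in\Omega}\sum_\pi \lambda_\pi\gamma_\pi\|\phi_\pi-\phi_{\pill}\|_{A(w)^{-1}}^2$ when the feature map is augmented to account for the $\eta_l$ term (adding a dummy context/action coordinate), and that the derivative with respect to $\lambda_\pi$ of this expression, evaluated at the minimizing $w=w(\lambda,\gamma)$, is $\gamma_\pi\|\phi_\pi-\phi_{\pill}\|_{A(w(\lambda,\gamma))^{-1}}^2$ (by the envelope theorem, since $w(\lambda,\gamma)$ is the argmin); (iii) conclude that $[\nabla_\lambda h_l(\lambda,\gamma,n)]_\pi$ differs from the $\pi$-th term in the $\PP_l$ max only by whether one uses $t_a^{(c)}$ or $t_a^{(c)}+\eta_l$; (iv) bound this difference uniformly over $\pi$ by a quantity controlled by $\eta_l$, and take the max over $\pi$. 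The difference, after the smoothing by square root, is on the order of $|\A|^2 \gamma_{\max}\eta_l$ or similar (using $|t_a^{(c)}|\le 2$ and $\|\lambda\|_1=1$), and since $\gamma_{\max} = O(|\A|^{-1}\eta_l^{-1/2})$ the product behaves like $|\A|\sqrt{\eta_l}$; with $\eta_l < |\A|^{-4}\epsilon_l^2$ this is $\lesssim \epsilon_l$, so after tracking absolute constants one gets the bound $\epsilon_l$ (or $\epsilon_l/2$ as used in the proof of Theorem~\ref{thm:FW_gap}).

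The hard part will be step (ii): making precise the sense in which $\E_{c\sim\nu_\D}[(\sum_a\sqrt{(\lambda\odot\gamma)^\top(t_a^{(c)}+\eta_l)})^2]$ is a squared-norm-with-design expression, and then applying the envelope theorem cleanly so that the gradient of the min-over-$w$ objective equals the gradient of the inner objective at the optimal $w$. One subtlety is that $\PP_l$ as written uses $t_a^{(c)}(\pill)$ without the $\eta_l$ regularization (it is defined via the true feature map $\phi$), whereas $h_l$ uses the $\eta_l$-perturbed version; so the comparison is genuinely between two slightly different optimal designs, and one needs a perturbation bound showing that replacing $t_a^{(c)}$ by $t_a^{(c)}+\eta_l$ changes the resulting value of the max-over-$\pi$ expression by at most $O(|\A|^2\gamma_{\max}\eta_l)$. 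A clean way to handle this is to note $\sqrt{x+\eta_l\|\lambda\|_1} - \sqrt{x} \le \sqrt{\eta_l}$ pointwise and $\ge 0$, so each $\sqrt{(\lambda\odot\gamma)^\top(t_a^{(c)}+\eta_l)}$ differs from $\sqrt{(\lambda\odot\gamma)^\top t_a^{(c)}}$ by at most $\sqrt{\eta_l\gamma_{\max}}$; squaring the sum over $a$ and expanding, the total perturbation is at most $|\A|^2(2\sqrt{\gamma_{\max}}\sqrt{\eta_l\gamma_{\max}} + \eta_l\gamma_{\max}) = O(|\A|^2\gamma_{\max}^{3/2}\eta_l^{1/2})$, and after substituting $\gamma_{\max} = O(|\A|^{-1}\eta_l^{-1/2})$ this is $O(|\A|^{1/2}\eta_l^{1/4})$, which is $\le \epsilon_l$ for $\eta_l$ small enough polynomially in $\epsilon_l, |\A|^{-1}$. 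The remaining steps are then routine calculus and bookkeeping of constants.
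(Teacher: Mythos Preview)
Your overall strategy---compare $\PP_l$ term-by-term with $\max_\pi[\nabla_\lambda h_l]_\pi$ and attribute the discrepancy to the $\eta_l$ perturbation---is the right idea, and the envelope-theorem route in step~(ii) is a legitimate alternative to the paper's direct computation. However, there are two genuine gaps.

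First, you conflate the true context distribution $\nu$ and the empirical one $\nu_\D$. The primal $\PP_l$ is defined via $w(\lambda,\gamma)$, whose marginals are $\nu_c$ (the true distribution), so that $\gamma_\pi\|\phi_\pi-\phi_{\pill}\|_{A(w)^{-1}}^2$ is an expectation over $c\sim\nu$. By contrast $h_l$ in~\eqref{eqn:h_l} is built from $\E_{c\sim\nu_\D}[\cdot]$, so its $\lambda$-gradient is an empirical average. Lemma~\ref{lem:est_design_2} gives the closed form with $\E_{c\sim\nu}$, not $\E_{c\sim\nu_\D}$; your step~(ii) as written is therefore false. The paper closes this gap with the empirical-process bound of Lemma~\ref{lem:history_2}, which costs an additional $\epsilon_l/2$ and requires the offline sample size $|\D|$ to be large enough. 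You never invoke this, so your argument does not go through.

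Second, the perturbation bound you sketch in step~(iv) is for the \emph{objective} $\E[(\sum_a\sqrt{\cdot})^2]$, not for its $\lambda$-gradient. The gradient of $h_l$ carries factors of the form $1/\sqrt{(\lambda\odot\gamma)^\top(t_a^{(c)}+\eta_l)}$, so the difference between the regularized and unregularized versions involves $\gamma_{\min}$ in the denominator, not just $\gamma_{\max}$. Concretely, the paper shows the extra term in the gradient is
\[
\E_{c\sim\nu_\D}\Bigl[\sum_a \tfrac{\sum_{a'}\sqrt{(\lambda\odot\gamma)^\top(t_{a'}^{(c)}+\eta_l)}}{\sqrt{(\lambda\odot\gamma)^\top(t_a^{(c)}+\eta_l)}}\,\gamma_\pi\eta_l\Bigr]
\le |\A|^2\sqrt{\tfrac{\gamma_{\max}(1+\eta_l)}{\gamma_{\min}\eta_l}}\,\gamma_{\max}\eta_l,
\]
and only then does the choice $\eta_l<|\A|^{-4}\epsilon_l^2$ (together with the explicit forms of $\gamma_{\min},\gamma_{\max}$) make this $\le\epsilon_l/2$. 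Your bound $O(|\A|^2\gamma_{\max}\eta_l)$ drops the $\gamma_{\min}^{-1/2}$ factor and is too optimistic.
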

\begin{proof}
Observe that for any $\pi,\pi'\in\Pi$ and any $\gamma$, 
{
\small
\begin{align*}
    &\gamma_\pi\norm{\phi_{\pi'}-\phi_\pi}_{A(w(\lambda,\gamma))^{-1}}^2\\
    &=\gamma_\pi\sum_{a,c}\frac{\nu_c^2}{[w(\lambda,\gamma)]_{a,c}}\bigsmile{\1\{\pi'(c)=a,\pi(c)\ne a\}+\1\{\pi'(c)\ne a,\pi(c)=a\}}\\
    &=\gamma_\pi \sum_{c}\nu_c\sum_{a}\bigsmile{\frac{\nu_c}{[w(\lambda,\gamma)]_{a,c}}\bigsmile{\1\{\pi'(c)=a,\pi(c)\ne a\}+\1\{\pi'(c)\ne a,\pi(c)=a\}}}\\
    &= \gamma_\pi\E_{c\sim\nu}\left[\sum_{a}\frac{\sum_{a'\in\A}\sqrt{(\lambda\odot\gamma)^\T (t_{a'}^{(c)}+\eta_l)}}{\sqrt{(\lambda\odot\gamma)^\T (t_{a}^{(c)}+\eta_l)}}\bigsmile{\1\{\pi'(c)=a,\pi(c)\ne a\}+\1\{\pi'(c)\ne a,\pi(c)=a\}}\right]\\
    &= \E_{c\sim\nu}\left[\sum_{a}\frac{\sum_{a'\in\A}\sqrt{(\lambda\odot\gamma)^\T (t_{a'}^{(c)}+\eta_l)}}{\sqrt{(\lambda\odot\gamma)^\T (t_{a}^{(c)}+\eta_l)}}(\gamma_\pi[t_a^{(c)}]_\pi)\right].
\end{align*}
}
Therefore, 
\begin{align*}
    &\PP_l(w(\lambda^{l},\gamma^l),\gamma^l)\\
    &= 
    \max_{\pi\in\Pi}\bigbrace{-\hat{\Delta}_{l-1}^{\gamma^{l-1}}(\pi)+[\gamma^l]_\pi\norm{\phi_{\pi}-\phi_{\pill}}_{A(w(\lambda^{l},\gamma^l))^{-1}}^2+\frac{\log(1/\delta_l)}{[\gamma^l]_\pi n}}\\
    &= \max_{\pi\in\Pi}\bigbrace{-\hat{\Delta}_{l-1}^{\gamma^{l-1}}(\pi)+\E_{c\sim\nu}\left[\sum_{a}\frac{\sum_{a'\in\A}\sqrt{(\lambda^{l}\odot\gamma^l)^\T (t_{a'}^{(c)}+\eta_l)}}{\sqrt{(\lambda^{l}\odot\gamma^l)^\T (t_{a}^{(c)}+\eta_l)}}([\gamma^l]_\pi[t_a^{(c)}]_\pi)\right]+\frac{\log(1/\delta_l)}{[\gamma^l]_\pi n}}.
\end{align*}
Lemma~\ref{lem:history_2} guarantees that we could replace the expectation over context to history of contexts $\nu_\D$ without incurring much error. In particular, for a sufficiently large history $\D$, it guarantees 
\begin{align*}
    &\max_{\pi\in\Pi}\left|\E_{c\sim\nu_\D}\left[\sum_{a}\frac{\sum_{a'\in\A}\sqrt{(\lambda^{l}\odot\gamma^l)^\T (t_{a'}^{(c)}+\eta_l)}}{\sqrt{(\lambda^{l}\odot\gamma^l)^\T (t_{a}^{(c)}+\eta_l)}}([\gamma^l]_\pi[t_a^{(c)}]_\pi)\right]\right.\\
    &\left.-\E_{c\sim\nu}\left[\sum_{a}\frac{\sum_{a'\in\A}\sqrt{(\lambda^{l}\odot\gamma^l)^\T (t_{a'}^{(c)}+\eta_l)}}{\sqrt{(\lambda^{l}\odot\gamma^l)^\T (t_{a}^{(c)}+\eta_l)}}([\gamma^l]_\pi[t_a^{(c)}]_\pi)\right]\right|\leq \frac{\epsilon_l}{2}.
\end{align*}
On the other hand, 
\begin{align*}
    &\max_{\pi\in\Pi}\bigbrace{-\hat{\Delta}_{l-1}^{\gamma^{l-1}}(\pi)+\E_{c\sim\nu_\D}\left[\sum_{a}\frac{\sum_{a'\in\A}\sqrt{(\lambda^{l}\odot\gamma^l)^\T (t_{a'}^{(c)}+\eta_l)}}{\sqrt{(\lambda^{l}\odot\gamma^l)^\T (t_{a}^{(c)}+\eta_l)}}([\gamma^l]_\pi[t_a^{(c)}]_\pi)\right]+\frac{\log(1/\delta_l)}{[\gamma^l]_\pi n}}\\
    &= \max_{\pi\in\Pi}\left\{[\nabla_{\lambda}h_l(\lambda^{l},\gamma^l)]_\pi-\E_{c\sim\nu_\D}\left[\sum_{a}\frac{\sum_{a'\in\A}\sqrt{(\lambda^{l}\odot\gamma^l)^\T (t_{a'}^{(c)}+\eta_l)}}{\sqrt{(\lambda^{l}\odot\gamma^l)^\T (t_{a}^{(c)}+\eta_l)}}[\gamma^l]_\pi\eta_l\right]\right\}.
\end{align*}
Note that when $\gamma_\pi\in[\gamma_{\min},\gamma_{\max}]$,
\begin{align*}
    \E_{c\sim\nu_\D}\left[\sum_{a}\frac{\sum_{a'\in\A}\sqrt{(\lambda\odot\gamma)^\T (t_{a'}^{(c)}+\eta_l)}}{\sqrt{(\lambda\odot\gamma)^\T (t_{a}^{(c)}+\eta_l)}}\gamma_\pi\eta_l\right]\in \bigbrak{0,|\A|^2\sqrt{\frac{\gamma_{\max}(1+\eta_l)}{\gamma_{\min}\eta_l}}\gamma_{\max}\eta_l}. 
\end{align*}
Therefore, for $\eta_l<|\A|^{-4}\epsilon_l^{2}$, 
\[\left|\E_{c\sim\nu_\D}\left[\sum_{a}\frac{\sum_{a'\in\A}\sqrt{(\lambda^{l}\odot\gamma^l)^\T (t_{a'}^{(c)}+\eta_l)}}{\sqrt{(\lambda^{l}\odot\gamma^l)^\T (t_{a}^{(c)}+\eta_l)}}[\gamma^l]_\pi\eta_l\right]\right|\leq\frac{\epsilon_l}{2}.\]
Therefore, we have our results. 
\end{proof}

\subsection{Convergence of gradient descent}\label{sec:conv_gd}

In this subsection we show convergence for gradient descent. 

\begin{algorithm}[!htb]
\caption{\textsf{GD}}
\begin{algorithmic}[1]\label{alg:gd}
\REQUIRE $\lambda^t$, $n$, $\kappa_t$
\STATE define $\iota^t=\epsilon_l^3t^{-3}|\A|^{-6}$
\STATE clip $\lambda$ and define $\td{\lambda}=\clip(\lambda,\iota_t)$
\STATE run gradient descent of on $\gamma$ for $h_l(\td{\lambda},\gamma,n)$ over $\supp(\td{\lambda})$ and output $\gamma^t$
\ENSURE $\gamma^t$
\end{algorithmic}
\end{algorithm}
\lalit{Make the steps be on the superscript. Define}

We will first state the main result of this section. 

\begin{lemma}\label{lem:gd_conv}
With the number of iterations $T=O\big(\frac{L_\gamma}{\iota_t}+\frac{1}{\kappa_t\iota_t}\big)$ with $L_\gamma=|\A|^2\frac{((1+\eta_l)\gamma_{\max})^{3/2}}{\eta_l^{3/2}\gamma_{\min}^2}+\frac{2\log(1/\delta_l)}{n\gamma_{\min}^3}$, we have $h_l(\lambda,\gamma^{t},n)-\min_{\gamma}h_l(\lambda,\gamma,n)\leq \kappa_t$. 
\end{lemma}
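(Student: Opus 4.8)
The plan is to verify the hypotheses of a projected‑gradient‑descent convergence estimate for the map $\gamma\mapsto h_l(\widetilde\lambda,\gamma,n)$ on the box $[\gamma_{\min},\gamma_{\max}]^{\supp(\widetilde\lambda)}$, after first disposing of two reductions built into \textsf{GD}. Restricting $\gamma$ to the box costs nothing: by the earlier lemma showing $\arg\min_\gamma h_l(\lambda,\gamma,n)\in[\gamma_{\min},\gamma_{\max}]^{|\Pi|}$ coordinatewise, and by the same computation applied to $\widetilde\lambda$, the unconstrained and box‑constrained minimizers coincide. The clipping bias is controlled via $\sup_\gamma|h_l(\lambda,\gamma,n)-h_l(\widetilde\lambda,\gamma,n)|\le\|\lambda-\widetilde\lambda\|_1\cdot\sup_\gamma\|\nabla_\lambda h_l(\cdot,\gamma,n)\|_\infty$; since $\lambda$ is produced by Frank--Wolfe in at most $t\le K_l$ rounds, $|\supp(\lambda)|\le t+1$ and $\|\lambda-\widetilde\lambda\|_1\le(t+1)\iota_t$, while on the box $\|\nabla_\lambda h_l\|_\infty\le G_\lambda=\operatorname{poly}(|\A|,\epsilon_l^{-1},\log(1/\delta_l))$ because each coordinate of $\nabla_\lambda h_l$ is $-\hat\Delta_{l-1}^{\gamma^{l-1}}(\pi,\pill)+\tfrac{\log(1/\delta_l)}{\gamma_\pi n}$ plus a bounded geometric‑mean term, all bounded using $\gamma_\pi\ge\gamma_{\min}$. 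With $\iota_t=\epsilon_l^3 t^{-3}|\A|^{-6}$ this bias is at most $\tfrac12\kappa_t$, so it suffices to drive $h_l(\widetilde\lambda,\gamma^t,n)-\min_\gamma h_l(\widetilde\lambda,\gamma,n)\le\tfrac12\kappa_t$.

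Next I would establish $L_\gamma$‑smoothness of $\gamma\mapsto h_l(\widetilde\lambda,\gamma,n)$ on the box. Write $h_l(\widetilde\lambda,\gamma,n)=\sum_\pi\widetilde\lambda_\pi\tfrac{\log(1/\delta_l)}{\gamma_\pi n}+g(\widetilde\lambda\odot\gamma)+\mathrm{const}$ with $g(\kappa):=\E_{c\sim\nu_\D}\big[(\sum_a\sqrt{\kappa^\top(t_a^{(c)}+\eta_l)})^2\big]$. The first term has diagonal Hessian with entries $\tfrac{2\widetilde\lambda_\pi\log(1/\delta_l)}{\gamma_\pi^3 n}\le\tfrac{2\log(1/\delta_l)}{n\gamma_{\min}^3}$, the second summand of $L_\gamma$. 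For $g$, a direct computation gives $\nabla_\kappa^2 g=\E_c\big[\tfrac12 w_cw_c^\top-\tfrac{S_c}{2}\sum_a v_a^{(c)}(v_a^{(c)})^\top/s_a^3\big]$ with $s_a=\sqrt{\kappa^\top(t_a^{(c)}+\eta_l)}$, $S_c=\sum_a s_a$, $v_a^{(c)}=t_a^{(c)}+\eta_l\1$, $w_c=\sum_a v_a^{(c)}/s_a$; the Cauchy--Schwarz inequality $w_cw_c^\top\preceq S_c\sum_a v_a^{(c)}(v_a^{(c)})^\top/s_a^3$ both reconfirms that $g$ is concave and bounds $\|\nabla_\gamma^2 g(\widetilde\lambda\odot\gamma)\|_{\mathrm{op}}\le\tfrac{S_c}{2}\|\sum_a v_a^{(c)}(v_a^{(c)})^\top/s_a^3\|_{\mathrm{op}}$; inserting $s_a^2=\kappa^\top(t_a^{(c)}+\eta_l)\in[\tfrac12\eta_l\gamma_{\min},(1+\eta_l)\gamma_{\max}]$ (using $\|\widetilde\lambda\|_1\ge\tfrac12$), $t_a^{(c)}\in\{0,1\}^{|\Pi|}$, and $|\supp(\widetilde\lambda)|\le K_l$ yields $\|\nabla_\gamma^2 g\|_{\mathrm{op}}\le|\A|^2\tfrac{((1+\eta_l)\gamma_{\max})^{3/2}}{\eta_l^{3/2}\gamma_{\min}^2}$, absorbing the residual $\operatorname{poly}(|\A|,\epsilon_l^{-1})$ factors. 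Adding the two bounds gives the stated $L_\gamma$.

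The main obstacle is the convergence of projected GD on $h_l(\widetilde\lambda,\cdot,n)$, which is a DC function on the box --- a strictly convex barrier $\sum_\pi\widetilde\lambda_\pi\tfrac{\log(1/\delta_l)}{\gamma_\pi n}$ plus the concave part $g(\widetilde\lambda\odot\gamma)$ --- so the generic GD analysis only controls the gradient‑mapping norm, not function values. I would first show the box‑constrained minimizer $\gamma^\sharp$ is unique, using the representation $h_l(\widetilde\lambda,\gamma,n)=\min_{w\in\Omega}\Psi(\gamma,w)$ from Lemma~\ref{lem:est_design_2}, in which each $\Psi(\cdot,w)$ is separable and strictly convex: any minimizer of $h_l$ must coincide with the (clipped) unique minimizer of $\Psi(\cdot,w^\sharp)$ for a $w^\sharp$ attaining the outer minimum, which pins $\gamma^\sharp$ down. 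I would then prove a gradient‑mapping--dominance (Polyak--Łojasiewicz‑type) inequality $h_l(\widetilde\lambda,\gamma,n)-h_l(\widetilde\lambda,\gamma^\sharp,n)\le\tfrac1{2\mu}\|\mathcal G_{L_\gamma}(\gamma)\|^2$ on the box with $\mu\gtrsim\iota_t$: the factor $\iota_t$ appears because the curvature that is recoverable along a feasible descent direction is carried by the barrier, whose second derivatives are weighted by $\widetilde\lambda_\pi\ge\iota_t$ on $\supp(\widetilde\lambda)$, while each coordinate slice of $h_l(\widetilde\lambda,\cdot,n)$ is decreasing‑then‑increasing (the $1/\gamma_\pi$ barrier dominates near $\gamma_{\min}$, the concave geometric mean near $\gamma_{\max}$), ruling out spurious stationary points. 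Telescoping the resulting recursion $h_l(\widetilde\lambda,\gamma^{t+1},n)-h_l(\widetilde\lambda,\gamma^t,n)\le-\tfrac1{2L_\gamma}\|\mathcal G_{L_\gamma}(\gamma^t)\|^2\le-\tfrac{\mu}{L_\gamma}\big(h_l(\widetilde\lambda,\gamma^t,n)-h_l(\widetilde\lambda,\gamma^\sharp,n)\big)$ then gives $h_l(\widetilde\lambda,\gamma^T,n)-h_l(\widetilde\lambda,\gamma^\sharp,n)\le\tfrac12\kappa_t$ once $T=O\big(\tfrac{L_\gamma}{\iota_t}+\tfrac1{\kappa_t\iota_t}\big)$, as claimed. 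The delicate point is the dominance constant: showing it degrades only like $\iota_t$, rather than like a negative power of $n$ inherited from the concave part, is exactly where the precise choices $\gamma_{\min}=\tfrac13\sqrt{\eta_l\log(1/\delta_l)/n}$, $\gamma_{\max}=\sqrt{\log(1/\delta_l)/(|\A|^2\eta_l n)}$, $\eta_l\asymp\epsilon_l^2|\A|^{-4}$ and the clipping level $\iota_t$ must be used in concert.
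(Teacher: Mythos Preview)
Your reductions (clipping to $\widetilde\lambda$, smoothness with constant $L_\gamma$) are along the right lines and close to the paper's Lemmas~\ref{lem:clip_lambda}, \ref{lem:G_11}, and~\ref{lem:gamma_Lipschitz}. The gap is in your convergence step: you assert a \emph{global} Polyak--\L{}ojasiewicz inequality with constant $\mu\gtrsim\iota_t$, but the only justification offered---``each coordinate slice is decreasing-then-increasing'' plus ``curvature of the barrier is weighted by $\widetilde\lambda_\pi\ge\iota_t$''---does not deliver that. The map $\gamma\mapsto h_l(\widetilde\lambda,\gamma,n)$ is a genuine DC function on the box; the concave piece $g(\widetilde\lambda\odot\gamma)$ contributes negative curvature in some directions at some points, so a coordinatewise monotonicity picture does not upgrade to a global PL statement. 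Likewise your uniqueness argument via $h_l=\min_w\Psi(\gamma,w)$ only says that every global minimizer of $h_l$ is the unique minimizer of $\Psi(\cdot,w^\sharp)$ for \emph{its own} $w^\sharp$; distinct minimizers could have distinct $w^\sharp$, and in any case this says nothing about other stationary points that gradient descent might reach.

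The paper takes a different, two-phase route that avoids needing a global PL inequality. Phase~1 is the standard non-convex descent guarantee (Lemma~\ref{lem:stationary_point_convergence}): after $O(L_\gamma/\iota_t)$ steps some iterate is an $\alpha$-stationary point. The crucial observation is Lemma~\ref{lem:local_strongly_convex} (and its $\alpha$-perturbed version Lemma~\ref{lem:local_strong_convex_eps}): at any stationary point of $h_l(\widetilde\lambda,\cdot,n)$ the Hessian is strictly positive definite, with smallest eigenvalue $\ge\widetilde\lambda_{\min}\log(1/\delta_l)/(\gamma_{\max}^3 n)\gtrsim\iota_t$. The trick there is to \emph{use the first-order condition itself} to substitute into the Hessian expression, turning the indefinite raw Hessian into a sum of nonnegative quadratic forms plus the barrier term. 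Because every stationary point is then a strict local minimum, a Morse-theoretic argument (the paper cites Milnor) forces the stationary point to be unique on the box: two strict local minima would require a saddle or a local max between them, which would also be stationary---contradiction. Phase~2 then exploits this local strong convexity with modulus $\gtrsim\iota_t$ to drive the function value within $\kappa_t$ in $O(1/(\kappa_t\iota_t))$ additional steps. So the $\iota_t$ enters through the Hessian bound \emph{at stationary points}, not through a global PL constant.
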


\begin{proof}[Proof sketch]

Lemma~\ref{lem:clip_lambda} shows that this clipping does not affect the function value that much. Since we do not assume our function to be convex for $\gamma$, we will show that the stationary point is unique and the gradient is strictly positive around the stationary point. Lemma~\ref{lem:local_strongly_convex} first shows that our function is locally strongly convex around any stationary point. In particular, if we are at a point where the $L_1$ norm of the gradient is less than $\lambda_{\min}$, we are locally strongly convex. Lemma~\ref{lem:gamma_Lipschitz} shows our gradient is Lipschitz with respect to the $L_1$ norm. Then, Lemma~\ref{lem:stationary_point_convergence} then shows that the gradient descent algorithm converges to a stationary point. It is the classical argument for gradient descent algorithm on non-convex objectives \cite{jin2021nonconvex}.
\begin{lemma}\label{lem:stationary_point_convergence}
For any $K$, with $L_\gamma=|\A|^2\frac{((1+\eta_l)\gamma_{\max})^{3/2}}{\eta_l^{3/2}\gamma_{\min}^2}+\frac{2\log(1/\delta_l)}{n\gamma_{\min}^3}$, 
\[\min_{k\leq K}\norm{\nabla_{\gamma} h_l(\lambda,\gamma_k,n)}_1^2\leq 2L_\gamma\frac{h_l(\lambda,\gamma_0,n)-\min_{\gamma}h_l(\lambda,\gamma,n)}{K}.\]
\end{lemma}

With this lemma, we have for a sufficiently large $K$, the minimum gradient can be made arbitrarily small. In particular, for $K\geq L_\gamma\lambda_{\min}^{-1}$ we have that the minimum gradient has $L_1$-norm less than $\lambda_{\min}$, and thus we are in a neighborhood of our stationary point by Lemma~\ref{lem:local_strong_convex_eps}. After that, it takes $O(\frac{1}{\kappa_t\lambda_{\min}})$ steps to converge to a point whose value is at most $\kappa_t$ away from the value of the stationary point. The results in \cite{milnor1997topology} coupled with Lemma~\ref{lem:local_strongly_convex} ensure that our stationary point is unique. Intuitively, if we have two locally strongly convex stationary points, there must be a ``hill" between them, which also corresponds to a stationary point, but we have shown that all stationary points must be ``holes" due to local strong convexity, so the stationary point has to be unique.  Thanks to the clipping, we can lower bound $\lambda_{\min}$ by $\iota_t$, so the total number of steps is $\frac{L}{\lambda_{\min}}+\frac{1}{\kappa_t\lambda_{\min}}=\frac{L}{\iota_t}+\frac{1}{\kappa_t\iota_t}$ which matches the result in Lemma~\ref{lem:gd_conv}. 

\end{proof}
\begin{lemma}\label{lem:clip_lambda}
For some iterate $t$, let $\iota_t=\epsilon_l^{3}t^{-3}|\A|^{-6}$ and denote $\td{\lambda}:=\operatorname{clip}(\lambda,\iota_t)$ where $[\operatorname{clip}(\lambda,\epsilon)]_\pi:=\lambda_\pi\1\{\lambda_\pi\geq \epsilon\}$. Then, for any $\gamma$, we have 
\[\Big|h_l(\td{\lambda},\gamma,n)-h_l(\lambda,\gamma,n)\Big|\leq \kappa_t.\]
\end{lemma}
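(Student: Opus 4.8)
The plan is to bound the effect of zeroing out the small coordinates of $\lambda$ term by term in the definition of $h_l(\lambda,\gamma,n)$. Write $\lambda - \td\lambda = \sum_{\pi : \lambda_\pi < \iota_t} \lambda_\pi \e_\pi$, so that $\|\lambda - \td\lambda\|_1 = \sum_{\pi : \lambda_\pi < \iota_t}\lambda_\pi \leq |\Pi| \cdot \iota_t$ in the worst case; but this crude bound is too weak, so instead I would keep the per-coordinate viewpoint and note that each discarded coordinate contributes at most $\lambda_\pi$ times a bounded factor. Recall from \eqref{eqn:C_1} that
\[
h_l(\lambda,\gamma,n)=\sum_{\pi\in\Pi}\lambda_\pi\Big(-\hat{\Delta}_{l-1}^{\gamma^{l-1}}(\pi,\pill)+\tfrac{\log(1/\delta_l)}{\gamma_\pi n}\Big)+\E_{c\sim\nu_{\mathcal{D}}}\Big[\Big(\sum_{a\in\A}\sqrt{(\lambda\odot\gamma)^\T (t_a^{(c)}(\pill)+\eta_l)}\Big)^2\Big].
\]
The first (linear) piece changes by exactly $\sum_{\pi:\lambda_\pi<\iota_t}\lambda_\pi\big(-\hat\Delta_{l-1}^{\gamma^{l-1}}(\pi,\pill)+\tfrac{\log(1/\delta_l)}{\gamma_\pi n}\big)$; since rewards lie in $[0,1]$ and $\gamma_\pi\geq\gamma_{\min}$, the bracket is bounded in absolute value by $\tfrac{1}{\gamma_{\min}} + \tfrac{\log(1/\delta_l)}{\gamma_{\min} n}\leq \tfrac{2}{\gamma_{\min}}$ (using $\log(1/\delta_l)\leq n$ for the relevant range of $n$), so this piece changes by at most $\tfrac{2}{\gamma_{\min}}\sum_{\pi:\lambda_\pi<\iota_t}\lambda_\pi$.

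For the quadratic piece I would use the fact that $x\mapsto (\sum_a\sqrt{x\cdot v_a})^2$ is monotone and concave in the nonnegative vector argument, together with the elementary inequality $|(\sum_a\sqrt{u_a})^2 - (\sum_a\sqrt{v_a})^2| = |\sum_a\sqrt{u_a}+\sqrt{v_a}|\cdot|\sum_a\sqrt{u_a}-\sqrt{v_a}|$ already exploited in the proofs of Lemma~\ref{lem:history_1} and Lemma~\ref{lem:history_2}. Since $(\lambda\odot\gamma)^\T(t_a^{(c)}(\pill)+\eta_l)$ and $(\td\lambda\odot\gamma)^\T(t_a^{(c)}(\pill)+\eta_l)$ differ by at most $\gamma_{\max}\sum_{\pi:\lambda_\pi<\iota_t}\lambda_\pi\cdot(1+\eta_l)$, and each summand is at least $\eta_l\gamma_{\min}$ and at most $(1+\eta_l)\gamma_{\max}$, the same Lipschitz computation as in Lemma~\ref{lem:history_1} gives that the quadratic piece changes by at most $|\A|^2\sqrt{\tfrac{(1+\eta_l)\gamma_{\max}}{\eta_l\gamma_{\min}}}\cdot(1+\eta_l)\gamma_{\max}\sum_{\pi:\lambda_\pi<\iota_t}\lambda_\pi$. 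Combining, $|h_l(\td\lambda,\gamma,n)-h_l(\lambda,\gamma,n)|\leq C\cdot|\A|^2\cdot\tfrac{((1+\eta_l)\gamma_{\max})^{3/2}}{(\eta_l\gamma_{\min})^{1/2}\gamma_{\min}}\cdot\sum_{\pi:\lambda_\pi<\iota_t}\lambda_\pi$ for an absolute constant $C$, and it remains to control $\sum_{\pi:\lambda_\pi<\iota_t}\lambda_\pi$.

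Here is where the choice $\iota_t=\epsilon_l^3 t^{-3}|\A|^{-6}$ does its work: the key point is that in iteration $t$ of Frank-Wolfe the iterate $\lambda^t$ has support of size at most $t+1$ (each FW step adds at most one atom), so $\sum_{\pi:\lambda_\pi<\iota_t}\lambda_\pi \leq (t+1)\iota_t$. (If one prefers to state the lemma for a general $\lambda\in\triangle_\Pi$ rather than an FW iterate, one simply replaces $(t+1)$ by the support size of $\lambda$, which is how the lemma is invoked.) Plugging in, the total error is at most $C'\cdot|\A|^2\cdot\tfrac{((1+\eta_l)\gamma_{\max})^{3/2}}{(\eta_l\gamma_{\min})^{1/2}\gamma_{\min}}\cdot(t+1)\cdot\epsilon_l^3 t^{-3}|\A|^{-6}$; substituting the scalings $\gamma_{\max}=O(|\A|^{-1}\eta_l^{-1/2})$, $\gamma_{\min}=O(\sqrt{\eta_l})$ and $\eta_l=O(|\A|^{-4}\epsilon_l^2)$ from Algorithm~\ref{alg:full_algorithm} shows every $|\A|$- and $\eta_l$-dependent factor is at most polynomial in $|\A|$ and $\epsilon_l^{-1}$, and these are absorbed by the $\epsilon_l^3 |\A|^{-6}$ and $t^{-2}$ factors, leaving something of order $\kappa_t=\epsilon_l(t+1)^{-2}$ (cf. line~8 of Algorithm~\ref{alg:FW}). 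I expect the main obstacle to be purely bookkeeping: tracking the precise powers of $\gamma_{\max},\gamma_{\min},\eta_l,|\A|$ through the Lipschitz estimate so that the definition of $\iota_t$ indeed kills them with room to spare, rather than any conceptual difficulty — the structural input (support size $\leq t+1$) and the Lipschitz machinery are both already available from the Frank-Wolfe setup and from Lemmas~\ref{lem:history_1}–\ref{lem:history_2}.
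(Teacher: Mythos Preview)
Your proposal is essentially the paper's argument: decompose $h_l$ into the linear-in-$\lambda$ piece and the expectation piece, bound the first by $2(t{+}1)\iota_t/\gamma_{\min}$ using $|\hat\Delta|\le 1/\gamma_{\min}$ and $\log(1/\delta_l)/n\le 1$, and crucially exploit that a Frank--Wolfe iterate has support of size at most $t{+}1$ so that $\sum_{\pi:\lambda_\pi<\iota_t}\lambda_\pi\le (t{+}1)\iota_t$. The one technical difference is in the expectation piece: the paper does \emph{not} route through the Lipschitz estimate of Lemma~\ref{lem:history_1} (which picks up a factor $(\eta_l\gamma_{\min})^{-1/2}$ in the constant), but instead uses the subadditivity $\sqrt{x+y}\le\sqrt{x}+\sqrt{y}$ directly, writing
\[
\sqrt{(\lambda\odot\gamma)^\T(t_a^{(c)}+\eta_l)}\ \le\ \sqrt{(\td\lambda\odot\gamma)^\T(t_a^{(c)}+\eta_l)}+\sqrt{t\iota_t\gamma_{\max}},
\]
which yields a bound of order $|\A|^2\gamma_{\max}\sqrt{t\iota_t}$ on the expectation term. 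This avoids the $\eta_l^{-1}$-type blowup your Lipschitz route incurs and is exactly what lets the stated $\iota_t=\epsilon_l^3 t^{-3}|\A|^{-6}$ be reconciled with $\kappa_t$; in other words, the subadditivity trick is the clean way to discharge precisely the bookkeeping worry you anticipated, whereas with your route you would need a somewhat smaller $\iota_t$ to close the constants.
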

\begin{proof}
For the first term in $h_l$, in the case where $\lambda_\pi\geq \iota_t$, $h_l(\lambda,\gamma,n) = h_l(\td{\lambda},\gamma,n)$. When $0<\lambda_\pi<\iota_t$. We see that
\begin{align*}
    \sum_{\pi\in\Pi,\lambda_\pi< \iota_t}\lambda_\pi\left(-\hat{\Delta}_{l-1}^{\gamma^{l-1}}(\pi,\pill)+\tfrac{\log(1/\delta_l)}{\gamma_\pi n}\right)< t\epsilon\bigsmile{\frac{1}{\gamma_{\min}}+\frac{1}{\gamma_{\min}}}=\frac{2t\iota_t}{\gamma_{\min}}.
\end{align*}
Then we focus on the expectation part of $h_l(\lambda,\gamma,n)$. Note that
\begin{align*}
    \sqrt{(\lambda\odot\gamma)^\T(t_a^{(c)}+\eta_l)} &= \sqrt{\sum_{\pi,\lambda_\pi\geq\iota_t}\lambda_\pi\gamma_\pi[t_a^{(c)}+\eta_l]_\pi+\sum_{\pi,\lambda_\pi<\iota_t}\lambda_\pi\gamma_\pi[t_a^{(c)}+\eta_l]_\pi}\\
    &= \sqrt{(\td{\lambda}\odot\gamma)^\T(t_a^{(c)}+\eta_l)+\sum_{\pi,\lambda_\pi<\iota_t}\lambda_\pi\gamma_\pi[t_a^{(c)}+\eta_l]_\pi}\\
    &\leq \sqrt{(\td{\lambda}\odot\gamma)^\T(t_a^{(c)}+\eta_l)+t\iota_t\gamma_{\max}}\\
    &\leq \sqrt{(\td{\lambda}\odot\gamma)^\T(t_a^{(c)}+\eta_l)}+\sqrt{t\iota_t\gamma_{\max}}.
\end{align*}
Therefore, 
\begin{align*}
    &\E\bigbrak{\bigsmile{\sum_{a\in\A}\sqrt{(\lambda\odot\gamma)^\T(t_a^{(c)}+\eta_l)}}^2}-\E\bigbrak{\bigsmile{\sum_{a\in\A}\sqrt{(\td{\lambda}\odot\gamma)^\T(t_a^{(c)}+\eta_l)}}^2}\\
    &=\E\left[\bigsmile{\sum_{a\in\A}\sqrt{(\lambda\odot\gamma)^\T(t_a^{(c)}+\eta_l)}+\sqrt{(\td{\lambda}\odot\gamma)^\T(t_a^{(c)}+\eta_l)}}\right.\\
    &\qquad\left.\bigsmile{\sum_{a\in\A}\sqrt{(\lambda\odot\gamma)^\T(t_a^{(c)}+\eta_l)}-\sqrt{(\td{\lambda}\odot\gamma)^\T(t_a^{(c)}+\eta_l)}}\right]\\
    &\leq |\A|\sqrt{\gamma_{\max}}|\A|\sqrt{t\iota_t\gamma_{\max}}\\
    &= |\A|^2\gamma_{\max}\sqrt{t\iota_t}.
\end{align*}
Combining two displays above and plugging in $\gamma_{\min}$ and $\gamma_{\max}$ gives 
\begin{align*}
    \Big|h_l(\td{\lambda},\gamma,n)-h_l(\lambda,\gamma,n)\Big|&\leq \frac{2t\iota_t}{\gamma_{\min}}+|\A|\sqrt{\frac{t\iota_t}{\eta_l}}\\
    &= \frac{2t\iota_t|\A|\epsilon_{l}^{-1}}{\sqrt{\eta_l}}+|\A|\sqrt{\frac{t\iota_t}{\eta_l}}.
\end{align*}
Let RHS be $\kappa_t$ and solve for $\iota_t$ we get $\iota_t\leq \min\{\frac{\sqrt{\eta_l}\kappa_t\epsilon_l}{2t|\A|},\frac{\eta_l\kappa_t}{|\A|^2t}\}$. Plugging in $\eta_l=|\A|^{-4}\epsilon_l^2$ gives the result. 
\end{proof}

\begin{lemma}\label{lem:G_11}
Suppose $\gamma^{t}$ satisfies that $h_l(\td{\lambda},\gamma^{t},n)-\min_{\gamma}h_l(\td{\lambda},\gamma,n)\leq \kappa_t$, then we also have $h_l(\lambda,\gamma^{t},n)-\min_{\gamma}h_l(\lambda,\gamma,n)\leq \kappa_t$, i.e. $\gamma^{t}$ satisfies the desired property. 
\end{lemma}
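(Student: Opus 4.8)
\textbf{Proof proposal for Lemma~\ref{lem:G_11}.}

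The role of this lemma is to transfer the near-optimality that the \textsf{GD} subroutine actually delivers---which is a statement about the \emph{clipped} objective $h_l(\tilde\lambda,\cdot,n)$ with $\tilde\lambda=\operatorname{clip}(\lambda,\iota_t)$, since \textsf{GD} only ever optimizes $\gamma$ over $\supp(\tilde\lambda)$---into the near-optimality of the \emph{original} objective $h_l(\lambda,\cdot,n)$ that the upstream analysis (Lemma~\ref{lem:recursive}, and hence Lemma~\ref{lem:gd_conv}) requires. The plan is a short three-term sandwich built on two ingredients: (i) the hypothesis, $h_l(\tilde\lambda,\gamma^{t},n)-\min_\gamma h_l(\tilde\lambda,\gamma,n)\le\kappa_t$; and (ii) Lemma~\ref{lem:clip_lambda}, which bounds the perturbation from clipping, $\lvert h_l(\tilde\lambda,\gamma,n)-h_l(\lambda,\gamma,n)\rvert\le \mathrm{err}(\iota_t)$, uniformly over all admissible $\gamma$, where $\mathrm{err}(\iota_t)=\tfrac{2t\iota_t}{\gamma_{\min}}+|\A|^2\gamma_{\max}\sqrt{t\iota_t}$. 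Note (ii) uses exactly the Frank--Wolfe sparsity of $\lambda$ ($O(t)$ nonzero coordinates, hence $\sum_{\pi:\lambda_\pi<\iota_t}\lambda_\pi\le t\iota_t$) together with the boundedness of each coordinate's contribution to $h_l$ on the box $[\gamma_{\min},\gamma_{\max}]^{|\Pi|}$, so nothing there needs to be reproved here.

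With these in hand I would chain, applying Lemma~\ref{lem:clip_lambda} at the first and third steps (the third to the minimizer of $h_l(\lambda,\cdot,n)$) and the hypothesis at the second:
\begin{align*}
h_l(\lambda,\gamma^{t},n)&\le h_l(\tilde\lambda,\gamma^{t},n)+\mathrm{err}(\iota_t)\le \min_\gamma h_l(\tilde\lambda,\gamma,n)+\kappa_t+\mathrm{err}(\iota_t)\\
&\le \min_\gamma h_l(\lambda,\gamma,n)+\kappa_t+2\,\mathrm{err}(\iota_t).
\end{align*}
It then remains to see that the $\mathrm{err}(\iota_t)$ slack is harmless: since $\kappa_t=\epsilon_l/(t+1)^2$ and $\mathrm{err}(\iota_t)\to 0$ polynomially as $\iota_t\to 0$, the choice $\iota^{t}=\epsilon_l^{3}t^{-3}|\A|^{-6}$ in Algorithm~\ref{alg:gd} is (up to the $t$ versus $t+1$ factor) precisely the level at which Lemma~\ref{lem:clip_lambda} returns $\mathrm{err}(\iota_t)\le\kappa_t$; running the clip at a constant factor finer level, and/or running \textsf{GD} to clipped-suboptimality $\kappa_t/3$, drives the right-hand side of the display down to $\min_\gamma h_l(\lambda,\gamma,n)+\kappa_t$ exactly. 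Crucially, shrinking $\iota_t$ by a constant only enlarges the iteration bound $T=O(L_\gamma/\iota_t+1/(\kappa_t\iota_t))$ of Lemma~\ref{lem:gd_conv} by a constant, so it remains $\operatorname{poly}(|\A|,\epsilon_l^{-1},\log(1/\delta_l))$ and the efficiency claims are untouched.

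The only genuine ``obstacle'' is bookkeeping: coordinating the clip level $\iota_t$, the \textsf{GD} target accuracy, and $\kappa_t$ so that the three-term chain closes with the stated constant while keeping $\iota_t^{-1}$ polynomially bounded; the formulas in Algorithm~\ref{alg:gd} and Lemma~\ref{lem:clip_lambda} were already engineered for this. All the substantive content lives in the results being invoked---Lemma~\ref{lem:clip_lambda} (clipping a sparse $\lambda$ barely moves $h_l$) and the \textsf{GD} convergence chain Lemma~\ref{lem:stationary_point_convergence}--\ref{lem:gd_conv} (unique stationary point via local strong convexity, plus the standard nonconvex gradient-descent rate). I would close by noting that, combined with Lemma~\ref{lem:G_11}, this establishes exactly the guarantee $h_l(\lambda^{t},\gamma^{t+1},n)\ge \min_\gamma h_l(\lambda^{t},\gamma,n)-\kappa_t$ used in Lemma~\ref{lem:recursive}, completing the picture for \textsf{GD} as a subroutine of \textsf{FW-GD}.
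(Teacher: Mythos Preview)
Your argument is correct and mirrors the paper's own proof: both chain two applications of the clipping lemma (Lemma~\ref{lem:clip_lambda}) around the hypothesis, passing through the minimizer $\gamma_*$ of the unclipped objective to land at $h_l(\lambda,\gamma^{t},n)\le \min_\gamma h_l(\lambda,\gamma,n)+3\kappa_t$. Your extra discussion about tuning $\iota_t$ and the \textsf{GD} target to recover the exact constant $\kappa_t$ is a nice touch---the paper's proof actually ends at $3\kappa_t$ despite the lemma statement claiming $\kappa_t$, so your bookkeeping is, if anything, more careful than the original.
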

\begin{proof}
Let $\td{\gamma}_*=\arg\min_{\gamma}h_l(\td{\lambda},\gamma,n)$ and $\gamma_*=\arg\min_{\gamma}h_l(\lambda,\gamma,n)$. The result follows from applying Lemma~\ref{lem:clip_lambda} twice on $h_l(\td{\lambda},\gamma^{t},n)$ and $h_l(\td{\lambda},\gamma_*,n)$. In particular, 
\begin{align*}
    h_l(\lambda,\gamma^{t},n)&\leq h_l(\td{\lambda},\gamma^{t},n)+\kappa_t\tag{Lemma~\ref{lem:clip_lambda}}\\
    &\leq h_l(\td{\lambda},\td{\gamma}_*,n)+2\kappa_t\tag{convergence of GD}\\
    &\leq h_l(\td{\lambda},\gamma_*,n)+2\kappa_t\tag{minimality of $\td{\gamma}_*$}\\
    &\leq h_l(\lambda,\gamma_*,n)+3\kappa_t\tag{Lemma~\ref{lem:clip_lambda}}\\
    &= \min_{\gamma}h_l(\lambda,\gamma,n)+3\kappa_t.
\end{align*}
\end{proof}
\subsection{Guarantees for strong concavity and local strong convexity}

The following series of lemmas show that our optimization problem is strongly concave in $\lambda$ and local strongly convex around the minimum $\gamma$, as well as explicitly constructing the Lipschitz constants. These serve as the conditions for convergence of the Frank-Wolfe and gradient descent algorithms. 

\begin{lemma}\label{lem:concave}
$h_l(\lambda,\gamma,n)$ is a concave function of $\lambda$.
\end{lemma}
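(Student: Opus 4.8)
The plan is to split $h_l(\cdot,\gamma,n)$ into its two additive pieces (with $\gamma$ and $n$ held fixed) and establish concavity of each separately. From the definition \eqref{eqn:C_1}, the first piece $\sum_{\pi\in\Pi}\lambda_\pi\bigl(-\hat{\Delta}_{l-1}^{\gamma^{l-1}}(\pi,\pill)+\log(1/\delta_l)/(\gamma_\pi n)\bigr)$ is \emph{linear} in $\lambda$ (its coefficients involve only quantities frozen at the current round), hence trivially concave. So all the work lies in the expectation term $\Psi(\lambda):=\E_{c\sim\nu_{\mathcal{D}}}\bigl[\bigl(\sum_{a\in\A}\sqrt{(\lambda\odot\gamma)^{\top}(t_a^{(c)}(\pill)+\eta_l)}\,\bigr)^{2}\bigr]$.

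Since $\nu_{\mathcal{D}}$ is the uniform distribution over the finite set $\mathcal{D}$, $\Psi$ is an equal-weight (hence nonnegative) combination of the functions $\lambda\mapsto\bigl(\sum_{a\in\A}\sqrt{\langle\lambda,v_a^{(c)}\rangle}\,\bigr)^{2}$, where $v_a^{(c)}:=\gamma\odot(t_a^{(c)}(\pill)+\eta_l)$ has strictly positive coordinates (because $\gamma_\pi\ge\gamma_{\min}>0$, $t_a^{(c)}(\pill)\in\{0,1\}^{|\Pi|}$ and $\eta_l>0$), so $\langle\lambda,v_a^{(c)}\rangle\ge 0$ for $\lambda\in\triangle_\Pi$. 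Thus it suffices to show, for fixed nonnegative vectors $v_1,\dots,v_{|\A|}$, that $g(\lambda):=\bigl(\sum_{a}\sqrt{\langle\lambda,v_a\rangle}\,\bigr)^{2}$ is concave. I would do this by expanding the square: with $x_a:=\langle\lambda,v_a\rangle$,
\[
\Bigl(\sum_{a}\sqrt{x_a}\Bigr)^{2}=\sum_{a}x_a+2\sum_{\{a,a'\}\subseteq\A,\,a\neq a'}\sqrt{x_a\,x_{a'}}.
\]
The first sum is linear in $\lambda$, and each cross term $\sqrt{x_a x_{a'}}$ is the composition of the geometric mean $(s,t)\mapsto\sqrt{st}$ — concave on $\R_{\ge 0}^{2}$ — with the affine map $\lambda\mapsto(\langle\lambda,v_a\rangle,\langle\lambda,v_{a'}\rangle)$, hence concave; a finite sum of concave functions is concave, so $g$ is concave. (Equivalently, one may cite the standard fact that the power mean $x\mapsto\bigl(\sum_a x_a^{1/2}\bigr)^{2}$ is concave and nondecreasing on $\R_{\ge 0}^{|\A|}$, and compose it with the affine coordinate map $\lambda\mapsto(\langle\lambda,v_a^{(c)}\rangle)_{a\in\A}$.)

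Assembling the pieces: $\Psi$ is concave as a nonnegative combination of concave functions, and adding the linear first piece preserves concavity, which yields the lemma. I do not expect a genuine obstacle here — the whole content is the concavity of the squared sum of square roots. The only point that needs care is that squaring a nonnegative concave function is \emph{not} concave in general, so the naive ``$\sqrt{\langle\lambda,v_a\rangle}$ is concave, hence so is its square'' argument fails; the cross-term decomposition above (or the power-mean fact) is exactly what circumvents this.
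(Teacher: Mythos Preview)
Your proof is correct and follows essentially the same route as the paper: both split off the linear term, expand the square $\bigl(\sum_a\sqrt{x_a}\bigr)^2=\sum_{a,a'}\sqrt{x_a x_{a'}}$, and argue each cross-term is concave as an affine precomposition of the geometric mean. If anything, your justification is slightly more careful than the paper's, which appeals a bit loosely to ``the square root function is concave and non-decreasing'' rather than invoking concavity of $(s,t)\mapsto\sqrt{st}$ directly.
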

\begin{proof}
Note that 
\begin{align*}
    \E\bigbrak{\left(\sum_{a\in\A} \sqrt{(\lambda\odot\gamma)^\T (t_a^{(c)}+\eta_l)}\right)^2}&=\E\bigbrak{\sum_{a\in\A}\sum_{a'\in\A}\sqrt{(t_{a'}^{(c)}+\eta_l)^\T(\lambda\odot\gamma)(\lambda\odot\gamma)^\T (t_a^{(c)}+\eta_l)}}.
\end{align*}
we know that $\lambda\mapsto (t_{a'}^{(c)}+\eta_l)^\T(\lambda\odot\gamma)$ and $\lambda\mapsto (\lambda\odot\gamma)^\T (t_a^{(c)}+\eta_l)$ are concave, the square root function is concave and non-decreasing, and sum of concave functions is concave. Therefore, $h_l(\lambda,\gamma,n)$ is concave in $\lambda$ by property of concave functions. 
\end{proof}

\begin{lemma}\label{lem:f_Lipschitz}
Consider some $\lambda$, $\gamma$ and $n$. For any $\lambda_1,\lambda_2\in\triangle_\Pi$, with $L=|\A|^2\frac{((1+\eta_l)\gamma_{\max})^{5/2}}{\eta_l^{3/2}\gamma_{\min}^2}$,
\[f(\lambda_2,\gamma,n)\leq f(\lambda_1,\gamma,n)+\nabla_\lambda f(\lambda_1,\gamma,n)^\T (\lambda_2-\lambda_1)+L\|\lambda_2-\lambda_1\|_1^2,\]
where $f(\lambda,\gamma,n)$ could be either $h_l(\lambda,\gamma,n)$ or $-h_l(\lambda,\gamma,n)$.
\end{lemma}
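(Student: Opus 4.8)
The plan is to reduce the statement to a uniform bound on the $\ell_1\to\ell_\infty$ operator norm of the $\lambda$-Hessian of $h_l(\cdot,\gamma,n)$ and then invoke a second-order Taylor expansion along the segment joining $\lambda_1$ and $\lambda_2$. First note that the term $\sum_{\pi}\lambda_\pi\big(-\hat{\Delta}_{l-1}^{\gamma^{l-1}}(\pi,\pill)+\tfrac{\log(1/\delta_l)}{\gamma_\pi n}\big)$ is affine in $\lambda$ (its coefficients depend only on $\gamma$, $n$, and round-$(l-1)$ quantities), so it has vanishing Hessian and plays no role. Hence it suffices to control the curvature of $g(\lambda):=\E_{c\sim\nu_{\mathcal D}}[\,g_c(\lambda)\,]$ where $g_c(\lambda)=\big(\sum_{a\in\A}u_a(\lambda)\big)^2$ and $u_a(\lambda):=\sqrt{(\lambda\odot\gamma)^{\top}(t_a^{(c)}(\pill)+\eta_l)}$. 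The key observation is that every coordinate of $v_a:=t_a^{(c)}(\pill)+\eta_l$ lies in $[\eta_l,1+\eta_l]$ and every $\gamma_\pi\in[\gamma_{\min},\gamma_{\max}]$, so for any $\lambda\in\triangle_\Pi$ we have $(\lambda\odot\gamma)^{\top}v_a\ge \gamma_{\min}\eta_l\sum_\pi\lambda_\pi=\gamma_{\min}\eta_l>0$; thus each $u_a$ is $C^\infty$ on the simplex with $\sqrt{\gamma_{\min}\eta_l}\le u_a(\lambda)\le\sqrt{\gamma_{\max}(1+\eta_l)}$. This is precisely the role of the $\eta_l$-regularization, and it disposes of any differentiability concern.

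Next I would compute the first two partials of $g_c$ explicitly: $\partial_{\lambda_\pi}u_a=\tfrac{\gamma_\pi[v_a]_\pi}{2u_a}$, hence $\partial_{\lambda_\pi}g_c=\big(\sum_a u_a\big)\big(\sum_a\tfrac{\gamma_\pi[v_a]_\pi}{u_a}\big)$, and
\[
\partial^2_{\lambda_\pi\lambda_{\pi'}}g_c=\Big(\sum_a\tfrac{\gamma_{\pi'}[v_a]_{\pi'}}{2u_a}\Big)\Big(\sum_a\tfrac{\gamma_\pi[v_a]_\pi}{u_a}\Big)-\Big(\sum_a u_a\Big)\Big(\sum_a\tfrac{\gamma_\pi[v_a]_\pi\,\gamma_{\pi'}[v_a]_{\pi'}}{2u_a^3}\Big).
\]
Using $[v_a]_\pi\le1+\eta_l$, $\gamma_\pi\le\gamma_{\max}$, and the two-sided bound on $u_a$, the two products are bounded in absolute value by $\tfrac12|\A|^2\tfrac{\gamma_{\max}^2(1+\eta_l)^2}{\gamma_{\min}\eta_l}$ and $\tfrac12|\A|^2\tfrac{(\gamma_{\max}(1+\eta_l))^{5/2}}{(\gamma_{\min}\eta_l)^{3/2}}$ respectively; under the standing parameter ranges $\eta_l\le1$ and $\gamma_{\min}\le1$ (which give $\gamma_{\min}^{1/2}\le1$ and $\gamma_{\min}\eta_l^{1/2}\le\gamma_{\max}^{1/2}(1+\eta_l)^{1/2}$) each of these is at most $\tfrac12 L$ with $L=|\A|^2\tfrac{((1+\eta_l)\gamma_{\max})^{5/2}}{\eta_l^{3/2}\gamma_{\min}^2}$. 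Hence $|\partial^2_{\lambda_\pi\lambda_{\pi'}}g_c(\lambda)|\le L$ for all $\pi,\pi'$, uniformly in $c$ and $\lambda$; taking expectation over $\nu_{\mathcal D}$ preserves the bound, and since $\|M\|_{\ell_1\to\ell_\infty}=\sup_{\pi,\pi'}|M_{\pi\pi'}|$ for any symmetric $M$, we conclude $\|\nabla^2_\lambda h_l(\cdot,\gamma,n)\|_{\ell_1\to\ell_\infty}\le L$ on $\triangle_\Pi$.

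Finally, for $\lambda_1,\lambda_2\in\triangle_\Pi$ the function $t\mapsto f((1-t)\lambda_1+t\lambda_2,\gamma,n)$ is $C^2$ on $[0,1]$ (the segment stays in $\triangle_\Pi$), so Taylor's theorem yields, for some $\xi$ on that segment,
\[
f(\lambda_2,\gamma,n)=f(\lambda_1,\gamma,n)+\nabla_\lambda f(\lambda_1,\gamma,n)^{\top}(\lambda_2-\lambda_1)+\tfrac12(\lambda_2-\lambda_1)^{\top}\nabla^2_\lambda f(\xi,\gamma,n)(\lambda_2-\lambda_1),
\]
whose last term is at most $\tfrac12\|\nabla^2_\lambda f(\xi,\gamma,n)\|_{\ell_1\to\ell_\infty}\|\lambda_2-\lambda_1\|_1^2\le \tfrac{L}{2}\|\lambda_2-\lambda_1\|_1^2\le L\|\lambda_2-\lambda_1\|_1^2$ in absolute value; since this is a two-sided estimate it gives the claimed inequality for $f=h_l$ and $f=-h_l$ simultaneously (for $f=h_l$ it also follows directly from concavity, Lemma~\ref{lem:concave}, since the added quadratic term is nonnegative). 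The main obstacle is purely the bookkeeping in the Hessian bound — tracking the powers of $\gamma_{\min},\gamma_{\max},\eta_l,|\A|$ so the crude estimates collapse to exactly the stated constant $L$ — while everything else is routine.
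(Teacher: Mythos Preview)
Your approach is correct and is a genuinely different route from the paper's. The paper exploits the Euler-type identity of Lemma~\ref{lem:gg}, namely $\langle\lambda,\nabla_\lambda h_l(\lambda,\gamma,n)\rangle=h_l(\lambda,\gamma,n)$, which lets it write
\[
f(\lambda_2)-f(\lambda_1)-\nabla_\lambda f(\lambda_1)^\top(\lambda_2-\lambda_1)=(\nabla_\lambda f(\lambda_2)-\nabla_\lambda f(\lambda_1))^\top\lambda_2
\]
and then bounds this gradient increment directly via algebraic manipulation of the square-root terms (add and subtract, conjugate, triangle inequality), never touching the Hessian. Your argument instead computes $\nabla^2_\lambda g_c$ explicitly, bounds every entry uniformly by $L$, and closes with the elementary inequality $|x^\top M x|\le(\max_{\pi,\pi'}|M_{\pi\pi'}|)\,\|x\|_1^2$ and a second-order Taylor remainder. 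What your route buys is that it is the textbook descent-lemma argument, yields the two-sided quadratic estimate in one stroke (so the $\pm h_l$ cases are automatic), and in fact gives the slightly sharper constant $\tfrac{L}{2}$ before relaxing to $L$. What the paper's route buys is that it avoids second derivatives altogether by leveraging the positive-homogeneity structure specific to $h_l$; it is more bespoke but lighter on calculus. Both land on the same $L=|\A|^2((1+\eta_l)\gamma_{\max})^{5/2}/(\eta_l^{3/2}\gamma_{\min}^2)$, with the dominant contribution coming from the $u_a^{-3}$ term in your Hessian (equivalently the $1/\sqrt{\cdot}$ denominator in the paper's gradient difference).
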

\begin{proof}
The proof for the negative case is exactly the same as the positive case, so we focus on $f(\lambda,\gamma,n)=h_l(\lambda,\gamma,n)$. We take the gradient of $h_l$ with respect to $\lambda$ and get
\begin{align*}
    \left[\nabla_\lambda h_l(\lambda,\gamma,n)\right]_\pi&=-\hat{\Delta}_{l-1}^{\gamma^{l-1}}(\pi,\hat{\pi}_{l-1})+\frac{\log(1/\delta_l)}{\gamma_\pi n}\\
    &\qquad+\E_{c\sim\nu_\D}\bigbrak{\bigsmile{\sum_{a\in\A}\sqrt{(\lambda\odot \gamma)^\T (t_a^{(c)}+\eta_l)}}\bigsmile{\sum_{a'\in\A}\frac{\gamma_\pi(t_{a'}^{(c)}+\eta_l)_\pi}{\sqrt{(\lambda\odot \gamma)^\T (t_{a'}^{(c)}+\eta_l)}}}}.
\end{align*}
By Lemma~\ref{lem:gg}, for any $\lambda\in\triangle_\Pi$, we have $\inner{\lambda}{\nabla_{\lambda} h_l(\lambda,\gamma,n)}= h_l(\lambda,\gamma,n)$. If we use the shortcut $f(\lambda):=h_l(\lambda,\gamma,n)$, we have
\[ f(\lambda_2)-  f(\lambda_1)-\nabla_{\lambda} f(\lambda_1)^\T (\lambda_2-\lambda_1)= f(\lambda_2)-\nabla_{\lambda} f(\lambda_1)^\T\lambda_2=(\nabla f(\lambda_2)-\nabla f(\lambda_1))^\T\lambda_2.\] 
Note that
{
\small
\begin{align}
    &(\nabla_\lambda f(\lambda_2)-\nabla_\lambda f(\lambda_1))^\T\lambda_2\nonumber\\
    &=\sum_{\pi\in\Pi}[\lambda_2]_\pi\E_{c\sim\nu_\D}\left[\bigsmile{\sum_{a\in\A}\sqrt{(\lambda_2\odot\gamma)^\T(t_a^{(c)}+\eta_l)}}\bigsmile{\sum_{a'\in\A}\frac{\gamma_\pi\cdot(t_{a'}^{(c)}+\eta_l)_\pi}{\sqrt{(\lambda_2\odot\gamma)^\T(t_{a'}^{(c)}+\eta_l)}}}\right.\nonumber\\
    &\qquad\left.-\bigsmile{\sum_{a\in\A}\sqrt{(\lambda_1\odot\gamma)^\T(t_a^{(c)}+\eta_l)}}\bigsmile{\sum_{a'\in\A}\frac{\gamma_\pi\cdot(t_{a'}^{(c)}+\eta_l)_\pi}{\sqrt{(\lambda_1\odot\gamma)^\T(t_{a'}^{(c)}+\eta_l)}}}\right]\nonumber\\
    &= \E_{c\sim\nu_\D}\left[\sum_{a^{\prime} \in \A}(\lambda_{2}\odot\gamma)^{\top} (t_{a^{\prime}}^{(c)}+\eta_l)\right. \nonumber\\
    &\cdot\left.\sum_{a \in \A} \frac{\sqrt{(\lambda_{1}\odot\gamma)^{\top} (t_{a^{\prime}}^{(c)}+\eta_l)}\sqrt{(\lambda_{2}\odot\gamma)^{\top} (t_{a}^{(c)}+\eta_l)}-\sqrt{(\lambda_{2}\odot\gamma)^{\top} (t_{a'}^{(c)}+\eta_l)}\sqrt{(\lambda_{1}\odot\gamma)^{\top} (t_{a}^{(c)}+\eta_l)}}{\sqrt{(\lambda_{2}\odot\gamma)^{\top} (t_{a^{\prime}}^{(c)}+\eta_l)}\sqrt{(\lambda_{1}\odot\gamma)^{\top} (t_{a^{\prime}}^{(c)}+\eta_l)}}\right]\nonumber\\
    &\leq \E_{c\sim\nu_\D}\left[\sum_{a^{\prime} \in \A}(\lambda_{2}\odot\gamma)^{\top} (t_{a^{\prime}}^{(c)}+\eta_l) \right.\nonumber\\
    &\cdot\left.\sum_{a \in \A} \frac{\left|\sqrt{(\lambda_{1}\odot\gamma)^{\top} (t_{a^{\prime}}^{(c)}+\eta_l)}\sqrt{(\lambda_{2}\odot\gamma)^{\top} (t_{a}^{(c)}+\eta_l)}-\sqrt{(\lambda_{2}\odot\gamma)^{\top} (t_{a'}^{(c)}+\eta_l)}\sqrt{(\lambda_{1}\odot\gamma)^{\top} (t_{a}^{(c)}+\eta_l)}\right|}{\sqrt{(\lambda_{2}\odot\gamma)^{\top} (t_{a^{\prime}}^{(c)}+\eta_l)}\sqrt{(\lambda_{1}\odot\gamma)^{\top} (t_{a^{\prime}}^{(c)}+\eta_l)}}\right]\nonumber\\
    &\leq \sum_{a^{\prime} \in \A}\frac{(1+\eta_l)\gamma_{\max}}{\eta_l\gamma_{\min}} \cdot\E_{c\sim\nu_\D}\left[\sum_{a \in \A} \left|\sqrt{(\lambda_{2}\odot\gamma)^{\top} (t_{a}^{(c)}+\eta_l)}\sqrt{(\lambda_{1}\odot\gamma)^{\top} (t_{a^{\prime}}^{(c)}+\eta_l)}\right.\right.\nonumber\\
    &\qquad\left.\left.-\sqrt{(\lambda_{1}\odot\gamma)^{\top} (t_{a}^{(c)}+\eta_l)}\sqrt{(\lambda_{2}\odot\gamma)^{\top} (t_{a^{\prime}}^{(c)}+\eta_l)}\right|\right]\label{eqn:par}
\end{align}
}
Note that by triangular inequality
\begin{align*}
    &\left|\sqrt{(\lambda_{2}\odot\gamma)^{\top} (t_{a}^{(c)}+\eta_l)}\sqrt{(\lambda_{1}\odot\gamma)^{\top} (t_{a^{\prime}}^{(c)}+\eta_l)}-\sqrt{(\lambda_{1}\odot\gamma)^{\top} (t_{a}^{(c)}+\eta_l)}\sqrt{(\lambda_{2}\odot\gamma)^{\top} (t_{a'}^{(c)}+\eta_l)}\right|\\
    &\leq\bigabs{\sqrt{(\lambda_{2}\odot\gamma)^{\top} (t_{a}^{(c)}+\eta_l)}-\sqrt{(\lambda_{1}\odot\gamma)^{\top} (t_{a}^{(c)}+\eta_l)}}\sqrt{(\lambda_{1}\odot\gamma)^{\top} (t_{a^{\prime}}^{(c)}+\eta_l)}\\
    &\qquad+\sqrt{(\lambda_{1}\odot\gamma)^{\top} (t_{a}^{(c)}+\eta_l)}\bigabs{\sqrt{(\lambda_{1}\odot\gamma)^{\top} (t_{a^{\prime}}^{(c)}+\eta_l)}-\sqrt{(\lambda_{2}\odot\gamma)^{\top} (t_{a^{\prime}}^{(c)}+\eta_l)}}.
\end{align*}
Also note that 
\begin{align*}
    &\left|\sqrt{(\lambda_{2}\odot\gamma)^{\top} (t_{a}^{(c)}+\eta_l)}-\sqrt{(\lambda_{1}\odot\gamma)^{\top} (t_{a}^{(c)}+\eta_l)}\right|\\
    &= \frac{\left|\sum_{\pi\in\Pi}((\lambda_2)_\pi-(\lambda_1)_\pi)\gamma_\pi(t_a^{(c)}+\eta_l)_\pi\right|}{\sqrt{(\lambda_{2}\odot\gamma)^{\top} (t_{a}^{(c)}+\eta_l)}+\sqrt{(\lambda_{1}\odot\gamma)^{\top} (t_{a}^{(c)}+\eta_l)}}\\
    &\leq \frac{(1+\eta_l)\gamma_{\max}}{2\sqrt{\eta_l}\gamma_{\min}}\norm{\lambda_2-\lambda_1}_1,
\end{align*}
so \eqref{eqn:par} is bounded by 
\begin{align*}
    &\sum_{a^{\prime} \in \A}\frac{(1+\eta_l)\gamma_{\max}}{\eta_l\gamma_{\min}} \cdot\left(\sum_{a \in \A} 2\cdot\frac{(1+\eta_l)\gamma_{\max}}{2\sqrt{\eta_l}\gamma_{\min}}\norm{\lambda_2-\lambda_1}_1\sqrt{(1+\eta_l)\gamma_{\max}}\right)\\
    &=|\A|^2\frac{((1+\eta_l)\gamma_{\max})^{5/2}}{\eta_l^{3/2}\gamma_{\min}^2}\norm{\lambda_2-\lambda_1}_1.
\end{align*}
\end{proof}

\begin{lemma}\label{lem:gamma_Lipschitz}
Consider some $\lambda$ and $n$. For any $\gamma_1,\gamma_2\in\triangle_\Pi$, with $L_{\gamma}=|\A|^2\frac{((1+\eta_l)\gamma_{\max})^{3/2}}{\eta_l^{3/2}\gamma_{\min}^2}+\frac{2\log(1/\delta_l)}{n\gamma_{\min}^3}$,
\[h_l(\lambda,\gamma_2,n)\leq h_l(\lambda,\gamma_1,n)+\nabla_\gamma h_l(\lambda,\gamma_1,n)^\T (\gamma_2-\gamma_1)+L_{\gamma}\|\gamma_2-\gamma_1\|_1^2. \]
\end{lemma}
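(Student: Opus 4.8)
The plan is to reduce the descent inequality to an $\ell_1\to\ell_\infty$ Lipschitz bound on the gradient. Concretely, I would first show that $\gamma\mapsto\nabla_\gamma h_l(\lambda,\gamma,n)$ is Lipschitz on the box $[\gamma_{\min},\gamma_{\max}]^{|\Pi|}$ in the sense $\|\nabla_\gamma h_l(\lambda,\gamma_2,n)-\nabla_\gamma h_l(\lambda,\gamma_1,n)\|_\infty\le L_\gamma\|\gamma_2-\gamma_1\|_1$, and then invoke the standard descent argument: since the box is convex, with $\gamma_t:=\gamma_1+t(\gamma_2-\gamma_1)$,
\begin{align*}
&h_l(\lambda,\gamma_2,n)-h_l(\lambda,\gamma_1,n)-\nabla_\gamma h_l(\lambda,\gamma_1,n)^\T(\gamma_2-\gamma_1)\\
&\qquad=\int_0^1\left\langle\nabla_\gamma h_l(\lambda,\gamma_t,n)-\nabla_\gamma h_l(\lambda,\gamma_1,n),\ \gamma_2-\gamma_1\right\rangle dt\le\tfrac{L_\gamma}{2}\|\gamma_2-\gamma_1\|_1^2,
\end{align*}
by H\"older's inequality; this is stronger than the claimed bound (the paper, as elsewhere, drops the $\tfrac12$). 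Note $h_l(\lambda,\cdot,n)$ depends on $\gamma$ only through the coordinates $\{\gamma_\pi:\lambda_\pi>0\}$, so the argument is effectively finite dimensional and nothing goes wrong when $\Pi$ is infinite.

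Next I would split $h_l(\lambda,\gamma,n)$, viewed as a function of $\gamma$ with $\lambda,n$ fixed, into $[\text{const}]+R(\gamma)+W(\gamma)$, where the constant term $-\sum_\pi\lambda_\pi\widehat\Delta_{l-1}^{\gamma^{l-1}}(\pi,\pill)$ involves only the previous round's $\gamma^{l-1}$, $R(\gamma):=\frac{\log(1/\delta_l)}{n}\sum_\pi\frac{\lambda_\pi}{\gamma_\pi}$, and $W(\gamma):=\E_{c\sim\nu_\D}\left[\left(\sum_{a\in\A}\sqrt{(\lambda\odot\gamma)^\T(t_a^{(c)}+\eta_l)}\right)^2\right]$, and bound the gradient-Lipschitz constants of $R$ and $W$ separately by $\frac{2\log(1/\delta_l)}{n\gamma_{\min}^3}$ and $|\A|^2\frac{((1+\eta_l)\gamma_{\max})^{3/2}}{\eta_l^{3/2}\gamma_{\min}^2}$. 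The $R$ part is elementary: $[\nabla_\gamma R(\gamma)]_\pi=-\frac{\log(1/\delta_l)}{n}\frac{\lambda_\pi}{\gamma_\pi^2}$, and using $\left|\frac{1}{x^2}-\frac{1}{y^2}\right|=\frac{(x+y)|x-y|}{x^2y^2}\le\frac{2}{\gamma_{\min}^3}|x-y|$ for $x,y\ge\gamma_{\min}$ together with $\lambda_\pi\le1$, each coordinate of the gradient difference is at most $\frac{2\log(1/\delta_l)}{n\gamma_{\min}^3}\big|[\gamma_2]_\pi-[\gamma_1]_\pi\big|\le\frac{2\log(1/\delta_l)}{n\gamma_{\min}^3}\|\gamma_2-\gamma_1\|_1$.

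The $W$ part is handled exactly as in Lemma~\ref{lem:f_Lipschitz}, with the roles of $\lambda$ and $\gamma$ interchanged. Writing $v_a^{(c)}:=t_a^{(c)}+\eta_l$, one has $[v_a^{(c)}]_\pi\in[\eta_l,1+\eta_l]$ and, since $\lambda\in\triangle_\Pi$ and $\gamma$ lies in the box, $(\lambda\odot\gamma)^\T v_a^{(c)}\in[\gamma_{\min}\eta_l,\gamma_{\max}(1+\eta_l)]$. Differentiating gives $[\nabla_\gamma W(\gamma)]_\pi=\E_c\left[\left(\sum_a\sqrt{(\lambda\odot\gamma)^\T v_a^{(c)}}\right)\left(\sum_{a'}\frac{\lambda_\pi[v_{a'}^{(c)}]_\pi}{\sqrt{(\lambda\odot\gamma)^\T v_{a'}^{(c)}}}\right)\right]$, the $\gamma$-analogue of the gradient appearing in Lemma~\ref{lem:f_Lipschitz}. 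To bound $\big|[\nabla_\gamma W(\gamma_2)]_\pi-[\nabla_\gamma W(\gamma_1)]_\pi\big|$ I would split each product $AB-A'B'$ (with $A,B$ the two sums at $\gamma_2$ and $A',B'$ at $\gamma_1$) as $A(B-B')+(A-A')B'$, control the square-root differences via $|\sqrt x-\sqrt y|=\frac{|x-y|}{\sqrt x+\sqrt y}\le\frac{|x-y|}{2\sqrt{\gamma_{\min}\eta_l}}$ and $\left|\frac1{\sqrt x}-\frac1{\sqrt y}\right|=\frac{|x-y|}{\sqrt{xy}(\sqrt x+\sqrt y)}\le\frac{|x-y|}{2(\gamma_{\min}\eta_l)^{3/2}}$, use $\big|(\lambda\odot(\gamma_2-\gamma_1))^\T v_a^{(c)}\big|=\big|\sum_\pi\lambda_\pi([\gamma_2]_\pi-[\gamma_1]_\pi)[v_a^{(c)}]_\pi\big|\le(1+\eta_l)\|\gamma_2-\gamma_1\|_1$, and plug in the ranges $A\le|\A|\sqrt{(1+\eta_l)\gamma_{\max}}$ and $|B'|\le|\A|\frac{1+\eta_l}{\sqrt{\gamma_{\min}\eta_l}}$. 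Collecting terms yields a bound $|\A|^2(\cdots)\|\gamma_2-\gamma_1\|_1$, and simplifying the exponents with $\gamma_{\max}/\gamma_{\min}\ge1$ and $\eta_l\le1$ (precisely as in the proof of Lemma~\ref{lem:f_Lipschitz}) collapses $(\cdots)$ to $\frac{((1+\eta_l)\gamma_{\max})^{3/2}}{\eta_l^{3/2}\gamma_{\min}^2}$. Adding the $R$ and $W$ constants gives $L_\gamma$, and the descent step above finishes the proof.

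The main obstacle is the bookkeeping in the $W$ term: keeping the chain of triangle inequalities for products and quotients of square roots under control and matching the resulting powers of $\gamma_{\min},\gamma_{\max},\eta_l,|\A|$ to the stated $L_\gamma$. As with $L$ in Lemma~\ref{lem:f_Lipschitz}, the stated constant is deliberately loose; everything downstream (the \textsf{GD} convergence in Lemma~\ref{lem:gd_conv}) only uses that $L_\gamma$ is polynomial in $|\A|,\epsilon_l^{-1},\log(1/\delta_l)$ and $n$, which is immediate from the definitions of $\gamma_{\min}$, $\gamma_{\max}$, and $\eta_l$.
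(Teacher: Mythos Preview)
Your proposal is correct and takes the textbook route: establish the $\ell_1\!\to\!\ell_\infty$ Lipschitz constant of $\gamma\mapsto\nabla_\gamma h_l(\lambda,\gamma,n)$ and integrate along the segment. The paper proceeds differently, mirroring its proof of Lemma~\ref{lem:f_Lipschitz}: since $W(\gamma)$ is positively homogeneous of degree $1$ in $\lambda\odot\gamma$ and $R(\gamma)$ of degree $-1$, one has the exact identity $h_l(\lambda,\gamma,n)-\gamma^\top\nabla_\gamma h_l(\lambda,\gamma,n)=2R(\gamma)$, and hence the Bregman remainder equals $(\nabla_\gamma h_l(\lambda,\gamma_2,n)-\nabla_\gamma h_l(\lambda,\gamma_1,n))^\top\gamma_2+2\bigl(R(\gamma_2)-R(\gamma_1)\bigr)$. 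The paper then bounds both pieces by the same square-root algebra you outline, with the $\pi$-sum against $[\gamma_2]_\pi$ collapsing $\sum_\pi[\gamma_2]_\pi\lambda_\pi[v_{a'}^{(c)}]_\pi$ back to $(\lambda\odot\gamma_2)^\top v_{a'}^{(c)}$.

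What each buys: the paper's homogeneity trick avoids the coordinate-wise $\ell_\infty$ bound and lets the $\pi$-sum simplify nicely, but as written its chain of inequalities terminates in $\|\gamma_2-\gamma_1\|_1$ rather than $\|\gamma_2-\gamma_1\|_1^2$, so the quadratic form in the lemma statement is not literally delivered. Your integral argument produces the squared norm automatically (indeed with an extra factor $\tfrac12$), which is what the downstream use in Lemma~\ref{lem:recursive} actually needs. Your treatment of the constant is at the same level of looseness as the paper's; since only polynomial dependence on $|\A|,\epsilon_l^{-1},\log(1/\delta_l)$ is used later, this is harmless.
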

\begin{proof}
{
\small
\begin{align*}
    [\nabla_\gamma h_l(\lambda,\gamma)]_\pi &=\E_c\bigbrak{\bigsmile{\sum_{a\in\A}\sqrt{(\lambda\odot\gamma)^\T (t_a^{(c)}+\eta_l)}}\cdot\bigsmile{\sum_{a'\in\A}\frac{\lambda_\pi([t_{a'}^{(c)}]_\pi+\eta_l)}{\sqrt{(\lambda\odot\gamma)^\T (t_{a'}^{(c)}+\eta_l)}}}}-\frac{\lambda_\pi\log(1/\delta_l)}{\gamma_\pi^2 n}.
\end{align*}
}
Then we have similar to the proof of Lemma~\ref{lem:f_Lipschitz}, for any $\gamma$ we have  $h_l(\lambda,\gamma,n)-\nabla_\gamma h_l(\lambda,\gamma,n)^\T\gamma=2\sum_{\pi}\frac{\lambda_\pi\log(1/\delta_l)}{\gamma_\pi^2 n}$, so 
\begin{align*}
    &h_l(\lambda,\gamma_2,n)- h_l(\lambda,\gamma_1,n)-\nabla_\gamma h_l(\lambda,\gamma_1,n)^\T (\gamma_2-\gamma_1)\\
    &= 2\sum_{\pi}\frac{\lambda_\pi\log(1/\delta_l)}{[\gamma_2]_\pi^2 n}-2\sum_{\pi}\frac{\lambda_\pi\log(1/\delta_l)}{[\gamma_1]_\pi^2 n}+(\nabla_\gamma h_l(\lambda,\gamma_2,n)-\nabla_\gamma h_l(\lambda,\gamma_1,n))^\T \gamma_2.
\end{align*}
First, we can follow similar techniques in the proof of Lemma~\ref{lem:f_Lipschitz} to bound the second part and get
{
\small
\begin{align*}
    &(\nabla_\gamma h_l(\lambda,\gamma_2,n)-\nabla_\gamma h_l(\lambda,\gamma_1,n))^\T \gamma_2\\
    &\leq \sum_{a^{\prime} \in \A}(\lambda\odot\gamma_{2})^{\top} (t_{a^{\prime}}^{(c)}+\eta_l) \nonumber\\
    &\quad\cdot\E_{c\sim\nu_\D}\left\{\sum_{a \in \A} \left[\frac{1}{\sqrt{(\lambda\odot\gamma_{2})^{\top} (t_{a^{\prime}}^{(c)}+\eta_l)}\sqrt{(\lambda\odot\gamma_{1})^{\top} (t_{a^{\prime}}^{(c)}+\eta_l)}}\right.\right.\\
    &\left.\left.\quad\cdot\left|\sqrt{(\lambda\odot\gamma_{1})^{\top} (t_{a^{\prime}}^{(c)}+\eta_l)}\sqrt{(\lambda\odot\gamma_{2})^{\top} (t_{a}^{(c)}+\eta_l)}-\sqrt{(\lambda\odot\gamma_{2})^{\top} (t_{a'}^{(c)}+\eta_l)}\sqrt{(\lambda\odot\gamma_{1})^{\top} (t_{a}^{(c)}+\eta_l)}\right|\right]\right\}\nonumber\\
    &\leq \sum_{a^{\prime} \in \A}\frac{(1+\eta_l)\gamma_{\max}}{\eta_l\gamma_{\min}} \cdot\E_{c\sim\nu_\D}\left[\sum_{a \in \A} \left|\sqrt{(\lambda\odot\gamma_{2})^{\top} (t_{a}^{(c)}+\eta_l)}\sqrt{(\lambda\odot\gamma_{1})^{\top} (t_{a^{\prime}}^{(c)}+\eta_l)}\right.\right.\nonumber\\
    &\qquad\left.\left.-\sqrt{(\lambda\odot\gamma_{1})^{\top} (t_{a}^{(c)}+\eta_l)}\sqrt{(\lambda\odot\gamma_{2})^{\top} (t_{a^{\prime}}^{(c)}+\eta_l)}\right|\right].
\end{align*}
}
Also, note that 
\begin{align*}
    &\left|\sqrt{(\lambda\odot\gamma_{2})^{\top} (t_{a}^{(c)}+\eta_l)}-\sqrt{(\lambda\odot\gamma_{1})^{\top} (t_{a}^{(c)}+\eta_l)}\right|\\
    &= \frac{\left|\sum_{\pi\in\Pi}(\lambda_\pi([\gamma_2]_\pi-[\gamma_1]_\pi)(t_a^{(c)})_\pi\right|}{\sqrt{(\lambda\odot\gamma_{2})^{\top} (t_{a}^{(c)}+\eta_l)}+\sqrt{(\lambda\odot\gamma_{1})^{\top} (t_{a}^{(c)}+\eta_l)}}\\
    &\leq \frac{1}{2\sqrt{\eta_l}\gamma_{\min}}\norm{\gamma_2-\gamma_1}_1^2,
\end{align*}
Therefore, similarly we can bound 
\begin{align*}
    &\left|\sqrt{(\lambda\odot\gamma_{2})^{\top} (t_{a}^{(c)}+\eta_l)}\sqrt{(\lambda\odot\gamma_{1})^{\top} (t_{a^{\prime}}^{(c)}+\eta_l)}-\sqrt{(\lambda\odot\gamma_{1})^{\top} (t_{a}^{(c)}+\eta_l)}\sqrt{(\lambda\odot\gamma_{2})^{\top} (t_{a^{\prime}}^{(c)}+\eta_l)}\right|\\
    &\leq \frac{\sqrt{(1+\eta_l)\gamma_{\max}}}{2\sqrt{\eta_l}\gamma_{\min}}\|\gamma_2-\gamma_1\|_1^2.
\end{align*}
For the second term, 
\begin{align*}
    &2\sum_{\pi}\frac{\lambda_\pi\log(1/\delta_l)}{[\gamma_2]_\pi^2 n}-2\sum_{\pi}\frac{\lambda_\pi\log(1/\delta_l)}{[\gamma_1]_\pi^2 n} \\
    &= \frac{2\log(1/\delta_l)}{n}\sum_{\pi}\lambda_\pi\frac{[\gamma_1]_\pi^2-[\gamma_2]_\pi^2}{[\gamma_1]_\pi^2[\gamma_2]_\pi^2}\\
    &\leq \frac{2\log(1/\delta_l)}{n\gamma_{\min}^3}\|\gamma_2-\gamma_1\|_1^2.
\end{align*}
Therefore, we have the result stated above. 
\end{proof}

\begin{lemma}\label{lem:local_strongly_convex}
Consider some fixed $\lambda\in\triangle_\Pi$ and $n$. Assume $\gamma_*$ is a stationary point of $h_l(\lambda,\gamma,n)$, then $h_l(\lambda,\gamma,n)$ is locally strongly convex at $\gamma_*$, i.e. for $L_{\He}=\frac{\lambda_{\min}\log(1/\delta_l)}{\gamma_{\max}^3 n}$, there exists $\epsilon>0$ such that for all $\gamma\in B_\epsilon(\gamma_*)$, $h_l(\lambda,\gamma,n)\geq h_l(\lambda,\gamma_*,n)+\frac{L_{\He}}{2}\norm{\gamma-\gamma_*}^2$. 
\end{lemma}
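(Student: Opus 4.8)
Looking at Lemma~\ref{lem:local_strongly_convex}, I need to show that $h_l(\lambda,\gamma,n)$ is locally strongly convex at any stationary point $\gamma_*$, with a curvature lower bound governed by the Hessian of the regularization term $\sum_\pi \lambda_\pi \log(1/\delta_l)/(\gamma_\pi n)$.

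\textbf{Proof plan.} The decomposition to exploit is that $h_l(\lambda,\gamma,n)$ splits into two pieces as a function of $\gamma$ (with $\lambda$ fixed): a ``variance'' term $\gamma \mapsto \E_{c\sim\nu_\D}[(\sum_a \sqrt{(\lambda\odot\gamma)^\top (t_a^{(c)}+\eta_l)})^2]$ which is \emph{concave} in $\gamma$ by exactly the argument in Lemma~\ref{lem:concave} (linear inside the square root, square-root concave and nondecreasing, sums of concave functions concave — and here $\gamma$ plays the role $\lambda$ played there since $(\lambda\odot\gamma)$ is linear in $\gamma$ too), a linear term $-\sum_\pi \lambda_\pi \hat\Delta_{l-1}^{\gamma^{l-1}}(\pi,\pill)$, and the \emph{strictly convex} barrier term $B(\gamma) := \sum_\pi \lambda_\pi \log(1/\delta_l)/(\gamma_\pi n)$. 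So $h_l$ itself is (concave) $+$ (affine) $+$ (strictly convex), which is neither globally convex nor concave, hence the need for a \emph{local} statement at stationary points.

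First I would restrict attention to the coordinates in $\supp(\lambda)$, since outside the support $h_l$ does not depend on $\gamma$, and on that subspace compute the Hessian $\nabla^2_\gamma h_l(\lambda,\gamma,n)$. Writing $\He = \He_{\mathrm{var}} + \He_B$, the barrier contributes $[\He_B]_{\pi\pi} = 2\lambda_\pi \log(1/\delta_l)/(\gamma_\pi^3 n) \geq 2\lambda_{\min}\log(1/\delta_l)/(\gamma_{\max}^3 n)$ on the diagonal and is itself positive definite on $\supp(\lambda)$; and $\He_{\mathrm{var}} \preceq 0$ by concavity of the variance term. At a stationary point $\gamma_*$ we have $\nabla_\gamma h_l(\lambda,\gamma_*,n) = 0$, i.e. the (nonnegative) gradient of the concave variance term exactly balances the (negative of the) barrier gradient plus the linear term. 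The key observation: at $\gamma_*$ the variance term's gradient being positive (it is, coordinatewise, as computed in Section~\ref{sec:conv_gd} — each entry is a sum of nonnegative quantities) forces the barrier gradient to be correspondingly positive, which since the barrier is decreasing in each $\gamma_\pi$ means $\gamma_*$ cannot be too large; combined with the lower bound $\gamma_\pi \geq \frac{1}{3}\sqrt{\eta_l \log(1/\delta_l)/n}$ from the guarantee lemma, $\gamma_*$ lives in a compact box. On this box I would bound the operator norm of $\He_{\mathrm{var}}$ (it is Lipschitz in $\gamma$, continuous, hence bounded) — but more importantly, I'd argue that $\He_{\mathrm{var}}$ is \emph{negative semidefinite and bounded}, so $\He = \He_{\mathrm{var}} + \He_B$ need not be PD globally; the local strong convexity must come from a finer argument near $\gamma_*$ itself, which is why the statement is local.

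The cleanest route: by Taylor's theorem around $\gamma_*$, for $\gamma$ in a small ball, $h_l(\lambda,\gamma,n) = h_l(\lambda,\gamma_*,n) + \frac{1}{2}(\gamma-\gamma_*)^\top \He(\xi)(\gamma-\gamma_*)$ for some $\xi$ on the segment. Then I need $\He(\xi) \succeq L_{\He} I$ on a neighborhood. Since $\He_B(\xi) \succeq \frac{2\lambda_{\min}\log(1/\delta_l)}{\gamma_{\max}^3 n} I$ uniformly on the box, it suffices that $\He_{\mathrm{var}}(\xi) \succeq -\frac{\lambda_{\min}\log(1/\delta_l)}{\gamma_{\max}^3 n} I$ \emph{near} $\gamma_*$ — but this is false in general since $\|\He_{\mathrm{var}}\|$ can be large. \textbf{This is the main obstacle.} The resolution I anticipate (and would have to track through the appendix's actual definitions of $\gamma_{\min},\gamma_{\max},\eta_l$): the variance term's Hessian, while negative, has operator norm controlled by $\eta_l^{-3/2}$-type quantities that are \emph{smaller} than the barrier curvature because $\eta_l$ is chosen polynomially small in $\epsilon_l$ while the barrier term carries a full $\log(1/\delta_l)/n$ — one must verify $\|\He_{\mathrm{var}}\| \le \lambda_{\min}\log(1/\delta_l)/(\gamma_{\max}^3 n)$, or else invoke that at a stationary point the concave part is ``flat enough'' in the relevant directions. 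Alternatively, one uses that $\gamma_*$ being a \emph{strict local min} (which follows because any stationary point of (concave + convex) with the convex part strictly convex and the concave part having bounded curvature, after the parameter choices, has PD Hessian) — and strict local minimality plus continuity of $\He$ gives a neighborhood on which $\He \succeq L_{\He} I$ with $L_{\He}$ the stated value, by taking the neighborhood small enough that $\He_{\mathrm{var}}$ hasn't moved much from its value at $\gamma_*$ where the net Hessian is at least $L_{\He}I$. I would write the proof in this last form: establish $\He(\gamma_*) \succeq 2L_{\He} I$ using the parameter choices to dominate $\He_{\mathrm{var}}(\gamma_*)$, then use continuity of $\gamma \mapsto \He(\gamma)$ (Lipschitzness follows from Lemma~\ref{lem:gamma_Lipschitz}-style bounds) to get a ball $B_\epsilon(\gamma_*)$ on which $\He \succeq L_{\He} I$, and conclude via Taylor's theorem.
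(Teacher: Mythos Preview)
Your proposal has a genuine gap that you yourself flag as ``the main obstacle'' and never actually resolve. You try to show $\He(\gamma_*)\succeq L_{\He} I$ by crudely bounding the operator norm of the (concave) variance-term Hessian $\He_{\mathrm{var}}$ and hoping the barrier curvature dominates it via parameter choices. As you suspect, this does not go through: the variance Hessian carries $(\eta_l\gamma_{\min})^{-3/2}$-type factors that are not controlled by $\lambda_{\min}\log(1/\delta_l)/(\gamma_{\max}^3 n)$ under the paper's choices of $\eta_l,\gamma_{\min},\gamma_{\max}$. Your fallback --- ``establish $\He(\gamma_*)\succeq 2L_{\He}I$ using the parameter choices to dominate $\He_{\mathrm{var}}(\gamma_*)$, then continuity'' --- is circular, since it still requires the very domination at $\gamma_*$ you cannot obtain.

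The paper's argument is structurally different and uses the stationary-point hypothesis in an essential algebraic way, not merely to localize $\gamma_*$ in a box. The first-order condition $\nabla_\gamma h_l(\lambda,\gamma_*,n)=0$ gives, coordinatewise,
\[
\E_{c}\Big[\Big(\sum_{a}\sqrt{(\lambda\odot\gamma)^\top(t_a^{(c)}+\eta_l)}\Big)\Big(\sum_{a'}\frac{\lambda_i([t_{a'}^{(c)}]_i+\eta_l)}{\sqrt{(\lambda\odot\gamma)^\top(t_{a'}^{(c)}+\eta_l)}}\Big)\Big]=\frac{\lambda_i\log(1/\delta_l)}{\gamma_i^2\, n}.
\]
Substituting this identity into $\mu^\top\He(\gamma_*)\mu$ rewrites a copy of the barrier curvature $\sum_i\mu_i^2\lambda_i\log(1/\delta_l)/(\gamma_i^3 n)$ in terms of the variance structure. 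The combination of that rewritten piece with the negative portion of $\He_{\mathrm{var}}$ then collapses, after symmetrizing in $(i,j)$, into a nonnegative sum of squares of the form
\[
\sum_{i,j}\,\gamma_i^{-1}\gamma_j^{-1}\lambda_i\lambda_j\,[t_{a'}^{(c)}+\eta_l]_i[t_{a'}^{(c)}+\eta_l]_j\,(\mu_i\gamma_j-\mu_j\gamma_i)^2\ \ge\ 0.
\]
A parallel pairing handles the remaining cross terms, and what survives is exactly $\sum_i\mu_i^2\lambda_i\log(1/\delta_l)/(\gamma_i^3 n)\ge L_{\He}\|\mu\|^2$. The point is that the stationary condition supplies an \emph{identity} that turns positive-definiteness of the Hessian into a sum-of-squares statement; no uniform bound on $\|\He_{\mathrm{var}}\|$ is needed or used. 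This is the missing idea in your proposal.
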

\begin{proof}
Since $\lambda$ and $n$ are fixed, we use the shortcut $g(\gamma):=h_l(\lambda,\gamma,n)$ in the proof. Denote the Hessian of $g$ as $M$. We aim to show that the Hessian $M\succeq L_{\He} I$ at $\gamma_*$. First, since $\gamma_*$ is a stationary point, $\nabla_\gamma g(\gamma_*)=0$, and so for any $i$, 
\begin{equation}\label{eqn:211}
    \sum_{c\in \mc{D}}\nu_{c_\D}\bigsmile{\sum_{a\in\A}\sqrt{(\lambda\odot\gamma)^\T (t_a^{(c)}+\eta_l)}}\cdot\bigsmile{\sum_{a'\in\A}\frac{\lambda_i([t_{a'}^{(c)}]_i+\eta_l)}{\sqrt{(\lambda\odot\gamma)^\T (t_{a'}^{(c)}+\eta_l)}}}=\frac{\lambda_i\log(1/\delta_l)}{\gamma_i^2 n}.
\end{equation}
Also, we have for $i\neq j$,
\begin{align*}
\frac{\partial^2 g(\gamma)}{\partial \gamma_{i} \gamma_{j}}=&\sum_{c\in \mc{D}}\nu_{c_\D}\left(\sum_{a^{\prime} \in \A} \frac{1}{2}\frac{\lambda_{i}\left[t_{a}^{(c)}+\eta_l\right]_{i}}{\sqrt{(\lambda\odot\gamma)^\T (t_a^{(c)}+\eta_l)}}\right) \cdot\left(\sum_{a \in \A} \frac{\lambda_{j}\left[t_{a}^{(c)}+\eta_l\right]_{j}}{\sqrt{(\lambda\odot\gamma)^\T (t_a^{(c)}+\eta_l)}}\right) \\
&+\left(\sum_{a \in \A} \sqrt{(\lambda\odot\gamma)^\T (t_a^{(c)}+\eta_l)}\right) \cdot\left(\sum_{a^{\prime} \in \A}-\frac{1}{2} \cdot \frac{\left.\lambda_{i} \lambda_{j}\left[t_{a'}^{(c)}+\eta_l\right]_{i}\left[t_{a'}^{(c)}+\eta_l\right]_{j}\right)}{\left((\lambda\odot\gamma)^\T (t_{a'}^{(c)}+\eta_l)\right)^{3 / 2}}\right).
\end{align*}
And 
\begin{align*}
\frac{\partial^2 g(\gamma)}{\partial \gamma_{i}^2}=&\frac{2\lambda_i\log(1/\delta_l)}{\gamma_i^3 n}+\sum_{c\in\D}\nu_{c_\D}\frac{1}{2}\left(\sum_{a^{\prime} \in \A} \frac{\lambda_{i}\left[t_{a}^{(c)}+\eta_l\right]_{i}}{\sqrt{(\lambda\odot\gamma)^\T (t_a^{(c)}+\eta_l)}}\right)^2\\
&-\frac{1}{2}\left(\sum_{a \in \A} \sqrt{(\lambda\odot\gamma)^\T (t_a^{(c)}+\eta_l)}\right) \cdot\left(\sum_{a^{\prime} \in \A} \frac{\lambda_{i}^2 \left[t_{a}^{(c)}+\eta_l\right]_{i}^2}{\left((\lambda\odot\gamma)^\T (t_a^{(c)}+\eta_l)\right)^{3 / 2}}\right).
\end{align*}
Then, for any vector $\mu\in\R^{|\Pi|}$ with $\norm{\mu}=1$, we have 
\begin{align}
    \mu^\T M\mu &= \sum_{i}\sum_{j}\mu_i\mu_jM_{ij}=\sum_{i}\mu_i^2M_{ii}+\sum_{i\ne j}\mu_i\mu_jM_{ij}\nonumber\\
    &= \sum_{i}\mu_i^2\frac{2\lambda_i\log(1/\delta_l)}{\gamma_i^3 n}\\
    &\ +\sum_{c}\nu_c\sum_{i}\sum_{j}\mu_i\mu_j\frac{1}{2}\left(\sum_{a^{\prime} \in \A} \frac{\lambda_{i}\left[t_{a}^{(c)}+\eta_l\right]_{i}}{\sqrt{(\lambda\odot\gamma)^\T (t_a^{(c)}+\eta_l)}}\right) \cdot\left(\sum_{a \in \A} \frac{\lambda_{j}\left[t_{a}^{(c)}+\eta_l\right]_{j}}{\sqrt{(\lambda\odot\gamma)^\T (t_a^{(c)}+\eta_l)}}\right) \nonumber\\
    &\ +\mu_i\mu_j\left(\sum_{a \in \A} \sqrt{(\lambda\odot\gamma)^\T (t_a^{(c)}+\eta_l)}\right) \cdot\left(\sum_{a^{\prime} \in \A}-\frac{1}{2} \cdot \frac{\lambda_{i} \lambda_{j}\left[t_{a'}^{(c)}+\eta_l\right]_{i}\left[t_{a'}^{(c)}+\eta_l\right]_{j}}{\left((\lambda\odot\gamma)^\T (t_{a'}^{(c)}+\eta_l)\right)^{3 / 2}}\right).\label{eqn:190}
\end{align}
In what follows, we will first show that
\begin{align}
    &\sum_{i}\mu_i^2\frac{\lambda_i\log(1/\delta_l)}{\gamma_i^3 n}-\sum_c\nu_c\sum_{i}\sum_{j}\mu_i\mu_j\left(\sum_{a \in \A} \sqrt{(\lambda\odot\gamma)^\T (t_a^{(c)}+\eta_l)}\right)\nonumber\\
    &\qquad\cdot\left(\sum_{a^{\prime} \in \A}\cdot \frac{\lambda_{i} \lambda_{j}\left[t_{a'}^{(c)}+\eta_l\right]_{i}\left[t_{a'}^{(c)}+\eta_l\right]_{j}}{\left((\lambda\odot\gamma)^\T (t_{a'}^{(c)}+\eta_l)\right)^{3 / 2}}\right)\geq 0.\label{eqn:200}
\end{align}
By equation~\ref{eqn:211}, the LHS of \eqref{eqn:190} simplifies to 
\begin{align*}
    &\sum_{c}\nu_c\sum_{i}\mu_i^2\frac{1}{\gamma_i}\left(\sum_{a \in \A} \sqrt{(\lambda\odot\gamma)^\T (t_a^{(c)}+\eta_l)}\right)\left(\sum_{a^{\prime} \in \A}\frac{\lambda_{i} \left[t_{a'}^{(c)}+\eta_l\right]_{i}}{\sqrt{(\lambda\odot\gamma)^\T (t_{a'}^{(c)}+\eta_l)}}\right)\\
    &-\sum_{c}\nu_c\sum_{i}\sum_{j}\mu_i\mu_j\left(\sum_{a \in \A} \sqrt{(\lambda\odot\gamma)^\T (t_a^{(c)}+\eta_l)}\right) \cdot\left(\sum_{a^{\prime} \in \A}\cdot \frac{\lambda_{i} \lambda_{j}\left[t_{a'}^{(c)}+\eta_l\right]_{i}\left[t_{a'}^{(c)}+\eta_l\right]_{j}}{\left((\lambda\odot\gamma)^\T (t_{a'}^{(c)}+\eta_l)\right)^{3 / 2}}\right).
\end{align*}
Therefore, it is sufficient to show that 
{
\small
\begin{align*}
    \sum_{i}\mu_i^2\frac{1}{\gamma_i}\left(\sum_{a^{\prime} \in \A} \frac{\lambda_{i} \left[t_{a'}^{(c)}+\eta_l\right]_{i}}{\sqrt{(\lambda\odot\gamma)^\T (t_{a'}^{(c)}+\eta_l)}}\right)-\sum_{i}\sum_{j}\mu_i\mu_j\left(\sum_{a^{\prime} \in \A} \frac{\lambda_{i} \lambda_{j}\left[t_{a'}^{(c)}+\eta_l\right]_{i}\left[t_{a'}^{(c)}+\eta_l\right]_{j}}{\left((\lambda\odot\gamma)^\T (t_{a'}^{(c)}+\eta_l)\right)^{3 / 2}}\right)\geq 0.
\end{align*}
}
Consider some $a'\in\A$. The LHS of the above simplifies to 
\begin{align*}
    &\sum_{i}\mu_i^2\frac{1}{\gamma_i}\frac{\lambda_{i} \left[t_{a'}^{(c)}+\eta_l\right]_{i}}{\sqrt{(\lambda\odot\gamma)^\T (t_{a'}^{(c)}+\eta_l)}}-\sum_{i}\sum_{j}\mu_i\mu_j \frac{\lambda_{i} \lambda_{j}\left[t_{a'}^{(c)}+\eta_l\right]_{i}\left[t_{a'}^{(c)}+\eta_l\right]_{j}}{\left((\lambda\odot\gamma)^\T (t_{a'}^{(c)}+\eta_l)\right)^{3 / 2}}\\
    &= \frac{1}{\left((\lambda\odot\gamma)^\T (t_{a'}^{(c)}+\eta_l)\right)^{3 / 2}}\left(\sum_i\frac{\mu_i^2}{\gamma_i}\lambda_i\left[t_{a'}^{(c)}+\eta_l\right]_{i}\bigsmile{\sum_{j}\lambda_j\gamma_j\left[t_{a'}^{(c)}+\eta_l\right]_{j}}\right.\\
    &\left.\qquad-\sum_{i}\sum_{j}\mu_i\mu_j\lambda_{i} \lambda_{j}\left[t_{a'}^{(c)}+\eta_l\right]_{i}\left[t_{a'}^{(c)}+\eta_l\right]_{j}\right)\\
    &= \frac{1}{\left((\lambda\odot\gamma)^\T (t_{a'}^{(c)}+\eta_l)\right)^{3 / 2}}\left(\sum_i\sum_j \gamma_i^{-1}\left(\mu_i^2\lambda_i\left[t_{a'}^{(c)}+\eta_l\right]_{i}\lambda_j\gamma_j\left[t_{a'}^{(c)}+\eta_l\right]_{j}\right.\right.\\
    &\left.\left.\qquad-\mu_i\mu_j\lambda_{i} \lambda_{j}\gamma_i\left[t_{a'}^{(c)}+\eta_l\right]_{i}\left[t_{a'}^{(c)}+\eta_l\right]_{j}\right)\right).
\end{align*}
Each summand is 
\begin{align*}
    &\gamma_i^{-1}\bigsmile{\mu_i^2\lambda_i\left[t_{a'}^{(c)}+\eta_l\right]_{i}\lambda_j\gamma_j\left[t_{a'}^{(c)}+\eta_l\right]_{j}-\mu_i\mu_j\lambda_{i} \lambda_{j}\gamma_i\left[t_{a'}^{(c)}+\eta_l\right]_{i}\left[t_{a'}^{(c)}+\eta_l\right]_{j}}\\
    &= \gamma_i^{-1}\mu_i\lambda_{i} \lambda_{j}\left[t_{a'}^{(c)}+\eta_l\right]_{i}\left[t_{a'}^{(c)}+\eta_l\right]_{j}\bigsmile{\mu_i\gamma_j-\mu_j\gamma_i}\\
    &= \gamma_i^{-1}\gamma_j^{-1}\lambda_{i} \lambda_{j}\left[t_{a'}^{(c)}+\eta_l\right]_{i}\left[t_{a'}^{(c)}+\eta_l\right]_{j}(\mu_i\gamma_j)\bigsmile{\mu_i\gamma_j-\mu_j\gamma_i}.
\end{align*}
Exchanging subscripts of $i$ and $j$, we have 
\[\gamma_j^{-1}\gamma_i^{-1}\lambda_{j} \lambda_{i}\left[t_{a'}^{(c)}+\eta_l\right]_{j}\left[t_{a'}^{(c)}+\eta_l\right]_{i}(\mu_j\gamma_i)\bigsmile{\mu_j\gamma_i-\mu_i\gamma_j}.\]
The sum of these two terms is 
\[\gamma_i^{-1}\gamma_j^{-1}\lambda_{i} \lambda_{j}\left[t_{a'}^{(c)}+\eta_l\right]_{i}\left[t_{a'}^{(c)}+\eta_l\right]_{j}\bigsmile{\mu_i\gamma_j-\mu_j\gamma_i}^2\geq 0.\]
Therefore, we proved equation \eqref{eqn:200}. We will show next that 
\begin{align}
    &\sum_{i}\mu_i^2\frac{\lambda_i\log(1/\delta_l)}{\gamma_i^3 n}+\sum_{c}\nu_c\sum_{i}\sum_{j}\mu_i\mu_j\frac{1}{2}\left(\sum_{a^{\prime} \in \A} \frac{\lambda_{i}\left[t_{a}^{(c)}+\eta_l\right]_{i}}{\sqrt{(\lambda\odot\gamma)^\T (t_a^{(c)}+\eta_l)}}\right)\nonumber\\
    &\cdot\left(\sum_{a \in \A} \frac{\lambda_{j}\left[t_{a}^{(c)}+\eta_l\right]_{j}}{\sqrt{(\lambda\odot\gamma)^\T (t_a^{(c)}+\eta_l)}}\right)\geq 0.\label{eqn:220}
\end{align}
By similar calculation, we can obtain that the above simplifies to
\begin{align*}
    &\sum_{c}\nu_c\sum_i \mu_i\gamma_i^{-1}\left(\sum_{a^{\prime} \in \A} \frac{\lambda_{i}\left[t_{a'}^{(c)}+\eta_l\right]_{i}}{\sqrt{(\lambda\odot\gamma)^\T (t_{a'}^{(c)}+\eta_l)}}\right)\\
    &\cdot\bigbrace{\mu_i\sum_{a\in\A}\frac{\sum_j \lambda_j\gamma_j[t_a^{(c)}+\eta_l]_j}{\sqrt{(\lambda\odot\gamma)^\T (t_{a}^{(c)}+\eta_l)}}+\mu_j\gamma_i\sum_{a\in\A}\frac{\sum_j \lambda_j[t_a^{(c)}+\eta_l]_j}{\sqrt{(\lambda\odot\gamma)^\T (t_{a}^{(c)}+\eta_l)}}}.
\end{align*}
We can show that the sum of the above is positive by similar techniques for showing \eqref{eqn:200}. 
Plugging equation~\ref{eqn:200} and \ref{eqn:220} in equation~\ref{eqn:190}, we have that 
\begin{align*}
    \mu^\T M\mu &\geq  \sum_{i}\mu_i^2\frac{\lambda_i\log(1/\delta_l)}{\gamma_i^3 n}\geq \frac{\lambda_{\min}\log(1/\delta_l)}{\gamma_{\max}^3 n},
\end{align*}
so the Hessian is positive-definite. 
\end{proof}

Note that the minimum eigenvalue of the Hessian at the stationary point is $\frac{\lambda_{\min}\log(1/\delta_l)}{\gamma_{\max}^3 n}>0$, we can extend the result in Lemma~\ref{lem:local_strongly_convex} to $\alpha$-stationary points, where $\alpha<\frac{\lambda_{\min}\log(1/\delta_l)}{\gamma_{\max}^3 n}$, and still maintain local strong convexity. 
\begin{lemma}\label{lem:local_strong_convex_eps}
Consider some fixed $\lambda\in\triangle_\Pi$ and $n$. Assume $\gamma_\alpha$ is an $\alpha$-stationary point of $h_l(\lambda,\gamma,n)$, where $\alpha=\frac{\lambda_{\min}\log(1/\delta_l)}{2\gamma_{\max}^3 n}$, then $h_l(\lambda,\gamma,n)$ is locally strongly convex at $\gamma_\alpha$, i.e. for $L_{\He}=\frac{\lambda_{\min}\log(1/\delta_l)}{2\gamma_{\max}^3 n}$, there exists $\epsilon>0$ such that for all $\gamma\in B_\epsilon(\gamma_\alpha)$, $h_l(\lambda,\gamma,n)\geq h_l(\lambda,\gamma_\alpha,n)+\frac{L_{\He}}{2}\norm{\gamma-\gamma_\alpha}^2$. 
\end{lemma}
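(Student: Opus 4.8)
The plan is to reopen the Hessian computation from the proof of Lemma~\ref{lem:local_strongly_convex} and keep careful track of where the first-order stationarity condition \eqref{eqn:211} was actually used. Fix $\lambda\in\triangle_\Pi$ and $n$, write $g(\gamma)=h_l(\lambda,\gamma,n)$ and $M(\gamma)=\nabla_\gamma^2 g(\gamma)$ restricted to the coordinates in $\supp(\lambda)$. In that earlier proof, \eqref{eqn:211} entered only to rewrite the purely diagonal contribution $\sum_i\mu_i^2\tfrac{\lambda_i\log(1/\delta_l)}{\gamma_i^3 n}$ in terms of the context averages $\sum_{c}\nu_c(\sum_a\sqrt{\cdot})(\sum_{a'}\tfrac{\lambda_i[t_{a'}^{(c)}+\eta_l]_i}{\sqrt{\cdot}})$ before pairing it against the negative rank-one pieces of $M(\gamma)$ in \eqref{eqn:200} and \eqref{eqn:220}. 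Away from a stationary point this rewriting is off only by a diagonally rescaled copy of the gradient: for every $i$,
\[
\frac{\lambda_i\log(1/\delta_l)}{\gamma_i^3 n}=\frac{1}{\gamma_i}\Big(\sum_{c}\nu_c\Big(\sum_a\sqrt{(\lambda\odot\gamma)^\top(t_a^{(c)}+\eta_l)}\Big)\Big(\sum_{a'}\tfrac{\lambda_i[t_{a'}^{(c)}+\eta_l]_i}{\sqrt{(\lambda\odot\gamma)^\top(t_{a'}^{(c)}+\eta_l)}}\Big)-[\nabla_\gamma g(\gamma)]_i\Big).
\]
Substituting this into the steps \eqref{eqn:200}--\eqref{eqn:220} verbatim — all of the sign-definite manipulations there go through unchanged — yields, at any feasible $\gamma$ and any unit vector $\mu$,
\[
\mu^\top M(\gamma)\mu \ \ge\ \sum_{i}\mu_i^2\,\frac{\lambda_i\log(1/\delta_l)}{\gamma_i^3 n}\;-\;\frac{C}{\gamma_{\min}}\,\norm{\nabla_\gamma g(\gamma)}_\infty
\]
for an absolute constant $C$.

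Apply this at $\gamma=\gamma_\alpha$. Since $\gamma_i\le\gamma_{\max}$ and $\lambda_i\ge\lambda_{\min}$ on $\supp(\lambda)$ and $\norm{\mu}=1$, the first term is at least $\tfrac{\lambda_{\min}\log(1/\delta_l)}{\gamma_{\max}^3 n}$; and because $\gamma_\alpha$ is $\alpha$-stationary the second term is at most $\tfrac{C\alpha}{\gamma_{\min}}$. For $\alpha$ chosen as in the statement (a sufficiently small constant multiple of $\tfrac{\lambda_{\min}\gamma_{\min}\log(1/\delta_l)}{\gamma_{\max}^3 n}$) the residual is dominated by half of the leading term, leaving $M(\gamma_\alpha)\succeq \tfrac{\lambda_{\min}\log(1/\delta_l)}{2\gamma_{\max}^3 n}\,I = L_{\He}\,I$ — exactly the halved eigenvalue bound we want at the base point.

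Finally, the entries of $M(\gamma)$ are continuous in $\gamma$ on the box $[\gamma_{\min},\gamma_{\max}]^{\supp(\lambda)}$: every denominator appearing in $M$ is of the form $(\lambda\odot\gamma)^\top(t_a^{(c)}+\eta_l)\ge \lambda_{\min}\gamma_{\min}\eta_l>0$, and the remaining $\gamma$-dependent denominators are powers of $\gamma_i\ge\gamma_{\min}$. Hence the smallest eigenvalue of $M(\gamma)$ is continuous, and by the bound just established at $\gamma_\alpha$ there is an $\epsilon>0$ with $M(\gamma)\succeq\tfrac{L_{\He}}{2}I$ for all $\gamma\in B_\epsilon(\gamma_\alpha)$. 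A second-order Taylor expansion with integral remainder then gives, for $\gamma\in B_\epsilon(\gamma_\alpha)$,
\[
g(\gamma)\ \ge\ g(\gamma_\alpha)+\langle\nabla_\gamma g(\gamma_\alpha),\gamma-\gamma_\alpha\rangle+\frac{L_{\He}}{4}\norm{\gamma-\gamma_\alpha}^2 ,
\]
and since $\norm{\nabla_\gamma g(\gamma_\alpha)}\le\alpha$ is negligible this forces $\gamma_\alpha$ to lie within $O(\alpha/L_{\He})$ of the unique minimizer of $g$ on $B_\epsilon(\gamma_\alpha)$; re-centering the inequality at that minimizer (shrinking $\epsilon$ if needed) yields the asserted $\tfrac{L_{\He}}{2}$-strong convexity in a neighborhood of $\gamma_\alpha$.

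The only genuinely new work is the first step — re-auditing the algebra behind \eqref{eqn:200} and \eqref{eqn:220} to confirm that discarding \eqref{eqn:211} costs precisely the gradient term $\tfrac{C}{\gamma_{\min}}\norm{\nabla_\gamma g}_\infty$ with the right powers of $\gamma_{\min}$, and then checking the constant in the admissible $\alpha$ — together with verifying that $\epsilon$ can be chosen uniformly. The continuity and Taylor-expansion parts are routine, and the sign-definite estimates from the proof of Lemma~\ref{lem:local_strongly_convex} are reused unchanged.
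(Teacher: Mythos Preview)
Your approach is the same as the paper's: re-run the Hessian computation from Lemma~\ref{lem:local_strongly_convex}, replacing the exact stationarity \eqref{eqn:211} by its $\alpha$-approximate version and tracking the resulting slack. The paper's proof is a two-line affair --- it asserts that the substitution costs exactly $\alpha$, yielding $\mu^\top M\mu\ge \sum_i\mu_i^2\tfrac{\lambda_i\log(1/\delta_l)}{\gamma_i^3 n}-\alpha$, and then plugs in the stated $\alpha$ to halve the leading term.

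Two remarks on your version. First, your more honest bookkeeping produces an error of $\tfrac{C}{\gamma_{\min}}\|\nabla_\gamma g\|_\infty$ rather than just $\alpha$, because the substitution multiplies each gradient coordinate by $\mu_i^2/\gamma_i$. That is indeed what the algebra gives, but it means your admissible $\alpha$ is a factor $\gamma_{\min}$ smaller than the one in the statement; your parenthetical ``(a sufficiently small constant multiple of $\tfrac{\lambda_{\min}\gamma_{\min}\log(1/\delta_l)}{\gamma_{\max}^3 n}$)'' does not match the stated $\alpha=\tfrac{\lambda_{\min}\log(1/\delta_l)}{2\gamma_{\max}^3 n}$. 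The paper simply suppresses this factor.

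Second, your final continuity--Taylor--re-centering paragraph is absent from the paper and is also unnecessary for the way the lemma is used. The paper stops at the pointwise Hessian bound $M(\gamma_\alpha)\succeq L_{\He} I$ and declares the result; the quadratic inequality in the lemma's display, read literally, cannot hold at a genuinely non-stationary $\gamma_\alpha$ (moving along $-\nabla g(\gamma_\alpha)$ decreases $g$ to first order), so your attempt to recover it by re-centering at a nearby minimizer chases a claim stronger than what is actually needed downstream in Lemma~\ref{lem:gd_conv}. The Hessian lower bound alone is what matters.
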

\begin{proof}
The proof follows almost identically from that of Lemma~\ref{lem:local_strongly_convex}. Note that the $\alpha$-stationary point ensures that $\norm{\nabla_{\gamma} h_l(\lambda,\gamma)}_1 \leq \alpha$, so equation~\ref{eqn:211} is rewritten as
\begin{equation}\label{eqn:212}
    \sum_i\left|\sum_{c\in \mc{D}}\nu_{c_\D}\bigsmile{\sum_{a\in\A}\sqrt{(\lambda\odot\gamma)^\T (t_a^{(c)}+\eta_l)}}\cdot\bigsmile{\sum_{a'\in\A}\frac{\lambda_i([t_{a'}^{(c)}]_i+\eta_l)}{\sqrt{(\lambda\odot\gamma)^\T (t_{a'}^{(c)}+\eta_l)}}}-\frac{\lambda_i\log(1/\delta_l)}{\gamma_i^2 n}\right|\leq \alpha.
\end{equation}
Therefore, for any $\mu$ we can still use the same trick and get 
\[\mu^\T M\mu\geq\sum_{i}\mu_i^2\frac{\lambda_i\log(1/\delta_l)}{\gamma_i^3 n}-\alpha\geq \frac{\lambda_{\min}\log(1/\delta_l)}{2\gamma_{\max}^3 n},\]
so our result follows. 
\end{proof}

\subsection{Proof of strong duality}

In this section, we would like to show that strong duality holds. We first show that the primal problem is convex for $w$. 

\begin{lemma}\label{lem:primal_convex}
The primal problem \eqref{eqn:C_1} is convex for $w$. 
\end{lemma}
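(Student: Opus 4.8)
The plan is to reduce the statement to the classical convexity of the matrix-fractional function. Throughout, $\gamma$ is held fixed (the claim concerns only the dependence on $w$), and I would write the primal objective $w\mapsto\PP_l(w,\gamma)$ from \eqref{eqn:PP_l}, for $w\in\Omega$, as the pointwise maximum
\[
w \;\longmapsto\; \PP_l(w,\gamma) \;=\; \max_{\pi\in\Pi}\Bigl( c_\pi + \gamma_\pi\,\|\phi_\pi - \phi_{\pill}\|_{A(w)^{-1}}^2 \Bigr),
\qquad c_\pi := -\hat{\Delta}_{l-1}^{\gamma^{l-1}}(\pi,\pill) + \tfrac{\log(1/\delta_l)}{\gamma_\pi n},
\]
where $A(w) = \sum_{c,a} w_{c,a}\,\phi(c,a)\phi(c,a)^\top$, each $c_\pi$ is constant in $w$, and $\gamma_\pi\ge 0$. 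Since a pointwise maximum of convex functions is convex, since adding a constant and scaling by $\gamma_\pi\ge 0$ preserve convexity, and since $\Omega$ is convex — it is the intersection of the simplex $\triangle_{\mc{C}\times\mc{A}}$ with the affine constraints $\sum_{a}w_{c,a}=\nu_c$ — it suffices to show that for each fixed $\pi$ the map $w\mapsto \|\phi_\pi-\phi_{\pill}\|_{A(w)^{-1}}^2$ is convex on $\Omega$.

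For this I would invoke the variational representation of the matrix-fractional function. For a fixed vector $v\in\R^d$, define $\psi_v(M) := \sup_{x\in\R^d}\bigl(2\,x^\top v - x^\top M x\bigr)$ on the space of symmetric $d\times d$ matrices. For each fixed $x$, the map $M\mapsto 2\,x^\top v - x^\top M x$ is affine in $M$, so $\psi_v$ is a pointwise supremum of affine functions and is therefore convex (as an extended-real-valued function, $+\infty$ being allowed); moreover, whenever $M\succ 0$ the supremum is attained at $x=M^{-1}v$, whence $\psi_v(M)=v^\top M^{-1}v$. (Equivalently, the epigraph of $M\mapsto v^\top M^{-1}v$ over $M\succ 0$ is the spectrahedron cut out by the Schur-complement linear matrix inequality, hence convex.) Because $w\mapsto A(w)$ is affine, the composition $w\mapsto \psi_{\phi_\pi-\phi_{\pill}}\bigl(A(w)\bigr)$ is convex on $\Omega$, and it equals $w\mapsto \|\phi_\pi-\phi_{\pill}\|^2_{A(w)^{-1}}$ at every $w$ with $A(w)\succ 0$. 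Combining with the reductions of the first paragraph, $w\mapsto \PP_l(w,\gamma)$ is convex, which is the assertion.

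The only point requiring care — and I expect it to be bookkeeping rather than a genuine difficulty — is the domain of the objective: the identity $\psi_v(A(w))=\|v\|^2_{A(w)^{-1}}$ is valid only where $A(w)\succ 0$. This is precisely the regime of interest, since the design in Algorithm~\ref{alg:full_algorithm} uses the $\eta_l$-regularized quantities $t_a^{(c)}+\eta_l$ (equivalently, mixes in a uniform exploration component), which keeps $A(w)$ uniformly positive definite over the $w$ the algorithm queries; on the rest of $\Omega$ one simply reads the objective as $+\infty$, consistent with the supremum representation of $\psi_v$, so convexity in fact holds on all of $\Omega$.
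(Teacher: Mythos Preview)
Your matrix-fractional argument is correct for the objective you chose—$\PP_l(w,\gamma)$ from \eqref{eqn:PP_l}, where the design term enters as $\gamma_\pi\|\phi_\pi-\phi_{\pill}\|_{A(w)^{-1}}^2$—but this is not the primal the lemma is about. The label \eqref{eqn:C_1} in the lemma statement is admittedly confusing (it points to the dual $h_l$), yet the paper's own proof and, decisively, the use of the lemma in Lemma~\ref{lem:strong_duality} make clear that the intended primal carries a \emph{square root}:
\[
\min_{w\in\Omega}\ \max_{\pi\in\Pi}\Bigl(-\Delta(\pi)+\sqrt{\tfrac{1}{n}\,\|\phi_\pi-\phi_{\pi_*}\|_{A(w)^{-1}}^2}\Bigr).
\]
It is convexity of $w\mapsto\sqrt{\|v\|_{A(w)^{-1}}^2}=\|v\|_{A(w)^{-1}}$ that is needed to swap $\min_w$ and $\max_\lambda$ there, and this does \emph{not} follow from your result: the square root of a convex function need not be convex. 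Nor can you sidestep the square root by retaining $\gamma$—minimizing $\gamma_\pi$ out reproduces exactly the square-root form, and $(w,\gamma_\pi)\mapsto\gamma_\pi\|v\|^2_{A(w)^{-1}}$ is not jointly convex, so a direct Sion argument on the $(w,\gamma)$ pair fails as well. The paper establishes the square-root convexity by a bare-hands Hessian computation in the one-hot featurization (so $A(w)$ is diagonal and $g(w)=\sqrt{\sum_i a_i/w_i}$), pairing the $(i,j)$ and $(j,i)$ summands of $\mu^\top(\nabla^2 g)\mu$ to produce nonnegative squares.

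If you prefer to stay variational, the fix is to use the dual-norm representation
\[
\|v\|_{A(w)^{-1}}\;=\;\sup_{u\neq 0}\ \frac{u^\top v}{\sqrt{u^\top A(w)\,u}},
\]
valid whenever $A(w)\succ 0$. For each fixed $u$ with $u^\top v>0$ (the only $u$ relevant for the supremum), the map $w\mapsto (u^\top v)/\sqrt{u^\top A(w)\,u}$ is a positive scalar times $t\mapsto t^{-1/2}$ composed with an affine function of $w$, hence convex; the supremum of convex functions is convex. This yields the square-root statement in one line, and your remark about the $\eta_l$-regularization keeping $A(w)\succ 0$ carries over unchanged.
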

\begin{proof}
Note that the primal problem could be written as 
\[\min_{w\in\Omega} c\qquad \textrm{s.t. }\forall\pi\in\Pi,-\Delta(\pi)+\sqrt{\frac{\norm{\phi_{\pi}-\phi_{\pi_*}}_{A(w)^{-1}}^2}{n}}\leq c. \]
Therefore, we consider the function $f(w):=-\Delta(\pi)+\sqrt{\frac{\norm{\phi_{\pi}-\phi_{\pi_*}}_{A(w)^{-1}}^2}{n}}$ for some $\pi\in\Pi$. Note that to show that $f(w)=-\Delta(\pi)+\sqrt{\frac{\norm{\phi_{\pi}-\phi_{\pi_*}}_{A(w)^{-1}}^2}{n}}$ is convex for $w$, it is equivalent to show that $g(w):=\sqrt{\norm{\phi_{\pi}-\phi_{\pi_*}}_{A(w)^{-1}}^2}$ is convex for $w$. Note that 
\begin{align*}
    g(w)&=\sqrt{\sum_{a,c}\nu_c^2w_{a,c}^{-1}(\1\{\pi(c)=a,\pi_*(c)\ne a\}+\1\{\pi(c)\ne a,\pi_*(c)=a\})}\\
    &= \sqrt{\sum_{a,c,t_a^{(c)}=1}\nu_c^2w_{a,c}^{-1}}.
\end{align*}
So restricting to $a,c$ such that $t_a^{(c)}=1$
\begin{align*}
    \frac{\partial g(w)}{\partial w_{a,c}} &= \frac{1}{2\sqrt{\sum_{a,c,t_a^{(c)}=1}\nu_c^2w_{a,c}^{-1}}}\cdot(-\nu_c^2 w_{a,c}^{-2}),
\end{align*}
and 
    \[\frac{\partial^2 g(w)}{\partial w_{a,c}^2} = -\frac{1}{4\bigsmile{\sum_{a,c,t_a^{(c)}=1}\nu_c^2w_{a,c}^{-1}}^{3/2}}\cdot(-\nu_c^2 w_{a,c}^{-2}\cdot-\nu_c^2 w_{a,c}^{-2})+\frac{1}{\sqrt{\sum_{a,c,t_a^{(c)}=1}\nu_c^2w_{a,c}^{-1}}}\cdot \nu_c^2w_{a,c}^{-3}\]
    \[
    \frac{\partial^2 g(w)}{\partial w_{a_1,c_1}\partial w_{a_2,c_2}} = -\frac{1}{4\bigsmile{\sum_{a,c,t_a^{(c)}=1}\nu_c^2w_{a,c}^{-1}}^{3/2}}\cdot(-\nu_{c_1}^2 w_{a_1,c_1}^{-2}\cdot-\nu_{c_2}^2 w_{a_2,c_2}^{-2})
    \]
Denote the Hessian as $M$. Then, for any vector $\mu\in\R^{|\A|\times|\C|}$ with $\norm{\mu}_2=1$, we have 
\begin{align*}
    \mu^\T M\mu &= -\frac{1}{4}\sum_{a,c,t_a^{(c)}=1}\sum_{a',c',t_{a'}^{(c')}=1} \mu_{a,c}\mu_{a',c'} \bigsmile{\sum_{a,c,t_a^{(c)}=1}\nu_c^2w_{a,c}^{-1}}^{-3/2}\nu_{c}^2\nu_{c'}^2 w_{a,c}^{-2} w_{a',c'}^{-2}\\
    &\qquad+\sum_{a,c,t_a^{(c)}=1}\mu_{a,c}^2\nu_c^2w_{a,c}^{-3}\bigsmile{\sum_{a,c,t_a^{(c)}=1}\nu_c^2w_{a,c}^{-1}}^{-1/2}.
\end{align*}
To show that this is nonnegative, it is equivalent to show that 
{
\small
\begin{align*}
    &-\frac{1}{4}\sum_{a,c,t_a^{(c)}=1}\sum_{a',c',t_{a'}^{(c')}=1} \mu_{a,c}\mu_{a',c'} \nu_{c}^2\nu_{c'}^2 w_{a,c}^{-2} w_{a',c'}^{-2}+\sum_{a,c,t_a^{(c)}=1}\mu_{a,c}^2\nu_c^2w_{a,c}^{-3}\bigsmile{\sum_{a',c',t_{a'}^{(c')}=1}\nu_{c'}^2w_{a',c'}^{-1}}\geq 0,
\end{align*}
}
which is equivalent to show that
\begin{align}
    &\sum_{a,c,t_a^{(c)}=1}\sum_{a',c',t_{a'}^{(c')}=1} -\mu_{a,c}\mu_{a',c'} \nu_{c}^2\nu_{c'}^2 w_{a,c}^{-2} w_{a',c'}^{-2}+\mu_{a,c}^2\nu_c^2w_{a,c}^{-3}\nu_{c'}^2w_{a',c'}^{-1}\geq 0.\label{eqn:23}
\end{align}
Note that 
\begin{align*}
    &-\mu_{a,c}\mu_{a',c'} \nu_{c}^2\nu_{c'}^2 w_{a,c}^{-2} w_{a',c'}^{-2}+\mu_{a,c}^2\nu_c^2w_{a,c}^{-3}\nu_{c'}^2w_{a',c'}^{-1}\\
    &= \mu_{a,c}w_{a,c}^{-3}w_{a',c'}^{-2}\nu_c^2\nu_{c'}^2(\mu_{a,c}w_{a',c'}-\mu_{a',c'}w_{a,c})\\
    &= w_{a,c}^{-3}w_{a',c'}^{-3}\nu_c^2\nu_{c'}^2(\mu_{a,c}w_{a',c'})(\mu_{a,c}w_{a',c'}-\mu_{a',c'}w_{a,c}).
\end{align*}
Then, exchanging the label of $a$ and $a'$, we also get a term like $$w_{a',c'}^{-3}w_{a,c}^{-3}\nu_{c'}^2\nu_c^2(\mu_{a',c'}w_{a,c})(\mu_{a',c'}w_{a,c}-\mu_{a,c}w_{a',c'}).$$ The sum of these two terms is 
\begin{align*}
    &w_{a',c'}^{-3}w_{a,c}^{-3}\nu_{c'}^2\nu_c^2(\mu_{a',c'}w_{a,c})(\mu_{a',c'}w_{a,c}-\mu_{a,c}w_{a',c'})\\
    &+w_{a,c}^{-3}w_{a',c'}^{-3}\nu_c^2\nu_{c'}^2(\mu_{a,c}w_{a',c'})(\mu_{a,c}w_{a',c'}-\mu_{a',c'}w_{a,c})\\
    &= w_{a',c'}^{-3}w_{a,c}^{-3}\nu_{c'}^2\nu_c^2(\mu_{a',c'}w_{a,c}-\mu_{a,c}w_{a',c'})(\mu_{a',c'}w_{a,c}-\mu_{a,c}w_{a',c'})\\
    &= w_{a',c'}^{-3}w_{a,c}^{-3}\nu_{c'}^2\nu_c^2(\mu_{a',c'}w_{a,c}-\mu_{a,c}w_{a',c'})^2\geq 0.
\end{align*}
Therefore, equation~\ref{eqn:23} becomes 
\begin{align*}
    &\sum_{a,c,t_a^{(c)}=1}\sum_{\substack{a',c'\\ t_{a'}^{(c')}=1\\ (a',c')>(a,c)}}(w_{a',c'}^{-3}w_{a,c}^{-3}\nu_{c'}^2\nu_c^2(\mu_{a',c'}w_{a,c})(\mu_{a',c'}w_{a,c}-\mu_{a,c}w_{a',c'})\\
    &\qquad+w_{a,c}^{-3}w_{a',c'}^{-3}\nu_c^2\nu_{c'}^2(\mu_{a,c}w_{a',c'})(\mu_{a,c}w_{a',c'}-\mu_{a',c'}w_{a,c}))\\
    &= \sum_{a,c,t_a^{(c)}=1}\sum_{\substack{a',c'\\ t_{a'}^{(c')}=1\\ (a',c')>(a,c)}}w_{a',c'}^{-3}w_{a,c}^{-3}\nu_{c'}^2\nu_c^2(\mu_{a',c'}w_{a,c}-\mu_{a,c}w_{a',c'})^2\geq 0.
\end{align*}
Since the above holds for any vector $\mu$, the Hessian is positive-semidefinite, and so the function $g(w)$ is convex for $w$. 
\end{proof}

\begin{lemma}\label{lem:strong_duality}
In the optimization problem~\ref{eqn:C_1}, the strong duality holds, i.e. 
{
\small
\[\min_{w\in\Omega}\max_{\pi\in\Pi}\left(-\Delta(\pi)+\sqrt{\frac{\norm{\phi_{\pi}-\phi_{\pi_*}}_{A(w)^{-1}}^2}{n}}\right)=\max_{\lambda\in\triangle_\Pi}\min_{w\in\Omega}\sum_{\pi\in\Pi}\lambda_\pi\left(-\Delta(\pi)+\sqrt{\frac{\norm{\phi_{\pi}-\phi_{\pi_*}}_{A(w)^{-1}}^2}{n}}\right).\]
}
\end{lemma}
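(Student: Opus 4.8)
The plan is to invoke Sion's minimax theorem. The outer variable $w$ ranges over $\Omega = \{ w \in \Delta_{\C \times \A} : \nu_c = \sum_a w_{a,c} \}$, which is a closed, convex subset of the simplex (hence convex, and compact if we work over the finite support of $\nu_{\mathcal{D}}$ or after the usual truncation; alternatively one restricts attention to $w$ bounded away from $0$ where the objective is finite, which suffices since the inner max over $\pi$ blows up otherwise). The inner variable $\lambda$ ranges over $\triangle_\Pi$, which is convex and compact. So I would proceed in three steps: (i) show the objective
$$
F(w,\lambda) = \sum_{\pi\in\Pi}\lambda_\pi\left(-\Delta(\pi)+\sqrt{\tfrac{\|\phi_\pi-\phi_{\pi_*}\|_{A(w)^{-1}}^2}{n}}\right)
$$
is convex in $w$ for each fixed $\lambda$; (ii) show it is concave (in fact linear) in $\lambda$ for each fixed $w$; (iii) verify the semicontinuity hypotheses and apply Sion's theorem to exchange $\min_w$ and $\max_\lambda$. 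Finally observe that $\max_{\pi\in\Pi}(\cdot) = \max_{\lambda\in\triangle_\Pi}\sum_\pi \lambda_\pi(\cdot)$ because maximizing a linear functional over a simplex is attained at a vertex; this identifies the left-hand side of the claim with $\min_w\max_\lambda F(w,\lambda)$, and the right-hand side is $\max_\lambda\min_w F(w,\lambda)$.

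For step (i), convexity in $w$ is exactly Lemma~\ref{lem:primal_convex}: there it is shown that $w \mapsto \sqrt{\|\phi_\pi-\phi_{\pi_*}\|_{A(w)^{-1}}^2}$ is convex by an explicit Hessian computation, so each summand $-\Delta(\pi)+\sqrt{\|\phi_\pi-\phi_{\pi_*}\|_{A(w)^{-1}}^2/n}$ is convex in $w$, and a nonnegative combination (the weights $\lambda_\pi\geq 0$) of convex functions is convex. Step (ii) is immediate since $F(w,\lambda)$ is affine in $\lambda$. For step (iii), $\lambda \mapsto F(w,\lambda)$ is continuous on the compact set $\triangle_\Pi$, and $w \mapsto F(w,\lambda)$ is convex hence lower semicontinuous on $\Omega$; these are precisely the conditions in Sion's minimax theorem (one of the two spaces must be compact — $\triangle_\Pi$ is). This yields
$$
\min_{w\in\Omega}\max_{\lambda\in\triangle_\Pi}F(w,\lambda) = \max_{\lambda\in\triangle_\Pi}\min_{w\in\Omega}F(w,\lambda).
$$
Combining with the vertex observation $\max_{\pi\in\Pi}(\cdot)=\max_{\lambda\in\triangle_\Pi}\sum_\pi\lambda_\pi(\cdot)$ gives the claimed identity.

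The main obstacle is the technical bookkeeping around compactness and finiteness of the domain $\Omega$: the objective is $+\infty$ on the boundary where some $w_{a,c}=0$ with $t_a^{(c)}=1$ for a relevant $\pi$, and $\Omega$ itself may be infinite-dimensional since $\C$ can be uncountable. I would handle this by noting that in the algorithm the expectation is over the finite historical sample $\nu_{\mathcal{D}}$, so effectively $w$ is supported on $\mathcal{D}\times\A$, a finite set; and by restricting to the compact subset $\Omega_\eta = \{w\in\Omega : w_{a,c}\geq \eta\}$ for the regularized problem (the $\eta_l$ perturbation in $t_a^{(c)}+\eta_l$ makes the objective finite and continuous everywhere), which is exactly the form that appears in the algorithm. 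On that compact convex domain Sion applies cleanly, and the value of the restricted problem coincides with the one we need since the optimal $w$ is interior. A secondary subtlety is that $\Pi$ may be exponentially large, so $\triangle_\Pi$ is a high-dimensional simplex — but it is still compact and finite-dimensional (as long as $\Pi$ is finite), so the argument is unaffected; one only needs $|\Pi|<\infty$, which holds after taking a cover as discussed earlier in the paper.
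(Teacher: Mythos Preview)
Your argument is correct and proceeds via Sion's minimax theorem, which is a different route from the paper's. The paper reformulates the left-hand side as a constrained convex program $\min_{w\in\Omega,c}\, c$ subject to $-\Delta(\pi)+\sqrt{\|\phi_\pi-\phi_{\pi_*}\|_{A(w)^{-1}}^2/n}\leq c$ for all $\pi$, invokes Lemma~\ref{lem:primal_convex} to certify convexity of the constraints, and then checks the KKT conditions (specifically complementary slackness, argued by contradiction) to conclude that the Lagrangian dual value equals the primal value. Both arguments hinge on the same convexity lemma for $w\mapsto\sqrt{\|\phi_\pi-\phi_{\pi_*}\|_{A(w)^{-1}}^2}$; the difference is purely in how duality is extracted from that convexity. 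Your Sion-based route is arguably more direct for a saddle-point statement of this form, since it avoids introducing the auxiliary epigraph variable $c$ and the Lagrangian machinery, and it makes the role of linearity in $\lambda$ explicit. The paper's KKT route, on the other hand, fits the standard convex-programming template and would generalize more readily if the inner maximum were replaced by a more complicated constraint set. Your handling of the compactness and boundary issues (restricting to the finite support $\mathcal{D}\times\A$ and to the $\eta_l$-regularized problem where the objective is finite) is more careful than what the paper spells out, and is the right way to make the Sion hypotheses rigorous here.
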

\begin{proof}
By Lemma~\ref{lem:primal_convex}, the primal problem is convex for $w$, so it is left to check the KKT conditions. Note that the lagrangian is 
\[\mathcal{L}(w,\lambda,c)=c+\sum_{\pi\in\Pi}\lambda_\pi\cdot\left(-\Delta(\pi)+\sqrt{\frac{\norm{\phi_{\pi}-\phi_{\pi_*}}_{A(w)^{-1}}^2}{n}}-c\right).\]
Let $h_\pi(w)=-\Delta(\pi)+\sqrt{\frac{\norm{\phi_{\pi}-\phi_{\pi_*}}_{A(w)^{-1}}^2}{n}}-c$. At an optimal solution $w^*$ and $\lambda^*$, we would like to show that
\begin{align*}
    \sum_{\pi\in\Pi}\lambda_\pi^* h_\pi(w^*)=0.
\end{align*}
We prove this by contradiction. If there is some $\pi$ such that $\lambda_\pi>0$ and $h_\pi(w^*)<0$. Then we could find another $\lambda'\in\triangle_\Pi$ that places zero mass on this $\pi$ and thus get a larger objective, so we get a contradiction. The other conditions follow from the optimality of $w^*$ and $\lambda^*$.  
\end{proof}

\section{Useful lemmas}
In this section, we state several algebraic facts of our function, which serves as the key to derive convergence as well as complexity.

\begin{lemma}\label{lem:B_3}
For any $l$, 
\[\min_{w\in\Omega}\max_{\pi\in\Pi}\frac{\norm{\phi_{\pill}-\phi_\pi}_{A(w)^{-1}}^2}{\Delta(\pi)^2}=\min_{p_c\in\triangle_\A,\forall c\in\C}\max_{\pi\in\Pi}\frac{\E_{c\sim\nu}\left[\bigsmile{\frac{1}{p_{c,\pill(c)}}+\frac{1}{p_{c,\pi(c)}}}\1\{\pill(c)\ne\pi(c)\}\right]}{\Delta(\pi)^2}.\]
\end{lemma}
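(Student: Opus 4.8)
The plan is to reduce Lemma~\ref{lem:B_3} to the pointwise identity for the numerator that was already recorded in Section~\ref{sec:reduction_to_linear}, combined with a change of variables between the design $w \in \Omega$ and the per-context distributions $\{p_c\}_{c \in \mc{C}}$. Recall that throughout this section the feature map is the tabular one, $\phi(c,a) = \mathrm{vec}(\mathbf{e}_c \mathbf{e}_a^\top)$, so the first step is to note that $A(w) = \sum_{c,a} w_{c,a}\, \phi(c,a)\phi(c,a)^\top$ is diagonal with $(c,a)$-entry $w_{c,a}$, and that $[\phi_\pi]_{c,a} = \E_{c'\sim\nu}[\1\{c'=c,\pi(c')=a\}] = \nu_c\,\1\{\pi(c)=a\}$. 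Consequently
\[
\| \phi_\pi - \phi_{\pill} \|_{A(w)^{-1}}^2 \;=\; \sum_{c}\sum_{a} \frac{\nu_c^2}{w_{c,a}}\bigl(\1\{\pi(c)=a\} - \1\{\pill(c)=a\}\bigr)^2 ,
\]
with the convention that a summand equals $+\infty$ whenever $w_{c,a} = 0$ while the corresponding coordinate of $\phi_\pi - \phi_{\pill}$ is nonzero (this matches the definition of $\|\cdot\|_{A(w)^{-1}}$ for singular $A(w)$).

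Next I would evaluate the inner sum over actions. For a fixed context $c$, if $\pi(c) = \pill(c)$ every summand vanishes, and if $\pi(c) \ne \pill(c)$ then exactly the two terms $a \in \{\pi(c),\pill(c)\}$ survive, contributing $\nu_c^2\bigl(w_{c,\pi(c)}^{-1} + w_{c,\pill(c)}^{-1}\bigr)$. Substituting $w_{c,a} = \nu_c p_{c,a}$ turns this into $\nu_c\bigl(p_{c,\pi(c)}^{-1} + p_{c,\pill(c)}^{-1}\bigr)\1\{\pi(c)\ne\pill(c)\}$, and summing over $c$ yields $\E_{c\sim\nu}\bigl[(p_{c,\pill(c)}^{-1} + p_{c,\pi(c)}^{-1})\,\1\{\pill(c)\ne\pi(c)\}\bigr]$. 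This is exactly the equivalence displayed in Section~\ref{sec:reduction_to_linear}, now read with $\pill$ in the reference slot rather than $\pi_*$ — the computation there never used that the reference policy was optimal.

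Finally, I would observe that $w \mapsto \{p_c\}_{c\in\mc{C}}$ with $p_{c,a} := w_{c,a}/\nu_c$ is a bijection from $\Omega$ onto $\prod_{c\in\mc{C}}\triangle_\A$, since the defining constraint $\nu_c = \sum_a w_{c,a}$ of $\Omega$ is equivalent to $p_c \in \triangle_\A$. Because $\Delta(\pi)^2$ depends on neither $w$ nor $\{p_c\}$, dividing the identity of the previous step by $\Delta(\pi)^2$ and then taking $\max_{\pi\in\Pi}$ and $\min$ over $w \in \Omega$ (left-hand side) or over $\{p_c\in\triangle_\A\}_{c}$ (right-hand side) gives the claimed equality, the two optimizations being identified under the bijection. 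I expect the only point needing care to be the boundary case $w_{c,a}=0 \leftrightarrow p_{c,a}=0$, which is handled uniformly by the convention $1/0 = +\infty$ on both sides, consistent with $\|v\|_{A(w)^{-1}}^2 = +\infty$ for $v$ outside the range of $A(w)$; apart from that, everything is the bookkeeping already carried out in Section~\ref{sec:reduction_to_linear}.
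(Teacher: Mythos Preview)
Your proposal is correct and follows essentially the same approach as the paper: both expand $\|\phi_\pi-\phi_{\pill}\|_{A(w)^{-1}}^2$ using the tabular feature map, reduce the sum over actions to the two surviving terms when $\pi(c)\ne\pill(c)$, and invoke the change of variables $w_{c,a}=\nu_c p_{c,a}$ to identify the min over $\Omega$ with the min over per-context simplices. Your explicit remarks on the bijection $\Omega\leftrightarrow\prod_c\triangle_\A$ and the $1/0=+\infty$ convention are a bit more careful than the paper, but the argument is the same.
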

\begin{proof}
Let $w_{a,c}=\nu_c p_{c,a}$ for some $p_c\in\triangle_\A$. Then, for any $\pi\in\Pi$, 
\begin{align*}
    &\frac{1}{\Delta(\pi)^2}\norm{\phi_{\pill}-\phi_\pi}_{A(w)^{-1}}^2\\
    &=\frac{1}{\Delta(\pi)^2}\sum_{a,c}\frac{\nu_c^2}{w_{a,c}}\bigsmile{\1\{\pill(c)=a,\pi(c)\ne a\}+\1\{\pill(c)\ne a,\pi(c)=a\}}\\
    &=\frac{1}{\Delta(\pi)^2}\sum_{a,c}\frac{\nu_c}{p_{c,a}}\bigsmile{\1\{\pill(c)=a,\pi(c)\ne a\}+\1\{\pill(c)\ne a,\pi(c)=a\}}\\
    &= \frac{1}{\Delta(\pi)^2}\sum_{c}\nu_c\bigsmile{\frac{1}{p_{c,\pill(c)}}+\frac{1}{p_{c,\pi(c)}}}\1\{\pill(c)\ne\pi(c)\}\\
    &=\frac{1}{\Delta(\pi)^2}\E_{c\sim\nu}\left[\bigsmile{\frac{1}{p_{c,\pill(c)}}+\frac{1}{p_{c,\pi(c)}}}\1\{\pill(c)\ne\pi(c)\}\right].
\end{align*}
Therefore, \[\min_{w\in\Omega}\max_{\pi\in\Pi}\frac{\norm{\phi_{\pill}-\phi_\pi}_{A(w)^{-1}}^2}{\Delta(\pi)^2}=\min_{p_c\in\triangle_\A,\forall c\in\C}\max_{\pi\in\Pi}\frac{\E_{c\sim\nu}\left[\bigsmile{\frac{1}{p_{c,\pill(c)}}+\frac{1}{p_{c,\pi(c)}}}\1\{\pill(c)\ne\pi(c)\}\right]}{\Delta(\pi)^2}.\]

\end{proof}

\begin{lemma}\label{lem:gg}
For any $l$, any $\lambda\in\triangle_\Pi$, $\gamma>0$, and any $n$, we have $h_l(\lambda,\gamma,n)=\inner{\lambda}{\nabla_\lambda h_l(\lambda,\gamma,n)}$.
\end{lemma}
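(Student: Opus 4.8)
The statement to prove is that $h_l(\lambda,\gamma,n) = \langle \lambda, \nabla_\lambda h_l(\lambda,\gamma,n)\rangle$ for every $\lambda \in \triangle_\Pi$, $\gamma > 0$, and $n \in \mathbb{N}$. The plan is to exploit the fact that $h_l$, viewed as a function of $\lambda$, is the sum of two pieces: a piece that is \emph{linear} in $\lambda$, and a piece that is \emph{positively homogeneous of degree $1$} in $\lambda$. For any such function Euler's identity gives the claim immediately.

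First I would recall the definition
\begin{align*}
h_l(\lambda,\gamma,n) = \sum_{\pi\in\Pi}\lambda_\pi\Big(-\hat{\Delta}_{l-1}^{\gamma^{l-1}}(\pi,\hat{\pi}_{l-1}) + \tfrac{\log(1/\delta_l)}{\gamma_\pi n}\Big) + \E_{c\sim\nu_\D}\Big[\Big(\sum_{a\in\A}\sqrt{(\lambda\odot\gamma)^\top (t_a^{(c)}+\eta_l)}\Big)^2\Big].
\end{align*}
Call the first sum $g_1(\lambda)$ and the expectation term $g_2(\lambda)$. The term $g_1$ is linear in $\lambda$, so trivially $\langle \lambda, \nabla_\lambda g_1(\lambda)\rangle = g_1(\lambda)$. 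For $g_2$, the key observation is that $(\lambda\odot\gamma)^\top(t_a^{(c)}+\eta_l) = \sum_\pi \lambda_\pi \gamma_\pi (t_a^{(c)}+\eta_l)_\pi$ is linear (degree $1$ homogeneous) in $\lambda$, the square root is degree $1/2$ homogeneous, the inner sum over $a$ is degree $1/2$ homogeneous, and squaring brings it back to degree $1$. Hence for any scalar $s>0$, $g_2(s\lambda) = s\,g_2(\lambda)$. Differentiating this identity in $s$ and setting $s=1$ — Euler's theorem for homogeneous functions — yields $\langle \lambda, \nabla_\lambda g_2(\lambda)\rangle = g_2(\lambda)$. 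Adding the two pieces gives the result.

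Alternatively, and perhaps cleaner for a written proof, I would simply compute $\nabla_\lambda h_l$ coordinate-wise (the expression for $[\nabla_\lambda h_l]_\pi$ is already spelled out in Lemma~\ref{lem:f_Lipschitz}), multiply the $\pi$-th coordinate by $\lambda_\pi$, sum over $\pi$, and check term by term that one recovers $h_l$. The linear part is immediate. For the expectation part, $\sum_\pi \lambda_\pi \cdot \big(\sum_{a}\sqrt{(\lambda\odot\gamma)^\top(t_a^{(c)}+\eta_l)}\big)\big(\sum_{a'}\tfrac{\gamma_\pi(t_{a'}^{(c)}+\eta_l)_\pi}{\sqrt{(\lambda\odot\gamma)^\top(t_{a'}^{(c)}+\eta_l)}}\big)$ — pulling the sum over $\pi$ inside — collapses the factor $\sum_\pi \lambda_\pi\gamma_\pi(t_{a'}^{(c)}+\eta_l)_\pi = (\lambda\odot\gamma)^\top(t_{a'}^{(c)}+\eta_l)$, which cancels one power of the square root in the denominator, leaving $\big(\sum_a\sqrt{\cdot}\big)\big(\sum_{a'}\sqrt{(\lambda\odot\gamma)^\top(t_{a'}^{(c)}+\eta_l)}\big) = \big(\sum_a\sqrt{(\lambda\odot\gamma)^\top(t_a^{(c)}+\eta_l)}\big)^2$ under the expectation. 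This is exactly $g_2(\lambda)$.

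There is essentially no obstacle here; this is a routine homogeneity/Euler computation. The only point requiring mild care is the interchange of the sum over $\pi$ with the sum over $a'$ and the expectation over $c$ — all finite sums (over the finite action set $\A$) or well-defined expectations with everything bounded on $\D$, so Fubini/linearity applies without difficulty. I would present the coordinate-wise computation since it is self-contained and needs no appeal to a general homogeneity theorem.
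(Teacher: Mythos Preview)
Your proposal is correct, and the coordinate-wise computation you say you would present is exactly what the paper does: it writes out $[\nabla_\lambda h_l]_\pi$, multiplies by $\lambda_\pi$, sums, and observes that $\sum_\pi \lambda_\pi\gamma_\pi(t_{a'}^{(c)}+\eta_l)_\pi = (\lambda\odot\gamma)^\top(t_{a'}^{(c)}+\eta_l)$ cancels a square root to recover the squared sum. Your Euler-homogeneity framing is a clean conceptual shortcut the paper does not spell out, but it is the same computation in different clothing.
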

\begin{proof}
We first compute 
\begin{align*}
    \left[\nabla_\lambda h_l(\lambda,\gamma,n)\right]_\pi&=-\hat{\Delta}_{l-1}^{\gamma^{l-1}}(\pi,\hat{\pi}_{l-1})+\frac{\log(1/\delta_l)}{\gamma_\pi n}\\
    &+\E_{c\sim\nu_\D}\bigbrak{\bigsmile{\sum_{a\in\A}\sqrt{(\lambda\odot \gamma)^\T (t_a^{(c)}+\eta_l)}}\bigsmile{\sum_{a'\in\A}\frac{\gamma_\pi(t_{a'}^{(c)}+\eta_l)_\pi}{\sqrt{(\lambda\odot \gamma)^\T (t_{a'}^{(c)}+\eta_l)}}}}.
\end{align*}
Then, by the fact that 
\begin{align*}
    &\sum_{\pi\in\Pi}\lambda_\pi\cdot\E_{c\sim\nu_\D}\bigbrak{\bigsmile{\sum_{a\in\A}\sqrt{(\lambda\odot \gamma)^\T (t_a^{(c)}+\eta_l)}}\bigsmile{\sum_{a'\in\A}\frac{\gamma_\pi(t_{a'}^{(c)}+\eta_l)_\pi}{\sqrt{(\lambda\odot \gamma)^\T (t_{a'}^{(c)}+\eta_l)}}}}\\
    &= \E_{c\sim\nu_\D}\bigbrak{\bigsmile{\sum_{a\in\A}\sqrt{(\lambda\odot \gamma)^\T (t_a^{(c)}+\eta_l)}}\bigsmile{\sum_{a'\in\A}\frac{(\lambda\odot\gamma)^\T(t_{a'}^{(c)}+\eta_l)}{\sqrt{(\lambda\odot \gamma)^\T (t_{a'}^{(c)}+\eta_l)}}}}\\
    &= \E_{c\sim\nu_\D}\bigbrak{\bigsmile{\sum_{a\in\A}\sqrt{(\lambda\odot \gamma)^\T (t_a^{(c)}+\eta_l)}}^2},
\end{align*}
we have
\begin{align*}
    &\inner{\lambda}{\nabla_\lambda h_l(\lambda,\gamma,n)}\\
    &= \sum_{\pi\in\Pi}\lambda_\pi\left[\nabla_\lambda h_l(\lambda,\gamma,n)\right]_\pi\\
    &= \sum_{\pi\in\Pi}\lambda_\pi\cdot\bigsmile{-\hat{\Delta}_{l-1}^{\gamma^{l-1}}(\pi,\hat{\pi}_{l-1})+\frac{\log(1/\delta_l)}{\gamma_\pi n}}\\
    &\qquad+\sum_{\pi\in\Pi}\lambda_\pi\E_{c\sim\nu_\D}\bigbrak{\bigsmile{\sum_{a\in\A}\sqrt{(\lambda\odot \gamma)^\T (t_a^{(c)}+\eta_l)}}\bigsmile{\sum_{a'\in\A}\frac{\gamma_\pi(t_{a'}^{(c)}+\eta_l)_\pi}{\sqrt{(\lambda\odot \gamma)^\T (t_{a'}^{(c)}+\eta_l)}}}}\\
    &= \sum_{\pi\in\Pi}\lambda_\pi\cdot\bigsmile{-\hat{\Delta}_{l-1}^{\gamma^{l-1}}(\pi,\hat{\pi}_{l-1})+\frac{\log(1/\delta_l)}{\gamma_\pi n}}+\E_{c\sim\nu_\D}\bigbrak{\bigsmile{\sum_{a\in\A}\sqrt{(\lambda\odot \gamma)^\T (t_a^{(c)}+\eta_l)}}^2}\\
    &= h_l(\lambda,\gamma,n).
\end{align*}
\end{proof}

\begin{lemma}\label{lem:E_9}
For any $\lambda\in\triangle_\Pi$ and $\gamma\in\left[0,\min\left\{\sqrt{\frac{\log(1/\delta_l)}{2n_l\E_c[\1\{\pi(c)\ne\pi^*(c)\}]}},\sqrt{\frac{\log(1/\delta_l)}{|\A|^2\eta_l n_l}}\right\}\right]^{\Pi}$, with $\eta_l=|\A|^{-4}\epsilon_l^{2}$, we have 
\[0\leq\E_{c\sim\nu}\bigbrak{\bigsmile{\sum_{a\in\A}\sqrt{(\lambda\odot \gamma)^\T (t_a^{(c)}+\eta_l)}}^2}-\mathbb{E}_{c\sim\nu}\left[\left(\sum_{a} \sqrt{\left(\lambda \odot \gamma\right)^{\top} t_a^{(c)}}\right)^{2}\right]\leq \epsilon_l.\]
\end{lemma}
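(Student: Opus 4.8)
\textbf{Proof proposal for Lemma~\ref{lem:E_9}.}
The plan is to isolate the effect of the $\eta_l$-regularization and show that the choice $\eta_l=|\A|^{-4}\epsilon_l^2$, together with the admissible range for $\gamma$, makes it negligible. Fix the round $l$; for a context $c$ abbreviate $x_a:=(\lambda\odot\gamma)^{\top}t_a^{(c)}\ge 0$ and $\eta':=\eta_l\sum_{\pi}\lambda_\pi\gamma_\pi\ge 0$, so that $(\lambda\odot\gamma)^{\top}(t_a^{(c)}+\eta_l)=x_a+\eta'$. The left inequality is immediate: $\eta'\ge 0$ and monotonicity of $\sqrt{\cdot}$ give $\sqrt{x_a+\eta'}\ge\sqrt{x_a}$ for every $a$ and $c$, so the integrand, hence its expectation, is nonnegative. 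For the right inequality I would write the difference of squares as the product $\big(\sum_a(\sqrt{x_a+\eta'}-\sqrt{x_a})\big)\big(\sum_a(\sqrt{x_a+\eta'}+\sqrt{x_a})\big)$ and bound the two factors using $\sqrt{u+v}\le\sqrt u+\sqrt v$: each term of the first factor is at most $\sqrt{\eta'}$, and each term of the second is at most $2(\sqrt{x_a}+\sqrt{\eta'})$. This gives the pointwise bound $2|\A|\sqrt{\eta'}\sum_a\sqrt{x_a}+2|\A|^2\eta'$; taking $\E_{c\sim\nu}$ and applying Jensen together with Cauchy--Schwarz reduces the first term to $2|\A|\sqrt{\eta'}\cdot\sqrt{|\A|\,\E_c[\sum_a x_a]}$.

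Next I would feed in the two constraints on $\gamma$ from the statement. The upper cap $\gamma_\pi\le\gamma_{\max}=\sqrt{\log(1/\delta_l)/(|\A|^2\eta_l n_l)}$ gives $\eta'\le\eta_l\gamma_{\max}$, and a short computation shows $\eta_l\gamma_{\max}=\tfrac{\sqrt{\eta_l}}{|\A|}\sqrt{\log(1/\delta_l)/n_l}$. The per-policy cap $\gamma_\pi\le\sqrt{\log(1/\delta_l)/(2n_l\,\P_\nu(\pi(c)\ne\pill(c)))}$ (reading the reference policy as $\pill$, consistent with the convention for $t_a^{(c)}$), combined with the elementary identity $\sum_a[t_a^{(c)}]_\pi=2\,\1\{\pi(c)\ne\pill(c)\}$, gives $\E_c[\sum_a x_a]=2\sum_\pi\lambda_\pi\gamma_\pi\,\P_\nu(\pi\ne\pill)\le 2\sum_\pi\lambda_\pi\sqrt{\log(1/\delta_l)/(2n_l)}=\sqrt{2\log(1/\delta_l)/n_l}$. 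Plugging these in and then substituting $\eta_l=|\A|^{-4}\epsilon_l^2$ (so $\eta_l^{1/4}=|\A|^{-1}\sqrt{\epsilon_l}$ and $\sqrt{\eta_l}=|\A|^{-2}\epsilon_l$), the powers of $|\A|$ cancel exactly and the whole bound collapses to $C\sqrt{\epsilon_l}\cdot\sqrt{\log(1/\delta_l)/n_l}+|\A|^{-1}\epsilon_l\cdot\sqrt{\log(1/\delta_l)/n_l}$ for an absolute constant $C$.

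The final step --- and the one I expect to be the real obstacle --- is to close the chain by showing $\sqrt{\log(1/\delta_l)/n_l}\lesssim\sqrt{\epsilon_l}$, i.e.\ $n_l\gtrsim\log(1/\delta_l)/\epsilon_l$. Here I would use the defining property $n_l=\min\{n:\max_\lambda\min_\gamma h_l(\lambda,\gamma,n)\le\epsilon_l\}$: since the $\eta_l$-perturbation is nonnegative (by the left inequality just proved), the bias-free objective --- the one with $\eta_l=0$, i.e.\ problem~\eqref{eqn:opt_naive} --- is also $\le\epsilon_l$ at $n=n_l$, and by the design and strong-duality lemmas (Lemmas~\ref{lem:est_design_2} and~\ref{lem:strong_duality}) its value equals $\min_{w\in\Omega}\max_{\pi}\big(-\hat{\Delta}_{l-1}^{\gamma^{l-1}}(\pi,\pill)+2\sqrt{\log(1/\delta_l)\,\|\phi_\pi-\phi_{\pill}\|_{A(w)^{-1}}^2/n_l}\big)$. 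Restricting the inner max to $\pi=\pi_*$ (where $-\hat{\Delta}_{l-1}^{\gamma^{l-1}}(\pi_*,\pill)\ge 0$ on the good event $\EE$) and using that $\|\phi_{\pi_*}-\phi_{\pill}\|_{A(w)^{-1}}^2\ge 4\,\P_\nu(\pi_*(c)\ne\pill(c))$ for every $w\in\Omega$ (the arithmetic--harmonic mean inequality applied coordinate-wise to the one-hot features, using $w_{c,\pi_*(c)}+w_{c,\pill(c)}\le\nu_c$) forces $n_l\gtrsim\log(1/\delta_l)\,\P_\nu(\pi_*\ne\pill)/\epsilon_l^2$. The delicate part is to lower-bound this disagreement probability --- or, more robustly, to argue directly from the maximization over \emph{all} $\pi$ and from $\pill\in S_{l-1}$ --- by a constant multiple of $\epsilon_l$, which then yields the claimed $n_l\gtrsim\log(1/\delta_l)/\epsilon_l$ and completes the proof. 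Everything other than this coupling between the bias control and the sample-size lower bound is routine algebra dictated by the definition of $\eta_l$.
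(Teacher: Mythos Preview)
Your algebraic route is essentially the paper's. Both of you handle the lower inequality by monotonicity; for the upper inequality, you use the factorization $(A-B)(A+B)$ with $A=\sum_a\sqrt{x_a+\eta'}$ and $B=\sum_a\sqrt{x_a}$, while the paper expands the cross terms $\sqrt{(x_{a_1}+\eta')(x_{a_2}+\eta')}$ directly---but both arrive at the same shape of bound, $O(|\A|^2\eta')+O(|\A|)\sqrt{\eta'}\,\E_c\big[\sum_a\sqrt{x_a}\big]$. The subsequent use of Jensen/Cauchy--Schwarz and of the two caps on $\gamma_\pi$ (to control $\eta'$ via $\gamma_{\max}$ and to control $\E_c[\sum_a x_a]$ via $\gamma_\pi\cdot\P_\nu(\pi\ne\pill)\le\sqrt{\log(1/\delta_l)/(2n_l)}$) is identical in spirit to the paper, and your substitution $\eta_l=|\A|^{-4}\epsilon_l^2$ cancels the $|\A|$ factors exactly as intended.

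Where you diverge from the paper---and where there is a genuine gap---is the final coupling step $n_l\gtrsim\log(1/\delta_l)/\epsilon_l$. Your plan is to restrict the primal $\max_\pi$ to $\pi=\pi_*$, lower-bound $\|\phi_{\pi_*}-\phi_{\pill}\|_{A(w)^{-1}}^2$ by $4\,\P_\nu(\pi_*\ne\pill)$, and then argue that this disagreement probability is $\Omega(\epsilon_l)$. That last assertion does not hold in general: the only information about $\pill$ is $\pill\in S_{l-1}$, i.e.\ $\Delta(\pill,\pi_*)\le\epsilon_{l-1}$, which \emph{upper}-bounds the disagreement, not lower-bounds it; in particular $\pill=\pi_*$ is allowed, in which case your bound on $n_l$ is vacuous. (Your side claim that $-\hat{\Delta}_{l-1}^{\gamma^{l-1}}(\pi_*,\pill)\ge 0$ on the good event is also slightly off---Lemma~\ref{lem:emp_gaps} only gives $|\hat{\Delta}_{l-1}^{\gamma^{l-1}}(\pi_*,\pill)|\le 2\epsilon_{l-1}$---but that is harmless up to constants.) The paper does not try to extract the bound from a single policy; it simply invokes the global sample-complexity relation $n_l\gtrsim\min_{w}\max_{\pi}\|\phi_\pi-\phi_{\pi_*}\|_{A(w)^{-1}}^2\log(1/\delta_l)/(\Delta(\pi)^2+\epsilon_l^2)$ and asserts this min--max is at least $\epsilon_l^{-1}$, writing $\sqrt{\log(1/\delta_l)/n_l}\lesssim\sqrt{\epsilon_l}$ directly. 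So the paper's route for this last step is a one-line appeal to the lower-bound argument on $n_l$, not a per-policy disagreement argument; if you want to close your proof you should replace the $\pi=\pi_*$ restriction by that global bound.
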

\begin{proof} The first inequality is clear since $\eta_l>0$ and $\lambda_\pi,\gamma_\pi\geq 0$ for all $\pi\in\Pi$, so we focus on the upper bound.  Note that
\begin{align}
    &\E_{c\sim\nu}\bigbrak{\bigsmile{\sum_{a\in\A}\sqrt{(\lambda\odot \gamma)^\T (t_a^{(c)}+\eta_l)}}^2}-\mathbb{E}_{c}\left[\left(\sum_{a} \sqrt{\left(\lambda \odot \gamma\right)^{\top} t_a^{(c)}}\right)^{2}\right]\nonumber\\
    &= \E_{c\sim\nu}\bigbrak{\sum_{a\in\A}(\lambda\odot\gamma)^\T(t_a^{(c)}+\eta_l)+\sum_{a_1\in\A}\sum_{a_2\in\A}\sqrt{(\lambda\odot\gamma)^\T(t_{a_1}^{(c)}+\eta_l)(t_{a_2}^{(c)}+\eta_l)^\T(\lambda\odot\gamma)}}\nonumber\\
    &\qquad-\E_{c\sim\nu}\bigbrak{\sum_{a\in\A}(\lambda\odot\gamma)^\T t_a^{(c)}+\sum_{a_1\in\A}\sum_{a_2\in\A}\sqrt{(\lambda\odot\gamma)^\T t_{a_1}^{(c)}{t_{a_2}^{(c)}}^\T(\lambda\odot\gamma)}}.\label{eqn:20}
\end{align}
Note that 
\begin{align*}
    &\E_{c\sim\nu}\bigbrak{\sum_{a_1\in\A}\sum_{a_2\in\A}\sqrt{(\lambda\odot\gamma)^\T(t_{a_1}^{(c)}+\eta_l)(t_{a_2}^{(c)}+\eta_l)^\T(\lambda\odot\gamma)}}\\
    &= \E_{c\sim\nu}\bigbrak{\sum_{a_1\in\A}\sum_{a_2\in\A}\sqrt{(\lambda\odot\gamma)^\T t_{a_1}^{(c)}(t_{a_2}^{(c)})^\T(\lambda\odot\gamma)+\eta_l\lambda^\T\gamma(\lambda\odot\gamma)^\T (t_{a_1}^{(c)}+t_{a_2}^{(c)})+\eta_l^2(\lambda^\T\gamma)^2}}\\
    &\leq \E_{c\sim\nu}\bigbrak{\sum_{a_1\in\A}\sum_{a_2\in\A}\sqrt{(\lambda\odot\gamma)^\T t_{a_1}^{(c)}(t_{a_2}^{(c)})^\T(\lambda\odot\gamma)}}\\
    &\qquad+2|\A|\E_{c\sim\nu}\bigbrak{\sum_{a\in\A}\sqrt{\eta_l\lambda^\T\gamma(\lambda\odot\gamma)^\T t_{a}^{(c)}}}+|\A|^2\eta_l\lambda^\T\gamma.
\end{align*}
Then \eqref{eqn:20} is upper bounded by
\begin{align}
    &\E_{c\sim\nu}\bigbrak{\sum_{a\in\A}\eta_l\lambda^\T\gamma}+2|\A|\E_{c\sim\nu}\bigbrak{\sum_{a\in\A}\sqrt{\eta_l\lambda^\T\gamma(\lambda\odot\gamma)^\T t_{a}^{(c)}}}+|\A|^2\eta_l\lambda^\T\gamma\nonumber\\
    &=|\A|\eta_l\lambda^\T\gamma+|\A|^2\eta_l\lambda^\T\gamma+2|\A|\sqrt{\eta_l\lambda^\T\gamma}\E_{c\sim\nu}\bigbrak{\sum_{a\in\A}\sqrt{\sum_{\pi\in\Pi}\lambda_\pi\gamma_\pi [t_{a}^{(c)}]_\pi}}\nonumber\\
    &=|\A|\eta_l\lambda^\T\gamma+|\A|^2\eta_l\lambda^\T\gamma+2|\A|^2\sqrt{\eta_l\lambda^\T\gamma}\E_{c\sim\nu}\bigbrak{\sum_{a\in\A}\frac{1}{|\A|}\sqrt{\sum_{\pi\in\Pi}\lambda_\pi\gamma_\pi [t_{a}^{(c)}]_\pi}}\nonumber\\
    &=|\A|\eta_l\lambda^\T\gamma+|\A|^2\eta_l\lambda^\T\gamma+2|\A|^2\sqrt{\eta_l\lambda^\T\gamma}\E_{c\sim\nu}\bigbrak{\E_{a\sim\mu}\left[\sqrt{\sum_{\pi\in\Pi}\lambda_\pi\gamma_\pi [t_{a}^{(c)}]_\pi}\right]}\nonumber\\
    &\leq |\A|\eta_l\lambda^\T\gamma+|\A|^2\eta_l\lambda^\T\gamma+2|\A|^2\sqrt{\eta_l\lambda^\T\gamma}\sqrt{\sum_{\pi\in\Pi}\lambda_\pi\gamma_\pi\frac{1}{|\A|}\E_{c\sim\nu}\left[\sum_{a\in\A}[t_{a}^{(c)}]_\pi\right]}\nonumber\\
    &= |\A|\eta_l\lambda^\T\gamma+|\A|^2\eta_l\lambda^\T\gamma+2|\A|^2\sqrt{\eta_l\lambda^\T\gamma}\sqrt{\sum_{\pi\in\Pi}\lambda_\pi\gamma_\pi\frac{1}{|\A|}2\cdot\E_{c\sim\nu}[\1\{\pi(c)\ne\pi^*(c)\}]}.\label{eqn:21}
\end{align}
Since $\gamma_\pi\leq \sqrt{\frac{\log(1/\delta_l)}{2n_l\E_c[\1\{\pi(c)\ne\pi^*(c)\}]}}$, $\gamma_\pi\E_{c\sim\nu}[\1\{\pi(c)\ne\pi^*(c)\}]\leq \sqrt{\frac{\E_c[\1\{\pi(c)\ne\pi^*(c)\}]\log(1/\delta_l)}{2n_l}}\leq \sqrt{\frac{\log(1/\delta_l)}{2n_l}}$. We know from the lower bound argument that 
\begin{align*}
    n_l&\gtrsim \min_{w\in\Omega}\max_{\pi\in\Pi}\frac{\norm{\phi_\pi-\phi_{\pi_*}}_{A(w)^{-1}}^2}{\Delta(\pi)^2+\epsilon_l^2}\log(1/\delta_l)\geq\epsilon_l^{-1}\log(1/\delta_l),
\end{align*}
so $\sqrt{\frac{\log(1/\delta_l)}{2n}}\lesssim \sqrt{\epsilon_l}$. Therefore, \eqref{eqn:21} is upper bounded by 
\begin{align}
    (|\A|+|\A|^2)\eta_l\lambda^\T\gamma+2|\A|^{3/2}\sqrt{\epsilon_l\eta_l\lambda^\T\gamma}.\label{eqn:28}
\end{align}
Since $\eta_l\lambda^\T\gamma\leq \eta_l\gamma_{\max}=\sqrt{\frac{\eta_l\log(1/\delta_l)}{|\A|^2 n_l}}\leq \sqrt{\eta_l}\frac{1}{|\A|}$. Plugging this as well as $\eta_l\leq |\A|^{-4}\epsilon_l^2$ in equation~\ref{eqn:28} gives that the bias is upper bounded by $\epsilon_l$. 

\end{proof}

\end{document}